\documentclass[preprint, 11pt, a4paper]{article} 

\usepackage[truedimen,margin=25truemm]{geometry}

\usepackage{times}
\usepackage{graphicx} 
\usepackage{epsfig} 
\usepackage{subfigure} 

\usepackage{natbib}

\usepackage{algorithm}
\usepackage{algorithmic}

\usepackage{url}

\usepackage{graphicx} 
\usepackage{epsfig}
\usepackage{subfigure}

\usepackage{amssymb}
\usepackage{amsmath}
\usepackage{amsthm}
\usepackage{mathtools}
\usepackage[raggedright]{titlesec} 
\newtheorem{prop}{Proposition}\newtheorem{lemma}{Lemma}


\usepackage{arydshln}
\hdashlinewidth=0.5pt
\hdashlinegap=0.8pt


\def\mat#1{\mbox{\bf #1}}


\DeclareMathAlphabet\mathbfcal{OMS}{cmsy}{b}{d}


\newcommand{\changeBM}[1]{#1}
\newcommand{\changeHK}[1]{#1}
\newcommand{\changeHKK}[1]{#1}
\newcommand{\changeHKKK}[1]{#1}


\newcommand{\argmin}{\operatornamewithlimits{arg\,min}}
\makeatletter
\def\hlinewd#1{%
  \noalign{\ifnum0=`}\fi\hrule \@height #1 \futurelet
   \reserved@a\@xhline}
\makeatother

\title{Low-rank tensor completion:\\ a Riemannian manifold preconditioning approach}

\author{Hiroyuki Kasai\\\  
\small{The University of Electro-Communications} \\
\small{Tokyo, Japan} \\
\small{\tt kasai@is.uec.ac.jp} 
\and Bamdev Mishra\\
\small{Amazon Development Centre India} \\ 
\small{Bangalore, India} \\
\small{\tt bamdevm@amazon.com}}

\begin{document}

\maketitle

\begin{abstract}
We propose a novel \changeHK{Riemannian} \changeBM{manifold} preconditioning approach for the tensor completion problem with rank constraint. A novel Riemannian metric or inner product is proposed that exploits the least-squares structure of the cost function and takes into account the structured symmetry that exists in Tucker decomposition. The specific metric allows to use the versatile framework of Riemannian optimization on quotient manifolds to develop preconditioned nonlinear conjugate gradient \changeBM{and stochastic gradient descent algorithms for batch and online setups, respectively}. Concrete matrix representations of various optimization-related ingredients are listed. Numerical comparisons suggest that our proposed \changeBM{algorithms robustly outperform} state-of-the-art algorithms across different synthetic and real-world datasets\footnote{This paper extends the earlier work \cite{kasai_arXiv_2015} to include a stochastic gradient descent algorithm for low-rank tensor completion.}.
\end{abstract}

\section{Introduction}
\label{sec:Introduction}
This paper addresses the problem of low-rank tensor completion when the rank is a priori known or estimated. \changeBM{We focus on 3-order tensors in the paper, but the developments can be generalized to higher order tensors in a straightforward way.} Given a tensor $\mathbfcal{X}^{n_1 \times n_2 \times n_3}$, whose entries $\mathbfcal{X}_{i_1, i_2, i_3}^{\star}$ are only known for some indices $(i_1, i_2, i_3) \in \Omega$, where $\Omega$ is a subset of the complete set of indices 
$\{(i_1, i_2, i_3): i_d \in \{1, \ldots, n_d \}, d \in \{1,2,3\}\}$,
the \emph{fixed-rank tensor completion problem} is formulated as 
\begin{equation}
\begin{array}{lll}
\label{Eq:CostFunction}
\displaystyle{\min_{\mathbfcal{X} \in
\mathbb{R}^{n_1 \times n_2 \times n_3}} }&   
\displaystyle{\frac{1}{|\Omega |}
\| \mathbfcal{P}_{\Omega}(\mathbfcal{X}) - 
\mathbfcal{P}_{\Omega}(\mathbfcal{X}^{\star}) \|^2_F} \\
{\rm subject\ to}& {\rm rank}(\mathbfcal{X}) = {\bf r},
\end{array}
\end{equation}
where the operator $\mathbfcal{P}_{\Omega}(\mathbfcal{X})_{i_1\changeHK{,}i_2\changeHK{,}i_3} = \mathbfcal{X}_{i_1\changeHK{,}i_2\changeHK{,}i_3}$ if $(i_1, i_2, i_3) \in \Omega$ and $\mathbfcal{P}_{\Omega}(\mathbfcal{X})_{i_1\changeHK{,}i_2\changeHK{,}i_3}  = 0$ otherwise and (with a slight abuse of notation) $\|\cdot \|_F$ is the Frobenius norm. \changeHKK{$|\Omega|$ is the number of known entries.} ${\rm rank}(\mathbfcal{X})$ $ (={\bf r}=(r_1, r_2, r_3))$, called the \emph{multilinear rank} of $\mathbfcal{X}$, is the set of the ranks of for each of mode-$d$ unfolding matrices. $r_d \ll n_d$ enforces a low-rank structure. The {\it mode} is a matrix obtained by concatenating the mode-$d$ fibers along columns, and mode-$d$ {\it unfolding} of \changeBM{a $D$-order tensor} $\mathbfcal{X}$ is $\mat{X}_{d} \in \mathbb{R}^{n_d \times n_{d+1}\cdots n_D n_1 \cdots n_{d-1}}$ for $d=\{1,\ldots,D\}$.

Problem (\ref{Eq:CostFunction}) has many variants, and one of those is extending the nuclear norm regularization approach from the matrix case \cite{Candes_FoundCompuMath_2009_s} to the tensor case. This results in a summation of nuclear norm regularization terms, each one corresponds to each of the unfolding matrices of $\mathbfcal{X}$. While this generalization leads to good results \cite{Liu_IEEETransPAMI_2013_s, Tomioka_Latent_2011_s, Signoretto_MachineLearning_2014_s}, its applicability to large-scale instances is not trivial, especially due to the necessity of high-dimensional singular value decomposition computations. A different approach exploits \emph{Tucker decomposition} \citep[Section~4]{Kolda_SIAMReview_2009_s} of a low-rank tensor $\mathbfcal{X}$ to develop large-scale algorithms for (\ref{Eq:CostFunction}), e.g., in \cite{Filipovi_MultiSysSigPro_2013_s,Kressner_BIT_2014_s}.

The present paper exploits both the \emph{symmetry} present in Tucker decomposition and the \emph{least-squares} structure of the cost function of (\ref{Eq:CostFunction}) to develop competitive \changeBM{algorithms}. \changeBM{The multilinear rank constraint forms a smooth manifold \citep{Kressner_BIT_2014_s}.} To this end, we use the concept of \emph{manifold preconditioning}. While preconditioning in unconstrained optimization is well studied \citep[Chapter~5]{Nocedal_NumericalOpt_2006_s}, preconditioning on constraints with \emph{symmetries}, owing to non-uniqueness of Tucker decomposition \cite{Kolda_SIAMReview_2009_s}, is not straightforward. We build upon the recent work \cite{Bamdev_arXiv_2014_s} that suggests to use \emph{preconditioning} with a \emph{tailored metric} (inner product) in the Riemannian optimization framework on quotient manifolds \cite{Absil_OptAlgMatManifold_2008, Edelman98a, Bamdev_arXiv_2014_s}. \changeBM{The differences with respect to the work of \citet{Kressner_BIT_2014_s}, which also exploits the manifold structure, are twofold. (i) \citet{Kressner_BIT_2014_s} exploit the search space as an \emph{embedded submanifold} of the Euclidean space, whereas we view it as a product of simpler search spaces with symmetries. Consequently, certain computations have straightforward interpretation. (ii) \citet{Kressner_BIT_2014_s} work with the standard Euclidean metric, whereas we use a metric that is tuned to the least-squares cost function, thereby inducing a preconditioning effect. This novel idea of using a tuned metric leads to a superior performance of our algorithms. \changeBM{They} also connect to state-of-the-art algorithms proposed in \cite{Ngo_NIPS_2012_s, Wen_MPC_2012_s, Mishra_ICDC_2014_s, boumal15a}.}

The paper is organized as follows. Section \ref{sec:Fixed-rankTuckerFactorization} discusses the two fundamental structures of symmetry and least-squares associated with (\ref{Eq:CostFunction}) and proposes a novel metric that captures the relevant second order information of the problem. The optimization-related ingredients on the Tucker manifold are developed in Section \ref{sec:OptimizationRelatedIngredients}. The cost function specific ingredients are developed in Section \ref{sec:AlgorithmDetails}. The final formulas are listed in Table \ref{tab:FinalFormulas}, \changeBM{which allow to develop preconditioned conjugate gradient descent algorithm in the batch setup and stochastic gradient descent algorithm in the online setup}. In Section \ref{sec:NumericalComparisons}, numerical comparisons with state-of-the-art algorithms on various synthetic and real-world benchmarks suggest a superior performance of our proposed algorithms. Our proposed algorithms are implemented in the Matlab toolbox Manopt \cite{Boumal_Manopt_2014_s}.
%
%
%
The concrete \changeBM{proofs  of propositions, development of} optimization-related ingredients\changeHK{,} and additional numerical experiments are shown in Sections {\bf A} and {\bf B}, respectively, of the supplementary material file. 
The Matlab codes for first and second order implementations, e.g., gradient descent and trust-region methods, are available at \url{https://bamdevmishra.com/codes/tensorcompletion/}.



\section{Exploiting the problem structure}
\label{sec:Fixed-rankTuckerFactorization}
Construction of efficient algorithms depends on properly exploiting the problem structure. To this end, we focus on two fundamental structures in (\ref{Eq:CostFunction}): \emph{symmetry} in the constraints and the \emph{least-squares structure} of the cost function. Finally, a novel metric is proposed.

{\bf The symmetry structure in Tucker decomposition.} The Tucker decomposition of a tensor $\mathbfcal{X} \in \mathbb{R}^{n_1 \times n_2 \times n_3}$ of rank {\bf r} (=$(r_1, r_2, r_3)$) is 
\begin{equation}
\begin{array}{lll}
\label{Eq:TuckerFactorization}
\mathbfcal{X} &  = & 
\mathbfcal{G} {\times_1} \mat{U}_{1} {\times_2} \mat{U}_{2} {\times_3} \mat{U}_{3}, 
\end{array}
\end{equation}
where $\mat{U}_{d} \in {\rm St}(r_d, n_d)$ for $d \in \{1,2,3\}$ belongs to the \emph{Stiefel manifold} of matrices of size $n_d \times r_d$ with orthogonal columns and $\mathbfcal{G} \in \mathbb{R}^{r_1 \times r_2 \times r_3}$ \cite{Kolda_SIAMReview_2009_s}. Here, $\mathbfcal{W} \times_d \mat{V} \in \mathbb{R}^{n_1 \times \cdots n_{d-1} \times m \times n_{d+1}  \times \cdots n_D}$ computes the {\it d-mode product} of a tensor $\mathbfcal{W} \in \mathbb{R}^{n_1 \times \cdots \times n_D}$ and a matrix $\mat{V} \in \mathbb{R}^{m \times n_d}$. 
Tucker decomposition (\ref{Eq:TuckerFactorization}) is \emph{not
unique} as $\mathbfcal{X}$ remains unchanged under the transformation 
\begin{equation}
\begin{array}{lll}
\label{Eq:EquivalenceClass_3}
(\mat{U}_{1}, \mat{U}_{2}, \mat{U}_{3}, \mathbfcal{G}) &\mapsto &
(  \mat{U}_{1}\mat{O}_{1}, \mat{U}_{2}\mat{O}_{2}, \mat{U}_{3}\mat{O}_{3}, \mathbfcal{G} {\times_1} \mat{O}^T_{1} {\times_2} \mat{O}^T_{2} {\times_3} \mat{O}^T_{3}) 
\end{array}
\end{equation}
for all $\mat{O}_{d} \in \mathcal{O}(r_d)$, \changeBM{which is} the set of orthogonal matrices of size of $r_d \times r_d$. The classical remedy to remove this indeterminacy is to have additional structures on $\mathbfcal{G}$ like sparsity or restricted orthogonal rotations \citep[Section~4.3]{Kolda_SIAMReview_2009_s}. In contrast, we encode the transformation (\ref{Eq:EquivalenceClass_3}) in an abstract search space of \emph{equivalence classes}, defined as,
\begin{equation}
\label{Eq:EquivalenceClass}
\begin{array}{lll}
[(\mat{U}_{1}, \mat{U}_{2}, \mat{U}_{3}, \mathbfcal{G}) ] &: = &
 \{(  \mat{U}_{1}\mat{O}_{1}, \mat{U}_{2}\mat{O}_{2}, \mat{U}_{3}\mat{O}_{3}, 
 \mathbfcal{G} {\times_1} \mat{O}^T_{1} {\times_2} \mat{O}^T_{2} {\times_3} \mat{O}^T_{3}) : \mat{O}_{d} \in \mathcal{O}(r_d)\}.
\end{array}
\end{equation}

The set of equivalence classes is the \emph{quotient manifold} 
\cite{Lee03a}
\begin{equation}
\begin{array}{lll}
\label{Eq:QuotientSpace}
\mathcal{M}/\!\sim
&:=& \mathcal{M}/ 
(\mathcal{O}{(r_1)} \times \mathcal{O}{(r_2)} \times \mathcal{O}{(r_3)}),
\end{array}
\end{equation}
where $\mathcal{M}$ is called the \emph{total space} (computational space) that is the product space
\begin{equation}
\begin{array}{lll}
\label{Eq:TotalSpace}
\mathcal{M}
&:=& {\rm St}(r_1, n_1) \times {\rm St}(r_2, n_2) \times {\rm St}(r_3, n_3) \times \mathbb{R}^{r_1 \times r_2 \times r_3}.
\end{array}
\end{equation}

Due to the invariance (\ref{Eq:EquivalenceClass_3}), the local minima of  (\ref{Eq:CostFunction}) in $\mathcal{M}$ are not isolated, but they become isolated on $\mathcal{M}/ \!\sim$. Consequently, the problem (\ref{Eq:CostFunction}) is an optimization problem on a quotient manifold for which systematic procedures are proposed in \cite{Absil_OptAlgMatManifold_2008, Edelman98a}. \changeBM{A requirement is to endow} endow $\mathcal{M}/ \!\sim$ with a Riemannian structure, \changeBM{which conceptually translates (\ref{Eq:CostFunction}) into an unconstrained optimization problem over the search space $\mathcal{M}/ \!\sim$}. We call $\mathcal{M}/ \!\sim$, defined in (\ref{Eq:QuotientSpace}), the \emph{Tucker manifold} as it results from Tucker decomposition.

{\bf The least-squares structure of the cost function.}
In unconstrained optimization, the Newton method is interpreted as a \emph{scaled} steepest descent method, where the search space is endowed with a metric (inner product) induced by the Hessian of the cost function \cite{Nocedal_NumericalOpt_2006_s}. This induced metric (or its approximation) resolves convergence issues of first order optimization algorithms. Analogously, finding a good inner product for (\ref{Eq:CostFunction}) is of profound consequence. Specifically for the case of quadratic optimization with rank constraint (matrix case), Mishra and Sepulchre 
\cite{Bamdev_arXiv_2014_s} 
propose a family of Riemannian metrics from the Hessian of the cost function. Applying this approach directly for the particular cost function of (\ref{Eq:CostFunction}) is computationally costly. To circumvent the issue, we consider a simplified cost function by assuming that $\Omega$ contains the full set of indices, i.e., we focus on $\| \mathbfcal{X} - \mathbfcal{X}^{\star}\|_F^2$ to propose a metric candidate. Applying the metric tuning approach of 
\cite{Bamdev_arXiv_2014_s} 
to the simplified cost function leads to a family of Riemannian metrics. A good trade-off between computational cost and simplicity is by considering only the \emph{block diagonal} elements of the Hessian of $\| \mathbfcal{X} - \mathbfcal{X}^{\star}\|_F^2$. It should be noted that the cost function $\| \mathbfcal{X} - \mathbfcal{X}^{\star}\|_F^2$  is {\it convex and quadratic} in $\mathbfcal{X}$. Consequently, it is also convex and quadratic in the arguments 
$(\mat{U}_{1}, \mat{U}_{2}, \mat{U}_{3}, \mathbfcal{G})$ individually. Equivalently, the block diagonal approximation of the Hessian of $\| \mathbfcal{X} - \mathbfcal{X}^{\star}\|_F^2$ in $(\mat{U}_{1}, \mat{U}_{2}, \mat{U}_{3}, \mathbfcal{G})$ is  
\begin{equation}
\begin{array}{lll}
\label{Eq:BlockApproximation}
((\mat{G}_{1} \mat{G}_{1}^T) \otimes \mat{I}_{n_1}, (\mat{G}_{2} \mat{G}_{2}^T) \otimes \mat{I}_{n_2}, (\mat{G}_{3} \mat{G}_{3}^T) \otimes \mat{I}_{n_3}, 
\mat{I}_{r_1 r_2  r_3}),
\end{array}
\end{equation}
where $\mat{G}_{d}$ is the mode-$d$ unfolding of $\mathbfcal{G}$ and is assumed to be full rank. \changeHK{$\otimes$ is the Kronecker product.} The terms $\mat{G}_{d} \mat{G}_{d}^T$ for $d  \in \{1, 2,3\}$ are \emph{positive definite} when $r_1 \leq r_2 r_3$, $r_2 \leq r_1 r_3$, and $r_3 \leq r_1 r_2$, which is a reasonable assumption.

{\bf A novel Riemannian metric.} An element $x$ in the total space $\mathcal{M}$ has the matrix representation $(\mat{U}_{1}, \mat{U}_{2}, \mat{U}_{3}, \mathbfcal{G})$. Consequently, the tangent space $T_{{x}} \mathcal{M}$ is the Cartesian product of the tangent spaces of the individual manifolds of (\ref{Eq:TotalSpace}), i.e., $T_{{x}} \mathcal{M}$ has the matrix characterization \cite{Edelman98a}
\begin{equation}
\begin{array}{lll}
\label{Eq:tangent_space}
T_{{x}} {\mathcal{M}} 
& = &  \{ (\mat{Z}_{{\bf U}_{1}}, \mat{Z}_{{\bf U}_{2}}, \mat{Z}_{{\bf U}_{3}}, \mat{Z}_{\mathbfcal{G}})
\in 
\mathbb{R}^{n_1 \times r_1} \times  
\mathbb{R}^{n_2 \times r_2} \times 
\mathbb{R}^{n_3 \times r_3} \times 
\mathbb{R}^{r_1 \times r_2 \times r_3} : \\
&  & \mat{U}_{d}^T \mat{Z}_{{\bf U}_{d}} +  \mat{Z}_{{\bf U}_{d}}^T \mat{U}_{d} = 0,\ {\rm for\ } d \in \{1,2,3 \} \}.
\end{array}
\end{equation}

From the earlier discussion on symmetry and least-squares structure, we propose the novel metric \changeBM{or inner product} ${g}_{x}:T_x \mathcal{M} \times T_x \mathcal{M} \rightarrow \mathbb{R}$
\begin{equation}
\begin{array}{lll}
\label{Eq:metric}
{g}_{x}(\xi_{x}, \eta_{x}) 
 &=&  
\langle \xi_{\scriptsize \mat{U}_{1}},
{\eta}_{\scriptsize\mat{U}_{1}} (\mat{G}_{1} \mat{G}_{1}^T) \rangle +
\langle \xi_{\scriptsize \mat{U}_{2}},
{\eta}_{\scriptsize\mat{U}_{2}} (\mat{G}_{2} \mat{G}_{2}^T) \rangle +
\langle \xi_{\scriptsize \mat{U}_{3}},
{\eta}_{\scriptsize\mat{U}_{3}} (\mat{G}_{3} \mat{G}_{3}^T) \rangle 
+ \langle {\xi}_{\scriptsize \mathbfcal{G}}, {\eta}_{\scriptsize \mathbfcal{G}}\rangle ,
\end{array}
\end{equation}
where ${\xi}_{{x}}, {\eta}_{{x}} \in T_{{x}} {\mathcal{M}}$ are 
tangent vectors with matrix characterizations, shown in (\ref{Eq:tangent_space}), $({\xi}_{\scriptsize \mat{U}_{1}}, {\xi}_{\scriptsize \mat{U}_{2}}, {\xi}_{\scriptsize \mat{U}_{3}}, {\xi}_{\scriptsize \mathbfcal{G}})$ and $({\eta}_{\scriptsize \mat{U}_{1}}, {\eta}_{\scriptsize \mat{U}_{2}}, {\eta}_{\scriptsize \mat{U}_{3}}, {\eta}_{\scriptsize \mathbfcal{G}})$, respectively and $\langle \cdot, \cdot \rangle$ is the Euclidean inner product. It should be emphasized that the proposed metric (\ref{Eq:metric}) is induced from (\ref{Eq:BlockApproximation}). 
\changeHK{
\begin{prop}\label{prop:invariance_metric}
Let $(\xi_{\scriptsize \mat{U}_{1}}, \xi_{\scriptsize \mat{U}_{2}}, \xi_{\scriptsize \mat{U}_{3}}, \xi_{\scriptsize \mathbfcal{G}})$ and $(\eta_{\scriptsize \mat{U}_{1}}, \eta_{\scriptsize \mat{U}_{2}}, \eta_{\scriptsize \mat{U}_{3}}, \eta_{\scriptsize \mathbfcal{G}})$ be tangent vectors to the quotient manifold (\ref{Eq:QuotientSpace}) 
at $(\mat{U}_{1}, \mat{U}_{2}, \mat{U}_{3}, \mathbfcal{G})$, and $(\xi_{\scriptsize \mat{U}_{1}\mat{O}_{1}}, \xi_{\scriptsize \mat{U}_{2}\mat{O}_{2}}, \xi_{\scriptsize \mat{U}_{3}\mat{O}_{3}}, 
\xi_{\scriptsize \mathbfcal{G} {\times_1 \mat{O}^T_{1} {\times_2} \mat{O}^T_{2} {\times_3} \mat{O}^T_{3}}})$ 
and $(\eta_{\scriptsize \mat{U}_{1}\mat{O}_{1}}, \eta_{\scriptsize \mat{U}_{2}\mat{O}_{2}}, \eta_{\scriptsize \mat{U}_{3}\mat{O}_{3}}, \eta_{\scriptsize \mathbfcal{G} {\times_1} \mat{O}^T_{1} {\times_2} \mat{O}^T_{2} {\times_3} \mat{O}^T_{3}})$ be tangent vectors to the quotient manifold (\ref{Eq:QuotientSpace}) at $(\mat{U}_{1}\mat{O}_{1}, \mat{U}_{2}\mat{O}_{2}, \mat{U}_{3}\mat{O}_{3}, \mathbfcal{G} {\times_1} \mat{O}^T_{1} {\times_2} \mat{O}^T_{2} {\times_3} \mat{O}^T_{3})$. The metric (\ref{Eq:metric}) is invariant along the equivalence class (\ref{Eq:EquivalenceClass}), i.e.,
\begin{equation*}
\label{Eq:InvariantMetric}
\begin{array}{l}
\hspace*{-0.5cm}g_{\scriptsize (\mat{U}_{1}, \mat{U}_{2}, \mat{U}_{3}, \mathbfcal{G})}((\xi_{\scriptsize \mat{U}_{1}}, \xi_{\scriptsize \mat{U}_{2}}, \xi_{\scriptsize \mat{U}_{3}}, \xi_{\scriptsize \mathbfcal{G}}), (\eta_{\scriptsize \mat{U}_{1}}, \eta_{\scriptsize \mat{U}_{2}}, \eta_{\scriptsize \mat{U}_{3}}, \eta_{\scriptsize \mathbfcal{G}})) \\
= g_{\scriptsize (\mat{U}_{1}\mat{O}_{1}, \mat{U}_{2}\mat{O}_{2}, \mat{U}_{3}\mat{O}_{3}, \mathbfcal{G} {\times_1} \mat{O}^T_{1} {\times_2} \mat{O}^T_{2} {\times_3} \mat{O}^T_{3})}
((\xi_{\scriptsize \mat{U}_{1}\mat{O}_{1}}, \xi_{\scriptsize \mat{U}_{2}\mat{O}_{2}}, \xi_{\scriptsize \mat{U}_{3}\mat{O}_{3}}, 
\xi_{\scriptsize \mathbfcal{G} {\times_1} \mat{O}^T_{1} {\times_2} \mat{O}^T_{2} {\times_3} \mat{O}^T_{3}}), \\
\hspace*{6cm}(\eta_{\scriptsize \mat{U}_{1}\mat{O}_{1}}, \eta_{\scriptsize \mat{U}_{2}\mat{O}_{2}}, \eta_{\scriptsize \mat{U}_{3}\mat{O}_{3}}, \eta_{\scriptsize \mathbfcal{G} {\times_1} \mat{O}^T_{1} {\times_2} \mat{O}^T_{2} {\times_3} \mat{O}^T_{3}})).
\end{array}
\end{equation*} 
\end{prop}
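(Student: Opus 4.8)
The plan is to track how both the tangent vectors and the weighting matrices $\mat{G}_{d} \mat{G}_{d}^T$ transform under the group action (\ref{Eq:EquivalenceClass_3}), and then to exploit the orthogonal invariance of the Euclidean inner product. First I would fix the identification of tangent vectors induced by the action: since $\mat{U}_{d} \mapsto \mat{U}_{d}\mat{O}_{d}$ is linear in $\mat{U}_{d}$, its differential is again right multiplication by $\mat{O}_{d}$, so that $\xi_{\scriptsize \mat{U}_{d}\mat{O}_{d}} = \xi_{\scriptsize \mat{U}_{d}}\mat{O}_{d}$ and likewise for $\eta$; correspondingly, the core-tensor components transform multilinearly as $\xi_{\scriptsize \mathbfcal{G} {\times_1} \mat{O}_{1}^T {\times_2} \mat{O}_{2}^T {\times_3} \mat{O}_{3}^T} = \xi_{\scriptsize \mathbfcal{G}} {\times_1} \mat{O}_{1}^T {\times_2} \mat{O}_{2}^T {\times_3} \mat{O}_{3}^T$.

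The crucial intermediate step is to compute how $\mat{G}_{d}\mat{G}_{d}^T$ changes. Writing $\tilde{\mat{G}}_{d}$ for the mode-$d$ unfolding of $\mathbfcal{G} {\times_1} \mat{O}_{1}^T {\times_2} \mat{O}_{2}^T {\times_3} \mat{O}_{3}^T$, I would use the standard unfolding identity for multilinear products to obtain $\tilde{\mat{G}}_{d} = \mat{O}_{d}^T \mat{G}_{d} \mat{K}_{d}^T$, where $\mat{K}_{d}$ is a Kronecker product of the remaining orthogonal factors $\mat{O}_{j}$ with $j \neq d$. Since the Kronecker product of orthogonal matrices is orthogonal, $\mat{K}_{d}^T \mat{K}_{d} = \mat{I}$, and hence $\tilde{\mat{G}}_{d}\tilde{\mat{G}}_{d}^T = \mat{O}_{d}^T (\mat{G}_{d}\mat{G}_{d}^T) \mat{O}_{d}$. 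This congruence by an orthogonal matrix is exactly what is needed to absorb the $\mat{O}_{d}$ factors carried by the tangent vectors.

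Substituting these into the metric (\ref{Eq:metric}) evaluated at the transformed point, the $d$-th factor-matrix term becomes $\langle \xi_{\scriptsize \mat{U}_{d}}\mat{O}_{d}, \eta_{\scriptsize \mat{U}_{d}} (\mat{G}_{d}\mat{G}_{d}^T) \mat{O}_{d} \rangle$, after inserting $\tilde{\mat{G}}_{d}\tilde{\mat{G}}_{d}^T = \mat{O}_{d}^T (\mat{G}_{d}\mat{G}_{d}^T) \mat{O}_{d}$ and cancelling $\mat{O}_{d}\mat{O}_{d}^T = \mat{I}$. Using $\langle \mat{A} \mat{O}, \mat{B} \mat{O} \rangle = \mathrm{tr}(\mat{O}^T \mat{A}^T \mat{B} \mat{O}) = \mathrm{tr}(\mat{A}^T \mat{B}) = \langle \mat{A}, \mat{B} \rangle$ for orthogonal $\mat{O}$, this reduces to $\langle \xi_{\scriptsize \mat{U}_{d}}, \eta_{\scriptsize \mat{U}_{d}} (\mat{G}_{d}\mat{G}_{d}^T) \rangle$, which is precisely the original $d$-th term. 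For the core-tensor term I would invoke the fact that a multilinear transformation by orthogonal matrices is an isometry with respect to the Frobenius inner product of tensors, giving $\langle \xi_{\scriptsize \mathbfcal{G}} {\times_1} \mat{O}_{1}^T {\times_2} \mat{O}_{2}^T {\times_3} \mat{O}_{3}^T, \eta_{\scriptsize \mathbfcal{G}} {\times_1} \mat{O}_{1}^T {\times_2} \mat{O}_{2}^T {\times_3} \mat{O}_{3}^T \rangle = \langle \xi_{\scriptsize \mathbfcal{G}}, \eta_{\scriptsize \mathbfcal{G}} \rangle$; this follows from the same unfolding identity. Summing the four terms yields the claimed equality. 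The only genuinely technical point, and the main obstacle, is the correct bookkeeping of the unfolding identity and the Kronecker-product ordering needed to establish $\tilde{\mat{G}}_{d}\tilde{\mat{G}}_{d}^T = \mat{O}_{d}^T (\mat{G}_{d}\mat{G}_{d}^T) \mat{O}_{d}$; once this congruence is in hand, the remainder is a routine application of the orthogonal invariance of the inner product.
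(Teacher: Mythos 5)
Your computational core is correct and is, in fact, exactly the paper's: the paper's own proof writes the transformed unfolding as $\mat{O}_1^T\mat{G}_{1}(\mat{O}_3^T\otimes\mat{O}_2^T)^T$, observes that the Kronecker factor is orthogonal so that the transformed Gram matrix is $\mat{O}_1^T(\mat{G}_{1}\mat{G}_{1}^T)\mat{O}_1$, and then cancels the orthogonal factors inside the trace term by term, including the Frobenius-isometry argument for the core component. So your second and third paragraphs reproduce the paper's computation.

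The gap is in your first step, the identification $\xi_{\mat{U}_d\mat{O}_d}=\xi_{\mat{U}_d}\mat{O}_d$ (and its analogue for the core component). Linearity of the group action $h$ only tells you what the differential ${\rm D}h$ does to an arbitrary tangent vector of the total space $\mathcal{M}$. But the symbol $\xi_{\mat{U}_d\mat{O}_d}$ in the proposition denotes the \emph{horizontal lift}, at the transformed point, of the same abstract tangent vector $\xi_{[x]}$ of the quotient. To equate that lift with ${\rm D}h[\xi_x]$ you need two facts: (i) ${\rm D}h[\xi_x]$ projects to the same abstract vector (this does follow from $\pi\circ h=\pi$ and the chain rule), and (ii) ${\rm D}h[\xi_x]$ is again \emph{horizontal} at $h(x)$, i.e., orthogonal, in the metric (\ref{Eq:metric}) evaluated at the transformed point, to the vertical space there. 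Horizontality is a metric-dependent condition and is not a consequence of linearity. The paper isolates precisely this as Lemma 1 of the supplementary material: after the chain-rule argument it explicitly checks that the transported vector satisfies the horizontal-space characterization, namely that $(\mat{G}_{d}\mat{G}_{d}^T)\xi_{\mat{U}_d}^T\mat{U}_d+\xi_{\mat{G}_{d}}\mat{G}_{d}^T$ remains symmetric after the transformation, using the same unfolding identity you use for the Gram matrices. Your proposal omits this check entirely.

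The omission is repairable with material you already have, but only by reordering the argument: your trace computation nowhere uses horizontality, so it actually proves that $h$ is an isometry of $(\mathcal{M},g)$ on all of $T_x\mathcal{M}$; combined with the easy fact that ${\rm D}h$ maps vertical spaces to vertical spaces (it maps each equivalence class onto itself), this implies ${\rm D}h$ maps $\mathcal{H}_x$ onto $\mathcal{H}_{h(x)}$, which then justifies the identification a posteriori. As written, however, you assert the lift correspondence before the isometry is available and never verify horizontality, so the proof as structured has a genuine logical hole at its first step.
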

}

%
%
%
\section{Notions of manifold optimization}
\label{sec:OptimizationRelatedIngredients}
%
%
%



\begin{figure}[h]
\center
\includegraphics[scale = 0.5]{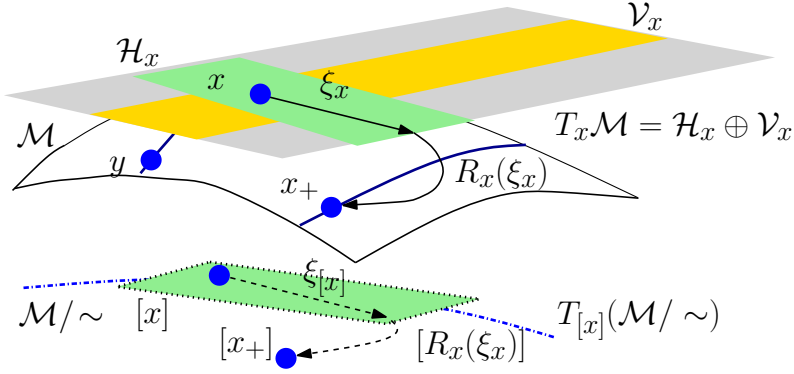}
\caption{Riemannian optimization framework: geometric objects, shown in dotted lines, on quotient manifold $\mathcal{M}/\!\sim$ call for matrix representatives, shown in solid lines, in the total space $\mathcal{M}$.}
\label{fig:OptimizationOnSymmetries}
\end{figure}

Each point on a quotient manifold represents an entire equivalence class of matrices in the total space. Abstract geometric objects on the quotient manifold $\mathcal{M}/ \!\sim$ call for matrix representatives in the total space $\mathcal{M}$. Similarly, algorithms are run in the total space $\mathcal{M}$, but under appropriate compatibility between the Riemannian structure of $\mathcal{M}$ and the Riemannian structure of the quotient manifold $\mathcal{M}/ \!\sim$, they define algorithms on the quotient manifold. The key is endowing $\mathcal{M}/ \!\sim$ with a Riemannian structure. Once this is the case, a constraint optimization problem, for example (\ref{Eq:CostFunction}), is conceptually transformed into an unconstrained optimization over the Riemannian quotient manifold (\ref{Eq:QuotientSpace}). Below we briefly show the development of various geometric objects that are required to optimize a smooth cost function on the quotient manifold (\ref{Eq:QuotientSpace}) with first order methods, e.g., conjugate gradients.





{\bf Quotient manifold representation and horizontal lifts.} Figure \ref{fig:OptimizationOnSymmetries} illustrates a schematic view of optimization with equivalence classes, where the points $x$ and $y$ in $\mathcal{M}$ belong to the same equivalence class (shown in solid blue color) and they represent a single point $[x]:=\{ y \in \mathcal{M} : y \sim x\}$ on the quotient manifold $\mathcal{M}/\!\sim$. The abstract tangent space $T_{[x]}(\mathcal{M}/\!\sim)$ at $[x] \in \mathcal{M}/\!\sim$ has the matrix representation in $T_x \mathcal{M}$, but restricted to the directions that do not induce a displacement along the equivalence class $[x]$. This is realized by decomposing $T_x \mathcal{M}$ into two complementary subspaces, the vertical and horizontal subspaces. The vertical space $\mathcal{V}_x$ is the tangent space of the equivalence class $[x]$. On the other hand, the horizontal space $\mathcal{H}_x$ is the \emph{orthogonal subspace} to $\mathcal{V}_x$ in the sense of the metric (\ref{Eq:metric}). Equivalently, $T_x \mathcal{M} =\mathcal{V}_x \oplus \mathcal{H}_x $. The horizontal subspace $\mathcal{H}_x $ provides a valid matrix representation to the abstract tangent space $T_{[x]}(\mathcal{M}/\!\sim)$. An abstract tangent vector $\xi_{[x]} \in T_{[x]}(\mathcal{M}/\!\sim)$ at $[x]$ has a unique element $\xi_x \in \mathcal{H}_x$ that is called its \emph{horizontal lift}.

A Riemannian metric $g_x:T_x \mathcal{M} \times  T_x \mathcal{M} \rightarrow \mathbb{R}$ at $x \in \mathcal{M}$ defines a Riemannian metric $g_{[x]}:T_{[x]}(\mathcal{M}/\! \sim) \times T_{[x]}(\mathcal{M}/\!\sim) \rightarrow \mathbb{R}$, i.e., $g_{[x]}(\xi_{[x]},\eta_{[x]}) := g_{x}(\xi_{x},\eta_{x})$ on the quotient manifold $\mathcal{M}/ \!\sim$, if $g_{x}(\xi_{x},\eta_{x})$ does not depend on a specific representation along the equivalence class $[x]$. Here, $\xi_{[x]}$ and $\eta_{[x]}$ are tangent vectors in $T_{[x]}(\mathcal{M}/\!\sim)$, and $\xi_{x}$ and $\eta_{x}$ are their horizontal lifts in $\mathcal{H}_x$ at $x$, respectively. Equivalently, the definition of the Riemannian metric is well posed when $g_x(\xi_x, \zeta_x)=g_x(\xi_y, \zeta_y)$ for all $x, y \in [x]$, where $\xi_x, \zeta_x \in \mathcal{H}_x$ and $\xi_y, \zeta_y \in \mathcal{H}_y$ are the horizontal lifts of $\xi_{[x]}, \zeta_{[x]} \in T_{[x]}(\mathcal{M}/\!\sim)$ along the same equivalence class $[x]$. \changeBM{This holds true for the proposed metric (\ref{Eq:metric}) as shown in Proposition \ref{prop:invariance_metric}.} From \cite{Absil_OptAlgMatManifold_2008}, endowed with the Riemannian metric (\ref{Eq:metric}), the quotient manifold $\mathcal{M}/ \!\sim$ is a {\it Riemannian submersion} of $\mathcal{M}$. The submersion principle allows to work out concrete matrix representations of abstract object on $\mathcal{M}/ \!\sim$, e.g., the gradient of a smooth cost function 
\cite{Absil_OptAlgMatManifold_2008}.

Starting from an arbitrary matrix (with appropriate dimensions), two linear projections are needed: the first projection $\Psi_{x}$ is onto the tangent space $T_x\mathcal{M}$, while the second projection $\Pi_{{x}}$ is onto the horizontal subspace $\mathcal{H}_x$. The computation cost of these is $O(n_1 r_1^2 + n_2 r_2^2 + n_3r_3  ^2)$.

The tangent space $T_{x} \mathcal{M}$ projection is obtained by extracting the component normal to $T_{x} \mathcal{M}$ in the ambient space. The normal space $N_{x} \mathcal{M}$ has the matrix characterization $\{(\mat{U}_{1}\mat{S}_{\scriptsize \mat{U}_{1}}(\mat{G}_{1}  \mat{G}_{1}^T)^{-1},
\mat{U}_{2}\mat{S}_{\scriptsize \mat{U}_{2}}(\mat{G}_{2}  \mat{G}_{2}^T)^{-1},\\
\mat{U}_{3}\mat{S}_{\scriptsize \mat{U}_{3}}(\mat{G}_{3}  \mat{G}_{3}^T)^{-1}, 0) :\mat{S}_{\scriptsize \mat{U}_{d}}  \in \mathbb{R}^{r_d \times r_d},  \mat{S}_{\scriptsize \mat{U}_{d}}^T  = \mat{S}_{\scriptsize \mat{U}_{d}}, \text{ for\ } d \in \{1, 2, 3\} \}$. Symmetric matrices ${\mat S}_{\scriptsize \mat{U}_d}$ for all $d \in \{1,2,3 \}$ parameterize the normal space. Finally, the operator $\Psi_{{x}}: 
\mathbb{R}^{n_1 \times r_1} \times  
\mathbb{R}^{n_2 \times r_2} \times 
\mathbb{R}^{n_3 \times r_3} \times 
\mathbb{R}^{r_1 \times r_2 \times r_3} \rightarrow T_{{x}} {\mathcal{M}} :(\mat{Y}_{\scriptsize \mat{U}_{1}}, \mat{Y}_{\scriptsize \mat{U}_{2}}, \mat{Y}_{\scriptsize \mat{U}_{3}}, \mat{Y}_{\scriptsize \mathbfcal{G}} )$
$\mapsto \Psi_{{x}}(\mat{Y}_{\scriptsize \mat{U}_{1}}, \mat{Y}_{\scriptsize \mat{U}_{2}}, \mat{Y}_{\scriptsize \mat{U}_{3}}, \mat{Y}_{\scriptsize \mathbfcal{G}} )$ \changeHK{is given as follows.}
\begin{prop}\label{prop:tangent_space_projector}
\changeHK{The quotient manifold (\ref{Eq:QuotientSpace}) endowed with the metric (\ref{Eq:metric}) admits \changeBM{the} tangent space projector defined as}
\begin{equation}
\label{Eq:B_Requirements}
\begin{array}{lll}
\Psi_{{x}}(\mat{Y}_{\scriptsize \mat{U}_{1}}, \mat{Y}_{\scriptsize \mat{U}_{2}}, \mat{Y}_{\scriptsize \mat{U}_{3}}, \mat{Y}_{\scriptsize \mathbfcal{G}} )
&=& (\mat{Y}_{\scriptsize \mat{U}_{1}} - \mat{U}_{1} \mat{S}_{\scriptsize \mat{U}_{1}} (\mat{G}_{1}  \mat{G}_{1}^T)^{-1},
\mat{Y}_{\scriptsize \mat{U}_{2}} - \mat{U}_{2} \mat{S}_{\scriptsize \mat{U}_{2}} (\mat{G}_{2}  \mat{G}_{2}^T)^{-1}, \\
&&  \mat{Y}_{\scriptsize \mat{U}_{3}} - \mat{U}_{3}\mat{S}_{\scriptsize \mat{U}_{3}} (\mat{G}_{3}  \mat{G}_{3}^T)^{-1},
\mat{Y}_{\scriptsize \mathbfcal{G}}),
\end{array}
\end{equation}
where $\mat{S}_{\scriptsize \mat{U}_{d}}$ is the solution to the Lyapunov equation $\mat{S}_{\scriptsize \mat{U}_{d}} \mat{G}_{d}  \mat{G}_{d}^T + \mat{G}_{d}  \mat{G}_{d}^T \mat{S}_{\scriptsize \mat{U}_{d}}   
 = \mat{G}_{d}  \mat{G}_{d}^T (\mat{Y}_{\scriptsize \mat{U}_{d}}^T \mat{U}_{d} + \mat{U}_{d}^T \mat{Y}_{\scriptsize \mat{U}_{d}}) \mat{G}_{d}  \mat{G}_{d}^T$ for $ d\in \{ 1,2,3\}$.
\end{prop}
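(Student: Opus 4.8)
The plan is to realize $\Psi_{x}$ as the orthogonal projection, with respect to the metric (\ref{Eq:metric}), onto $T_{x}\mathcal{M}$, obtained by subtracting from an arbitrary ambient element its component in the normal space $N_{x}\mathcal{M}$. Since the ambient space splits as the $g_{x}$-orthogonal direct sum $T_{x}\mathcal{M}\oplus N_{x}\mathcal{M}$, every ambient vector $(\mat{Y}_{\scriptsize \mat{U}_{1}},\mat{Y}_{\scriptsize \mat{U}_{2}},\mat{Y}_{\scriptsize \mat{U}_{3}},\mat{Y}_{\scriptsize \mathbfcal{G}})$ has a unique tangent part, and that part is $\Psi_{x}$ applied to it. First I would verify the normal space characterization quoted in the text: taking a candidate normal vector with $d$-th block $\mat{U}_{d}\mat{S}_{\scriptsize \mat{U}_{d}}(\mat{G}_{d}\mat{G}_{d}^T)^{-1}$ for symmetric $\mat{S}_{\scriptsize \mat{U}_{d}}$ and zero $\mathbfcal{G}$-block, the metric pairing against an arbitrary tangent vector $\mat{Z}_{\scriptsize \mat{U}_{d}}$ is $\langle \mat{U}_{d}\mat{S}_{\scriptsize \mat{U}_{d}}(\mat{G}_{d}\mat{G}_{d}^T)^{-1}, \mat{Z}_{\scriptsize \mat{U}_{d}}(\mat{G}_{d}\mat{G}_{d}^T)\rangle$, which, using $\mat{U}_{d}^T\mat{U}_{d}=\mat{I}_{r_{d}}$ and the cyclic property of the trace, collapses to $\langle \mat{S}_{\scriptsize \mat{U}_{d}}, \mat{U}_{d}^T\mat{Z}_{\scriptsize \mat{U}_{d}}\rangle$. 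This vanishes because $\mat{S}_{\scriptsize \mat{U}_{d}}$ is symmetric whereas $\mat{U}_{d}^T\mat{Z}_{\scriptsize \mat{U}_{d}}$ is skew-symmetric by the tangency constraint in (\ref{Eq:tangent_space}). A dimension count (the $r_{d}(r_{d}+1)/2$ symmetric directions complement the Stiefel tangent space in each factor) confirms these exhaust $N_{x}\mathcal{M}$.

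Next I would write $\Psi_{x}(\mat{Y}_{\scriptsize \mat{U}_{1}},\ldots,\mat{Y}_{\scriptsize \mathbfcal{G}})$ as the ambient element minus its normal part, leaving the $\mathbfcal{G}$-block equal to $\mat{Y}_{\scriptsize \mathbfcal{G}}$ and setting each $\mat{U}_{d}$-block to $\mat{Y}_{\scriptsize \mat{U}_{d}}-\mat{U}_{d}\mat{S}_{\scriptsize \mat{U}_{d}}(\mat{G}_{d}\mat{G}_{d}^T)^{-1}$, where the symmetric matrices $\mat{S}_{\scriptsize \mat{U}_{d}}$ are unknowns to be fixed by requiring the result to lie in $T_{x}\mathcal{M}$. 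Imposing the Stiefel tangency condition $\mat{U}_{d}^T\mat{Z}_{\scriptsize \mat{U}_{d}}+\mat{Z}_{\scriptsize \mat{U}_{d}}^T\mat{U}_{d}=0$ on the $d$-th block, and again invoking $\mat{U}_{d}^T\mat{U}_{d}=\mat{I}_{r_{d}}$ together with the symmetry of $\mat{S}_{\scriptsize \mat{U}_{d}}$ and of $\mat{G}_{d}\mat{G}_{d}^T$, I obtain $\mat{S}_{\scriptsize \mat{U}_{d}}(\mat{G}_{d}\mat{G}_{d}^T)^{-1}+(\mat{G}_{d}\mat{G}_{d}^T)^{-1}\mat{S}_{\scriptsize \mat{U}_{d}}=\mat{U}_{d}^T\mat{Y}_{\scriptsize \mat{U}_{d}}+\mat{Y}_{\scriptsize \mat{U}_{d}}^T\mat{U}_{d}$. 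Multiplying this identity on both the left and the right by $\mat{G}_{d}\mat{G}_{d}^T$ clears the inverses and reproduces exactly the Lyapunov equation stated in the proposition.

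Finally I would establish well-posedness. Because $\mat{G}_{d}\mat{G}_{d}^T$ is symmetric positive definite under the rank assumptions noted after (\ref{Eq:BlockApproximation}), the Lyapunov operator $\mat{S}\mapsto \mat{S}(\mat{G}_{d}\mat{G}_{d}^T)+(\mat{G}_{d}\mat{G}_{d}^T)\mat{S}$ maps symmetric matrices to symmetric matrices and has eigenvalues equal to the pairwise sums of the eigenvalues of $\mat{G}_{d}\mat{G}_{d}^T$, all strictly positive; hence it is invertible and each $\mat{S}_{\scriptsize \mat{U}_{d}}$ exists, is unique, and is symmetric (the right-hand side, being of the form $\mat{A}\mat{B}\mat{A}$ with $\mat{A},\mat{B}$ symmetric, is itself symmetric). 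Consequently $\Psi_{x}$ is well defined, acts as the identity on $T_{x}\mathcal{M}$, and annihilates $N_{x}\mathcal{M}$, so it is the claimed tangent space projector. I expect the only mildly delicate step to be the algebraic bookkeeping that converts the inverse-laden orthogonality and tangency identities into the clean symmetric Lyapunov form; the rest follows routinely from the orthogonal splitting and the symmetric/skew pairing.
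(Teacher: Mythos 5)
Your proposal is correct and follows essentially the same route as the paper's proof: characterize the $g_x$-normal space as vectors of the form $(\mat{U}_{1}\mat{S}_{\scriptsize \mat{U}_{1}}(\mat{G}_{1}\mat{G}_{1}^T)^{-1}, \mat{U}_{2}\mat{S}_{\scriptsize \mat{U}_{2}}(\mat{G}_{2}\mat{G}_{2}^T)^{-1}, \mat{U}_{3}\mat{S}_{\scriptsize \mat{U}_{3}}(\mat{G}_{3}\mat{G}_{3}^T)^{-1}, 0)$ with $\mat{S}_{\scriptsize \mat{U}_{d}}$ symmetric, subtract this component from the ambient vector, and impose the Stiefel tangency condition $\mat{U}_{d}^T\mat{Z}_{\scriptsize \mat{U}_{d}}+\mat{Z}_{\scriptsize \mat{U}_{d}}^T\mat{U}_{d}=0$ to obtain exactly the stated Lyapunov equation after clearing the inverses. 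The only differences are minor strengthenings on your side: you justify the normal-space characterization by a direct orthogonality check plus a dimension count (rather than the paper's expansion in the $(\mat{U}_{d}, \mat{U}_{d\perp})$ basis with the substitution $\tilde{\zeta}_{\scriptsize \mat{U}_{d}}=\zeta_{\scriptsize \mat{U}_{d}}\mat{G}_{d}\mat{G}_{d}^T$), and you add an explicit existence, uniqueness, and symmetry argument for the Lyapunov solution that the paper leaves implicit.
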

 
\changeBM{The Lyapunov equations in Proposition \ref{prop:tangent_space_projector} are solved efficiently with} the Matlab's \verb+lyap+ routine.

The horizontal space projection of a tangent vector is obtained by removing the component along the vertical space. The vertical space $\mathcal{V}_{x} $ has the matrix characterization
$\{ (\mat{U}_1{\bf \Omega}_1, \mat{U}_2 {\bf \Omega}_2, \mat{U}_3 {\bf \Omega}_3, 
 - (\mathbfcal{G}{\times_1} {\bf \Omega}_1  + 
\mathbfcal{G}{{\times_2}} {\bf \Omega}_2 +
\mathbfcal{G}{{\times_3}} {\bf \Omega}_3)): {\bf \Omega}_d \in \mathbb{R}^{r_d \times r_d}, {\bf \Omega}_d ^T = -{\bf \Omega}_d \text{ for\ } d \in \{1, 2, 3\} \}$. Skew symmetric matrices ${\bf \Omega}_{d}$ for all $d \in \{1,2,3 \}$ parameterize the vertical space. Finally, the horizontal projection operator $\Pi_{{x}}: T_{{x}} {\mathcal{M}} : \rightarrow \mathcal{H}_{{x}} : $ $\eta_{{x}} \mapsto \Pi_{{x}}(\eta_{{x}})$ 
\changeHK{ is given as follows.}
\begin{prop}\label{prop:horizontal_space_projector}
\changeHK{The quotient manifold (\ref{Eq:QuotientSpace}) endowed with the metric (\ref{Eq:metric}) admits \changeBM{the} horizontal projector defined as}
\begin{equation*}
\begin{array}{lll}
\Pi_{{x}}(\eta_{{x}} )
&=& (
\eta_{\scriptsize \mat{U}_{1}} - \mat{U}_{1} {\bf \Omega}_{1},
\eta_{\scriptsize \mat{U}_{2}} - \mat{U}_{2} {\bf \Omega}_{2}, 
\eta_{\scriptsize \mat{U}_{3}} - \mat{U}_{3} {\bf \Omega}_{3}, 
\eta_{\scriptsize \mathbfcal{G}} - ( - (\mathbfcal{G}{\times_1} {\bf \Omega}_{1}  + 
\mathbfcal{G}{{\times_2}} {\bf \Omega}_{2} +
\mathbfcal{G}{{\times_3}} {\bf \Omega}_{3})) 
),
\end{array}
\end{equation*}
where $\eta_x = (\eta_{\scriptsize \mat{U}_{1}}, \eta_{\scriptsize \mat{U}_{2}}, \eta_{\scriptsize \mat{U}_{3}}, \eta_{\scriptsize \mathbfcal{G}}) \in T_x \mathcal{M}$ and  ${\bf \Omega}_{d}$ is a skew-symmetric matrix of size $r_d \times r_d$ that is the solution to the coupled Lyapunov equations
\begin{equation}
\begin{array}{lll}
\label{Eq:OmegaRequirements}
\left\{
\begin{array}{l}
\mat{G}_{1}  \mat{G}_{1}^T {\bf \Omega}_{1} + {\bf \Omega}_{1} \mat{G}_{1}  \mat{G}_{1}^T -\mat{G}_{1}(\mat{I}_{r_3} \otimes {\bf \Omega}_{2}) \mat{G}_{1}^T 
- \mat{G}_{1}( {\bf \Omega}_{3} \otimes \mat{I}_{r_2} )\mat{G}_{1}^T  \\
\hspace{5.8cm} = {\rm Skew}(\mat{U}_1^T\eta_{\scriptsize \mat{U}_1}\mat{G}_{1}  \mat{G}_{1}^T) + {\rm Skew}(\mat{G}_{1}\eta_{\scriptsize \mat{G}_{1}}^T), \\
\mat{G}_{2}  \mat{G}_{2}^T {\bf \Omega}_{2} + {\bf \Omega}_{2} \mat{G}_{2}  \mat{G}_{2}^T -\mat{G}_{2}(\mat{I}_{r_3} \otimes {\bf \Omega}_{1}) \mat{G}_{2}^T 
- \mat{G}_{2}( {\bf \Omega}_{3} \otimes \mat{I}_{r_1} )\mat{G}_{2}^T  \\
\hspace{5.8cm} = {\rm Skew}(\mat{U}_2^T\eta_{\scriptsize \mat{U}_2}\mat{G}_{2}  \mat{G}_{2}^T) + {\rm Skew}(\mat{G}_{2}\eta_{\scriptsize \mat{G}_{2}}^T), \\
\mat{G}_{3}  \mat{G}_{3}^T {\bf \Omega}_{3} + {\bf \Omega}_{3} \mat{G}_{3}  \mat{G}_{3}^T -\mat{G}_{3}(\mat{I}_{r_2} \otimes {\bf \Omega}_{1}) \mat{G}_{3}^T 
- \mat{G}_{3}( {\bf \Omega}_{2} \otimes \mat{I}_{r_1} )\mat{G}_{3}^T  \\
\hspace{5.8cm} = {\rm Skew}(\mat{U}_3^T\eta_{\scriptsize \mat{U}_3}\mat{G}_{3}  \mat{G}_{3}^T) + {\rm Skew}(\mat{G}_{3}\eta_{\scriptsize \mat{G}_{3}}^T),
\end{array}
\right.
\end{array}
\end{equation}
where ${\rm Skew}(\cdot)$ extracts the skew-symmetric part of a square matrix, i.e., ${\rm Skew}(\mat{D})=(\mat{D}-\mat{D}^T)/2$. 
\end{prop}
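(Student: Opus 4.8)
The plan is to use the defining characterization of the horizontal projector: since $\mathcal{H}_x$ is by definition the $g_x$-orthogonal complement of the vertical space $\mathcal{V}_x$, the operator $\Pi_x$ is the orthogonal projection onto $\mathcal{H}_x$ along $\mathcal{V}_x$. Hence $\Pi_x(\eta_x) = \eta_x - \eta_x^{\mathrm v}$, where $\eta_x^{\mathrm v}$ is the unique vertical vector for which $\eta_x - \eta_x^{\mathrm v}$ is $g_x$-orthogonal to every element of $\mathcal{V}_x$. Using the matrix characterization of $\mathcal{V}_x$ recalled just above the statement, I would write $\eta_x^{\mathrm v}$ in terms of three unknown skew-symmetric matrices, as $\eta_x^{\mathrm v} = (\mat{U}_1{\bf \Omega}_1, \mat{U}_2{\bf \Omega}_2, \mat{U}_3{\bf \Omega}_3, -(\mathbfcal{G}\times_1{\bf \Omega}_1 + \mathbfcal{G}\times_2{\bf \Omega}_2 + \mathbfcal{G}\times_3{\bf \Omega}_3))$. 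With this, the subtracted form displayed in the proposition is immediate, and the entire content reduces to showing that the ${\bf \Omega}_d$ are exactly the solutions of the coupled system (\ref{Eq:OmegaRequirements}).

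Next, I would impose the orthogonality condition $g_x(\eta_x - \eta_x^{\mathrm v}, \zeta) = 0$ for every $\zeta \in \mathcal{V}_x$, parametrizing the test vector $\zeta$ by arbitrary skew-symmetric matrices ${\bf \Lambda}_1, {\bf \Lambda}_2, {\bf \Lambda}_3$ in the same way. Expanding with the metric (\ref{Eq:metric}) and using $\mat{U}_d^T\mat{U}_d = \mat{I}_{r_d}$, cyclicity of the trace, and the elementary fact that $\mathrm{trace}(\mat{S}{\bf \Lambda}) = 0$ whenever $\mat{S}$ is symmetric and ${\bf \Lambda}$ is skew-symmetric, only the skew-symmetric parts of the coefficient matrices survive. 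The Stiefel blocks $\langle \mat{U}_d{\bf \Omega}_d, \mat{U}_d{\bf \Lambda}_d\mat{G}_d\mat{G}_d^T\rangle$ together with the diagonal core contribution $\langle \mathbfcal{G}\times_d{\bf \Omega}_d, \mathbfcal{G}\times_d{\bf \Lambda}_d\rangle$ combine to produce the self-adjoint Lyapunov operator ${\bf \Omega}_d \mapsto \mat{G}_d\mat{G}_d^T{\bf \Omega}_d + {\bf \Omega}_d\mat{G}_d\mat{G}_d^T$, while the $\eta_x$-dependent blocks deliver the right-hand sides ${\rm Skew}(\mat{U}_d^T\eta_{\mat{U}_d}\mat{G}_d\mat{G}_d^T) + {\rm Skew}(\mat{G}_d\eta_{\mat{G}_d}^T)$, where $\eta_{\mat{G}_d}$ is the mode-$d$ unfolding of $\eta_{\mathbfcal{G}}$.

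The hard part will be the cross-mode core terms $\langle \mathbfcal{G}\times_d{\bf \Omega}_d, \mathbfcal{G}\times_e{\bf \Lambda}_e\rangle$ with $d \neq e$, which are exactly what couples the three equations. To evaluate the coefficient of a fixed ${\bf \Lambda}_e$, I would rewrite each such inner product in the mode-$e$ unfolding, using that the mode-$e$ unfolding of $\mathbfcal{G}\times_d{\bf \Omega}_d$ is $\mat{G}_e$ post-multiplied by a Kronecker factor built from ${\bf \Omega}_d$ and identity blocks. Careful tracking of the index-ordering convention for the unfoldings then turns these cross terms into the forms $\mat{G}_e(\mat{I}\otimes{\bf \Omega})\mat{G}_e^T$ and $\mat{G}_e({\bf \Omega}\otimes\mat{I})\mat{G}_e^T$ that appear with a minus sign in (\ref{Eq:OmegaRequirements}); one checks directly that each such matrix is already skew-symmetric, which is a useful consistency check. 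Since ${\bf \Lambda}_1, {\bf \Lambda}_2, {\bf \Lambda}_3$ are independent and arbitrary, equating the skew part of the coefficient of each ${\bf \Lambda}_e$ to zero yields precisely the three coupled Lyapunov equations of the statement.

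Finally, I would confirm that the system (\ref{Eq:OmegaRequirements}) has a unique solution, so that $\Pi_x$ is well defined. The map sending a skew triple $({\bf \Omega}_1, {\bf \Omega}_2, {\bf \Omega}_3)$ to the left-hand sides of (\ref{Eq:OmegaRequirements}) is, up to sign, the Gram operator of $g_x$ restricted to $\mathcal{V}_x$ expressed in these skew-symmetric coordinates. Because the parametrization $({\bf \Omega}_1, {\bf \Omega}_2, {\bf \Omega}_3) \mapsto \eta_x^{\mathrm v}$ is injective (each $\mat{U}_d$ has full column rank) and $g_x$ is a genuine inner product given that the $\mat{G}_d\mat{G}_d^T$ are positive definite as assumed in (\ref{Eq:BlockApproximation}), this operator is symmetric and positive definite, hence invertible. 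This guarantees existence and uniqueness of $({\bf \Omega}_1, {\bf \Omega}_2, {\bf \Omega}_3)$ and completes the argument.
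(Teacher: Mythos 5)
Your proposal is correct, and its core is the same as the paper's: write $\Pi_x(\eta_x)=\eta_x-\eta_x^{\mathrm{v}}$ with the vertical part parametrized by skew-symmetric $({\bf \Omega}_1,{\bf \Omega}_2,{\bf \Omega}_3)$, and determine these matrices from $g_x$-orthogonality to $\mathcal{V}_x$. The organization differs in a way worth noting. The paper argues in two stages: it first characterizes the horizontal space by the symmetry conditions that $(\mat{G}_{d}\mat{G}_{d}^T)\xi_{\mat{U}_{d}}^T\mat{U}_{d}+\xi_{\mat{G}_{d}}\mat{G}_{d}^T$ be symmetric (equation (\ref{Sup_Eq:horizontal_space_reqrements}) of the supplementary material, itself obtained by exactly the orthogonality computation you describe), and then substitutes $\xi=\eta_x-\eta_x^{\mathrm{v}}$, written via the three mode-wise unfoldings of the vertical core component, into those conditions; the cross-mode Kronecker terms appear at that substitution step. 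You instead test $\eta_x-\eta_x^{\mathrm{v}}$ directly against vertical vectors parametrized by independent skew triples $({\bf \Lambda}_1,{\bf \Lambda}_2,{\bf \Lambda}_3)$, so the same Kronecker terms arise from the cross inner products $\langle\mathbfcal{G}\times_d{\bf \Omega}_d,\mathbfcal{G}\times_e{\bf \Lambda}_e\rangle$ with $d\neq e$; term by term the two computations coincide, including the fact that the Stiefel block and the diagonal core block together produce the Lyapunov operator ${\bf \Omega}_d\mapsto\mat{G}_d\mat{G}_d^T{\bf \Omega}_d+{\bf \Omega}_d\mat{G}_d\mat{G}_d^T$. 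Your route has one genuine advantage: identifying the left-hand side of (\ref{Eq:OmegaRequirements}) as (up to sign) the Gram operator of $g_x$ restricted to $\mathcal{V}_x$ in injective skew coordinates gives symmetry and positive definiteness, hence existence and uniqueness of $({\bf \Omega}_1,{\bf \Omega}_2,{\bf \Omega}_3)$ and well-posedness of $\Pi_x$ --- a point the paper's proof never addresses (it only remarks that the coupled system is solved numerically by \texttt{pcg}). What the paper's two-stage route buys instead is the standalone characterization of $\mathcal{H}_x$, which it reuses elsewhere, e.g., in Table \ref{tab:FinalFormulas} and in the horizontality check inside the proof of Proposition \ref{prop:invariance_metric}.
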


The coupled Lyapunov equations (\ref{Eq:OmegaRequirements}) are solved efficiently with the Matlab's \verb+pcg+ routine that is combined with a specific \changeBM{symmetric} preconditioner resulting from the Gauss-Seidel approximation of (\ref{Eq:OmegaRequirements}). \changeBM{For the variable $ {\bf \Omega}_1$, the preconditioner is of the form $\mat{G}_1\mat{G}_1^T {\bf \Omega}_1 + {\bf \Omega}_1 \mat{G}_1\mat{G}_1^T$. Similarly, for the variables ${\bf \Omega}_2$ and ${\bf \Omega}_3$.}

{\bf Retraction.}
A retraction is a mapping that maps vectors in the horizontal space to points on the search space $\mathcal{M}$ and 
satisfies the local rigidity condition 
\cite{Absil_OptAlgMatManifold_2008}. 
It provides a natural way to move on the manifold along a search direction. Because the total space  $\mathcal{M}$ has the product nature, we can choose a retraction by combining retractions on the individual manifolds, i.e.,
$
R_{x} (\xi_x)  =   ({\rm uf}(\mat{U}_{1}+\xi_{\scriptsize \mat{U}_{1}}), {\rm uf}(\mat{U}_{2}+\xi_{\scriptsize \mat{U}_{2}}), {\rm uf}(\mat{U}_{3}+\xi_{\scriptsize \mat{U}_{3}}), \mathbfcal{G}+\xi_{\scriptsize \mathbfcal{G}}),
$
where $\xi_x \in \mathcal{H}_x$ and ${\rm uf}(\cdot)$ extracts the orthogonal factor of a full column rank matrix, i.e., 
${\rm uf}(\mat{A})=\mat{A}(\mat{A}^T\mat{A})^{-1/2}$. The retraction $ {R}_{  x}$ defines a retraction ${R}_{[x]}( {\xi}_{[x]}) : =[R_x (\xi_x)] $ on the quotient manifold ${\mathcal{M}}/\sim$, as the equivalence class $[R_x (\xi_x)] $ does not depend on specific matrix representations of $[x]$ and ${\xi}_{[x]}$, where $ {\xi}_{  x}$ is the horizontal lift of the abstract tangent vector $\xi_{[x]} \in T_{[x]} (\mathcal{M} /\sim)$.

{\bf Vector transport.}
A vector transport on a manifold $\mathcal{M}$ is a smooth mapping that transports a tangent vector $\xi_x \in T_x \mathcal{M}$ at $x \in \mathcal{M}$ to a vector in the tangent space at \changeBM{a point} $R_x(\eta_x)$. \changeBM{It is defined by the symbol $\mathcal{T}_{\eta_x} \xi_x$.} It generalizes the classical concept of translation of vectors in the Euclidean space to manifolds \citep[Section~8.1.4]{Absil_OptAlgMatManifold_2008}. The horizontal lift of the abstract vector transport $\mathcal{T}_{\eta_{[x]}} \xi_{[x]}$ on $\mathcal{M}/\!\sim$ has the matrix characterization $\Pi_{R_x(\eta_x)}(\mathcal{T}_{\eta_x} \xi_x) = \Pi_{R_x(\eta_x)}(\Psi_{R_x(\eta_x)}(\xi_x))$, where $\xi_x$ and $\eta_x$ are the horizontal lifts in $\mathcal{H}_x$ of $\xi_{[x]}$ and $\eta_{[x]}$ that belong to $T_{[x]}(\mathcal{M}/ \!\sim)$. \changeBM{$\Psi_x(\cdot)$ and $\Pi_x(\cdot)$ are projectors defined in Propositions \ref{prop:tangent_space_projector} and \ref{prop:horizontal_space_projector}}. The computational cost of transporting a vector solely depends on the projection and retraction operations.

\begin{table*}[t]
\begin{center}  \small 
\caption{\changeBM{Tucker manifold related optimization ingredients} for (\ref{Eq:CostFunction})}
\label{tab:FinalFormulas}
\begin{tabular}{l|l}
\hline
Matrix representation  & $x =  
(\mat{U}_{1}, \mat{U}_{2}, \mat{U}_{3}, \mathbfcal{G})$
\\ 
\hdashline
Computational space $\mathcal{M}$ &
${\rm St}(r_1, n_1) \times {\rm St}(r_2, n_2) \times {\rm St}(r_3, n_3) \times \mathbb{R}^{r_1 \times r_2 \times r_3}$
\\ \hdashline
Group action & 
$\{ (\mat{U}_{1}\mat{O}_{1}, \mat{U}_{2}\mat{O}_{2}, \mat{U}_{3}\mat{O}_{3}, \mathbfcal{G}{\times_1} \mat{O}^T_{1}{{\times_2}} \mat{O}^T_{2}{{\times_3}} \mat{O}^T_{3}): \mat{O}_{d} \in \mathcal{O}{(r_d)}, \text{for\ }d \in \{1,2,3\} \}$
\\ \hdashline
Quotient space $\mathcal{M}/\!\sim$ & 
 ${\rm St}(r_1, n_1) \times {\rm St}(r_2, n_2) \times {\rm St}(r_3, n_3) \times \mathbb{R}^{r_1 \times r_2 \times r_3}$
  $/ (\mathcal{O}{(r_1)} \times \mathcal{O}{(r_2)} \times \mathcal{O}{(r_3)})$ 
\\ 
\hdashline
Ambient space &
$\mathbb{R}^{ n_1 \times r_1} \times \mathbb{R}^{ n_2 \times r_2} \times \mathbb{R}^{ n_3 \times r_3} \times \mathbb{R}^{r_1 \times r_2 \times r_3}$
\\
\hdashline
Tangent vectors in $T_x \mathcal{M}$& 
$
\{ (\mat{Z}_{{\bf U}_{1}}, \mat{Z}_{{\bf U}_{2}}, \mat{Z}_{{\bf U}_{3}}, \mat{Z}_{\mathbfcal{G}}) \in 
\mathbb{R}^{n_1 \times r_1} \times  
\mathbb{R}^{n_2 \times r_2} \times 
\mathbb{R}^{n_3 \times r_3} \times 
\mathbb{R}^{r_1 \times r_2 \times r_3} $
\\
 & : $\mat{U}_{d}^T \mat{Z}_{{\bf U}_{d}} +  \mat{Z}_{{\bf U}_{d}}^T \mat{U}_{d} = 0, \text{ for }d \in \{1,2,3\} \}
$
\\ \hdashline
Metric ${g}_{x}(\xi_{x}, \eta_{x})$ for & 
$\langle \xi_{\scriptsize \mat{U}_{1}},
{\eta}_{\scriptsize\mat{U}_{1}} (\mat{G}_{1} \mat{G}_{1}^T) \rangle  + 
\langle \xi_{\scriptsize \mat{U}_{2}},
{\eta}_{\scriptsize\mat{U}_{2}} (\mat{G}_{2} \mat{G}_{2}^T) \rangle  + 
\langle \xi_{\scriptsize \mat{U}_{3}},
{\eta}_{\scriptsize\mat{U}_{3}} (\mat{G}_{3} \mat{G}_{3}^T) \rangle 
 +  \langle {\xi}_{\scriptsize \mathbfcal{G}}, {\eta}_{\scriptsize \mathbfcal{G}}\rangle $
\\
any $\xi_x, \eta_x \in T_x \mathcal{M}$ & 
\\ \hdashline
Vertical tangent vectors & 
$\{ (\mat{U}_{1} {\bf \Omega}_{1}, \mat{U}_{2}  {\bf \Omega}_{2}, \mat{U}_{3}  {\bf \Omega}_{3}, 
 - (\mathbfcal{G}{\times_1} {\bf \Omega}_{1}  + 
\mathbfcal{G}{{\times_2}} {\bf \Omega}_{2} +
\mathbfcal{G}{{\times_3}} {\bf \Omega}_{3})):$
\\
 in $\mathcal{V}_x$ & $ {\bf \Omega}_{d} \in \mathbb{R}^{r_d \times r_d}, {\bf \Omega}_{d}^T = -{\bf \Omega}_{d}, \text{for }d \in \{1,2,3\} \}$
\\ \hdashline
Horizontal tangent vectors& 
$\{(\zeta_{\mat{U}_{1}}, \zeta_{\mat{U}_{2}}, \zeta_{\mat{U}_{3}}, \zeta_{\mathbfcal{G}}) \in T_x \mathcal{M} : (\mat{G}_{d}  \mat{G}_{d}^T) \zeta_{\scriptsize \mat{U}_{d}}^T \mat{U}_{d}  
+ \zeta_{\scriptsize \mat{G}_{d}}  \mat{G}_{d}^T \text{ is symmetric},$ \\
 in $\mathcal{H}_x$& $\text{for }d \in \{1,2,3\} \}$
\\ \hdashline
$\Psi(\cdot)$ projects an ambient & 
$(
\mat{Y}_{\scriptsize \mat{U}_{1}} - \mat{U}_{1} \mat{S}_{\scriptsize \mat{U}_{1}} (\mat{G}_{1}  \mat{G}_{1}^T)^{-1},
\mat{Y}_{\scriptsize \mat{U}_{2}} - \mat{U}_{2} \mat{S}_{\scriptsize \mat{U}_{2}} (\mat{G}_{2}  \mat{G}_{2}^T)^{-1}, 
$
\\
vector  $(\mat{Y}_{\scriptsize \mat{U}_{1}},\!\mat{Y}_{\scriptsize \mat{U}_{2}},\!\mat{Y}_{\scriptsize \mat{U}_{3}},\!\mat{Y}_{\scriptsize \mathbfcal{G}})$& $\mat{Y}_{\scriptsize \mat{U}_{3}} - \mat{U}_{3} \mat{S}_{\scriptsize \mat{U}_{3}} (\mat{G}_{3}  \mat{G}_{3}^T)^{-1},
\mat{Y}_{\scriptsize \mathbfcal{G}}  
)$, where $\mat{S}_{\scriptsize \mat{U}_{d}}$ for $d \in \{1, 2,3 \}$ are computed
\\
onto $T_x \mathcal{M}$&  by solving Lyapunov equations as in (\ref{Eq:B_Requirements}).
%
\\ \hdashline
$\Pi(\cdot)$ projects a tangent   &  $(
\xi_{\scriptsize \mat{U}_{1}} - \mat{U}_{1} {\bf \Omega}_{1},
\xi_{\scriptsize \mat{U}_{2}} - \mat{U}_{2} {\bf \Omega}_{2}, 
\xi_{\scriptsize \mat{U}_{3}} - \mat{U}_{3} {\bf \Omega}_{3}, $ 
\\
vector $\xi$ onto $\mathcal{H}_x$ & 
$\xi_{\scriptsize \mathbfcal{G}} - ( - (\mathbfcal{G}{\times_1} {\bf \Omega}_{1}  + 
\mathbfcal{G}{{\times_2}} {\bf \Omega}_{2} +
\mathbfcal{G}{{\times_3}} {\bf \Omega}_{3})) 
)$, ${\bf \Omega}_{d}$ is computed in (\ref{Eq:OmegaRequirements}).
\\ \hdashline
First order derivative of & 
$(\mat{S}_{1} (\mat{U}_{3} \otimes \mat{U}_{2}) \mat{G}_{1}^T, \mat{S}_{2} (\mat{U}_{3} \otimes \mat{U}_{1}) \mat{G}_{2}^T, \mat{S}_{3} (\mat{U}_{2} \otimes \mat{U}_{1}) \mat{G}_{3}^T), \mathbfcal{S} \times_1 \mat{U}_{1}^T \times_2 \mat{U}_{2}^T \times_3 \mat{U}_{3}^T),$\\
$f(x)$ & where $\mathbfcal{S} = \frac{2}{|{\Omega} |} (\mathbfcal{P}_{\Omega}(\mathbfcal{G}{\times_1} {\mat{U}_{1}}{\times_2} {\mat{U}_{2}}{\times_3} {\mat{U}_{3}}) - 
\mathbfcal{P}_{\Omega}(\mathbfcal{X}^{\star}))$.
\\ \hdashline
Retraction $R_x(\xi_x)$ & 
$({\rm uf}(\mat{U}_{1}+\xi_{\scriptsize \mat{U}_{1}}), {\rm uf}(\mat{U}_{2}+\xi_{\scriptsize \mat{U}_{2}}), {\rm uf}(\mat{U}_{3}+\xi_{\scriptsize \mat{U}_{3}}), \mathbfcal{G}+\xi_{\scriptsize \mathbfcal{G}})$
\\ \hdashline
Horizontal lift of the & 
$\Pi_{R_x(\eta_x)}(\Psi_{R_x(\eta_x)}(\xi_x))$
\\
vector transport $\mathcal{T}_{\eta_{[x]}} \xi_{[x]}$  & \\
\hline
\end{tabular}
\end{center}
\end{table*}

\section{Riemannian algorithms for (\ref{Eq:CostFunction})}
\label{sec:AlgorithmDetails}
\changeBM{We propose two Riemannian preconditioned algorithms for the tensor completion problem (\ref{Eq:CostFunction}) that are based on the developments in Section \ref{sec:OptimizationRelatedIngredients}.} The preconditioning effect follows from the specific choice of the metric (\ref{Eq:metric}). \changeBM{In the batch setting,} we use the off-the-shelf conjugate gradient implementation of Manopt for any smooth cost function \cite{Boumal_Manopt_2014_s}. A complete description of the Riemannian nonlinear conjugate gradient method is in 
\citep[Chapter~8]{Absil_OptAlgMatManifold_2008}. \changeBM{In the online setting, we use the stochastic gradient descent implementation \citep{bonnabel13a}.} \changeHK{For fixed rank, theoretical convergence of the Riemannian algorithms are to a stationary point, and} the convergence analysis follows from \cite{Sato15a, Ring_SIAMJOptim_2012_s,bonnabel13a}. \changeHK{However, as simulations show, \changeBM{convergence to global minima} is observed in many challenging instances.}

\changeBM{In addition to the manifold-related ingredients in Section \ref{sec:OptimizationRelatedIngredients},} the ingredients needed are the cost function specific ones. To this end, we show the computation of the Riemannian gradient as well as a way to compute an initial guess for the step-size, which is used in the conjugate gradient method. The concrete formulas are shown in Table \ref{tab:FinalFormulas}.


{\bf Riemannian gradient computation.} Let $f(\mathbfcal{X})=\| \mathbfcal{P}_{\Omega}(\mathbfcal{X}) - \mathbfcal{P}_{\Omega}(\mathbfcal{X}^{\star}) \|^2_F/|\Omega |$ be the mean square error function of (\ref{Eq:CostFunction}), and 
$\mathbfcal{S} = 2 (\mathbfcal{P}_{\Omega}(\mathbfcal{G}{\times_1} \mat{U}_{1} {\times_2} \mat{U}_{2}{\times_3} \mat{U}_{3}) - 
\mathbfcal{P}_{\Omega}(\mathbfcal{X}^{\star}))/|{\Omega}|$ 
be an auxiliary sparse tensor variable that is interpreted as the Euclidean gradient of $f$ in $\mathbb{R}^{n_1 \times n_2 \times n_3}$. The partial derivatives of $f$ with respect to $(\mat{U}_{1}, \mat{U}_{2}, \mat{U}_{3}, \mathbfcal{G})$ are computed in terms of the unfolding matrices $\mat{S}_{d}$. Due to the specific scaled metric (\ref{Eq:metric}), the partial derivatives are further scaled by $((\mat{G}_{1}\mat{G}_{1}^T)^{-1}, (\mat{G}_{2}\mat{G}_{2}^T)^{-1}, (\mat{G}_{3}\mat{G}_{3}^T)^{-1}, \mathbfcal{I})$, denoted as ${\rm egrad}_{x} f$ (after scaling). Finally, from the Riemannian submersion theory \citep[Section ~3.6.2]{Absil_OptAlgMatManifold_2008}, 
the horizontal lift of ${\rm grad}_{[x]}f $ is equal to ${\rm grad}_{x}f \ =\ \Psi({\rm egrad}_{x} f )$. The total numerical cost of computing the Riemannian gradient depends on computing the partial derivatives, which is $O(|\Omega| r_1 r_2 r_3)$.

\begin{prop}
\changeHK{The cost function (\ref{Eq:CostFunction}) at $(\mat{U}_{1}, \mat{U}_{2}, \mat{U}_{3}, \mathbfcal{G})$ under the quotient manifold (\ref{Eq:QuotientSpace}) endowed with the Riemannian metric (\ref{Eq:metric}) admits the horizontal lift of the Riemannian gradient}
\begin{equation}
\label{Eq:RiemannianGradient}
\begin{array}{lll}
(\mat{S}_{1} (\mat{U}_{3} \otimes \mat{U}_{2}) \mat{G}_{1}^T (\mat{G}_{1}\mat{G}_{1}^T)^{-1} 
- \mat{U}_{1} \mat{B}_{\scriptsize \mat{U}_{1}}(\mat{G}_{1}\mat{G}_{1}^T)^{-1}, \\
 \hspace{0cm} \mat{S}_{2} (\mat{U}_{3} \otimes \mat{U}_{1}) \mat{G}_{2}^T (\mat{G}_{2}\mat{G}_{2}^T)^{-1}
- \mat{U}_{2} \mat{B}_{\scriptsize \mat{U}_{2}}(\mat{G}_{2}\mat{G}_{2}^T)^{-1}, \\
 \hspace{0cm} \mat{S}_{3} (\mat{U}_{2} \otimes \mat{U}_{1}) \mat{G}_{3}^T(\mat{G}_{3}\mat{G}_{3}^T)^{-1}
- \mat{U}_{3} \mat{B}_{\scriptsize \mat{U}_{3}}(\mat{G}_{3}\mat{G}_{3}^T)^{-1}, 
 \mathbfcal{S} \times_1 \mat{U}_{1}^T \times_2 \mat{U}_{2}^T \times_3 \mat{U}_{3}^T), 
\end{array}
\end{equation}
where $\mat{B}_{\scriptsize \mat{U}_{d}}$ for $d \in \{1, 2,3\} $ are the solutions to the Lyapunov equations
\begin{equation*}
\left\{
\begin{array}{lll}
\mat{B}_{\scriptsize \mat{U}_{1}} \mat{G}_{1}  \mat{G}_{1}^T + \mat{G}_{1}  \mat{G}_{1}^T \mat{B}_{\scriptsize \mat{U}_{1}} &=&  2 {\rm Sym} (\mat{G}_{1}  \mat{G}_{1}^T\mat{U}_{1}^T (\mat{S}_{1} (\mat{U}_{3} \otimes \mat{U}_{2}) \mat{G}_{2}^T), \\ 
\mat{B}_{\scriptsize \mat{U}_{2}} \mat{G}_{2}  \mat{G}_{2}^T + \mat{G}_{2}  \mat{G}_{2}^T \mat{B}_{\scriptsize \mat{U}_{2}} &=&   2 {\rm Sym} (\mat{G}_{2}  \mat{G}_{2}^T\mat{U}_{2}^T (\mat{S}_{2} (\mat{U}_{3} \otimes \mat{U}_{1}) \mat{G}_{2}^T), \\ 
\mat{B}_{\scriptsize \mat{U}_{3}} \mat{G}_{3}  \mat{G}_{3}^T + \mat{G}_{3}  \mat{G}_{3}^T \mat{B}_{\scriptsize \mat{U}_{3}} &=&   2 {\rm Sym} (\mat{G}_{3}  \mat{G}_{3}^T\mat{U}_{3}^T (\mat{S}_{3} (\mat{U}_{2} \otimes \mat{U}_{1}) \mat{G}_{3}^T),
\end{array}
\right.
\end{equation*}
where ${\rm Sym}(\cdot)$ extracts the symmetric part of a matrix.
\end{prop}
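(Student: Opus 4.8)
The plan is to follow the standard Riemannian submersion recipe in three stages: compute the Euclidean derivative of $f$, convert it into the gradient with respect to the scaled metric (\ref{Eq:metric}), and project it onto $T_x\mathcal{M}$ with $\Psi$; then argue that the resulting tangent vector is already horizontal and is therefore the horizontal lift. First I would differentiate $f$ through the Tucker parameterization $\mathbfcal{X} = \mathbfcal{G}\times_1 \mat{U}_1 \times_2 \mat{U}_2 \times_3 \mat{U}_3$. Writing $\mathbfcal{S}$ for the auxiliary sparse tensor (the Euclidean gradient of $f$ in $\mathbb{R}^{n_1\times n_2\times n_3}$) and $\mat{S}_d$ for its mode-$d$ unfolding, the chain rule produces exactly the first-order derivatives listed in Table \ref{tab:FinalFormulas}, for instance $\mat{S}_1(\mat{U}_3\otimes\mat{U}_2)\mat{G}_1^T$ for the $\mat{U}_1$ block and $\mathbfcal{S}\times_1\mat{U}_1^T\times_2\mat{U}_2^T\times_3\mat{U}_3^T$ for the core. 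This part is routine.

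The decisive step is the conversion to the metric (\ref{Eq:metric}). Because that metric weights the $\mat{U}_d$ block by $\mat{G}_d\mat{G}_d^T$, the Riesz characterization ${g}_{x}({\rm egrad}_x f,\xi)=Df(x)[\xi]$ forces each $\mat{U}_d$ partial derivative to be right-multiplied by $(\mat{G}_d\mat{G}_d^T)^{-1}$, while the core block is left unchanged; this is precisely the scaling by $((\mat{G}_1\mat{G}_1^T)^{-1},(\mat{G}_2\mat{G}_2^T)^{-1},(\mat{G}_3\mat{G}_3^T)^{-1},\mathbfcal{I})$ announced before the statement, and it yields ${\rm egrad}_x f$. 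Applying the tangent-space projector $\Psi$ of Proposition \ref{prop:tangent_space_projector} then subtracts the normal component $\mat{U}_d\mat{S}_{\scriptsize \mat{U}_d}(\mat{G}_d\mat{G}_d^T)^{-1}$ in each of the three Stiefel blocks, which accounts structurally for the $-\mat{U}_d\mat{B}_{\scriptsize \mat{U}_d}(\mat{G}_d\mat{G}_d^T)^{-1}$ terms appearing in (\ref{Eq:RiemannianGradient}).

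The main obstacle — and essentially the only genuine computation — is to verify that, upon substituting $\mat{Y}_{\scriptsize \mat{U}_d}=\mat{S}_d(\cdots)\mat{G}_d^T(\mat{G}_d\mat{G}_d^T)^{-1}$ into the Lyapunov equation of Proposition \ref{prop:tangent_space_projector}, the equation collapses to the one stated here for $\mat{B}_{\scriptsize \mat{U}_d}$, so that $\mat{B}_{\scriptsize \mat{U}_d}=\mat{S}_{\scriptsize \mat{U}_d}$. Here I would use the cancellation $(\mat{G}_d\mat{G}_d^T)^{-1}\mat{G}_d\mat{G}_d^T=\mat{I}_{r_d}$: in the right-hand side $\mat{G}_d\mat{G}_d^T(\mat{Y}_{\scriptsize \mat{U}_d}^T\mat{U}_d+\mat{U}_d^T\mat{Y}_{\scriptsize \mat{U}_d})\mat{G}_d\mat{G}_d^T$, the trailing and leading $(\mat{G}_d\mat{G}_d^T)^{-1}$ factors of $\mat{Y}_{\scriptsize \mat{U}_d}$ are absorbed, leaving $2\,{\rm Sym}(\mat{G}_d\mat{G}_d^T\mat{U}_d^T\mat{S}_d(\cdots)\mat{G}_d^T)$, which matches the stated Lyapunov system term by term. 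Tracking these Kronecker-factor scalings carefully is the place where an error is most likely, but the algebra is linear and closes cleanly.

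Finally I would invoke the submersion principle to pass from the total space to the quotient. Since $f$ depends only on $\mathbfcal{X}$, it is invariant along each equivalence class (\ref{Eq:EquivalenceClass}), so the lifted cost on $\mathcal{M}$ has a Riemannian gradient that is orthogonal to the vertical space, i.e.\ already horizontal. Combined with the metric invariance established in Proposition \ref{prop:invariance_metric} (which makes $\mathcal{M}/\!\sim$ a Riemannian submersion of $\mathcal{M}$), this shows that the horizontal lift of ${\rm grad}_{[x]}f$ equals ${\rm grad}_x f=\Psi({\rm egrad}_x f)$ with no additional horizontal projection required. Assembling the three scaled-and-projected blocks together with the core derivative gives exactly (\ref{Eq:RiemannianGradient}).
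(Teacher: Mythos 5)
Your proposal is correct and follows essentially the same route as the paper's own proof: chain-rule computation of the partial derivatives in terms of the unfoldings $\mat{S}_d$, rescaling by $((\mat{G}_1\mat{G}_1^T)^{-1},(\mat{G}_2\mat{G}_2^T)^{-1},(\mat{G}_3\mat{G}_3^T)^{-1},\mathbfcal{I})$ to get ${\rm egrad}_x f$, application of the projector $\Psi$ from Proposition \ref{prop:tangent_space_projector}, collapse of the resulting Lyapunov right-hand side to $2\,{\rm Sym}(\mat{G}_d\mat{G}_d^T\mat{U}_d^T\mat{S}_d(\cdots)\mat{G}_d^T)$ via the $(\mat{G}_d\mat{G}_d^T)^{-1}\mat{G}_d\mat{G}_d^T$ cancellation, and the submersion principle to identify $\Psi({\rm egrad}_x f)$ with the horizontal lift of ${\rm grad}_{[x]}f$. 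No gaps; the only difference is presentational, in that you make explicit the Riesz-representation argument for the metric scaling and the invariance-based horizontality argument, which the paper invokes implicitly through the submersion theory citation.
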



{\bf Initial guess for the step-size.} Following \cite{Mishra_ICDC_2014_s, Vandereycken_SIAMOpt_2013_s, Kressner_BIT_2014_s}, the least-squares structure of the cost function in (\ref{Eq:CostFunction}) is exploited to compute a \emph{linearized} step-size guess efficiently along a search direction by considering a polynomial approximation of degree $2$ over the manifold. Given a search direction $\xi_x \in \mathcal{H}_x$, the step-size guess is 
$\argmin_{s \in \mathbb{R}_{+}}
\| \mathbfcal{P}_{\Omega}(
\mathbfcal{G}{\times_1} \mat{U}_{1}{\times_2} \mat{U}_{2}{\times_3} \mat{U}_{3} 
+ s \mathbfcal{G}{\times_1} {\xi_{\scriptsize \mat{U}_{1}}}{\times_2} \mat{U}_{2}{\times_3} \mat{U}_{3} 
+ s \mathbfcal{G}{\times_1} \mat{U}_{1}{\times_2} {\xi_{\scriptsize \mat{U}_{2}}}{\times_3} \mat{U}_{3} 
+ s \mathbfcal{G}{\times_1} \mat{U}_{1}{\times_2}  \mat{U}_{2}{\times_3} {\xi_{\scriptsize \mat{U}_{3} }}
+ s {\xi_{\scriptsize \mathbfcal{G}}}{\times_1} \mat{U}_{1}{\times_2} \mat{U}_{2}{\times_3} \mat{U}_{3} 
) 
- \mathbfcal{P}_{\Omega}(\mathbfcal{X}^{\star}) \|^2_F$, which has a closed-form expression and the numerical cost of computing it is $O(|\Omega| r_1 r_2 r_3)$.

\changeHK{
{\bf Stochastic gradient descent in online setting.} \changeBM{In the online setting, we update} $(\mat{U}_{1}, \mat{U}_{2}, \mat{U}_{3}, \mathbfcal{G})$ every time a frontal slice, i.e., a matrix $\in \mathbb{R}^{n_1 \times n_2}$, is randomly sampled from $\mathbfcal{X}_{i_1, i_2, i_3}^{\star}$. \changeBM{Equivalently, we assume that the tensor grows along the third dimension.} More concretely, we calculate the \changeBM{\emph{rank-one}} Riemannian gradient (\ref{Eq:RiemannianGradient}) for the input slice. $(\mat{U}_{1}, \mat{U}_{2}, \mat{U}_{3}, \mathbfcal{G})$ are updated by taking a step along the negative Riemannian gradient direction. Subsequently, we retract using $R_x$. A popular formula for the step-size $\gamma_k$ at $k$-th update is $\gamma_k  =  \gamma_0/(1 + \gamma_0 \lambda k)$, where $\gamma_0$ is the \changeBM{initial} step-size and $\lambda$ is a fixed reduction factor. Following \cite{Bottou_SGDTricks_2012_s}, we select $\gamma_0$ in the {\it pre-training phase} using a {\it small sample size} of a training set. \changeBM{$\lambda$ is fixed to $10^{-7}$.}
}

\changeBM{{\bf Computational cost.} The total computational cost per iteration of our proposed conjugate gradient implementation is $O(|\Omega| r_1 r_2 r_3)$, where $|\Omega|$ is the number of known entries. It should be stressed that the computational cost of our conjugate gradient implementation is equal to that of \citep{Kressner_BIT_2014_s}.} In the online setting, each stochastic gradient descent update costs $O(|\Omega_{\rm slice}| r_1 r_2 + n_1 r_1^2 + n_2 r_2^2 + \changeHKKK{T} r_3^2 + r_1 r_2 r_3)$, where $|\Omega_{\rm slice}|$ is the number of known entries of the current frontal slice of the incomplete tensor $\mathbfcal{X}_{i_1, i_2, i_3}^{\star}$, \changeHKKK{and $T$ is the number of slices that we have seen along $n_3$ direction.} 


\section{Numerical comparisons}\label{sec:NumericalComparisons}
In the batch setting, we show a number of numerical comparisons of our proposed \changeBM{conjugate gradient algorithm} with state-of-the-art algorithms that include TOpt \cite{Filipovi_MultiSysSigPro_2013_s} and geomCG \cite{Kressner_BIT_2014_s}, for comparisons with Tucker decomposition based algorithms, and  HaLRTC \cite{Liu_IEEETransPAMI_2013_s}, Latent \cite{Tomioka_Latent_2011_s}, and Hard \cite{Signoretto_MachineLearning_2014_s} as nuclear norm minimization algorithms. \changeBM{In the online setting, we compare our proposed stochastic gradient descent algorithm with CANDECOMP/PARAFAC based TeCPSGD \cite{Mardani_IEEETransSP_2015} and OLSTEC \cite{Kasai_IEEEICASSP_2016_s}.} All simulations are performed in Matlab on a 2.6 GHz Intel Core i7 machine with 16 GB RAM. For specific operations with unfoldings of $\mathbfcal{S}$, we use the \verb+mex+ interfaces for Matlab that are provided by the authors of geomCG. For large-scale instances, our algorithm is only compared with geomCG as others cannot handle them. \changeBM{Cases S and R are for batch instances, whereas Case O is for online instances.}

Since the dimension of the space of a tensor $\in \mathbb{R}^{n_1 \times n_2 \times n_3}$ of rank ${\bf r} = (r_1, r_2, r_3)$ is $\text{dim}(\mathcal{M}/\!\sim) = \sum_{d=1}^3 (n_d r_d - r_d^2) +  r_1 r_2 r_3$, we randomly and uniformly select known entries based on a multiple of the dimension, called the \emph{over-sampling} (OS) ratio, to create the \changeHK{train} set $\Omega$. Algorithms are initialized randomly, as suggested in \cite{Kressner_BIT_2014_s}, and are stopped when either the mean square error (MSE) on the \changeHK{train} set $\Omega$ is below $10^{-12}$ or the number of iterations exceeds $250$. We also evaluate the mean square error on a test set $\Gamma$, which is different from $\Omega$. Five runs are performed in each scenario and the plots show all of them. The time plots are shown with standard deviations. \changeBM{It should be noted that we show most numerical comparisons on the \emph{test set} $\Gamma$ as it allows to compare with nuclear norm minimization algorithms, which optimize a different (training) cost function. Additional plots are provided as supplementary material.}



\begin{figure*}[t]
\begin{center}
\begin{tabular}{cc}
\begin{minipage}{0.32\hsize}
\begin{center}
\includegraphics[width=\hsize]{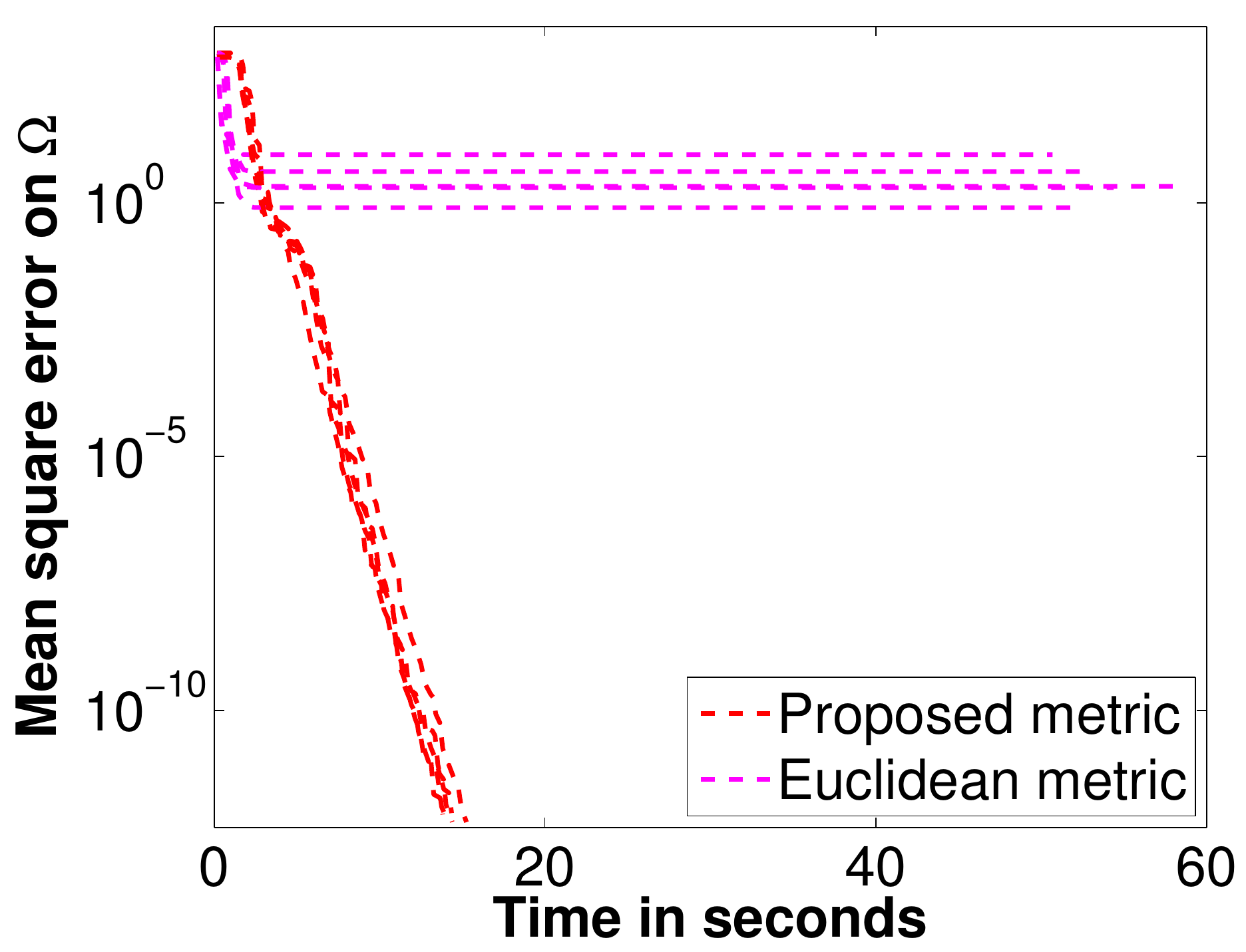}\\
{\changeHK{\scriptsize(a) {\bf Case S1:} comparison between metrics (\changeBM{train} error).}}
\end{center}
\end{minipage}
\begin{minipage}{0.32\hsize}
\begin{center}
\includegraphics[width=\hsize]{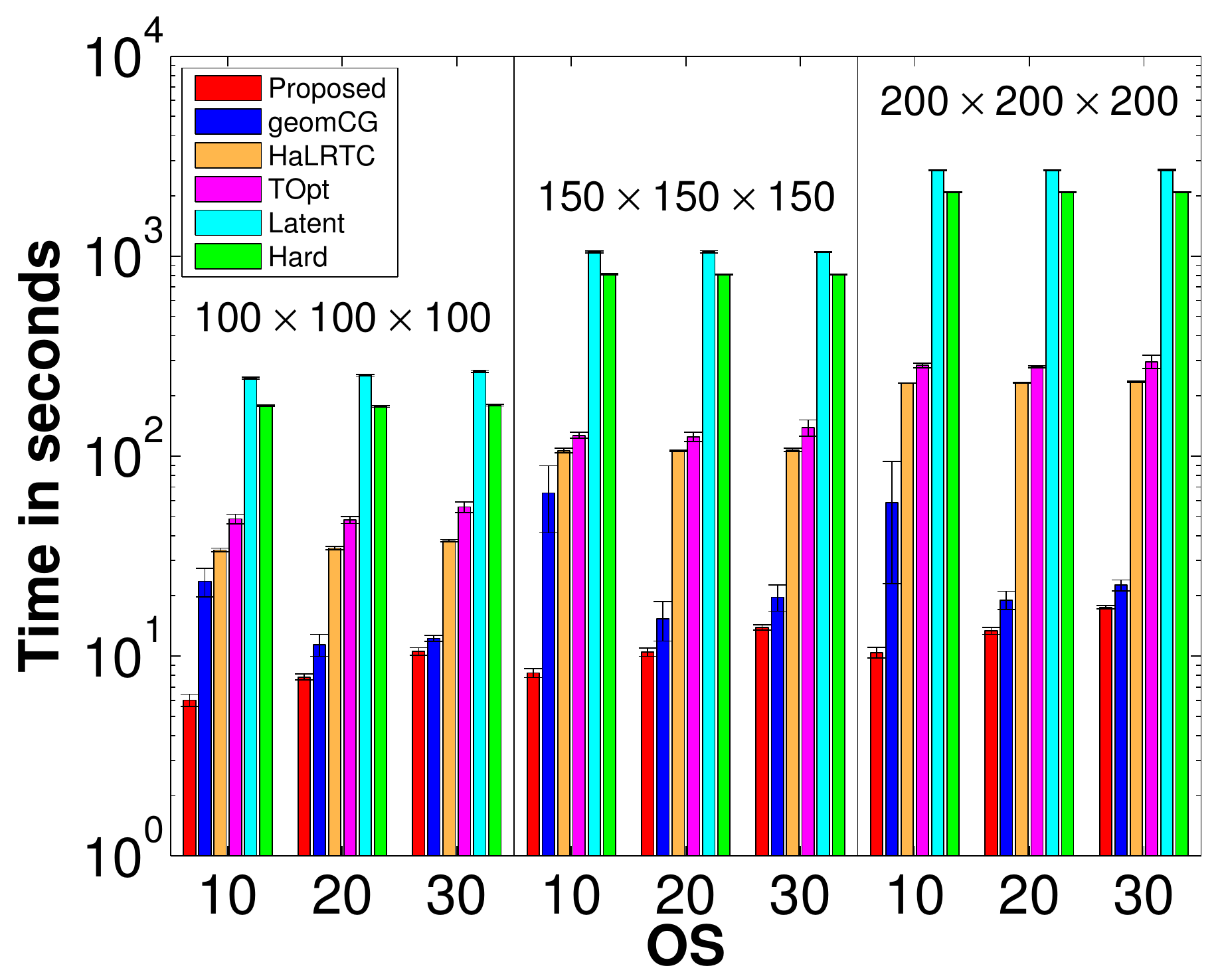}\\
{\scriptsize(b) {\bf Case S2:} {\bf r} = $(10,10,10)$.}
\end{center}
\end{minipage}
\begin{minipage}{0.32\hsize}
\begin{center}
\includegraphics[width=\hsize]{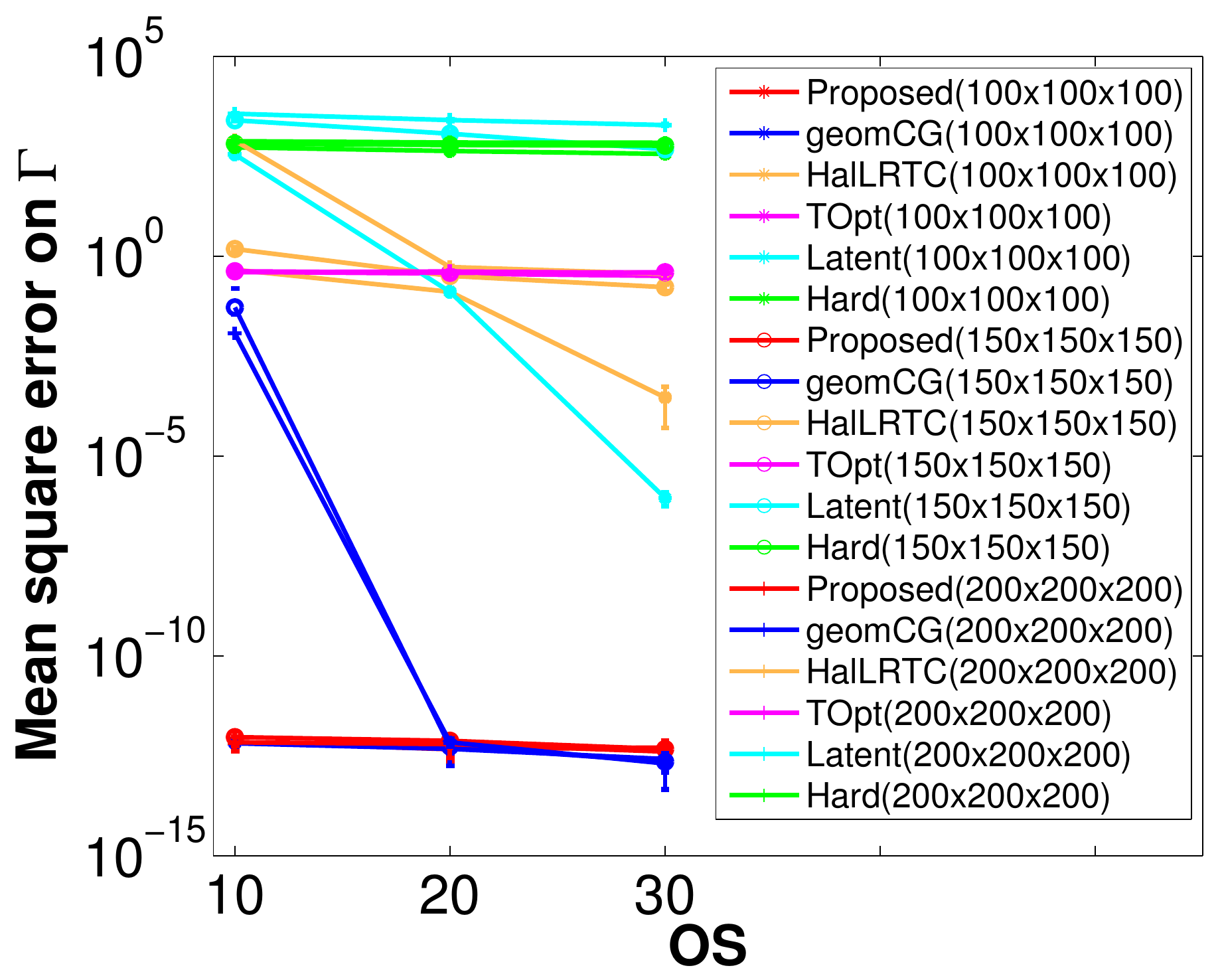}\\
{\scriptsize(c) {\bf Case S2:} {\bf r} = $(10,10,10)$.}
\end{center}
\end{minipage}\\
\begin{minipage}{0.32\hsize}
\begin{center}
\includegraphics[width=\hsize]{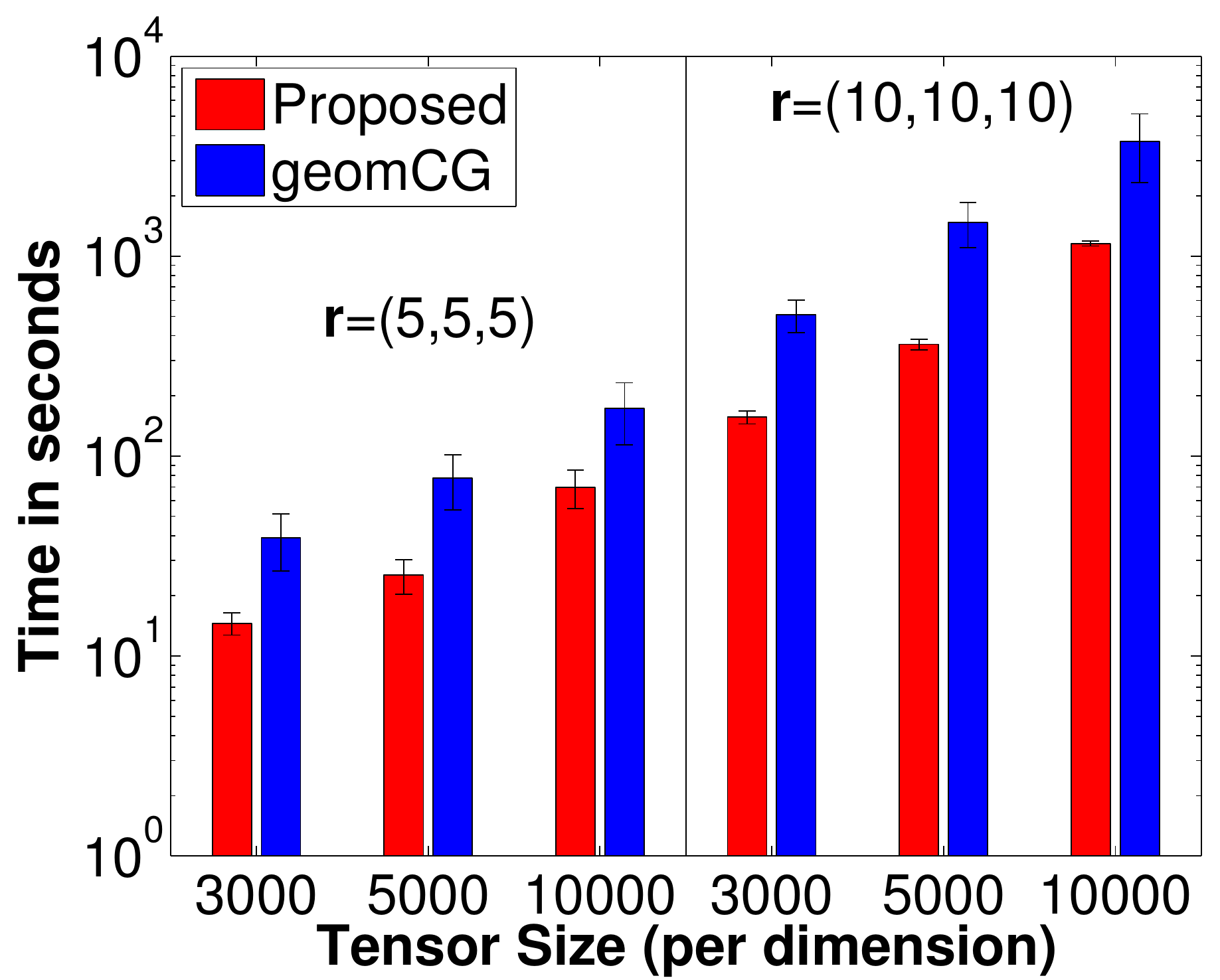}\\
{\scriptsize(d) {\bf Case S3}.}
\end{center}
\end{minipage}
\begin{minipage}{0.32\hsize}
\begin{center}
\includegraphics[width=\hsize]{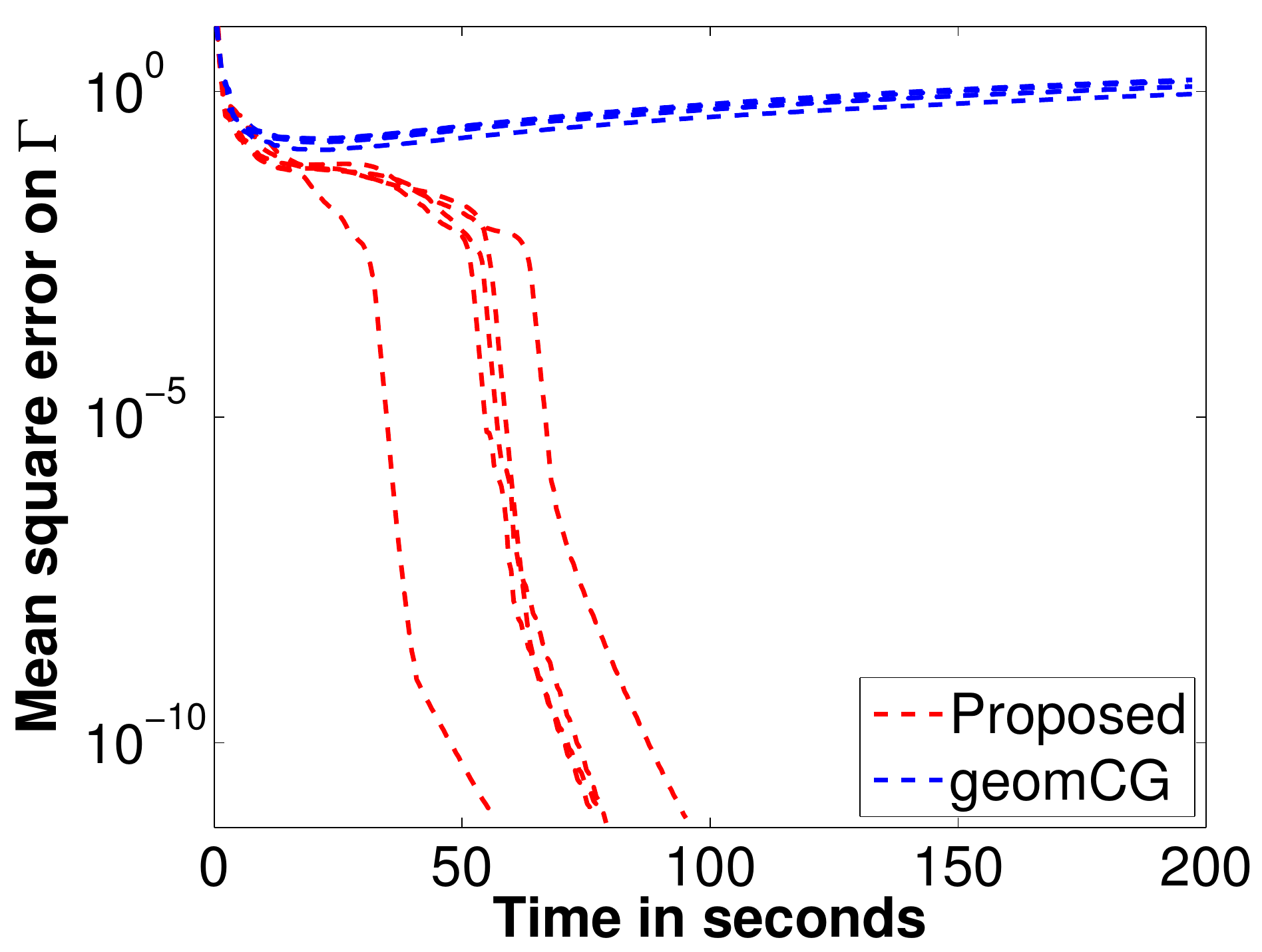}\\
{\scriptsize(e) {\bf Case S4:} OS = $4$.}
\end{center}
\end{minipage}
\begin{minipage}{0.32\hsize}
\begin{center}
\includegraphics[width=\hsize]{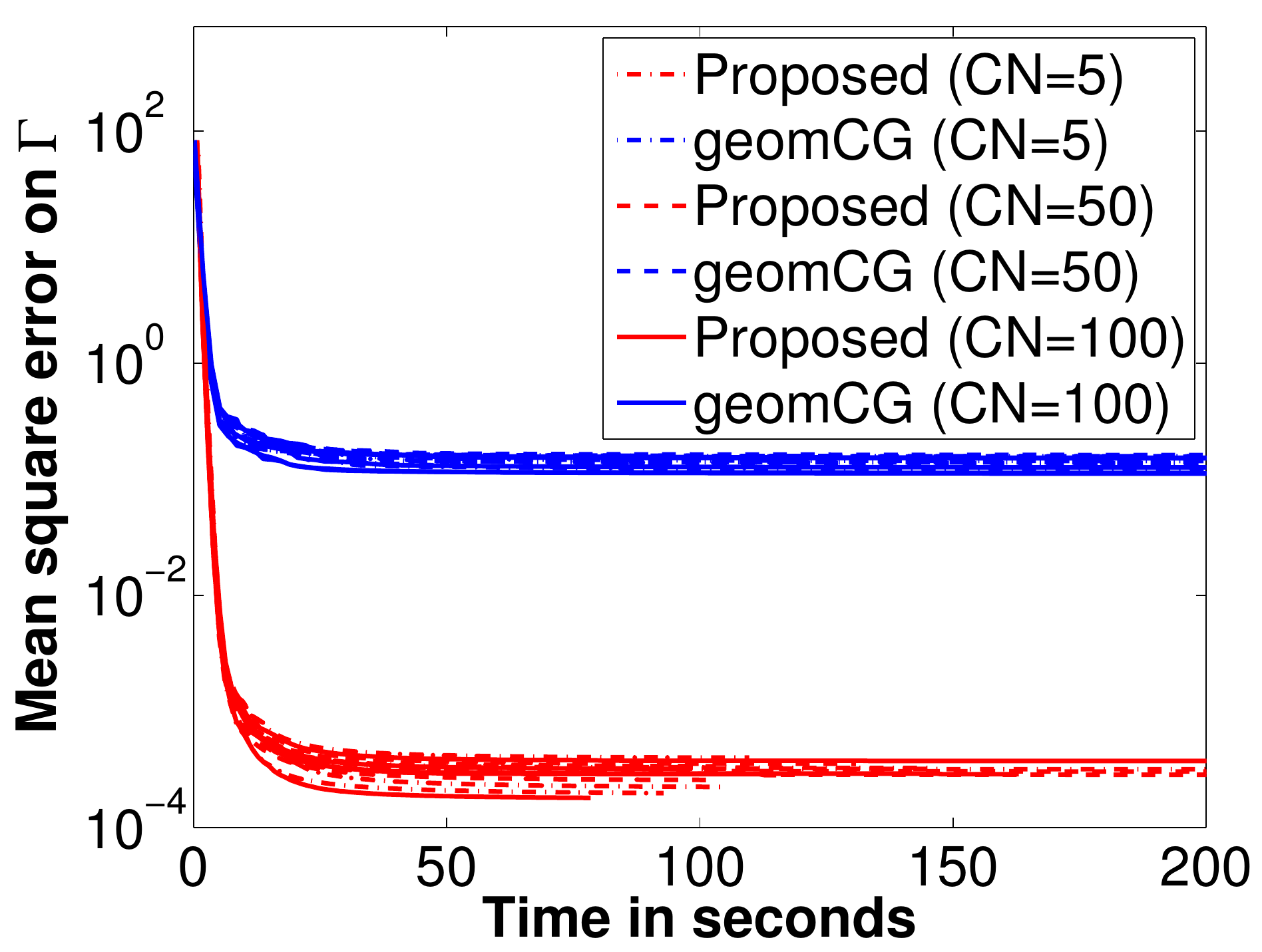}\\
{\scriptsize (f) {\bf Case S5:} CN = $\{5,50,100\}$.}
\end{center}
\end{minipage}\\
\begin{minipage}{0.32\hsize}
\begin{center}
\includegraphics[width=\hsize]{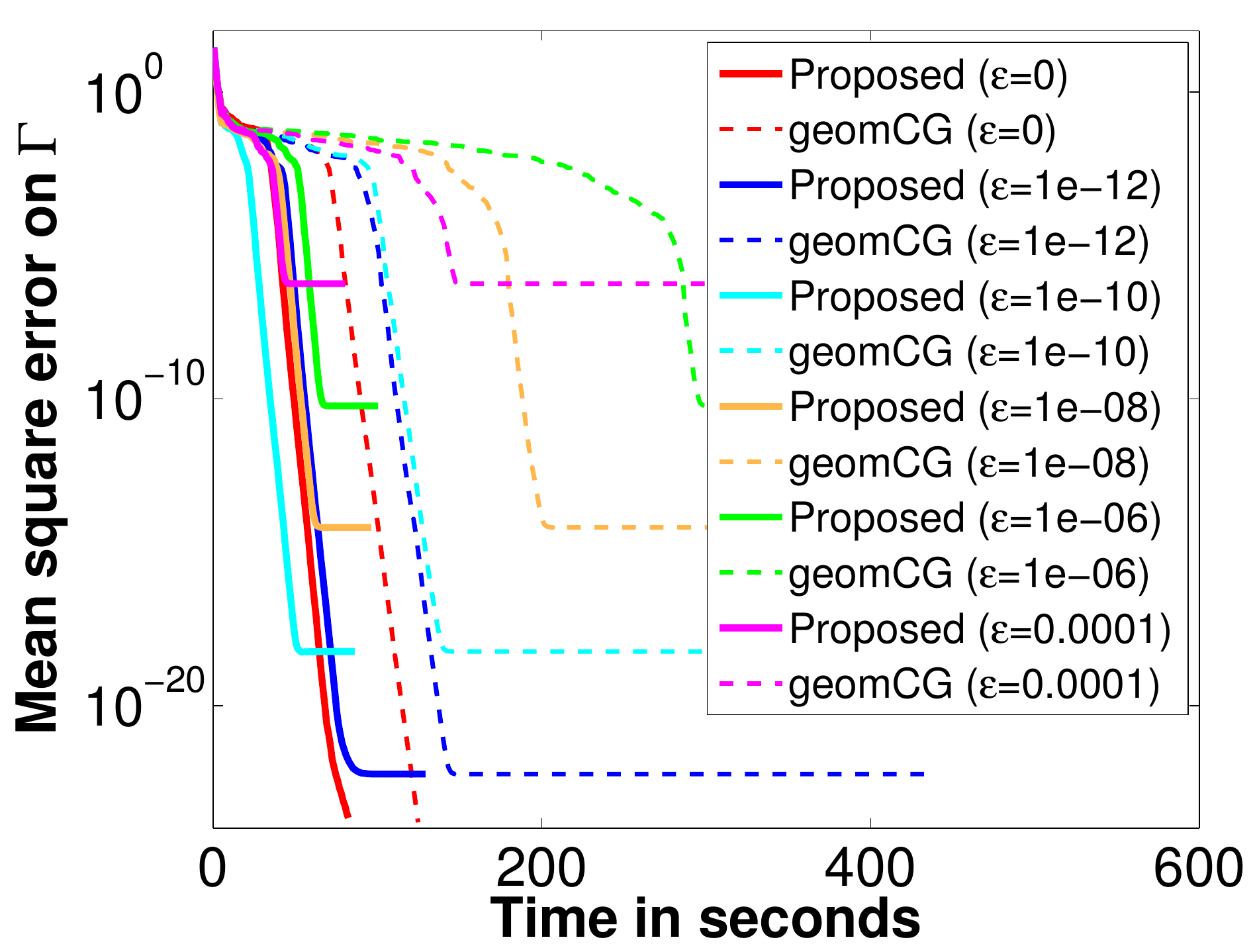}\\
{\scriptsize (g) {\bf Case S6:} noisy data.}
\end{center}
\end{minipage}
\begin{minipage}{0.32\hsize}
\begin{center}
\includegraphics[width=\hsize]{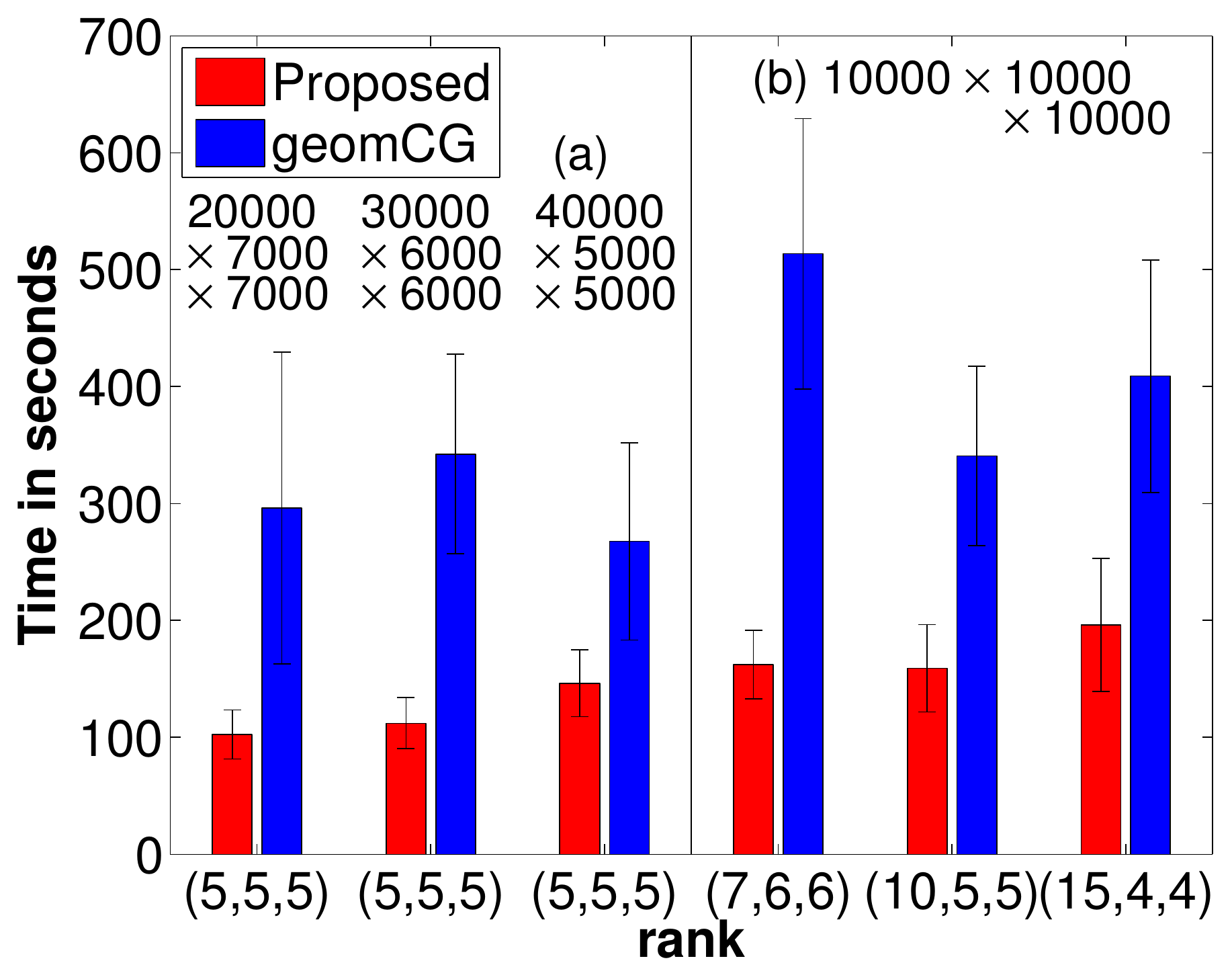}\\
{\scriptsize (h) {\bf Case S7:} \changeHK{rectangular} tensors.}
\end{center}
\end{minipage}
\begin{minipage}{0.32\hsize}
\begin{center}
\includegraphics[width=\hsize]{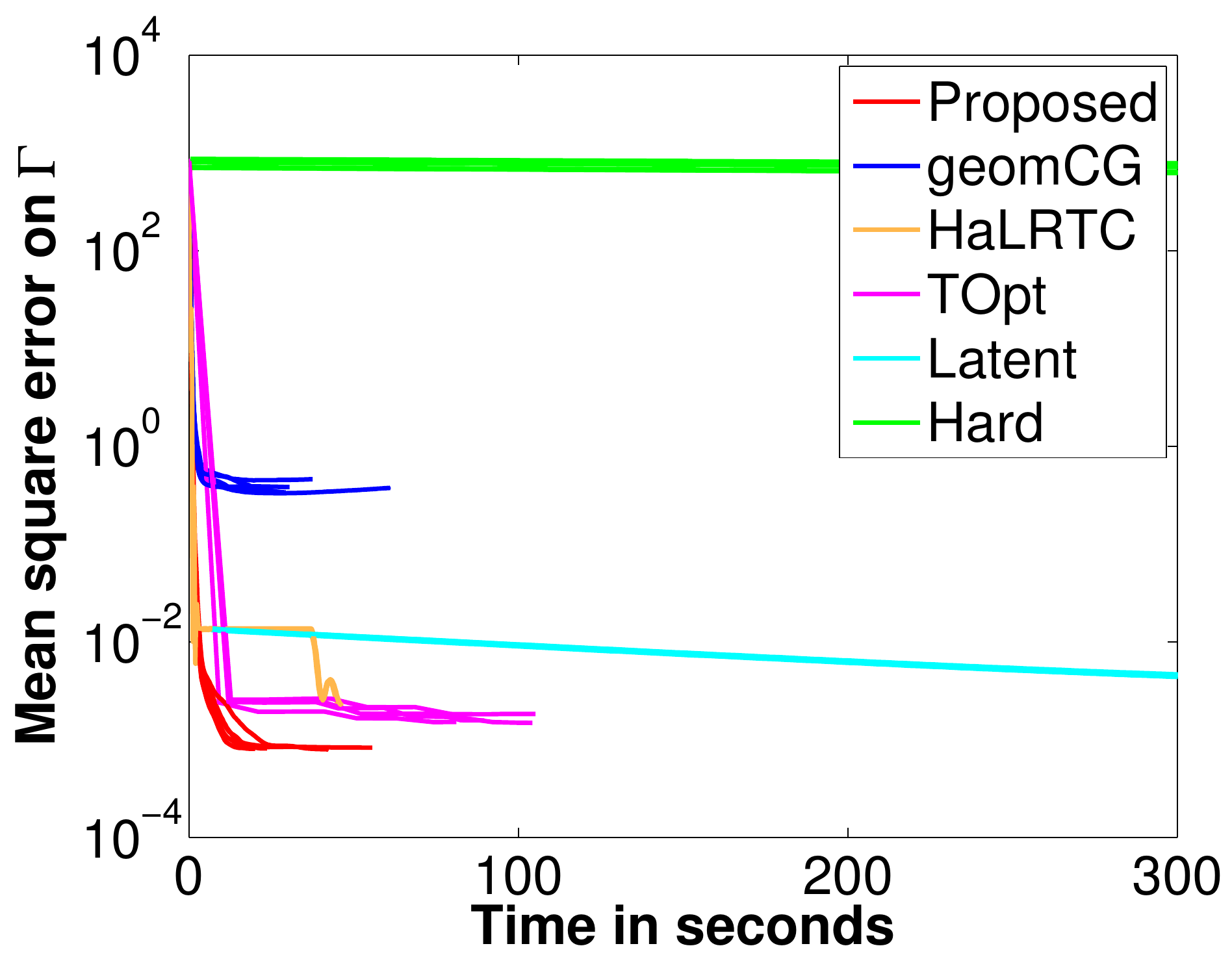}\\
{\scriptsize(i) {\bf Case R1:} Ribeira, OS = 11.}
\end{center}
\end{minipage}
\end{tabular}
\caption{Experiments on synthetic and real datasets.}
\label{fig:syntheticReal}
\end{center}
\end{figure*}

\begin{table*}[t]
\caption{{\bf Cases R1 and R2:} test MSE on $\Gamma$ and time in seconds}
\label{tab:R1R2}
\begin{center}
{\scriptsize
\begin{tabular}{l|l|l|l|l}
\hlinewd{1.0pt}
\textbf{Ribeira}& \multicolumn{2}{c|}{OS  = $11$}  & \multicolumn{2}{c}{OS = $22$}  \\
\hdashline
\quad Algorithm & \qquad Time & \qquad \qquad  MSE on $\Gamma$ & \qquad Time & \qquad \qquad  MSE on $\Gamma$  \\
\hline 
Proposed &${\bf 33 \pm 13}$ & ${\bf 8.2095\cdot 10^{-4} \pm 1.7\cdot 10^{-5} }$ & $67 \pm 43$ & ${\bf 6.9516 \cdot 10^{-4} \pm 1.1 \cdot  10^{-5} }$\\
\hdashline
geomCG & $36 \pm 14$ & $3.8342 \cdot  10^{-1} \pm 4.2 \cdot  10^{-2} $ & $150 \pm 48$ & $6.2590 \cdot 10^{-3} \pm 4.5 \cdot  10^{-3} $\\
\hdashline
HaLRTC & $46 \pm  0$ & $2.2671 \cdot  10^{-3} \pm 3.6 \cdot  10^{-5} $ & $ 48 \pm  0$ & $1.3880 \cdot 10^{-3} \pm 2.7 \cdot  10^{-5} $\\
\hdashline
TOpt & $80 \pm 32$ & $1.7854 \cdot  10^{-3} \pm 3.8 \cdot  10^{-4} $ & ${\bf 27 \pm 21}$ & $2.1259 \cdot 10^{-3} \pm 3.8 \cdot  10^{-4} $\\
\hdashline
Latent & $553 \pm 3$ & $2.9296 \cdot 10^{-3} \pm 6.4 \cdot  10^{-5} $ & $558 \pm 3$ & $1.6339  \cdot  10^{-3} \pm 2.3 \cdot  10^{-5} $\\
\hdashline
Hard & $400 \pm 5$ & $6.5090 \cdot 10^{2}\pm 6.1 \cdot  10^{1} $ & $402 \pm 4$ & $6.5989 \cdot  10^{2} \pm 9.8 \cdot  10^{1}$\\
\hlinewd{1.0pt}
\textbf{MovieLens-10M} & \multicolumn{2}{c|}{Proposed}  & \multicolumn{2}{c}{geomCG}  \\
\hdashline
\qquad  {\bf r} & \qquad Time & \qquad \qquad  MSE on $\Gamma$ &  \qquad  Time & \qquad \qquad MSE on $\Gamma$  \\
\hline
$(4,4,4)$ & ${\bf 1748 \pm 441}$ & ${\bf 0.6762 \pm 1.5 \cdot  10^{-3}}$ &  $2981 \pm 40$ & $ 0.6956 \pm 2.8 \cdot 10^{-3}$ \\
\hdashline
$(6,6,6)$ & ${\bf 6058 \pm 47}$ & ${\bf 0.6913 \pm 3.3 \cdot 10^{-3}}$ &  $6554 \pm 655$ & $0.7398 \pm 7.1 \cdot  10^{-3}$ \\
\hdashline
$(8,8,8)$ & ${\bf 11370 \pm 103}$ & ${\bf 0.7589 \pm 7.1 \cdot 10^{-3}}$ &  $13853 \pm 118 $ & $0.8955 \pm 3.3 \cdot 10^{-2}$ \\
\hdashline
$(10,10,10)$ & ${\bf 32802 \pm 52}$ & ${\bf 1.0107 \pm 2.7 \cdot 10^{-2}}$ &  $38145 \pm 36$ & $1.6550 \pm 8.7 \cdot 10^{-2}$ \\
\hlinewd{1.0pt}
\end{tabular}
}
\end{center}
\end{table*}

{\bf Case S1: comparison with the Euclidean metric.} We first show the benefit of the proposed metric (\ref{Eq:metric}) over the conventional choice of the Euclidean metric that exploits the product structure of $\mathcal{M}$ and symmetry (\ref{Eq:EquivalenceClass_3}). This is defined by combining the individual natural metrics for ${\rm St}(r_d,n_d)$ and $\mathbb{R}^{r_1\times r_2\times r_3}$. For simulations, we randomly generate a tensor of size $200 \times 200 \times 200$ and rank  ${\bf r}=(10,10,10)$. OS is $10$. For simplicity, we compare {\it gradient descent} algorithms with Armijo backtracking linesearch for both the metric choices. Figure \ref{fig:syntheticReal}(a) shows that the algorithm with the metric (\ref{Eq:metric}) gives a superior performance \changeHK{in \emph{train error}} than that of the conventional metric choice. 

{\bf Case S2: small-scale instances.} Small-scale tensors of size $100 \times100 \times100$, $150 \times150\times150$, and $200\times200\times200$ and rank ${\bf r}=(10,10,10)$ are considered. OS is $\{10,20,30\}$. Figure \ref{fig:syntheticReal}(b) shows that our proposed algorithm has faster convergence than others. In Figure \ref{fig:syntheticReal}(c), the lowest test errors are obtained by our proposed algorithm and geomCG. 


{\bf Case S3: large-scale instances.} We consider large-scale tensors of size $3000 \times 3000 \times 3000$, $5000 \times 5000 \times 5000$, and $10000 \times 10000 \times 10000$ and ranks ${\bf r}=(5,5,5)$ and $(10,10,10)$.  OS is $10$. Our proposed algorithm outperforms geomCG in Figure \ref{fig:syntheticReal}(d).


{\bf Case S4: influence of low sampling.} We look into problem instances from scarcely sampled data, e.g., OS is $4$. The test requires completing a tensor of size $10000 \times 10000 \times 10000$ and rank ${\bf r}=(5,5,5)$. Figure \ref{fig:syntheticReal}(e) shows the superior performance of the proposed algorithm against geomCG. Whereas the test error increases for geomCG, it decreases for the proposed algorithm.  

{\bf Case S5: influence of ill-conditioning and low sampling.} We consider the problem instance of {\bf Case S4} with $\text{OS\ } = 5$. Additionally, for generating the instance, we impose a diagonal core $\mathbfcal{G}$ with exponentially decaying \emph{positive} values of condition numbers (CN)  $5$, $50$, and $100$. Figure \ref{fig:syntheticReal}(f) shows that the proposed algorithm outperforms geomCG for all the considered CN values. 

{\bf Case S6: influence of noise.} We evaluate the convergence properties of algorithms under the presence of noise by adding \emph{scaled} Gaussian noise $\mathbfcal{P}_{\Omega}(\mathbfcal{E})$ to $\mathbfcal{P}_{\Omega}(\mathbfcal{X}^\star)$ as in 
\cite{Kressner_BIT_2014_s}. 
The different noise levels are $\epsilon= \{10^{-4}, 10^{-6}, 10^{-8}, 10^{-10}, 10^{-12}\}$. In order to evaluate for $\epsilon=10^{-12}$, the stopping threshold on the MSE of the train set is lowered to $10^{-24}$. The tensor size and rank are same as in {\bf Case S4} and OS is $10$. Figure \ref{fig:syntheticReal}(g) shows that the test error for each $\epsilon$ is almost identical to the $\epsilon^2 \| \mathbfcal{P}_{\Omega} (\mathbfcal{X}^{\star})\|_F ^2$ 
\cite{Kressner_BIT_2014_s}, 
but our proposed algorithm converges faster than geomCG.

{\bf Case S7: \changeHK{rectangular} instances.} We consider instances where the dimensions and ranks along certain modes are different than others. Two cases are considered. Case (7.a) considers tensors size $20000 \times7000 \times 7000$, $30000 \times 6000 \times 6000$, and $40000 \times 5000\times 5000$ with rank ${\bf r}=(5,5,5)$. Case (7.b) considers a tensor of size $10000 \times10000 \times10000$ with ranks $(7,6,6)$, $(10,5,5)$, and $ (15,4,4)$. In all the cases, the proposed algorithm converges faster than geomCG as shown in Figure \ref{fig:syntheticReal}(h). 


{\bf Case R1: hyperspectral image.} We consider the hyperspectral image ``Ribeira" \cite{Foster_VisNero_2007_s} discussed in \cite{Signoretto_IEEESigProLetter_2011_s, Kressner_BIT_2014_s}. The tensor size is $1017 \times 1340 \times 33$, where each slice corresponds to a particular image measured at a different wavelength. As suggested in \cite{Signoretto_IEEESigProLetter_2011_s, Kressner_BIT_2014_s}, we resize it to $203 \times 268 \times 33$. We perform five random samplings of the pixels based on the OS values $11$ and $22$, corresponding to the rank {\bf r}\changeHK{=}$(15,15,6)$ adopted in \cite{Kressner_BIT_2014_s}. This set is further randomly split into $80$/$10$/$10$--train/validation/test partitions. The algorithms are stopped when the MSE on the validation set starts to increase. While $\text{OS}=22$ corresponds to the observation ratio of $10\%$ studied in \cite{Kressner_BIT_2014_s}, $\text{OS}=11$ considers a challenging scenario with the observation ratio of $5\%$. Figures \ref{fig:syntheticReal}(i) shows the good performance of our 
algorithm. Table \ref{tab:R1R2} compiles the results.

{\bf Case R2: MovieLens-10M\footnote{\url{http://grouplens.org/datasets/movielens/}.}.} This dataset contains $10000054$ ratings corresponding to $71567$ users and $10681$ movies. We split the time into $7$-days wide bins results, and finally, get a tensor of size $71567 \times 10681 \times 731$. The fraction of known entries is less than $0.002\%$. The completion task on this dataset reveals {\it periodicity} of the {\it latent} genres. We perform five random $80$/$10$/$10$--train/validation/test partitions. The maximum iteration threshold is set to $500$. In Table \ref{tab:R1R2}, our proposed algorithm consistently gives lower test errors than geomCG across different ranks. 
\begin{figure}[t]
\begin{tabular}{cc}
\hspace*{-0.2cm}\begin{minipage}{0.48\hsize}
\begin{center}
\includegraphics[width=\hsize]{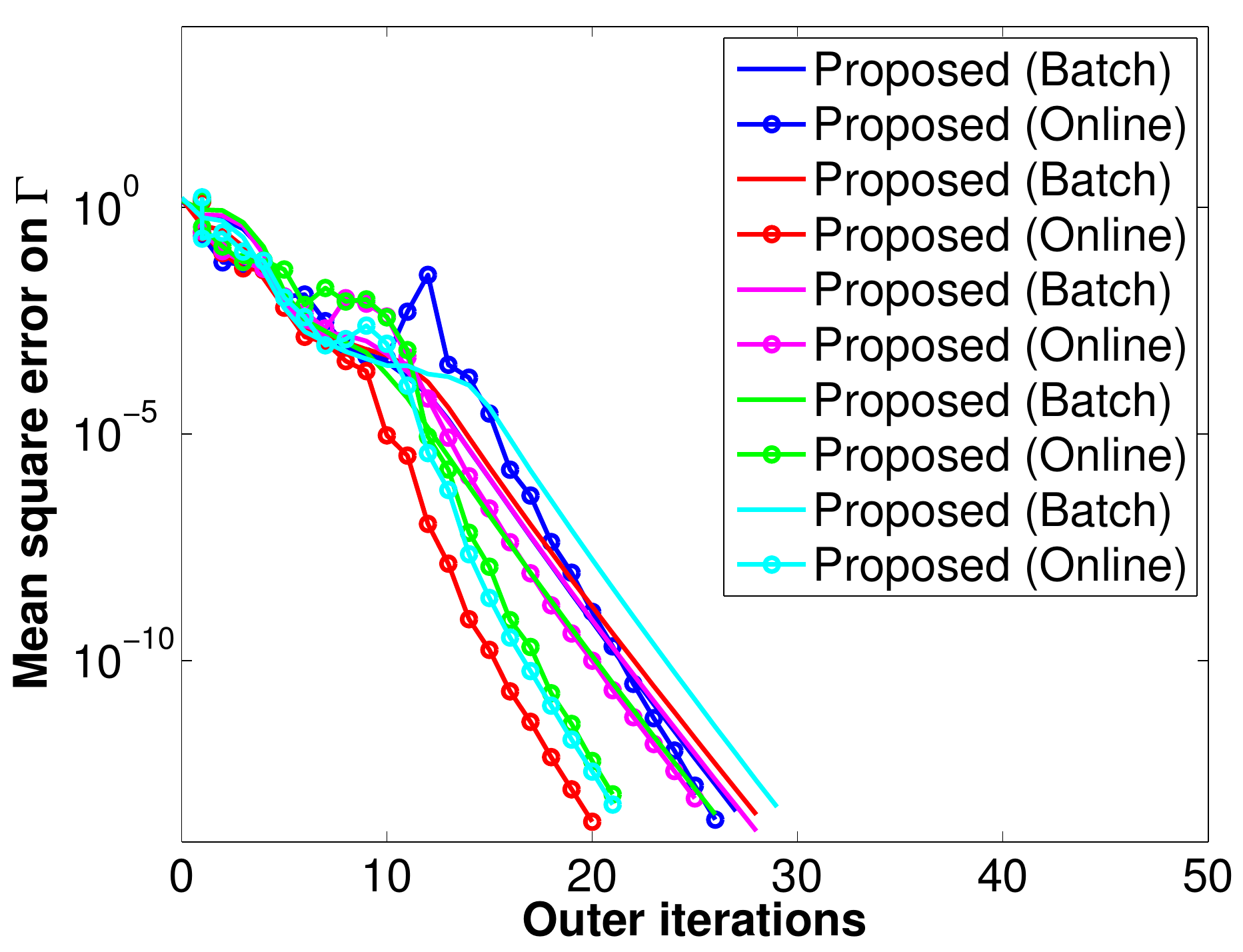}\\
\changeHK{{\scriptsize (a) {\bf Case O:} synthetic dataset.}}
\end{center}
\end{minipage}
\begin{minipage}{0.48\hsize}
\begin{center}
\includegraphics[width=\hsize]{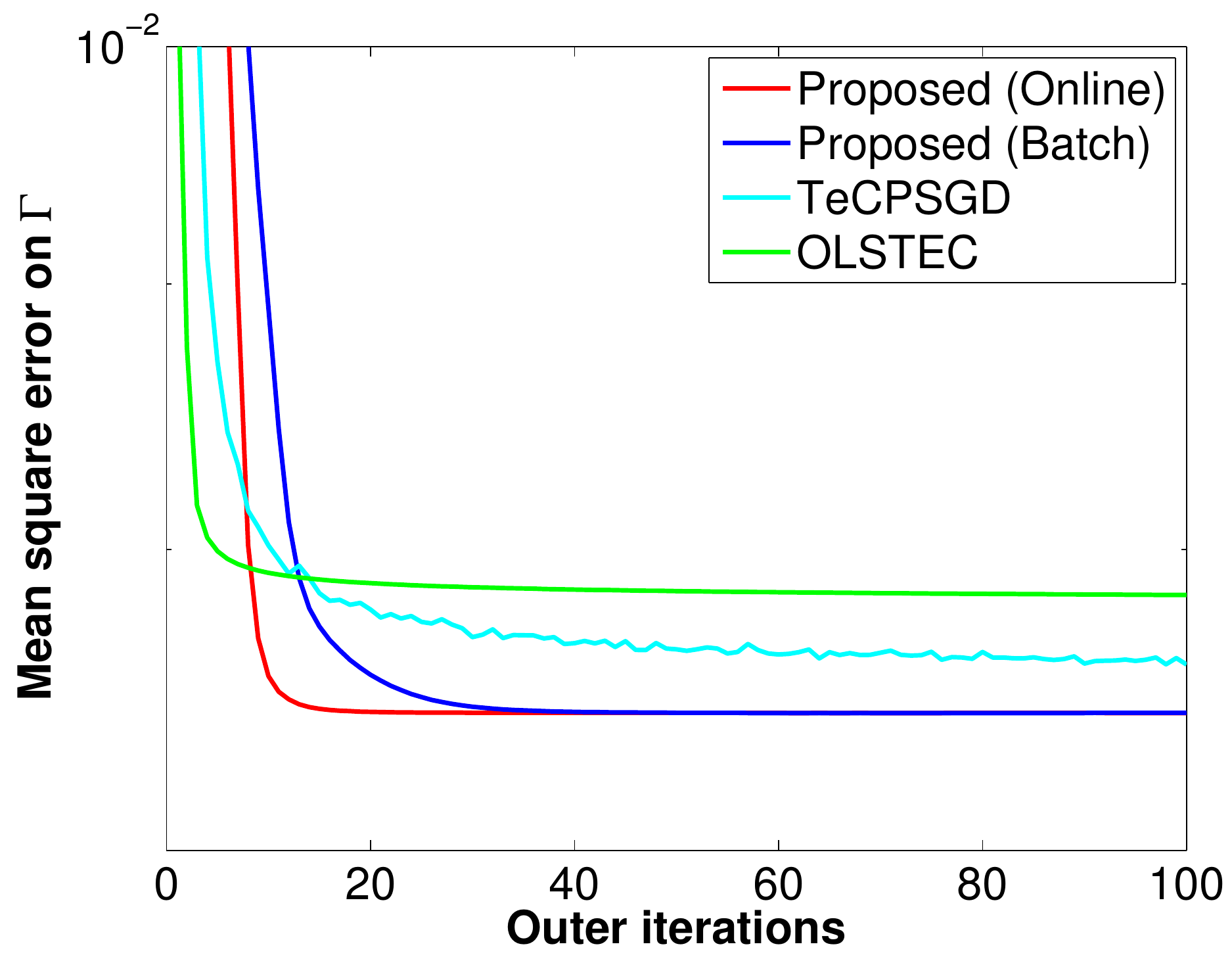}\\
\changeHK{{\scriptsize (b) {\bf Case O:} Airport Hall dataset.}}
\end{center}
\end{minipage}
\end{tabular}
\caption{\changeHK{Experiments on online \changeBM{instances}.}}
\label{fig:online}
\end{figure}

\changeHK{
{\bf Case O: online instances.}
We compare the proposed stochastic gradient descent algorithm with its batch counterpart gradient descent algorithm and with TeCPSGD \cite{Mardani_IEEETransSP_2015} and OLSTEC \cite{Kasai_IEEEICASSP_2016_s}. \changeBM{As the implementations of TeCPSGD and OLSTEC are computationally more intensive than ours, our plots only show test MSE against the number of \emph{outer iterations}, \changeBM{i.e.,} the number of the passes through the data}.}

 \changeBM{Figure \ref{fig:online}(a) shows comparisons on a synthetic instance of tensor size $100\times 100\times 10000$ with rank ${\bf r}=(5,5,5)$. $\gamma_0$ is selected from the step-size list $\{8,9,10,11,12\}$ in the pre-training phase. \changeBM{$10\%$ entries are randomly observed}. The pre-training uses $10\%$ frontal slices of all the slices. The maximum number of outer loops is set to $100$. Figure \ref{fig:online}(a) shows \changeBM{five different runs}, where the online algorithm has the same asymptotic convergence behavior as the batch counterpart on a test dataset $\Gamma$.} \changeBM{Figure \ref{fig:online}(b) shows comparisons on the Airport Hall surveillance video sequence dataset\footnote{\url{http://perception.i2r.a-star.edu.sg/bk_model/bk_index.html}} of size $176\times 144$ with $1000$ frames. $\gamma_0$ is selected from $\{30, 40, 50, 60, 70\}$ and \changeBM{$10\%$ frontal slices are selected for pre-training}. $2\%$ of the entries are observed. In Figure \ref{fig:online}(b), both the proposed online and batch algorithms achieve lower test errors than TeCPSGD and OLSTEC.}


\section{Conclusion}
\label{sec:Conclusion}
We have proposed preconditioned batch (conjugate gradient) and online (stochastic gradient descent) algorithms for the tensor completion problem. The algorithms stem from the Riemannian preconditioning approach that exploits the fundamental structures of symmetry (due to non-uniqueness of Tucker decomposition) and least-squares of the cost function. A novel Riemannian metric (inner product) is proposed that enables to use the versatile Riemannian optimization framework. Numerical comparisons suggest that our proposed algorithms have a superior performance on different benchmarks. 

\subsubsection*{Acknowledgments}
We thank Rodolphe Sepulchre, Paul Van Dooren, and Nicolas Boumal for useful discussions on the paper. This paper presents research results of the Belgian Network DYSCO (Dynamical Systems, Control, and Optimization), funded by the Interuniversity Attraction Poles Programme, initiated by the Belgian State, Science Policy Office. The scientific responsibility rests with its authors. Hiroyuki Kasai is (partly) supported by the Ministry of Internal Affairs and Communications, Japan, as the SCOPE Project (150201002). This work was initiated while Bamdev Mishra was with the Department of Electrical Engineering and Computer Science, University of Li\`ege, 4000 Li\`ege, Belgium and was visiting the Department of Engineering (Control Group), University of Cambridge, Cambridge, UK. He was supported as a research fellow (aspirant) of the Belgian National Fund for Scientific Research (FNRS).

\bibliographystyle{unsrt}
\bibliography{KMarXiv2016}

\clearpage

%
%



\normalsize

\appendix

\renewcommand\thefigure{A.\arabic{figure}}  
\setcounter{figure}{0} 

\renewcommand\thetable{A.\arabic{table}}  
\setcounter{table}{0} 

\renewcommand{\theequation}{A.\arabic{equation}}
\setcounter{equation}{0}


%
%

\section{\changeHK{Proof and} derivation of manifold-related ingredients}
\label{Sup_sec:Derivationofmanifold-relatedingredients}
The concrete computations of the optimization-related ingredients presented in the paper are discussed below. 

The total space is $\mathcal{M}: = {\rm St}(r_1, n_1) \times {\rm St}(r_2, n_2) \times {\rm St}(r_3, n_3) \times \mathbb{R}^{r_1 \times r_2 \times r_3}$. Each element $x \in \mathcal{M}$ has the matrix representation $(\mat{U}_{1}, \mat{U}_{2}, \mat{U}_{3}, \mathbfcal{G})$. Invariance of Tucker decomposition under the transformation $(\mat{U}_{1}, \mat{U}_{2}, \mat{U}_{3}, \mathbfcal{G}) \mapsto  (  \mat{U}_{1}\mat{O}_{1}, \mat{U}_{2}\mat{O}_{2}, \mat{U}_{3}\mat{O}_{3}, \mathbfcal{G} {\times_1} \mat{O}^T_{1} {\times_2} \mat{O}^T_{2} {\times_3} \mat{O}^T_{3})$ for all $\mat{O}_{d} \in \mathcal{O}(r_d)$, the set of orthogonal matrices of size of $r_d \times r_d$ results in equivalence classes of the form $[x] = [(\mat{U}_{1}, \mat{U}_{2}, \mat{U}_{3}, \mathbfcal{G})] := \{( \mat{U}_{1}\mat{O}_{1}, \mat{U}_{2}\mat{O}_{2}, \mat{U}_{3}\mat{O}_{3}, \mathbfcal{G} {\times_1} \mat{O}^T_{1} {\times_2} \mat{O}^T_{2} {\times_3} \mat{O}^T_{3}) : \mat{O}_{d} \in \mathcal{O}(r_d)\}$.

\subsection{Tangent space characterization and the Riemannian metric}
\label{Sup_Sec:TangentspaceandanewRiemannian metric}

The tangent space, $T_{{x}} \mathcal{M}$, at $x$ given by $(\mat{U}_{1}, \mat{U}_{2}, \mat{U}_{3}, \mathbfcal{G})$ in the total space $\mathcal{M}$ is the product space of the tangent spaces of the individual manifolds. From 
\cite{Absil_OptAlgMatManifold_2008}, 
the tangent space has the matrix characterization 
\begin{equation}
\begin{array}{lll}
\label{Sup_Eq:tangent_space}
T_{{x}} {\mathcal{M}} 
& = &  \{ (\mat{Z}_{{\bf U}_{1}}, \mat{Z}_{{\bf U}_{2}}, \mat{Z}_{{\bf U}_{3}}, \mat{Z}_{\mathbfcal{G}}) \in 
\mathbb{R}^{n_1 \times r_1} \times  
\mathbb{R}^{n_2 \times r_2} \times 
\mathbb{R}^{n_3 \times r_3} \times 
\mathbb{R}^{r_1 \times r_2 \times r_3}  \\
& &
: \mat{U}_{d}^T \mat{Z}_{{\bf U}_{d}} +  \mat{Z}_{{\bf U}_{d}}^T \mat{U}_{d} = 0,\ {\rm for\ } d \in \{1,2,3 \} \}.
\end{array}
\end{equation}
The proposed metric ${g}_{x}:T_x \mathcal{M} \times T_x \mathcal{M} \rightarrow \mathbb{R}$ is
\begin{eqnarray}
\label{Sup_Eq:metric}
{g}_{x}(\xi_{x}, \eta_{x}) 
&=&  
\langle \xi_{\scriptsize \mat{U}_{1}},
{\eta}_{\scriptsize\mat{U}_{1}} (\mat{G}_{1} \mat{G}_{1}^T) \rangle +
\langle \xi_{\scriptsize \mat{U}_{2}},
{\eta}_{\scriptsize\mat{U}_{2}} (\mat{G}_{2} \mat{G}_{2}^T) \rangle
+
\langle \xi_{\scriptsize \mat{U}_{3}},
{\eta}_{\scriptsize\mat{U}_{3}} (\mat{G}_{3} \mat{G}_{3}^T) \rangle 
+ \langle {\xi}_{\scriptsize \mathbfcal{G}}, {\eta}_{\scriptsize \mathbfcal{G}}\rangle ,
\end{eqnarray}
where ${\xi}_{{x}}, {\eta}_{{x}} \in T_{{x}} {\mathcal{M}}$ are 
tangent vectors with matrix characterizations 
$({\xi}_{\scriptsize \mat{U}_{1}}, {\xi}_{\scriptsize \mat{U}_{2}}
, {\xi}_{\scriptsize \mat{U}_{3}}, {\xi}_{\scriptsize \mathbfcal{G}})$ and
$({\eta}_{\scriptsize \mat{U}_{1}}, {\eta}_{\scriptsize \mat{U}_{2}}, \\{\eta}_{\scriptsize \mat{U}_{3}}, {\eta}_{\scriptsize \mathbfcal{G}})$, respectively and $\langle \cdot, \cdot \rangle$ is the Euclidean inner product.

\subsection{Characterization of the normal space}
\label{Sup_Sec:normalspace}

Given a vector in $\mathbb{R}^{ n_1 \times r_1} \times \mathbb{R}^{ n_2 \times r_2} \times \mathbb{R}^{ n_3 \times r_3} \times \mathbb{R}^{r_1 \times r_2 \times r_3}$, its projection onto the tangent space $T_{{x}} \mathcal{M}$ is obtained by extracting the component {\it normal}, in the metric sense, to the tangent space. This section describes the characterization of the {\it normal space}, $N_x \mathcal{M}$.

Let $\zeta_{x} = (\zeta_{\scriptsize \mat{U}_1}, \zeta_{\scriptsize \mat{U}_2}, \zeta_{\scriptsize \mat{U}_3}, \zeta_{\scriptsize \mathbfcal{G}}) \in N_{x} \mathcal{M}$, and $\eta_{{x}} = (\eta_{\scriptsize \mat{U}_1}, \eta_{\scriptsize \mat{U}_2}, \eta_{\scriptsize \mat{U}_3}, \eta_{\scriptsize \mathbfcal{G}}) \in T_{{x}} \mathcal{M}$. Since $\zeta_{{x}}$ is orthogonal to $\eta_{{x}}$, i.e., $g_{{x}}(\zeta_{{x}}, \eta_{{x}})=0$, the conditions
\begin{eqnarray}
\label{Sup_Eq:TraceConditions}
{\rm Trace}( \mat{G}_{d}  \mat{G}_{d}^T  \zeta_{\scriptsize \mat{U}_d}^T  \eta_{\scriptsize \mat{U}_d}) = 0, \ {\rm for\ } d \in \{1,2,3 \}
\end{eqnarray}
must hold for all $\eta_{{x}}$ in the tangent space. Additionally from 
\cite{Absil_OptAlgMatManifold_2008}, 
$\eta_{\scriptsize \mat{U}_d}$ has the characterization
\begin{eqnarray}
\label{Sup_Eq:alternateTangentSpace}
\eta_{\scriptsize \mat{U}_d} & = & \mat{U}_d {\bf \Omega} + {\mat{U}_d}_{\perp} \mat{K},
\end{eqnarray}
where ${\bf \Omega}$ is any skew-symmetric matrix, $\mat{K}$ is a any matrix of size ${(n_d-r_d) \times r_d}$, and ${\mat{U}_d}_{\perp}$ is any $n_d \times (n_d - r_d)$ that is orthogonal complement of $\mat{U}_d$. Let $\tilde{\zeta}_{\scriptsize \mat{U}_d} =  \zeta_{\scriptsize \mat{U}_d} \mat{G}_{d}  \mat{G}_{d}^T $ and let $\tilde{\zeta}_{\scriptsize \mat{U}_d}$ is defined as 
\begin{eqnarray}
\label{Sup_Eq:DefVariableChange}
\tilde{\zeta}_{\scriptsize \mat{U}_d} &=& \mat{U}_d \mat{A} + {\mat{U}_d}_{\perp} \mat{B}
\end{eqnarray}
without loss of generality, where $\mat{A} \in \mathbb{R}^{r_d \times r_d}$ and $\mat{B} \in \mathbb{R}^{(n_d-r_d) \times r_d}$ are to be characterized from (\ref{Sup_Eq:TraceConditions}) and (\ref{Sup_Eq:alternateTangentSpace}). A few standard computations show that $\mat{A}$ has to be symmetric and $\mat{B} = \mat{0}$. Consequently,
$\tilde{\zeta}_{\scriptsize \mat{U}_d} = \mat{U}_d \mat{S}_{\scriptsize \mat{U}_d}$, where $\mat{S}_{\scriptsize \mat{U}_d} = \mat{S}_{\scriptsize \mat{U}_d}^T$. Equivalently, $\zeta_{\scriptsize \mat{U}_d}  = \mat{U}_d \mat{S}_{\scriptsize \mat{U}_d} (\mat{G}_{d}  \mat{G}_{d}^T)^{-1}$ for a symmetric matrix $\mat{S}_{\scriptsize \mat{U}_d}$. Finally, the normal space $N_x \mathcal{M}$ has the characterization
\begin{equation}
\begin{array}{lll}
\label{Sup_Eq:NormalSpace}
N_x \mathcal{M}
 & = & \{(\mat{U}_{1}\mat{S}_{\scriptsize \mat{U}_{1}}(\mat{G}_{1}  \mat{G}_{1}^T)^{-1},
\mat{U}_{2}\mat{S}_{\scriptsize \mat{U}_{2}}(\mat{G}_{2}  \mat{G}_{2}^T)^{-1}, 
\mat{U}_{3}\mat{S}_{\scriptsize \mat{U}_{3}}(\mat{G}_{3}  \mat{G}_{3}^T)^{-1}, 0)\\
&  &  :  \mat{S}_{\scriptsize \mat{U}_{d}}  \in \mathbb{R}^{r_d \times r_d},  \mat{S}_{\scriptsize \mat{U}_{d}}^T  = \mat{S}_{\scriptsize \mat{U}_{d}},
  \text{ for\ } d \in \{1, 2, 3\} \}.
\end{array}
\end{equation}

\subsection{Characterization of the vertical space}
\label{Sup_Sec:verticalspace}

The horizontal space projector of a tangent vector is obtained by removing the component along the vertical direction. This section shows the matrix characterization of the vertical space $\mathcal{V}_x$.

$\mathcal{V}_{x}$ is the defined as the linearization of the equivalence class $ [(\mat{U}_1, \mat{U}_2, \mat{U}_3, \mathbfcal{G})]$ at $x = [(\mat{U}_1, \mat{U}_2, \mat{U}_3, \mathbfcal{G})]$. Equivalently, $\mathcal{V}_{x}$ is the linearization of $(
 \mat{U}_1\mat{O}_{1}, \mat{U}_2\mat{O}_{2}, \mat{U}_3\mat{O}_{3}, \mathbfcal{G}_{\times 1} \mat{O}^T_{1}{_{\times 2}} \mat{O}^T_{2}{_{\times 3}} \mat{O}^T_{3})$ along $\mat{O}_{d} \in \mathcal{O}(r_d)$ at the \emph{identity element} for $d \in \{1, 2,3 \}$. From the characterization of linearization of an orthogonal matrix 
 \cite{Absil_OptAlgMatManifold_2008}, 
 we have the characterization for the vertical space as
\begin{equation}
\begin{array}{lll}
\label{Sup_Eq:VerticalComponents}
\mathcal{V}_{x} & = &\{ (\mat{U}_1{\bf \Omega}_1, \mat{U}_2 {\bf \Omega}_2, \mat{U}_3 {\bf \Omega}_3, 
  - (\mathbfcal{G}{\times_1} {\bf \Omega}_1  + 
\mathbfcal{G}{{\times_2}} {\bf \Omega}_2 +
\mathbfcal{G}{{\times_3}} {\bf \Omega}_3)):\\
&   & {\bf \Omega}_d \in \mathbb{R}^{r_d \times r_d}, {\bf \Omega}_d ^T = -{\bf \Omega}_d \text{ for\ } d \in \{1, 2, 3\} \}.
\end{array}
\end{equation}

\subsection{Characterization of the horizontal space}
\label{Sup_Sec:horizontalspace}

The characterization of the horizontal space $\mathcal{H}_{{x}}$ is derived from its orthogonal relationship with the vertical space $\mathcal{V}_{{x}}$.

Let $\xi_{{x}}=(\xi_{{\bf U}_1}, \xi_{{\bf U}_2}, \xi_{{\bf U}_3}, \xi_{\mathbfcal{G}})$  $\in \mathcal{H}_{{x}}$, and $\zeta_{{x}}=(\zeta_{{\bf U}_1}, \zeta_{{\bf U}_2}, \zeta_{{\bf U}_3}, \zeta_{\mathbfcal{G}})$ $\in \mathcal{V}_{{x}}$. Since $\xi_{{x}}$ must be orthogonal to $\zeta_{{x}}$, which is equivalent to $g_{{x}}(\xi_{{x}}, \zeta_{{x}})=0$ in (\ref{Sup_Eq:metric}), the characterization for $\xi_{{x}}$ is derived from (\ref{Sup_Eq:metric}) and  (\ref{Sup_Eq:VerticalComponents}).

\begin{eqnarray*}
{g}_{x}(\xi_{{x}}, \zeta_{{x}}) 
&=& 
\langle \xi_{\scriptsize \mat{U}_1},
\zeta_{\scriptsize \mat{U}_1} (\mat{G}_{1} \mat{G}_{1}^T) \rangle
+\langle \xi_{\scriptsize \mat{U}_2}, 
\zeta_{\scriptsize \mat{U}_2} (\mat{G}_{2} \mat{G}_{2}^T) \rangle 
 + \langle \xi_{\scriptsize \mat{U}_3}, 
\zeta_{\scriptsize \mat{U}_3} (\mat{G}_{3} \mat{G}_{3}^T) \rangle
+ \langle \xi_{\mathbfcal{G}},  \zeta_{\mathbfcal{G}} \rangle
 \\
&=& \langle {\xi}_{\scriptsize \mat{U}},
( \mat{U}_1 {\bf \Omega}_1) (\mat{G}_{1} \mat{G}_{1}^T) \rangle
 +\langle {\xi}_{\scriptsize \mat{U}_2}, 
( \mat{U}_2 {\bf \Omega}_2) (\mat{G}_{2} \mat{G}_{2}^T) \rangle 
 + \langle {\xi}_{\scriptsize \mat{U}_3}, 
( \mat{U}_3 {\bf \Omega}_3)(\mat{G}_{3} \mat{G}_{3}^T) \rangle  \\
&  & \hspace{0cm}+ \langle \xi_{\mathbfcal{G}}, 
 - (\mathbfcal{G}{\times_1} {\bf \Omega}_1  + 
\mathbfcal{G}{{\times_2}} {\bf \Omega}_2 +
\mathbfcal{G}{{\times_3}} {\bf \Omega}_3) \rangle \\
 & & \text{(Switch to unfoldings of\ } \mathbfcal{G}.) \\
 &=& 
{\rm Trace}( (\mat{G}_{1}  \mat{G}_{1}^T) \xi_{\scriptsize  \mat{U}_1}^T ( \mat{U}_1 {\bf \Omega}_1) ) 
+{\rm Trace}( (\mat{G}_{2}  \mat{G}_{2}^T) \xi_{\scriptsize  \mat{U}_2}^T ( \mat{U}_2 {\bf \Omega}_2) )  
+{\rm Trace}( (\mat{G}_{3}  \mat{G}_{3}^T) \xi_{\scriptsize  \mat{U}_3}^T ( \mat{U}_3 {\bf \Omega}_3) )  \\
&& + {\rm Trace} (\xi_{\scriptsize \mat{G}_{1}} (- {\bf \Omega}_1 \mat{G}_{1})^T ) 
+ {\rm Trace} (\xi_{\scriptsize \mat{G}_{2}} (- {\bf \Omega}_2 \mat{G}_{2})^T )
+ {\rm Trace} (\xi_{\scriptsize \mat{G}_{3}} (- {\bf \Omega}_3 \mat{G}_{3})^T ) \\
&=&
 {\rm Trace} \left[ \left\{(\mat{G}_{1}  \mat{G}_{1}^T) \xi_{\scriptsize  \mat{U}_1}^T \mat{U}_1 + \xi_{\scriptsize \mat{G}_{1}}  \mat{G}_{1}^T \right\} {\bf \Omega}_1 \right] 
 +{\rm Trace} \left[ \left\{(\mat{G}_{2}  \mat{G}_{2}^T) \xi_{\scriptsize  \mat{U}_2}^T  \mat{U}_2  + \xi_{\scriptsize \mat{G}_{2}}  \mat{G}_{2}^T \right\} {\bf \Omega}_2 \right]    \\
&&  \hspace{0cm} +{\rm Trace} \left[ \left\{(\mat{G}_{3}  \mat{G}_{3}^T) \xi_{\scriptsize  \mat{U}_3}^T  \mat{U}_3 + \xi_{\scriptsize \mat{G}_{3}}  \mat{G}_{3}^T \right\} {\bf \Omega}_3 \right] , 
\end{eqnarray*}
where $\xi_{\scriptsize \mat{G}_{d}}$ is the mode-$d$ unfolding of $\xi_{\mathbfcal{G}}$.
Since ${g}_{{x}}(\xi_{{x}}, \zeta_{{x}})$ above should be zero for all skew-matrices ${\bf \Omega}_d$, $\xi_{{x}}=(\xi_{{\bf U}_1}, \xi_{{\bf U}_2}, \xi_{{\bf U}_3}, \xi_{\mathbfcal{G}})$  $\in \mathcal{H}_{{x}}$ must satisfy 
\begin{equation}
\begin{array}{l}
\label{Sup_Eq:horizontal_space_reqrements}
\hspace*{0cm}(\mat{G}_{d}  \mat{G}_{d}^T) \xi_{\scriptsize \mat{U}_d}^T \mat{U}_d  + \xi_{\scriptsize \mat{G}_{d}}  \mat{G}_{d}^T \quad \text{is symmetric for } 
 d \in  \{ 1, 2, 3 \} .
\end{array}
\end{equation}

%
%
%
%




\subsection{\changeHK{Proof of Proposition 1}}
\label{Sup_sec:}

We first introduce the following lemma:
\begin{lemma}
Let 
$(\mat{U}_{1}, \mat{U}_{2}, \mat{U}_{3}, \mathbfcal{G}) \in {\rm St}(r_1, n_1) \times {\rm St}(r_2, n_2) \times {\rm St}(r_3, n_3) \times \mathbb{R}^{r_1 \times r_2 \times r_3}$ and 
$\xi_{\scriptsize [(\mat{U}_{1}, \mat{U}_{2}, \mat{U}_{3}, \mathbfcal{G})]}$ 
be a tangent vector to the quotient manifold at $[(\mat{U}_{1}, \mat{U}_{2}, \mat{U}_{3}, \mathbfcal{G})]$. 
The horizontal lifts of $\xi_{\scriptsize [(\mat{U}_{1}, \mat{U}_{2}, \mat{U}_{3}, \mathbfcal{G})]}$ at $(\mat{U}_{1}, \mat{U}_{2}, \mat{U}_{3}, \mathbfcal{G})$ 
and $(\mat{U}_{1}\mat{O}_{1}, \mat{U}_{2}\mat{O}_{2}, \mat{U}_{3}\mat{O}_{3}, \mathbfcal{G} {\times_1} \mat{O}^T_{1} {\times_2} \mat{O}^T_{2} {\times_3} \mat{O}^T_{3})$ are related for $\mat{O}_{d} \in \mathcal{O}(r_d)$ as follows,
\begin{equation}
\label{Sup_Eq:HoritaonLiftRelation}
\begin{array}{lll}
\hspace*{-0.5cm}(\xi_{\scriptsize \mat{U}_{1}\mat{O}_{1}}, \xi_{\scriptsize \mat{U}_{2}\mat{O}_{2}}, \xi_{\scriptsize \mat{U}_{3}\mat{O}_{3}}, 
\xi_{\scriptsize \mathbfcal{G} {\times_1} \mat{O}^T_{1} {\times_2} \mat{O}^T_{2} {\times_3} \mat{O}^T_{3}}) 
&= &(\xi_{\scriptsize \mat{U}_{1}}\mat{O}_{1}, \xi_{\scriptsize \mat{U}_{2}}\mat{O}_{2}, \xi_{\scriptsize \mat{U}_{3}}\mat{O}_{3}, 
\xi_{\scriptsize \mathbfcal{G}} {\times_1} \mat{O}^T_{1} {\times_2} \mat{O}^T_{2} {\times_3} \mat{O}^T_{3}).
\end{array}
\end{equation}
\end{lemma}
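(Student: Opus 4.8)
The plan is to exploit the \emph{uniqueness} of the horizontal lift: an abstract tangent vector $\xi_{\scriptsize [(\mat{U}_{1}, \mat{U}_{2}, \mat{U}_{3}, \mathbfcal{G})]}$ has exactly one horizontal representative at each point of a given fiber, so to identify its lift at the rotated point $(\mat{U}_{1}\mat{O}_{1}, \mat{U}_{2}\mat{O}_{2}, \mat{U}_{3}\mat{O}_{3}, \mathbfcal{G} {\times_1} \mat{O}^T_{1} {\times_2} \mat{O}^T_{2} {\times_3} \mat{O}^T_{3})$ it suffices to exhibit a single horizontal vector there that projects to the same abstract vector and to verify that it equals the claimed right-hand side of (\ref{Sup_Eq:HoritaonLiftRelation}). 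Writing $\phi_{\mat{O}}$ for the group action (\ref{Eq:EquivalenceClass_3}) and $\pi$ for the natural projection onto the quotient, I observe that $\phi_{\mat{O}}$ is \emph{linear} in each block: right multiplication by $\mat{O}_d$ on the Stiefel factors and mode products by $\mat{O}_d^T$ on the core. Hence its differential sends $(\xi_{\scriptsize \mat{U}_{1}}, \xi_{\scriptsize \mat{U}_{2}}, \xi_{\scriptsize \mat{U}_{3}}, \xi_{\scriptsize \mathbfcal{G}})$ to exactly the candidate $\zeta := (\xi_{\scriptsize \mat{U}_{1}}\mat{O}_{1}, \xi_{\scriptsize \mat{U}_{2}}\mat{O}_{2}, \xi_{\scriptsize \mat{U}_{3}}\mat{O}_{3}, \xi_{\scriptsize \mathbfcal{G}} {\times_1} \mat{O}^T_{1} {\times_2} \mat{O}^T_{2} {\times_3} \mat{O}^T_{3})$.

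Two of the three required checks are essentially formal. First, $\zeta$ is tangent at the rotated point, since $(\mat{U}_d\mat{O}_d)^T(\xi_{\scriptsize \mat{U}_d}\mat{O}_d) + (\xi_{\scriptsize \mat{U}_d}\mat{O}_d)^T(\mat{U}_d\mat{O}_d) = \mat{O}_d^T(\mat{U}_d^T\xi_{\scriptsize \mat{U}_d} + \xi_{\scriptsize \mat{U}_d}^T\mat{U}_d)\mat{O}_d$ vanishes because the inner bracket vanishes by (\ref{Sup_Eq:tangent_space}). Second, $\zeta$ projects to the same abstract vector: because $\phi_{\mat{O}}$ maps the fiber through $(\mat{U}_{1}, \mat{U}_{2}, \mat{U}_{3}, \mathbfcal{G})$ to itself, we have $\pi \circ \phi_{\mat{O}} = \pi$, hence $D\pi \circ D\phi_{\mat{O}} = D\pi$, so $\zeta = D\phi_{\mat{O}}[\xi_x]$ descends to the same abstract tangent vector as $\xi_x$. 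I would deliberately phrase this step through $\pi \circ \phi_{\mat{O}} = \pi$, rather than through metric invariance, so as not to invoke Proposition \ref{prop:invariance_metric}, whose own proof rests on this lemma.

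The crux is the third check: that $\zeta$ is \emph{horizontal} at the rotated point, i.e. satisfies (\ref{Sup_Eq:horizontal_space_reqrements}) with all quantities replaced by their rotated counterparts. For this I would compute the rotated unfoldings. Using the standard identity that the mode-$d$ unfolding of $\mathbfcal{G} {\times_1} \mat{O}^T_{1} {\times_2} \mat{O}^T_{2} {\times_3} \mat{O}^T_{3}$ equals $\mat{O}_d^T \mat{G}_d \mat{Q}_d$, where $\mat{Q}_d$ is the Kronecker product of the remaining two orthogonal factors (for instance $\mat{Q}_1 = \mat{O}_3 \otimes \mat{O}_2$) and is itself orthogonal, together with $(\mat{A}\otimes\mat{B})(\mat{C}\otimes\mat{D}) = \mat{A}\mat{C}\otimes\mat{B}\mat{D}$ and $\mat{O}_d\mat{O}_d^T = \mat{I}$, the rotated Gram matrix collapses to $\mat{O}_d^T(\mat{G}_d\mat{G}_d^T)\mat{O}_d$ and the rotated core-tangent unfolding to $\mat{O}_d^T \xi_{\scriptsize \mat{G}_d} \mat{Q}_d$. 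Substituting into the left side of (\ref{Sup_Eq:horizontal_space_reqrements}), the interior factors $\mat{Q}_d\mat{Q}_d^T$ and $\mat{O}_d\mat{O}_d^T$ cancel and the whole expression factors as $\mat{O}_d^T\big[(\mat{G}_d\mat{G}_d^T)\xi_{\scriptsize \mat{U}_d}^T\mat{U}_d + \xi_{\scriptsize \mat{G}_d}\mat{G}_d^T\big]\mat{O}_d$. The bracket is symmetric precisely because $\xi_x$ is horizontal at the original point, and conjugation by the orthogonal $\mat{O}_d$ preserves symmetry; hence $\zeta$ is horizontal. By uniqueness of the horizontal lift, $\zeta$ must be the horizontal lift at the rotated point, which is exactly (\ref{Sup_Eq:HoritaonLiftRelation}).

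I expect the main obstacle to lie entirely in the bookkeeping of this third step: writing the rotated mode-$d$ unfoldings in the correct Kronecker form (with the right ordering of the two remaining mode factors for each $d$) and confirming that the orthogonality cancellations leave precisely the conjugated original horizontal expression. The tangency and projection checks are routine, and the only conceptual care needed is avoiding circularity with Proposition \ref{prop:invariance_metric}.
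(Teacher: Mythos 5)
Your proposal is correct and follows essentially the same route as the paper's own proof: both take the candidate lift to be the image of $\xi_x$ under the differential of the group action, show it descends to the same abstract tangent vector (your direct chain rule on $\pi \circ \phi_{\mat{O}} = \pi$ is a compact form of the paper's test-function argument for the same identity), and verify horizontality at the rotated point via the identical Kronecker-unfolding computation that collapses to $\mat{O}_d^T\bigl[(\mat{G}_d\mat{G}_d^T)\xi_{\scriptsize \mat{U}_d}^T\mat{U}_d + \xi_{\scriptsize \mat{G}_d}\mat{G}_d^T\bigr]\mat{O}_d$. The final appeal to uniqueness of the horizontal lift matches the paper's conclusion as well.
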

\begin{proof}
Let $f: ({\rm St}(r_1, n_1) \times {\rm St}(r_2, n_2) \times {\rm St}(r_3, n_3) \times \mathbb{R}^{r_1 \times r_2 \times r_3}/(\mathcal{O}{(r_1)} \times \mathcal{O}{(r_2)} \times \mathcal{O}{(r_3)})) \rightarrow \mathbb{R}$ be an arbitrary smooth function, and define 
\begin{equation*}
\begin{array}{ll}
	\bar{f} &  :=   f \circ \pi :
	 ({\rm St}(r_1, n_1) \times {\rm St}(r_2, n_2) \times {\rm St}(r_3, n_3) \times \mathbb{R}^{r_1 \times r_2 \times r_3}
	/(\mathcal{O}{(r_1)} \times \mathcal{O}{(r_2)} \times \mathcal{O}{(r_3)})) \rightarrow \mathbb{R},
\end{array}	
\end{equation*}
\changeHKKK{where $\pi$ is the mapping $\pi: \mathcal{M} \rightarrow \mathcal{M}/\sim$ defined by $x \mapsto [x]$.}

Consider the mapping 
\begin{equation*}
\begin{array}{l}
h : (\mat{U}_{1}, \mat{U}_{2}, \mat{U}_{3}, \mathbfcal{G}) 
\mapsto  (\mat{U}_{1}\mat{O}_{1}, \mat{U}_{2}\mat{O}_{2}, \mat{U}_{3}\mat{O}_{3}, \mathbfcal{G} {\times_1} \mat{O}^T_{1} {\times_2} \mat{O}^T_{2} {\times_3} \mat{O}^T_{3}), 
\end{array}
\end{equation*}
where $\mat{O}_{d} \in \mathcal{O}(r_d)$. Since $\pi(h(\mat{U}_{1}, \mat{U}_{2}, \mat{U}_{3}, \mathbfcal{G}))=\pi(\mat{U}_{1}, \mat{U}_{2}, \mat{U}_{3}, \mathbfcal{G})$ for all $(\mat{U}_{1}, \mat{U}_{2}, \mat{U}_{3}, \mathbfcal{G})$, we have
\begin{equation*}
	\bar{f}(h(\mat{U}_{1}, \mat{U}_{2}, \mat{U}_{3}, \mathbfcal{G}))= \bar{f}(\mat{U}_{1}, \mat{U}_{2}, \mat{U}_{3}, \mathbfcal{G}).
\end{equation*}

By taking the differential of both sides,
\begin{equation}
\label{Sup_Eq:Diff}
\begin{array}{ll}
&\hspace*{-1cm} \changeHKKK{{\rm D}} \bar{f}(h(\mat{U}_{1}, \mat{U}_{2}, \mat{U}_{3}, \mathbfcal{G}))
[\changeHKKK{{\rm D}} h(\mat{U}_{1}, \mat{U}_{2}, \mat{U}_{3}, \mathbfcal{G})[(\xi_{\scriptsize \mat{U}_{1}}, \xi_{\scriptsize \mat{U}_{2}}, \xi_{\scriptsize \mat{U}_{3}}, \xi_{\scriptsize \mathbfcal{G}})]]
= \changeHKKK{{\rm D}} \bar{f}(\mat{U}_{1}, \mat{U}_{2}, \mat{U}_{3}, \mathbfcal{G})[(\xi_{\scriptsize \mat{U}_{1}}, \xi_{\scriptsize \mat{U}_{2}}, \xi_{\scriptsize \mat{U}_{3}}, \xi_{\scriptsize \mathbfcal{G}})].
\end{array}
\end{equation}

By \changeHKKK{noting the} definition of $(\xi_{\scriptsize \mat{U}_{1}}, \xi_{\scriptsize \mat{U}_{2}}, \xi_{\scriptsize \mat{U}_{3}}, \xi_{\scriptsize \mathbfcal{G}})$, i.e., $\changeHKKK{{\rm D}} \pi(\mat{U}_{1}, \mat{U}_{2}, \mat{U}_{3}, \mathbfcal{G})[\xi_{\scriptsize \mat{U}_{1}}, \xi_{\scriptsize \mat{U}_{2}}, \xi_{\scriptsize \mat{U}_{3}}, \xi_{\scriptsize \mathbfcal{G}}] = \xi_{\scriptsize [(\xi_{\scriptsize \mat{U}_{1}}, \xi_{\scriptsize \mat{U}_{2}},\\ \xi_{\scriptsize \mat{U}_{3}}, \xi_{\scriptsize \mathbfcal{G}})]}$, \changeHKKK{the right side of (\ref{Sup_Eq:Diff}) is}
\begin{equation*}
\begin{array}{lll}
	\changeHKKK{{\rm D}} \bar{f}(\mat{U}_{1}, \mat{U}_{2}, \mat{U}_{3}, \mathbfcal{G})
	[(\xi_{\scriptsize \mat{U}_{1}}, \xi_{\scriptsize \mat{U}_{2}}, \xi_{\scriptsize \mat{U}_{3}}, \xi_{\scriptsize \mathbfcal{G}})]
	&=&\changeHKKK{{\rm D}} f ( \pi (\mat{U}_{1}, \mat{U}_{2}, \mat{U}_{3}, \mathbfcal{G}))[\changeHKKK{{\rm D}} \pi (\mat{U}_{1}, \mat{U}_{2}, \mat{U}_{3}, \mathbfcal{G})[(\xi_{\scriptsize \mat{U}_{1}}, \xi_{\scriptsize \mat{U}_{2}}, \xi_{\scriptsize \mat{U}_{3}}, \xi_{\scriptsize \mathbfcal{G}})]\\
	&=& \changeHKKK{{\rm D}} f ( \pi (\mat{U}_{1}, \mat{U}_{2}, \mat{U}_{3}, \mathbfcal{G}))[\xi_{\scriptsize [(\mat{U}_{1}, \mat{U}_{2}, \mat{U}_{3}, \mathbfcal{G})]}],
\end{array}
\end{equation*}
where \changeHKKK{the chain rule} is applied to the \changeHKKK{first} equality. 

Moreover, from the directional derivatives of the mapping $h$, \changeHKKK{the bracket of the left side of (\ref{Sup_Eq:Diff}) is obtained as}
\begin{equation*}
\begin{array}{lll}
	\changeHKKK{{\rm D}} h(\mat{U}_{1}, \mat{U}_{2}, \mat{U}_{3}, \mathbfcal{G})
	[(\xi_{\scriptsize \mat{U}_{1}}, \xi_{\scriptsize \mat{U}_{2}}, \xi_{\scriptsize \mat{U}_{3}}, \xi_{\scriptsize \mathbfcal{G}})]
	&=& (\xi_{\scriptsize \mat{U}_{1}}\mat{O}_{1}, \xi_{\scriptsize \mat{U}_{2}}\mat{O}_{2}, \xi_{\scriptsize \mat{U}_{3}}\mat{O}_{3}, \xi_{\scriptsize \mathbfcal{G}} {\times_1} \mat{O}^T_{1} {\times_2} \mat{O}^T_{2} {\times_3} \mat{O}^T_{3}).
\end{array}
\end{equation*}
Therefore, (\ref{Sup_Eq:Diff}) yields
\begin{equation}
\begin{array}{l}
\label{Sup_Eq:Diff_new}
\changeHKKK{{\rm D}} \bar{f}(\mat{U}_{1}\mat{O}_{1}, \mat{U}_{2}\mat{O}_{2}, \mat{U}_{3}\mat{O}_{3}, \mathbfcal{G} {\times_1} \mat{O}^T_{1} {\times_2} \mat{O}^T_{2} {\times_3} \mat{O}^T_{3}) 
[(\xi_{\scriptsize \mat{U}_{1}}\mat{O}_{1}, \xi_{\scriptsize \mat{U}_{2}}\mat{O}_{2}, \xi_{\scriptsize \mat{U}_{3}}\mat{O}_{3}, \xi_{\scriptsize \mathbfcal{G}} {\times_1} \mat{O}^T_{1} {\times_2} \mat{O}^T_{2} {\times_3} \mat{O}^T_{3})]\\
\hspace*{1cm}= \changeHKKK{{\rm D}} f(\pi (\mat{U}_{1}\mat{O}_{1}, \mat{U}_{2}\mat{O}_{2}, \mat{U}_{3}\mat{O}_{3}, \mathbfcal{G} {\times_1} \mat{O}^T_{1} {\times_2} \mat{O}^T_{2} {\times_3} \mat{O}^T_{3}))
[\xi_{\scriptsize [(\mat{U}_{1}, \mat{U}_{2}, \mat{U}_{3}, \mathbfcal{G})]}],
\end{array}
\end{equation}
\changeHKKK{where we address the equivalence class $\pi(\mat{U}_{1}, \mat{U}_{2}, \mat{U}_{3}, \mathbfcal{G})=\pi(\mat{U}_{1}\mat{O}_{1}, \mat{U}_{2}\mat{O}_{2}, \mat{U}_{3}\mat{O}_{3}, \mathbfcal{G} {\times_1} \mat{O}^T_{1} {\times_2} \mat{O}^T_{2} {\times_3} \mat{O}^T_{3})$.}
\changeHKKK{The left side of (\ref{Sup_Eq:Diff_new}) is further transformed by the chain rule as
\begin{equation}
\begin{array}{l}
\label{Sup_Eq:Diff_new_left}
{\rm D} \bar{f}(\mat{U}_{1}\mat{O}_{1}, \mat{U}_{2}\mat{O}_{2}, \mat{U}_{3}\mat{O}_{3}, \mathbfcal{G} {\times_1} \mat{O}^T_{1} {\times_2} \mat{O}^T_{2} {\times_3} \mat{O}^T_{3}) 
[(\xi_{\scriptsize \mat{U}_{1}}\mat{O}_{1}, \xi_{\scriptsize \mat{U}_{2}}\mat{O}_{2}, \xi_{\scriptsize \mat{U}_{3}}\mat{O}_{3}, \xi_{\scriptsize \mathbfcal{G}} {\times_1} \mat{O}^T_{1} {\times_2} \mat{O}^T_{2} {\times_3} \mat{O}^T_{3})]\\
\hspace*{0.1cm}= {\rm D} f( \pi(\mat{U}_{1}\mat{O}_{1}, \mat{U}_{2}\mat{O}_{2}, \mat{U}_{3}\mat{O}_{3}, \mathbfcal{G} {\times_1} \mat{O}^T_{1} {\times_2} \mat{O}^T_{2} {\times_3} \mat{O}^T_{3})) 
[{\rm D}  \pi(\mat{U}_{1}\mat{O}_{1}, \mat{U}_{2}\mat{O}_{2}, \mat{U}_{3}\mat{O}_{3}, \mathbfcal{G} {\times_1} \mat{O}^T_{1} {\times_2} \mat{O}^T_{2} {\times_3} \mat{O}^T_{3})]\\
\hspace*{2cm}[(\xi_{\scriptsize \mat{U}_{1}}\mat{O}_{1}, \xi_{\scriptsize \mat{U}_{2}}\mat{O}_{2}, \xi_{\scriptsize \mat{U}_{3}}\mat{O}_{3}, \xi_{\scriptsize \mathbfcal{G}} {\times_1} \mat{O}^T_{1} {\times_2} \mat{O}^T_{2} {\times_3} \mat{O}^T_{3})].
\end{array}
\end{equation}
By comparing the right sides of (\ref{Sup_Eq:Diff_new}) and (\ref{Sup_Eq:Diff_new_left})}, 
since this equality holds for any smooth function $f$, it implies that 
\begin{equation}
\begin{array}{l}
\changeHKKK{{\rm D}} \pi (\mat{U}_{1}\mat{O}_{1}, \mat{U}_{2}\mat{O}_{2}, \mat{U}_{3}\mat{O}_{3}, \mathbfcal{G} {\times_1} \mat{O}^T_{1} {\times_2} \mat{O}^T_{2} {\times_3} \mat{O}^T_{3})
[(\xi_{\scriptsize \mat{U}_{1}}\mat{O}_{1}, \xi_{\scriptsize \mat{U}_{2}}\mat{O}_{2}, \xi_{\scriptsize \mat{U}_{3}}\mat{O}_{3}, \xi_{\scriptsize \mathbfcal{G}} {\times_1} \mat{O}^T_{1} {\times_2} \mat{O}^T_{2} {\times_3} \mat{O}^T_{3})]\\
\hspace*{1cm}= \xi_{\scriptsize [(\mat{U}_{1}, \mat{U}_{2}, \mat{U}_{3}, \mathbfcal{G})]}.
\end{array}
\end{equation}
Finally, we check whether $(\xi_{\scriptsize \mat{U}_{1}}\mat{O}_{1}, \xi_{\scriptsize \mat{U}_{2}}\mat{O}_{2}, \xi_{\scriptsize \mat{U}_{3}}\mat{O}_{3}, \xi_{\scriptsize \mathbfcal{G}} {\times_1} \mat{O}^T_{1} {\times_2} \mat{O}^T_{2} {\times_3} \mat{O}^T_{3})$ is an element of \\ $\mathcal{H}_{\scriptsize (\mat{U}_{1}\mat{O}_{1}, \mat{U}_{2}\mat{O}_{2}, \mat{U}_{3}\mat{O}_{3}, \mathbfcal{G} {\times_1} \mat{O}^T_{1} {\times_2} \mat{O}^T_{2} {\times_3} \mat{O}^T_{3})}$. 
Addressing that the mode-$1$ unfolding of $\mathbfcal{G} {\times_1} \mat{O}^T_{1} {\times_2} \mat{O}^T_{2} {\times_3} \mat{O}^T_{3}$ is $ \mat{O}_1^T \mat{G}_1 (\mat{O}_3^T \otimes \mat{O}_2^T )^T$, plugging $(\xi_{\scriptsize \mat{U}_{1}}\mat{O}_{1}, \xi_{\scriptsize \mat{U}_{2}}\mat{O}_{2}, \xi_{\scriptsize \mat{U}_{3}}\mat{O}_{3}, \xi_{\scriptsize \mathbfcal{G}} {\times_1} \mat{O}^T_{1} {\times_2} \mat{O}^T_{2} {\times_3} \mat{O}^T_{3})$ into $(\mat{G}_{d}  \mat{G}_{d}^T) \xi_{\scriptsize \mat{U}_d}^T \mat{U}_d  + \xi_{\scriptsize \mat{G}_{d}}  \mat{G}_{d}^T $ in (\ref{Sup_Eq:horizontal_space_reqrements}) yields
\begin{equation}
\begin{array}{l}
\hspace*{-1.5cm}( \mat{O}_1^T \mat{G}_{1} (\mat{O}_3^T \otimes \mat{O}_2^T)^T) 
\left( \mat{O}_1^T \mat{G}_{1} (\mat{O}_3^T \otimes \mat{O}_2^T)^T\right)^T
(\xi_{\scriptsize \mat{U}_{1}}\mat{O}_{1})^T(\mat{U}_{1}\mat{O}_{1}) \\
\hspace*{0cm}+(\mat{O}_1^T)^T \xi_{\scriptsize \mat{G}_{1}} (\mat{O}_3^T \otimes \mat{O}_2^T)
(\mat{O}_1^T \mat{G}_1 (\mat{O}_3^T \otimes \mat{O}_2^T )^T)^T\\
\hspace*{3cm}= \mat{O}_1^T \mat{G}_1 \mat{G}_1^T \mat{O}_1 + \mat{O}_1^T  \xi_{\scriptsize \mat{G}_{1}} \mat{G}_1^T  \mat{O}_1 \\
\hspace*{3cm}=  \mat{O}_1^T ((\mat{G}_{1}  \mat{G}_{1}^T) \xi_{\scriptsize \mat{U}_1}^T \mat{U}_1  + \xi_{\scriptsize \mat{G}_{1}}  \mat{G}_{1}^T) \mat{O}_1.
\end{array}
\end{equation}
Since $(\xi_{\scriptsize \mat{U}_{1}}, \xi_{\scriptsize \mat{U}_{2}}, \xi_{\scriptsize \mat{U}_{3}}, \xi_{\scriptsize \mathbfcal{G}})$ is a symmetric matrix, the obtained result is also symmetric. Therefore, $(\xi_{\scriptsize \mat{U}_{1}}\mat{O}_{1}, \\ \xi_{\scriptsize \mat{U}_{2}}\mat{O}_{2}, \xi_{\scriptsize \mat{U}_{3}}\mat{O}_{3}, \xi_{\scriptsize \mathbfcal{G}} {\times_1} \mat{O}^T_{1} {\times_2} \mat{O}^T_{2} {\times_3} \mat{O}^T_{3})$ is a horizontal vector at $(\mat{U}_{1}\mat{O}_{1}, \mat{U}_{2}\mat{O}_{2}, \mat{U}_{3}\mat{O}_{3}, \mathbfcal{G} {\times_1} \mat{O}^T_{1} {\times_2} \mat{O}^T_{2} {\times_3} \mat{O}^T_{3})$. This implies that $(\xi_{\scriptsize \mat{U}_{1}}\mat{O}_{1}, \xi_{\scriptsize \mat{U}_{2}}\mat{O}_{2}, \xi_{\scriptsize \mat{U}_{3}}\mat{O}_{3}, \xi_{\scriptsize \mathbfcal{G}} {\times_1} \mat{O}^T_{1} {\times_2} \mat{O}^T_{2} {\times_3} \mat{O}^T_{3})$ is the horizontal lift of $\xi$ at $(\mat{U}_{1}\mat{O}_{1}, \mat{U}_{2}\mat{O}_{2}, \\ \mat{U}_{3}\mat{O}_{3}, \mathbfcal{G} {\times_1} \mat{O}^T_{1} {\times_2} \mat{O}^T_{2} {\times_3} \mat{O}^T_{3})$, and the proof is completed.
%
\end{proof}

Now, the proof of {\bf Proposition 1} is given below using the result (\ref{Sup_Eq:HoritaonLiftRelation}) in {\bf Lemma 1}.
\begin{proof}
Plugging $\xi_{\scriptsize \mat{U}_{1}}^{'}=\xi_{\scriptsize \mat{U}_{1}\mat{O}_{1}}$, ${\eta}_{\scriptsize\mat{U}_{1}}^{'}=\eta_{\scriptsize \mat{U}_{1}\mat{O}_{1}}$, and $\mat{G}_{1}^{'}=\mat{O}_1^T \mat{G}_{1} (\mat{O}_3^T \otimes \mat{O}_2^T)$ into the first term of (\ref{Sup_Eq:metric}) yields 
\begin{equation}
\begin{array}{lcl}
\label{}
\langle \xi_{\scriptsize \mat{U}_{1}\mat{O}_1}, \eta_{\scriptsize \mat{U}_{1}\mat{O}_1} (\mat{G}_{1}^{'}  \mat{G}_{1}^{'T} )) \rangle  
 &= & {\rm Trace}( \xi_{\scriptsize \mat{U}_{1}\mat{O}_1}^T \eta_{\scriptsize \mat{U}_{1}\mat{O}_1} (\mat{G}_{1}^{'}  \mat{G}_{1}^{'T} ) ) \nonumber \\
&\overset{\tiny (\mathrm{\ref{Sup_Eq:HoritaonLiftRelation}})}{=} &{\rm Trace}( (\xi_{\scriptsize \mat{U}_{1}}\mat{O}_1)^T \eta_{\scriptsize \mat{U}_{1}}\mat{O}_1 (\mat{G}_{1}^{'}  \mat{G}_{1}^{'T} ) ) \nonumber \\ 
 &= &  {\rm Trace}\left[ (\xi_{\scriptsize \mat{U}_{1}}\mat{O}_1)^T ({\eta}_{\scriptsize\mat{U}_{1}}\mat{O}_1)
  ( \mat{O}_1^T \mat{G}_{1} (\mat{O}_3^T \otimes \mat{O}_2^T)^T) 
\left( \mat{O}_1^T \mat{G}_{1} (\mat{O}_3^T \otimes \mat{O}_2^T)^T\right)^T \right]\nonumber \\
 &= &  {\rm Trace}\left[ (\xi_{\scriptsize \mat{U}_{1}}\mat{O}_1)^T ({\eta}_{\scriptsize\mat{U}_{1}}\mat{O}_1)   \mat{O}_1^T \mat{G}_{1} (\mat{O}_3^T \otimes \mat{O}_2^T)^T 
(\mat{O}_3^T \otimes \mat{O}_2^T) \mat{G}_{1}^T \mat{O}_1  \right]\nonumber \\
 &= &  {\rm Trace}\left[ \mat{O}_1^T \xi_{\scriptsize \mat{U}_{1}}^T {\eta}_{\scriptsize\mat{U}_{1}}\mat{O}_1 
\mat{O}_1^T \mat{G}_{1} \mat{G}_{1}^T \mat{O}_1  \right]\nonumber \\
 &= & {\rm Trace}\left[\xi_{\scriptsize \mat{U}_{1}}^T {\eta}_{\scriptsize\mat{U}_{1}} \mat{G}_{1} \mat{G}_{1}^T  \right]\nonumber \\
 &= & \langle \xi_{\scriptsize \mat{U}_{1}},{\eta}_{\scriptsize\mat{U}_{1}} (\mat{G}_{1} \mat{G}_{1}^T) \rangle.
 \end{array}
\end{equation}	
Since the same equalities against the each term in the metric (\ref{Sup_Eq:metric}) corresponding to $\mat{U}_2$, $\mat{U}_3$ and $\mathbfcal{G}$ hold, we finally obtain the invariant property that the proposition claims; 
\begin{equation*}
\label{Sup_Eq:InvariantMetric}
\begin{array}{l}
\hspace*{-1cm}g_{\scriptsize (\mat{U}_{1}, \mat{U}_{2}, \mat{U}_{3}, \mathbfcal{G})}((\xi_{\scriptsize \mat{U}_{1}}, \xi_{\scriptsize \mat{U}_{2}}, \xi_{\scriptsize \mat{U}_{3}}, \xi_{\scriptsize \mathbfcal{G}}), (\eta_{\scriptsize \mat{U}_{1}}, \eta_{\scriptsize \mat{U}_{2}}, \eta_{\scriptsize \mat{U}_{3}}, \eta_{\scriptsize \mathbfcal{G}})) \\
= g_{\scriptsize (\mat{U}_{1}\mat{O}_{1}, \mat{U}_{2}\mat{O}_{2}, \mat{U}_{3}\mat{O}_{3}, \mathbfcal{G} {\times_1} \mat{O}^T_{1} {\times_2} \mat{O}^T_{2} {\times_3} \mat{O}^T_{3})}((\xi_{\scriptsize \mat{U}_{1}\mat{O}_{1}}, \xi_{\scriptsize \mat{U}_{2}\mat{O}_{2}}, \xi_{\scriptsize \mat{U}_{3}\mat{O}_{3}}, 
\xi_{\scriptsize \mathbfcal{G} {\times_1} \mat{O}^T_{1} {\times_2} \mat{O}^T_{2} {\times_3} \mat{O}^T_{3}}), \\\hspace*{0.5cm}(\eta_{\scriptsize \mat{U}_{1}\mat{O}_{1}}, \eta_{\scriptsize \mat{U}_{2}\mat{O}_{2}}, \eta_{\scriptsize \mat{U}_{3}\mat{O}_{3}}, \eta_{\scriptsize \mathbfcal{G} {\times_1} \mat{O}^T_{1} {\times_2} \mat{O}^T_{2} {\times_3} \mat{O}^T_{3}})).
\end{array}
\end{equation*} 
\end{proof}

\subsection{\changeHK{Proof of Proposition 2 (}derivation of the tangent space projector\changeHK{)}}
\label{Sup_Sec:tangentspaceprojector}

\begin{proof}
The tangent space $T_{x} \mathcal{M}$ projector is obtained by extracting the component normal to $T_{x} \mathcal{M}$ in the ambient space. The normal space $N_{x} \mathcal{M}$ has the matrix characterization shown in (\ref{Sup_Eq:NormalSpace}).
The operator $\Psi_{{x}}: 
\mathbb{R}^{n_1 \times r_1} \times  
\mathbb{R}^{n_2 \times r_2} \times 
\mathbb{R}^{n_3 \times r_3} \times 
\mathbb{R}^{r_1 \times r_2 \times r_3} \rightarrow T_{{x}} {\mathcal{M}} :(\mat{Y}_{\scriptsize \mat{U}_{1}}, \mat{Y}_{\scriptsize \mat{U}_{2}}, \mat{Y}_{\scriptsize \mat{U}_{3}}, \mat{Y}_{\scriptsize \mathbfcal{G}} )$
$\mapsto \Psi_{{x}}(\mat{Y}_{\scriptsize \mat{U}_{1}}, \mat{Y}_{\scriptsize \mat{U}_{2}}, \mat{Y}_{\scriptsize \mat{U}_{3}}, \mat{Y}_{\scriptsize \mathbfcal{G}} )$ has the expression
\begin{equation}
\begin{array}{lll}
\label{Sup_Eq:Tangent_Projection}
\Psi_{{x}}(\mat{Y}_{\scriptsize \mat{U}_{1}}, \mat{Y}_{\scriptsize \mat{U}_{2}}, \mat{Y}_{\scriptsize \mat{U}_{3}}, \mat{Y}_{\scriptsize \mathbfcal{G}} )
&=&  (\mat{Y}_{\scriptsize \mat{U}_{1}} - \mat{U}_{1} \mat{S}_{\scriptsize \mat{U}_{1}} (\mat{G}_{1}  \mat{G}_{1}^T)^{-1}, \mat{Y}_{\scriptsize \mat{U}_{2}} - \mat{U}_{2} \mat{S}_{\scriptsize \mat{U}_{2}} (\mat{G}_{2}  \mat{G}_{2}^T)^{-1}, \\
&&\mat{Y}_{\scriptsize \mat{U}_{3}} - \mat{U}_{3}\mat{S}_{\scriptsize \mat{U}_{3}} (\mat{G}_{3}  \mat{G}_{3}^T)^{-1},
\mat{Y}_{\scriptsize \mathbfcal{G}}).
\end{array}
\end{equation}

From the definition of the tangent space in (\ref{Sup_Eq:tangent_space}), 
$\mat{U}_d$ should satisfy
\begin{eqnarray*}
 \eta_{{\bf U}_d}^T \mat{U}_d + \mat{U}_d^T \eta_{{\bf U}_d} 
 & = & 
( \mat{Y}_{\scriptsize \mat{U}_d} - \mat{U}_d \mat{S}_{\scriptsize \mat{U}_d} (\mat{G}_{d}  \mat{G}_{d}^T)^{-1})^T \mat{U}_d+ \mat{U}_d^T (\mat{Y}_{\scriptsize \mat{U}_d} - \mat{U}_d \mat{S}_{\scriptsize \mat{U}_d} (\mat{G}_{d}  \mat{G}_{d}^T)^{-1}) \\
& = & 
\mat{Y}_{\scriptsize \mat{U}_d}^T \mat{U}_d  - (\mat{G}_{d}  \mat{G}_{d}^T)^{-1} 
 \mat{S}_{\scriptsize \mat{U}_d}^T \mat{U}_d ^T \mat{U}_d + \mat{U}_d^T \mat{Y}_{\scriptsize \mat{U}_d} - \mat{U}_d^T \mat{U}_d \mat{S}_{\scriptsize \mat{U}_d} (\mat{G}_{d}  \mat{G}_{d}^T)^{-1}  \\
& = & 
\mat{Y}_{\scriptsize \mat{U}_d}^T \mat{U}_d  - (\mat{G}_{d}  \mat{G}_{d}^T)^{-1} 
 \mat{S}_{\scriptsize \mat{U}_d} + \mat{U}_d^T \mat{Y}_{\scriptsize \mat{U}_d} - \mat{S}_{\scriptsize \mat{U}_d} (\mat{G}_{d}  \mat{G}_{d}^T)^{-1} \\
& = & 0.
\end{eqnarray*}
Multiplying $(\mat{G}_{d}  \mat{G}_{d}^T)$ from the right and left sides results in 
\begin{eqnarray*}
(\mat{G}_{d}  \mat{G}_{d}^T)^{-1} \mat{S}_{\scriptsize \mat{U}_d}  + \mat{S}_{\scriptsize \mat{U}_d} (\mat{G}_{d}  \mat{G}_{d}^T)^{-1}  
& = & \mat{Y}_{\scriptsize \mat{U}_d}^T \mat{U}_d + \mat{U}_d^T \mat{Y}_{\scriptsize \mat{U}_d} \\
\mat{S}_{\scriptsize \mat{U}_d} \mat{G}_{d}  \mat{G}_{d}^T + \mat{G}_{d}  \mat{G}_{d}^T \mat{S}_{\scriptsize \mat{U}_d}   
& = & \mat{G}_{d}  \mat{G}_{d}^T (\mat{Y}_{\scriptsize \mat{U}_d}^T \mat{U}_d +\ \mat{U}_d^T \mat{Y}_{\scriptsize \mat{U}_d} ) \mat{G}_{d}  \mat{G}_{d}^T.
\end{eqnarray*}

Finally, we obtain the \emph{Lyapunov} equation as
\begin{eqnarray}
\label{Sup_Eq:Req_horizontal_space}
\mat{S}_{\scriptsize \mat{U}_{d}} \mat{G}_{d}  \mat{G}_{d}^T + \mat{G}_{d}  \mat{G}_{d}^T \mat{S}_{\scriptsize \mat{U}_{d}}  
&=&\mat{G}_{d}  \mat{G}_{d}^T (\mat{Y}_{\scriptsize \mat{U}_{d}}^T \mat{U}_{d} + \mat{U}_{d}^T \mat{Y}_{\scriptsize \mat{U}_{d}}) \mat{G}_{d}  \mat{G}_{d}^T\ \ {\rm for\ } d\in \{ 1,2,3\},
\end{eqnarray}
that are solved efficiently with the Matlab's \verb+lyap+ routine.

%
%

\end{proof}

\subsection{\changeHK{Proof of Proposition 3 (}derivation of the horizontal space projector\changeHK{)}}
\label{Sup_Sec:horizontalspaceprojector}

\begin{proof}
We consider the projection of a tangent vector $\eta_{{x}}=(\eta_{\scriptsize \mat{U}_1}, \eta_{\scriptsize \mat{U}_2}, \eta_{\scriptsize \mat{U}_3}, \eta_{\scriptsize \mathbfcal{G}}) \in T_{{x}} \mathcal{M}$ into a vector $\xi_{{x}}=(\xi_{{\bf U}_1}, \xi_{{\bf U}_2}, \xi_{{\bf U}_3}, \xi_{\mathbfcal{G}}) \in H_{{x}}$. This is achieved by subtracting the component in the vertical space $\mathcal{V}_{{x}}$ in (\ref{Sup_Eq:VerticalComponents}) as
\begin{eqnarray*}
\left\{
\begin{array}{lll}
\eta_{\scriptsize \mat{U}_1} & = &  \underbrace{\eta_{\scriptsize \mat{U}_1} - \mat{U}_1 {\bf \Omega}_1}
_{=\xi_{\scriptsize \mat{U}_1} \in \mathcal{H}_{{x}}}
+ \underbrace{\mat{U}_1 {\bf \Omega}_1}_{ \in \mathcal{V}_{{x}}},  \\
\eta_{\scriptsize \mat{U}_2} & = &  \eta_{\scriptsize \mat{U}_2} - \mat{U}_2 {\bf \Omega}_2 + \mat{U}_2 {\bf \Omega}_2, \\
\eta_{\scriptsize \mat{U}_3} & = &  \eta_{\scriptsize \mat{U}_3} - \mat{U}_3 {\bf \Omega}_3 + \mat{U}_3 {\bf \Omega}_3, \\
\eta_{\scriptsize \mathbfcal{G}} & = &  \eta_{\scriptsize \mathbfcal{G}} - ( - (\mathbfcal{G}{\times_1} {\bf \Omega}_1  + 
\mathbfcal{G}{{\times_2}} {\bf \Omega}_2 +
\mathbfcal{G}{{\times_3}} {\bf \Omega}_3)) 
+ ( - (\mathbfcal{G}{\times_1} {\bf \Omega}_1  + 
\mathbfcal{G}{{\times_2}} {\bf \Omega}_2 +
\mathbfcal{G}{{\times_3}} {\bf \Omega}_3)).
\end{array}
\right.
\end{eqnarray*}

As a result, the horizontal operator $\Pi_{{x}}: T_x \mathcal{M} \rightarrow \mathcal{H}_x: \eta_x \mapsto \Pi_{{x}}(\eta_x)$ has the expression
\begin{equation}
\begin{array}{lll}
\label{Sup_Eq:Horizontal_Projection}
\Pi_{{x}}(\eta_{{x}} )
& = &(
\eta_{\scriptsize \mat{U}_{1}} - \mat{U}_{1} {\bf \Omega}_{1},
\eta_{\scriptsize \mat{U}_{2}} - \mat{U}_{2} {\bf \Omega}_{2}, 
\eta_{\scriptsize \mat{U}_{3}} - \mat{U}_{3} {\bf \Omega}_{3}, 
\eta_{\scriptsize \mathbfcal{G}} - (- (\mathbfcal{G}{\times_1} {\bf \Omega}_{1} + 
\mathbfcal{G}{{\times_2}} {\bf \Omega}_{2} +
\mathbfcal{G}{{\times_3}} {\bf \Omega}_{3})) 
),
\end{array}
\end{equation}
where $\eta_x = (\eta_{\scriptsize \mat{U}_{1}}, \eta_{\scriptsize \mat{U}_{2}}, \eta_{\scriptsize \mat{U}_{3}}, \eta_{\scriptsize \mathbfcal{G}}) \in T_x \mathcal{M}$ and  ${\bf \Omega}_{d}$ is a skew-symmetric matrix of size $r_d \times r_d$. The skew-matrices ${\bf \Omega}_{d}$ for $d = \{1, 2,3\}$ that are identified based on the conditions (\ref{Sup_Eq:horizontal_space_reqrements}).

It should be noted that the tensor $\mathbfcal{G}{\times_1} {\bf \Omega}_1  + \mathbfcal{G}{{\times_2}} {\bf \Omega}_2 + \mathbfcal{G}{{\times_3}} {\bf \Omega}_3$ in (\ref{Sup_Eq:VerticalComponents}) has the following equivalent unfoldings.
\begin{eqnarray*}
\begin{array}{lll}
\label{Sup_Eq:VerticalComponentsMatrixRep}
\mathbfcal{G}{\times_1} {\bf \Omega}_1  + 
\mathbfcal{G}{{\times_2}} {\bf \Omega}_2 +
\mathbfcal{G}{{\times_3}} {\bf \Omega}_3 
&\xLeftrightarrow{\text{mode\ }-1}  &
{\bf \Omega}_{1} \mat{G}_{1} + \mat{G}_{1}(\mat{I}_{r_3} \otimes {\bf \Omega}_2)^T
+ \mat{G}_{1}( {\bf \Omega}_3 \otimes \mat{I}_{r_2} )^T \\
&\xLeftrightarrow{\text{mode\ }-2}   &
\mat{G}_{2}(\mat{I}_{r_3} \otimes {\bf \Omega}_1)^T + {\bf \Omega}_{2} \mat{G}_{2}  
+ \mat{G}_{2}( {\bf \Omega}_3 \otimes \mat{I}_{r_1} )^T \\
&\xLeftrightarrow{\text{mode\ }-3}   &
\mat{G}_{3}(\mat{I}_{r_2} \otimes {\bf \Omega}_1)^T 
+ \mat{G}_{3}( {\bf \Omega}_2 \otimes \mat{I}_{r_1} )^T + {\bf \Omega}_{3} \mat{G}_{3}.
\end{array}
\end{eqnarray*}

Plugging $\xi_{\scriptsize \mat{U}_1} = \eta_{\scriptsize \mat{U}_{1}} - \mat{U}_{1} {\bf \Omega}_{1}$ and $\xi_{\scriptsize \mat{G}_1} = \eta{\scriptsize \mat{G}_{1}} +{\bf \Omega}_{1} \mat{G}_{1} + \mat{G}_{1}(\mat{I}_{r_3} \otimes {\bf \Omega}_2)^T
+ \mat{G}_{1}( {\bf \Omega}_3 \otimes \mat{I}_{r_2} )^T$ into (\ref{Sup_Eq:horizontal_space_reqrements}) and using the relation $(\mat{A} \otimes \mat{B})^T = \mat{A}^T \otimes \mat{B}^T$ results in
\begin{eqnarray*}
\begin{array}{lll}
\label{Sup_Eq:ConditionA}
(\mat{G}_{1}  \mat{G}_{1}^T) \xi_{\scriptsize \mat{U}_1}^T \mat{U}_  + \xi_{\scriptsize \mat{G}_{1}}  \mat{G}_{1}^T   &= &(\mat{G}_{1}  \mat{G}_{1}^T) (\eta_{\scriptsize \mat{U}_1} - \mat{U}_1 {\bf \Omega}_1)^T \mat{U}_1 \\
&& + \left\{ \eta{\scriptsize \mat{G}_{1}} +({\bf \Omega}_{1} \mat{G}_{1} + \mat{G}_{1}(\mat{I}_{r_3} \otimes {\bf \Omega}_2)^T
+ \mat{G}_{1}( {\bf \Omega}_3 \otimes \mat{I}_{r_2} )^T)\right\} \mat{G}_{1}^T  \\
&=& (\mat{G}_{1}  \mat{G}_{1}^T) \eta_{\scriptsize \mat{U}_1} ^T \mat{U}_1 
- (\mat{G}_{1}  \mat{G}_{1}^T) ( \mat{U}_1 {\bf \Omega}_1)^T \mat{U}_1 + \eta_{\scriptsize \mat{G}_{1}}\mat{G}_{1}^T  
+ {\bf \Omega}_{1} \mat{G}_{1}\mat{G}_{1}^T \\
&&+\mat{G}_{1}(\mat{I}_{r_3} \otimes {\bf \Omega}_2)^T \mat{G}_{1}^T 
+ \mat{G}_{1}( {\bf \Omega}_3 \otimes \mat{I}_{r_2} )^T\mat{G}_{1}^T\\
&= &(\mat{G}_{1}  \mat{G}_{1}^T) \eta_{\scriptsize \mat{U}_1} ^T \mat{U}_1 
+ (\mat{G}_{1}  \mat{G}_{1}^T) {\bf \Omega}_1 + \eta_{\scriptsize \mat{G}_{1}}\mat{G}_{1}^T  
+  {\bf \Omega}_{1} \mat{G}_{1}\mat{G}_{1}^T \\
&&
- \mat{G}_{1}(\mat{I}_{r_3} \otimes {\bf \Omega}_2) \mat{G}_{1}^T 
- \mat{G}_{1}( {\bf \Omega}_3 \otimes \mat{I}_{r_2} ) \mat{G}_{1}^T,
\end{array}
\end{eqnarray*}
which should be a symmetric matrix due to (\ref{Sup_Eq:horizontal_space_reqrements}), i.e., $
(\mat{G}_{1}  \mat{G}_{1}^T) \xi_{\scriptsize \mat{U}_1}^T \mat{U}_  + \xi_{\scriptsize \mat{G}_{1}}  \mat{G}_{1}^T =  
((\mat{G}_{1}  \mat{G}_{1}^T) \xi_{\scriptsize \mat{U}_1}^T \mat{U}_  + \xi_{\scriptsize \mat{G}_{1}}  \mat{G}_{1}^T)^T$.


Subsequently,  
\begin{equation*}
\begin{array}{l}
(\mat{G}_{1}  \mat{G}_{1}^T) \eta_{\scriptsize \mat{U}_1} ^T \mat{U}_1 
+ (\mat{G}_{1}  \mat{G}_{1}^T) {\bf \Omega}_1 
+ \eta_{\scriptsize \mat{G}_{1}}\mat{G}_{1}^T  
+ {\bf \Omega}_{1} \mat{G}_{1}\mat{G}_{1}^T - \mat{G}_{1}(\mat{I}_{r_3} \otimes {\bf \Omega}_2) \mat{G}_{1}^T 
- \mat{G}_{1}( {\bf \Omega}_3 \otimes \mat{I}_{r_2} ) \mat{G}_{1}^T \\
\hspace*{1cm}= \mat{U}_1^T \eta_{\scriptsize \mat{U}_1} (\mat{G}_{1}  \mat{G}_{1}^T) 
- {\bf \Omega}_1 \mat{G}_{1}  \mat{G}_{1}^T  
+ \mat{G}_{1} \eta_{\scriptsize \mat{G}_{1}}^T  
-  \mat{G}_{1}\mat{G}_{1}^T {\bf \Omega}_{1}+\mat{G}_{1}(\mat{I}_{r_3} \otimes {\bf \Omega}_2) \mat{G}_{1}^T 
+ \mat{G}_{1}( {\bf \Omega}_3 \otimes \mat{I}_{r_2} ) \mat{G}_{1}^T,
\end{array}
\end{equation*}
which is equivalent to
\begin{equation*}
\begin{array}{l}
\mat{G}_{1}  \mat{G}_{1}^T {\bf \Omega}_1 
+ {\bf \Omega}_{1} \mat{G}_{1}\mat{G}_{1}^T  - \mat{G}_{1}(\mat{I}_{r_3} \otimes {\bf \Omega}_2) \mat{G}_{1}^T 
- \mat{G}_{1}( {\bf \Omega}_3 \otimes \mat{I}_{r_2} ) \mat{G}_{1}^T  
= {\rm Skew}(\mat{U}_1^T \eta_{\scriptsize \mat{U}_1} \mat{G}_{1}  \mat{G}_{1}^T)
+ {\rm Skew}(\mat{G}_{1} \eta_{\scriptsize \mat{G}_{1}}^T ).
\end{array}
\end{equation*}
Here ${\rm Skew}(\cdot)$ extracts the skew-symmetric part of a square matrix, i.e., ${\rm Skew}(\mat{D})=(\mat{D}-\mat{D}^T)/2$.

Finally, we obtain the \emph{coupled} Lyapunov equations 
\begin{equation}
\begin{array}{lll}
\label{Sup_Eq:OmegaRequirements}
\hspace{-0.3cm}\left\{
\begin{array}{l}
\mat{G}_{1}  \mat{G}_{1}^T {\bf \Omega}_{1} + {\bf \Omega}_{1} \mat{G}_{1}  \mat{G}_{1}^T -\mat{G}_{1}(\mat{I}_{r_3} \otimes {\bf \Omega}_{2}) \mat{G}_{1}^T 
- \mat{G}_{1}( {\bf \Omega}_{3} \otimes \mat{I}_{r_2} )\mat{G}_{1}^T  \\
\hspace{4.8cm} = {\rm Skew}(\mat{U}_1^T\eta_{\scriptsize \mat{U}_1}\mat{G}_{1}  \mat{G}_{1}^T) + {\rm Skew}(\mat{G}_{1}\eta_{\scriptsize \mat{G}_{1}}^T), \\
\mat{G}_{2}  \mat{G}_{2}^T {\bf \Omega}_{2} + {\bf \Omega}_{2} \mat{G}_{2}  \mat{G}_{2}^T -\mat{G}_{2}(\mat{I}_{r_3} \otimes {\bf \Omega}_{1}) \mat{G}_{2}^T 
- \mat{G}_{2}( {\bf \Omega}_{3} \otimes \mat{I}_{r_1} )\mat{G}_{2}^T  \\
\hspace{4.8cm} = {\rm Skew}(\mat{U}_2^T\eta_{\scriptsize \mat{U}_2}\mat{G}_{2}  \mat{G}_{2}^T) + {\rm Skew}(\mat{G}_{2}\eta_{\scriptsize \mat{G}_{2}}^T), \\
\mat{G}_{3}  \mat{G}_{3}^T {\bf \Omega}_{3} + {\bf \Omega}_{3} \mat{G}_{3}  \mat{G}_{3}^T -\mat{G}_{3}(\mat{I}_{r_2} \otimes {\bf \Omega}_{1}) \mat{G}_{3}^T 
- \mat{G}_{3}( {\bf \Omega}_{2} \otimes \mat{I}_{r_1} )\mat{G}_{3}^T  \\
\hspace{4.8cm} = {\rm Skew}(\mat{U}_3^T\eta_{\scriptsize \mat{U}_3}\mat{G}_{3}  \mat{G}_{3}^T) + {\rm Skew}(\mat{G}_{3}\eta_{\scriptsize \mat{G}_{3}}^T),
\end{array}
\right.
\end{array}
\end{equation}
that are solved efficiently with the Matlab's \verb+pcg+ routine that is combined with a specific preconditioner resulting from the Gauss-Seidel approximation of (\ref{Sup_Eq:OmegaRequirements}).
 
\end{proof}

\subsection{\changeHK{Proof of Proposition 4 (}derivation of the Riemannian gradient formula\changeHK{)}}
\label{Sup_Sec:Riemanniangradientformula}

\begin{proof}
Let $f(\mathbfcal{X})=\| \mathbfcal{P}_{\Omega}(\mathbfcal{X}) - \mathbfcal{P}_{\Omega}(\mathbfcal{X}^{\star}) \|^2_F/|\Omega |$ and 
$\mathbfcal{S} = 2 (\mathbfcal{P}_{\Omega}(\mathbfcal{G}{\times_1} \mat{U}_{1} {\times_2} \mat{U}_{2}{\times_3} \mat{U}_{3}) - 
\mathbfcal{P}_{\Omega}(\mathbfcal{X}^{\star}))/|{\Omega}|$ 
be an auxiliary sparse tensor variable that is interpreted as the Euclidean gradient of $f$ in $\mathbb{R}^{n_1 \times n_2 \times n_3}$. 

The partial derivatives of $f(\mat{U}_1, \mat{U}_2, \mat{U}_3, \mathbfcal{G})$ are
\begin{eqnarray*}
\label{Sup_Eq:Egradient}
 \left\{
\begin{array}{lll}
\displaystyle{\frac{\partial f_{1}(\mat{U}_1, \mat{U}_2, \mat{U}_3, \mat{G}_{1})}{\partial \mat{U}_1}}  &=&  \displaystyle{\frac{2}{|{\Omega} |}} ( {\mathbfcal{P}}_{\Omega} (\mat{U}_1 \mat{G}_{1} (\mat{U}_3 \otimes \mat{U}_2)^T)  -  {\mathbfcal{P}}_{\Omega}(\mat{X}^{\star}_{1}) ) (\mat{U}_3 \otimes \mat{U}_2) \mat{G}_{1}^T\\
   &=&  \mat{S}_{1} (\mat{U}_3 \otimes \mat{U}_2) \mat{G}_{1}^T, \\
\displaystyle{\frac{\partial f_{2}(\mat{U}_1, \mat{U}_2, \mat{U}_3, \mat{G}_{2})}{\partial \mat{U}_2}}   &=&  \displaystyle{\frac{2}{|{\Omega} |}} ( {\mathbfcal{P}}_{\Omega} (\mat{U}_2 \mat{G}_{2} (\mat{U}_3 \otimes \mat{U}_1)^T)   - {\mathbfcal{P}}_{\Omega} (\mat{X}^{\star}_{2}) ) (\mat{U}_3 \otimes \mat{U}_1) \mat{G}_{2}^T\\
   &=&  \mat{S}_{2} (\mat{U}_2 \otimes \mat{U}_1) \mat{G}_{2}^T, \\
\displaystyle{\frac{\partial f_{3}(\mat{U}_1, \mat{U}_2, \mat{U}_3, \mat{G}_{3})}{\partial \mat{U}_3}}   &=&  \displaystyle{\frac{2}{|{\Omega} |}} ( {\mathbfcal{P}}_{\Omega} (\mat{U}_3 \mat{G}_{3} (\mat{U}_2 \otimes \mat{U}_1)^T) - {\mathbfcal{P}}_{\Omega} (\mat{X}^{\star}_{3}) ) (\mat{U}_2 \otimes \mat{U}_1) \mat{G}_{3}^T\\
   &=&  \mat{S}_{3} (\mat{U}_2 \otimes \mat{U}_1) \mat{G}_{3}^T, \\
\displaystyle{\frac{\partial f(\mat{U}_1, \mat{U}_2, \mat{U}_3, \mathbfcal{G})}{\partial \mathbfcal{G}}}   &=&  \displaystyle{\frac{2}{|{\Omega} |} (\mathbfcal{P}_{\Omega}(\mathbfcal{G}_{\times 1}\mat{U}_1 {\times_2} \mat{U}_2 {\times_3} \mat{U}_3) - 
\mathbfcal{P}_{\Omega}(\mathbfcal{X}^{\star}))}  \times_1 \mat{U}_1^T \times_2 \mat{U}_2^T \times_3 \mat{U}_3^T\\
 &=&  \mathbfcal{S} \times_1 \mat{U}_1^T \times_2 \mat{U}_2\changeHK{^T} \times_3 \mat{U}_3^T,
\end{array}
\right.
\end{eqnarray*}
where $\mat{X}^{\star}_{d}$ is mode-$d$ unfolding of $\mathbfcal{X}^{\star}$ and 
\begin{eqnarray*}
 \left\{
\begin{array}{lll}
\mat{S}_{1} & =  & \displaystyle{\frac{2}{|{\Omega} |} ( {\mathbfcal{P}}_{\Omega}(\mat{U}_1 \mat{G}_{1} (\mat{U}_3 \otimes \mat{U}_2)^T) - {\mathbfcal{P}}_{\Omega}(\mat{X}^{\star}_{1}) )}\\ 
\mat{S}_{2} & =  & \displaystyle{\frac{2}{|{\Omega} |} ( {\mathbfcal{P}}_{\Omega}(\mat{U}_2 \mat{G}_{2} (\mat{U}_3 \otimes \mat{U}_1)^T) - {\mathbfcal{P}}_{\Omega}(\mat{X}^{\star}_{2}) )}\\ 
\mat{S}_{3} & =  & \displaystyle{\frac{2}{|{\Omega} |} ( {\mathbfcal{P}}_{\Omega}(\mat{U}_3 \mat{G}_{3} (\mat{U}_2 \otimes \mat{U}_1)^T) - {\mathbfcal{P}}_{\Omega}(\mat{X}^{\star}_{3}) )}\\ 
\mathbfcal{S} & =  & \displaystyle{\frac{2}{|{\Omega} |} (\mathbfcal{P}_{\Omega}(\mathbfcal{G} {\times_1} \mat{U}_1 {\times_2} \mat{U}_2 {\times_3} \mat{U}_3) - 
\mathbfcal{P}_{\Omega}(\mathbfcal{X}^{\star}))}.
\end{array}
\right.
\end{eqnarray*}


Due to the specific scaled metric (\ref{Sup_Eq:metric}), the partial derivatives of $f$ are further scaled by $((\mat{G}_{1}\mat{G}_{1}^T)^{-1}, \\
(\mat{G}_{2}\mat{G}_{2}^T)^{-1}, (\mat{G}_{3}\mat{G}_{3}^T)^{-1}, \mathbfcal{I})$, denoted as ${\rm egrad}_{x} f$ (after scaling), i.e.,

\[
\begin{array}{lcl}
{\rm egrad}_{x} f 
& = &( \mat{S}_{1} (\mat{U}_3 \otimes \mat{U}_2) \mat{G}_{1}^T (\mat{G}_{1}\mat{G}_{1}^T)^{-1}, 
 \mat{S}_{2} (\mat{U}_3 \otimes \mat{U}_1) \mat{G}_{2}^T (\mat{G}_{2}\mat{G}_{2}^T)^{-1}, 
 \hspace{0cm} \mat{S}_{3} (\mat{U}_2 \otimes \mat{U}_1) \mat{G}_{3}^T(\mat{G}_{3}\mat{G}_{3}^T)^{-1},  \\
& & \mathbfcal{S} \times_1 \mat{U}_1^T \times_2 \mat{U}_2^T \times_3 \mat{U}_3^T). \\
\end{array}
\]

Consequently, from the relationship that horizontal lift of ${\rm grad}_{[x]}f$ is equal to ${\rm grad}_{x}f \ =\ \Psi({\rm egrad}_{x} f )$, we obtain that, using (\ref{Sup_Eq:Tangent_Projection}),
\begin{equation*}
\label{Sup_Eq:RiemannianGradientDerive}
\begin{array}{lll}
\text{the horizontal lift of\ }{\rm grad}_{[x]} f  &=&   
(\mat{S}_{1} (\mat{U}_{3} \otimes \mat{U}_{2}) \mat{G}_{1}^T (\mat{G}_{1}\mat{G}_{1}^T)^{-1} 
- \mat{U}_{1} \mat{B}_{\scriptsize \mat{U}_{1}}(\mat{G}_{1}\mat{G}_{1}^T)^{-1}, \\
&&\mat{S}_{2} (\mat{U}_{3} \otimes \mat{U}_{1}) \mat{G}_{2}^T (\mat{G}_{2}\mat{G}_{2}^T)^{-1}
- \mat{U}_{2} \mat{B}_{\scriptsize \mat{U}_{2}}(\mat{G}_{2}\mat{G}_{2}^T)^{-1}, \\
 && \mat{S}_{3} (\mat{U}_{2} \otimes \mat{U}_{1}) \mat{G}_{3}^T(\mat{G}_{3}\mat{G}_{3}^T)^{-1}
- \mat{U}_{3} \mat{B}_{\scriptsize \mat{U}_{3}}(\mat{G}_{3}\mat{G}_{3}^T)^{-1}, \\
 && \mathbfcal{S} \times_1 \mat{U}_{1}^T \times_2 \mat{U}_{2}^T \times_3 \mat{U}_{3}^T), 
\end{array}
\end{equation*}


From the requirements in (\ref{Sup_Eq:Req_horizontal_space}) for a vector to be in the tangent space, we have the following relationship for mode-$1$.
\begin{eqnarray*}
\mat{B}_{\scriptsize \mat{U}_1} \mat{G}_{1}  \mat{G}_{1}^T + \mat{G}_{1}  \mat{G}_{1}^T \mat{B}_{\scriptsize \mat{U}_1}   
 =  \mat{G}_{1}  \mat{G}_{1}^T (\mat{Y}_{\scriptsize \mat{U}_1}^T \mat{U}_1 + \mat{U}_1^T \mat{Y}_{\scriptsize \mat{U}_1} ) \mat{G}_{1}  \mat{G}_{1}^T,
\end{eqnarray*}
where $\mat{Y}_{\scriptsize \mat{U}_1}  = (\mat{S}_{1} (\mat{U}_3 \otimes \mat{U}_2) \mat{G}_{1}^T (\mat{G}_{1}\mat{G}_{1}^T)^{-1}$.

Subsequently,
\[
\begin{array}{lll}
\mat{G}_{1}  \mat{G}_{1}^T (\mat{Y}_{\scriptsize \mat{U}_1}^T \mat{U}_1 + \mat{U}_1^T \mat{Y}_{\scriptsize \mat{U}_1} ) \mat{G}_{1}  \mat{G}_{1}^T 
 &=  &
\mat{G}_{1}  \mat{G}_{1}^T \left\{ 
((\mat{S}_{1} (\mat{U}_3 \otimes \mat{U}_2) \mat{G}_{1}^T (\mat{G}_{1}\mat{G}_{1}^T)^{-1})^T  \mat{U}_1 \right.  \\ 
 && + \left.  \mat{U}_1^T (\mat{S}_{1} (\mat{U}_3 \otimes \mat{U}_2) \mat{G}_{1}^T (\mat{G}_{1}\mat{G}_{1}^T)^{-1}  \right\} \mat{G}_{1}  \mat{G}_{1}^T \\ 
 &= & ((\mat{S}_{1} (\mat{U}_3 \otimes \mat{U}_2) \mat{G}_{1}^T)^T \mat{U}_1 \mat{G}_{1}  \mat{G}_{1}^T +\ \mat{G}_{1}  \mat{G}_{1}^T\mat{U}\changeHK{_1}^T (\mat{S}_{1} (\mat{U}_3\otimes \mat{U}_2) \mat{G}_{1}^T \\ 
 &= & (\mat{G}_{1}  \mat{G}_{1}^T\mat{U}_1^T (\mat{S}_{1} (\mat{U}_3 \otimes \mat{U}_2) \mat{G}_{1}^T)^T +\ \mat{G}_{1}  \mat{G}_{1}^T\mat{U}_1^T (\mat{S}_{1} (\mat{U}_3 \otimes \mat{U}_2) \mat{G}_{1}^T \\ 
 &= & 2 {\rm Sym} (\mat{G}_{1}  \mat{G}_{1}^T\mat{U}_1^T (\mat{S}_{1} (\mat{U}_3 \otimes \mat{U}_2) \mat{G}_{1}^T).
\end{array}
\]

Finally, $\mat{B}_{\scriptsize \mat{U}_{d}}$ for $d \in \{1, 2,3\} $ are obtained by solving the Lyapunov equations
\begin{equation*}
\label{Sup_Eq:BUBVBWRequirementGradient}
\left\{
\begin{array}{lll}
\mat{B}_{\scriptsize \mat{U}_{1}} \mat{G}_{1}  \mat{G}_{1}^T + \mat{G}_{1}  \mat{G}_{1}^T \mat{B}_{\scriptsize \mat{U}_{1}}
 &=&  2 {\rm Sym} (\mat{G}_{1}  \mat{G}_{1}^T\mat{U}_{1}^T (\mat{S}_{1} (\mat{U}_{3} \otimes \mat{U}_{2}) \mat{G}_{2}^T), \\ 
\mat{B}_{\scriptsize \mat{U}_{2}} \mat{G}_{2}  \mat{G}_{2}^T + \mat{G}_{2}  \mat{G}_{2}^T \mat{B}_{\scriptsize \mat{U}_{2}} &=&  2 {\rm Sym} (\mat{G}_{2}  \mat{G}_{2}^T\mat{U}_{2}^T (\mat{S}_{2} (\mat{U}_{3} \otimes \mat{U}_{1}) \mat{G}_{2}^T), \\ 
\mat{B}_{\scriptsize \mat{U}_{3}} \mat{G}_{3}  \mat{G}_{3}^T + \mat{G}_{3}  \mat{G}_{3}^T \mat{B}_{\scriptsize \mat{U}_{3}} &=&  2 {\rm Sym} (\mat{G}_{3}  \mat{G}_{3}^T\mat{U}_{3}^T (\mat{S}_{3} (\mat{U}_{2} \otimes \mat{U}_{1}) \mat{G}_{3}^T),
\end{array}
\right.
\end{equation*}
where ${\rm Sym}(\cdot)$ extracts the symmetric part of a square matrix, i.e., ${\rm Sym}(\mat{D})=(\mat{D}+\mat{D}^T)/2$. The above Lyapunov equations are solved efficiently with the Matlab's \verb+lyap+ routine.
\end{proof}


\section{Additional numerical comparisons}
\label{sec:AdditionalNumericalComparisons}

In addition to the representative numerical comparisons in the paper, we show additional numerical experiments spanning synthetic and real-world datasets. 

{\bf Experiments on synthetic datasets:}

{\bf Case S1: comparison with the Euclidean metric.} We first show the benefit of the proposed metric (\ref{Sup_Eq:metric}) over the conventional choice of the Euclidean metric that exploits the product structure of $\mathcal{M}$ and symmetry. We compare {\it steepest descent} algorithms with Armijo backtracking linesearch for both the metric choices. Figure \ref{appnfig:comp_euclideian-test} shows that the algorithm with the metric (\ref{Sup_Eq:metric}) gives a superior performance \changeHK{in \emph{test error}} than that of the conventional metric choice. 

{\bf Case S2: small-scale instances.} We consider tensors of size $100 \times 100 \times 100$, $150 \times 150 \times 150$, and  $200 \times 200 \times 200$ and ranks $(5,5,5)$, $(10,10,10)$, and  $(15,15,15)$. OS is $\{10,20,30\}$. Figures \ref{appnfig:small-scale-train}(a)-(c) and Figures \ref{appnfig:small-scale-test}(a)-(c) show the convergence behavior of different algorithms \changeHK{on a train set $\Omega$ and on a test set $\Gamma$}, where \changeHK{Figures \ref{appnfig:small-scale-test}}(b) is identical to the figure in the manuscript paper. Figures \ref{appnfig:small-scale-train}(d)-(f) and \ref{appnfig:small-scale-test}(d)-(f) show the mean square error on \changeHK{$\Omega$ and} $\Gamma$ on each algorithm. Furthermore, Figure \ref{appnfig:small-scale-train}(g)-(i) and Figure \ref{appnfig:small-scale-test}(g)-(i) show the mean square error on \changeHK{$\Omega$ and} $\Gamma$ when OS is $10$ in all the five runs. From \changeHK{Figures \ref{appnfig:small-scale-train} and }Figures \ref{appnfig:small-scale-test}, our proposed algorithm is consistently competitive or faster than geomCG, HalRTC, and TOpt. In addition, the mean square errors on \changeHK{a train set $\Omega$ and} a test set $\Gamma$ are consistently competitive or lower than those of geomCG and HalRTC, especially for lower sampling ratios, e.g, for OS $10$.

{\bf Case S3: large-scale instances.} We consider large-scale tensors of size $3000 \times 3000 \times 3000$, $5000 \times 5000 \times 5000$, and $10000 \times 10000 \times 10000$ and ranks {\bf r}=$(5,5,5)$ and $(10,10,10)$. OS is $10$. We compare our proposed algorithm to geomCG. \changeHK{Figure \ref{appnfig:large-scale-train} and} Figure \ref{appnfig:large-scale-test} show the convergence behavior of the algorithms. The proposed algorithm outperforms geomCG in all the cases. 

{\bf Case S4: influence of low sampling.} We look into problem instances which result from scarcely sampled data. The test requires completing a tensor of size $10000 \times 10000 \times 10000$ and rank {\bf r}=$(5,5,5)$. \changeHK{Figure \ref{appnfig:low-sampling-train}} and Figure \ref{appnfig:low-sampling-test} show the convergence behavior when OS is $\{8,6,5\}$. The case of  $\text{OS}=5$ is particularly interesting. In this case, while the mean square errors on \changeHK{$\Omega$ and} $\Gamma$ increase for geomCG, the proposed algorithm stably decreases the error in all the five runs.  

\changeHK{{\bf Case S5: influence of ill-conditioning and low sampling.} We consider the problem instance of {\bf Case S4} with $\text{OS\ } = 5$. Additionally, for generating the instance, we impose a diagonal core $\mathbfcal{G}$ with exponentially decaying \emph{positive} values of condition numbers (CN)  $5$, $50$, and $100$. Figure \ref{appnfig:S5-train} shows that the proposed algorithm outperforms geomCG for all the considered CN values on a train set $\Omega$}. 

\changeHK{
{\bf Case S6: influence of noise.} We evaluate the convergence properties of algorithms under the presence of noise The tensor size and rank are same as in {\bf Case S4} and OS is $10$. Figure \ref{appnfig:S6-train} shows that the train error on a train set $\Omega$ for each $\epsilon$ is almost identical to the $\epsilon^2 \| \mathbfcal{P}_{\Omega} (\mathbfcal{X}^{\star})\|_F ^2$, but our proposed algorithm converges faster than geomCG.
}

{\bf Case S7: rectangular instances.} We consider instances where dimensions and ranks along certain modes are different than others. Two cases are considered. Case (7.a) considers tensors size $20000 \times7000 \times 7000$, $30000 \times 6000 \times 6000$, and $40000 \times 5000\times 5000$ and rank ${\bf r}=(5,5,5)$. Case (7.b) considers a tensor of size $10000 \times10000 \times10000$ with ranks ${\bf r}=(7,6,6)$, $(10,5,5)$, and $ (15,4,4)$. \changeHK{Figures \ref{appnfig:asymmetric-train}(a)-(c) and} Figures \ref{appnfig:asymmetric-test}(a)-(c) show that the convergence behavior of our proposed algorithm is superior to that of geomCG \changeHK{on $\Omega$ and $\Gamma$, respectively}. Our proposed algorithm also outperforms geomCG for the asymmetric rank cases as shown in \changeHK{Figure \ref{appnfig:asymmetric-train}(d)-(f) and} Figure \ref{appnfig:asymmetric-test}(d)-(f).

{\bf Case S8: medium-scale instances.} We additionally consider medium-scale tensors of size $500 \times 500 \times 500$, $1000 \times 1000 \times 1000$, and $1500 \times 1500 \times 1500$ and ranks ${\bf r}=(5,5,5), (10,10,10)$, and $(15,15,15)$. OS is $\{10,20,30,40\}$. Our proposed algorithm and geomCG are only compared as the other algorithms cannot handle these scales efficiently. Figures \changeHK{\ref{appnfig:middle-scale-train}(a)-(c) and} \ref{appnfig:middle-scale-test}(a)-(c) show the convergence behavior \changeHK{on $\Omega$ and $\Gamma$, respectively}. \changeHK{Figures \ref{appnfig:middle-scale-train}(d)-(f) and} Figures \ref{appnfig:middle-scale-test}(d)-(f) also show the mean square error on \changeHK{$\Omega$ and } $\Gamma$ of rank ${\bf r}=(15,15,15)$ in all the five runs. The proposed algorithm performs better than geomCG in all the cases.

{\bf Experiments on real-world datasets:}

{\bf Case R1: hyperspectral image.} We also show the performance of our algorithm on the hyperspectral image ``Ribeira". We show the mean square error on \changeHK{$\Omega$ and } $\Gamma$ when OS is $\{11, 22\}$ in Figure \ref{appnfig:R1-train} and Figure \ref{appnfig:R1-test}, where \changeHK{Figure \ref{appnfig:R1-test}}(a) is identical to the figure in the manuscript paper. Our proposed algorithm gives lower test errors than those obtained by the other algorithms. We also show the image recovery results. Figures \ref{appnfig:R1-reconstructedimage_OS_11} and \ref{appnfig:R1-reconstructedimage_OS_22} show the reconstructed images when OS is $\{11, 22\}$, respectively. From these figures, we find that the proposed algorithm shows a good performance, especially for the lower sampling ratio.  

{\bf Case R2: MovieLens-10M.} \changeHK{Figure \ref{appnfig:R2-train} and} Figure \ref{appnfig:R2-test} show the convergence plots for all the five runs of ranks ${\bf r}=(4,4,4)$, $(6,6,6)$, $(8,8,8)$ and $(10,10,10) $ \changeHK{on $\Omega$ and $\Gamma$, respectively}. 
These figures show the superior performance of our proposed algorithm.

\changeHK{
{\bf Experiments for online algorithms:}
}

\changeBM{
{\bf Case O: online instances.} Figure \ref{appnfig:O1-5000} and \ref{appnfig:O1-10000}  show the convergence plots for all the five runs on tensors of ranks $100 \times 100 \times 5000$, and $100 \times 100 \times 10000$ with rank ${\bf r}=(5,5,5)$ \changeHK{on $\Omega$ and $\Gamma$, respectively}. These figures show that the proposed stochastic gradient descent algorithm gives similar or faster convergence than the proposed batch gradient descent algorithm.
}

\changeBM{
 Figure \ref{appnfig:O2-train} and \ref{appnfig:O2-test} show the convergence speed comparisons in the train error and the test error of the proposed online and batch algorithms with TeCPSGD and OLSTEC with rank ${\bf r}=(5,5,5)$ on the real-world video sequence Airport Hall dataset. These figures show that the proposed stochastic gradient descent algorithm gives similar or faster convergence than the proposed batch algorithm. In addition, Table \ref{appntbl:O2} shows that the final train and test MSEs show the superior performance of the proposed algorithms.}

\begin{figure}[htbp]
\begin{center}
\includegraphics[width=0.48\hsize]{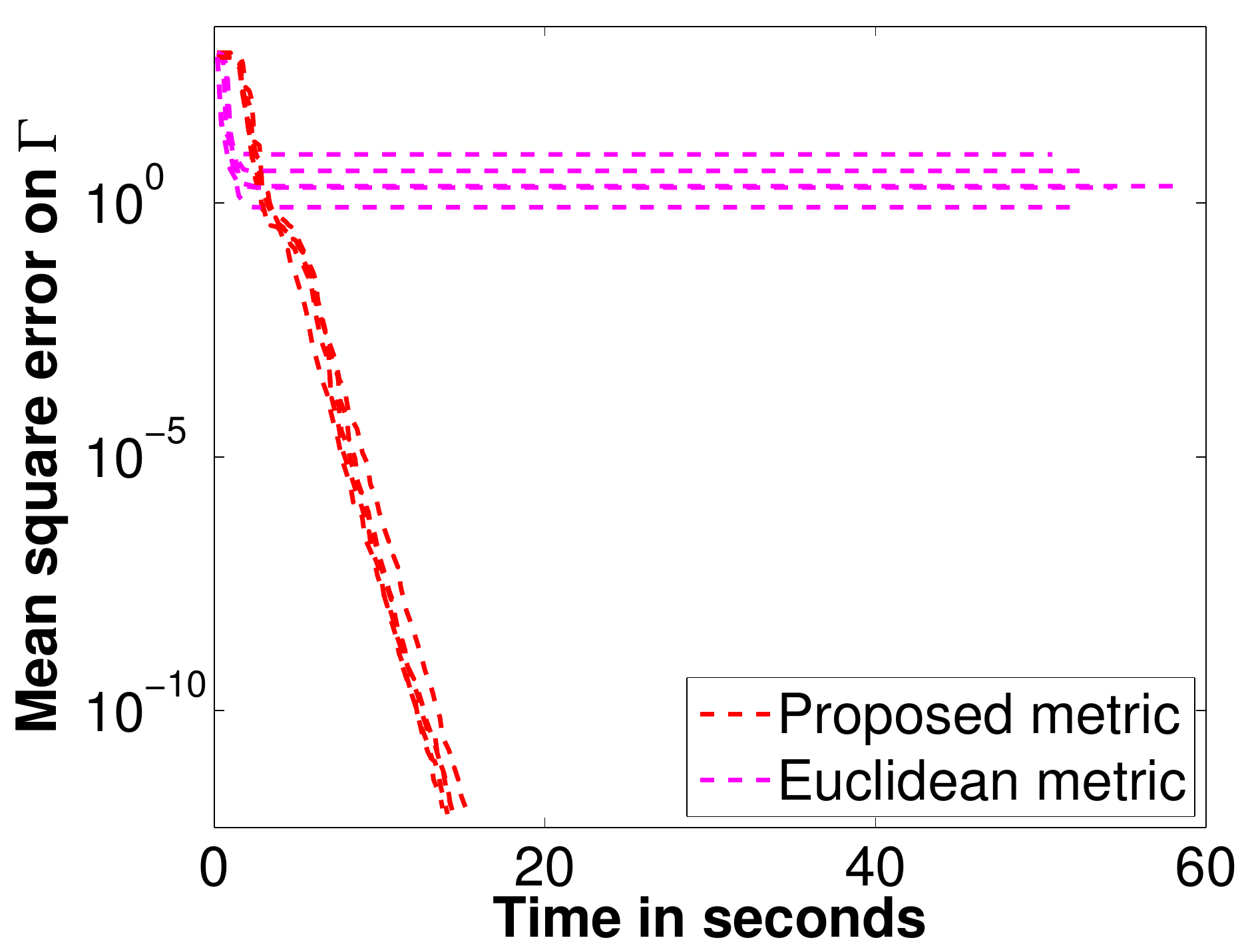}\\
\caption{\changeHK{{\bf Case S1:} comparison between metrics (test error)．}}
\label{appnfig:comp_euclideian-test}
\end{center}
\end{figure}

\begin{figure*}[htbp]
\begin{tabular}{cc}
\begin{minipage}{0.32\hsize}
\begin{center}
\includegraphics[width=\hsize]{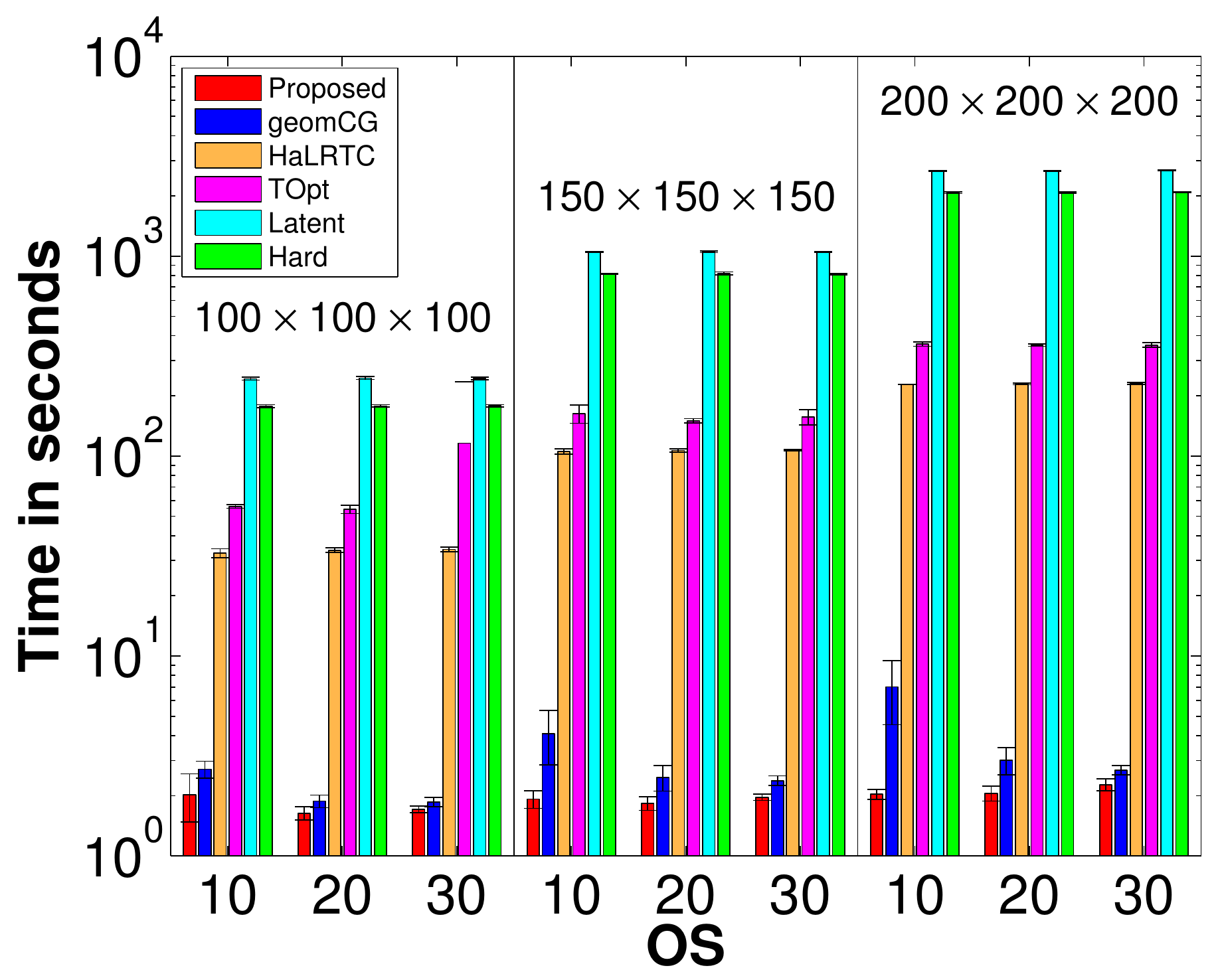}\\
{\scriptsize(a) {\bf r} = ($5,5,5$).}
\end{center}
\end{minipage}
\begin{minipage}{0.32\hsize}
\begin{center}
\includegraphics[width=\hsize]{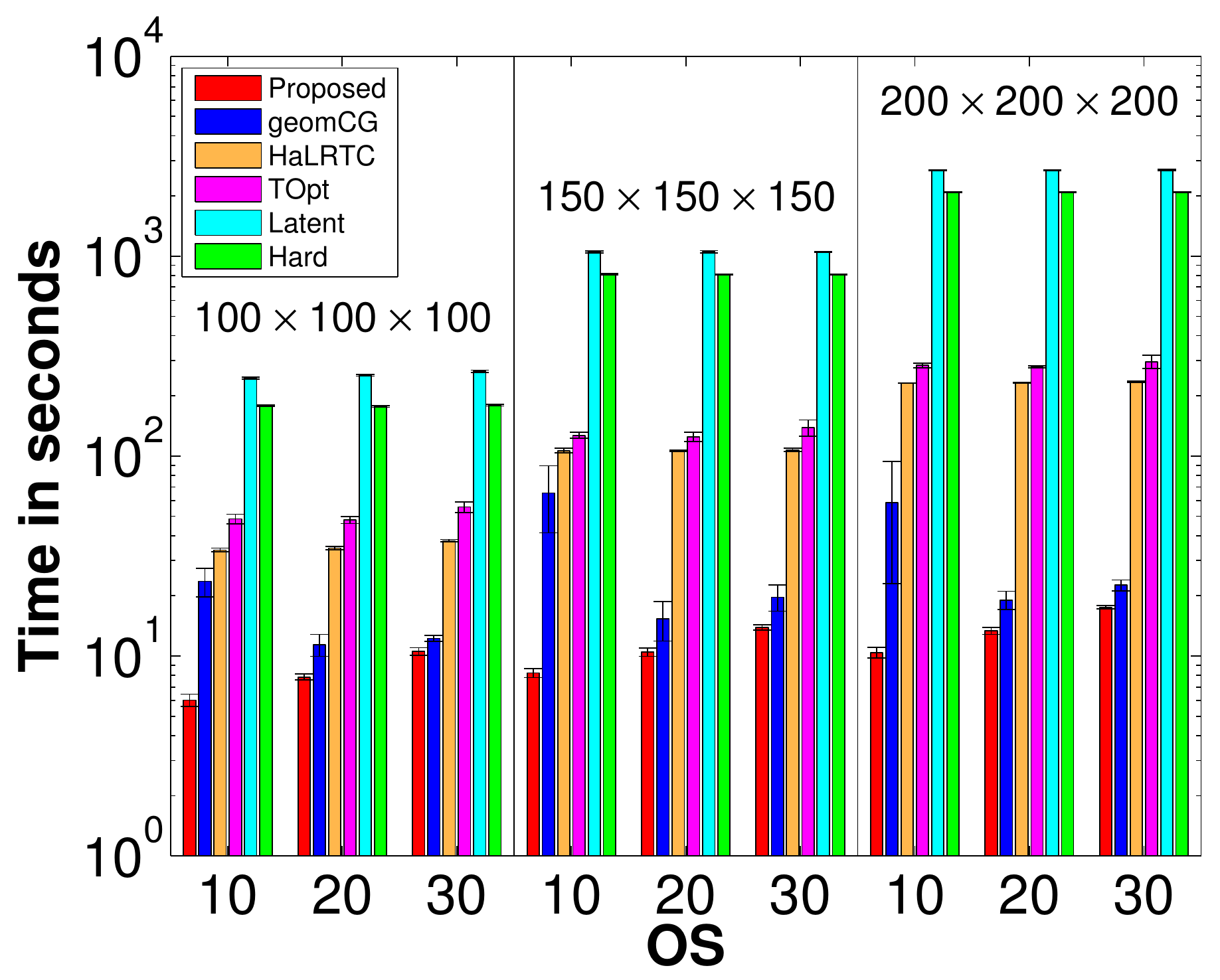}\\
{\scriptsize(b) {\bf r} = ($10,10,10$).}
\end{center}
\end{minipage}
\begin{minipage}{0.32\hsize}
\begin{center}
\includegraphics[width=\hsize]{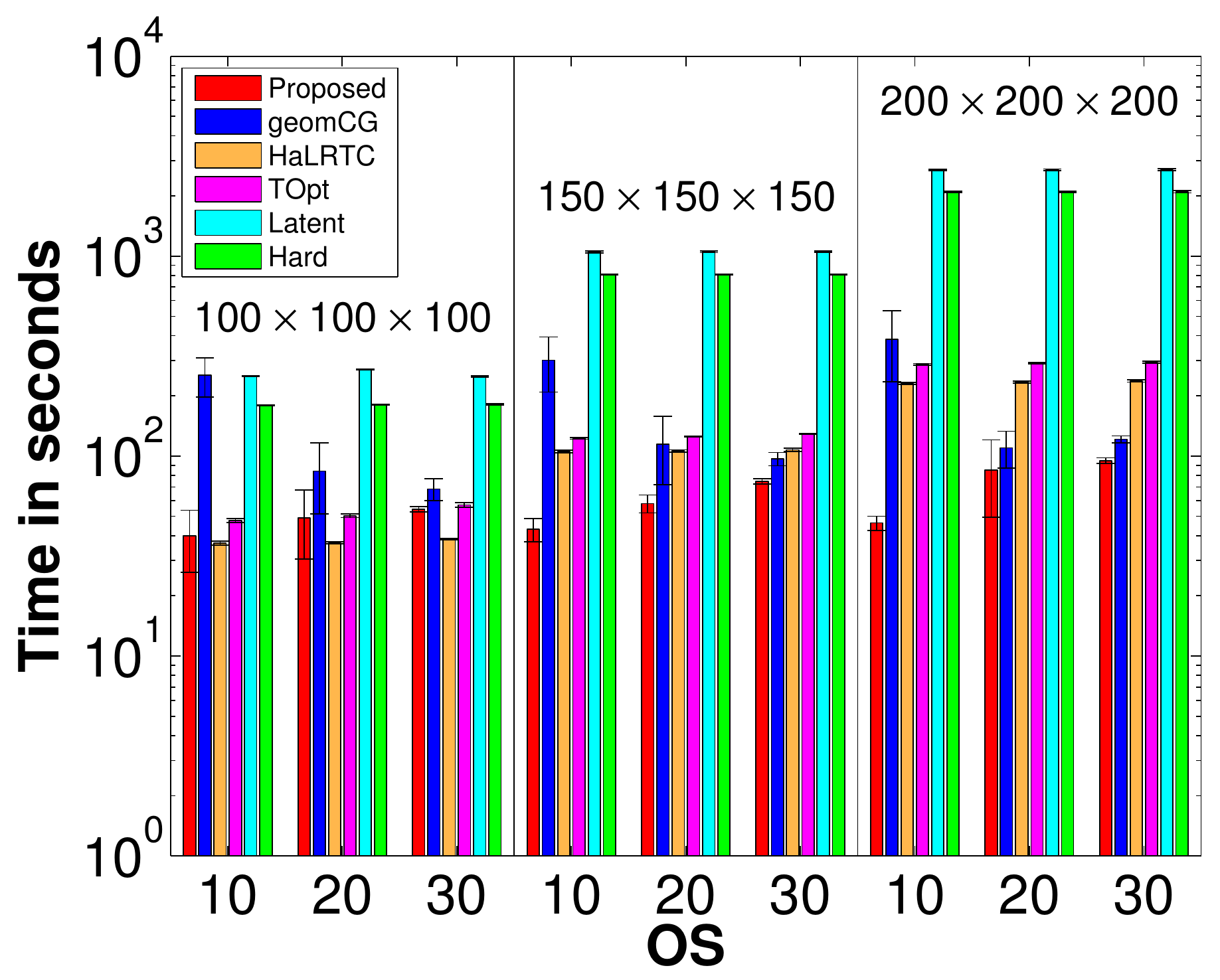}\\
{\scriptsize(c) {\bf r} = ($15,15,15$).}
\end{center}
\end{minipage}\\
\begin{minipage}{0.32\hsize}
\begin{center}
\includegraphics[width=\hsize]{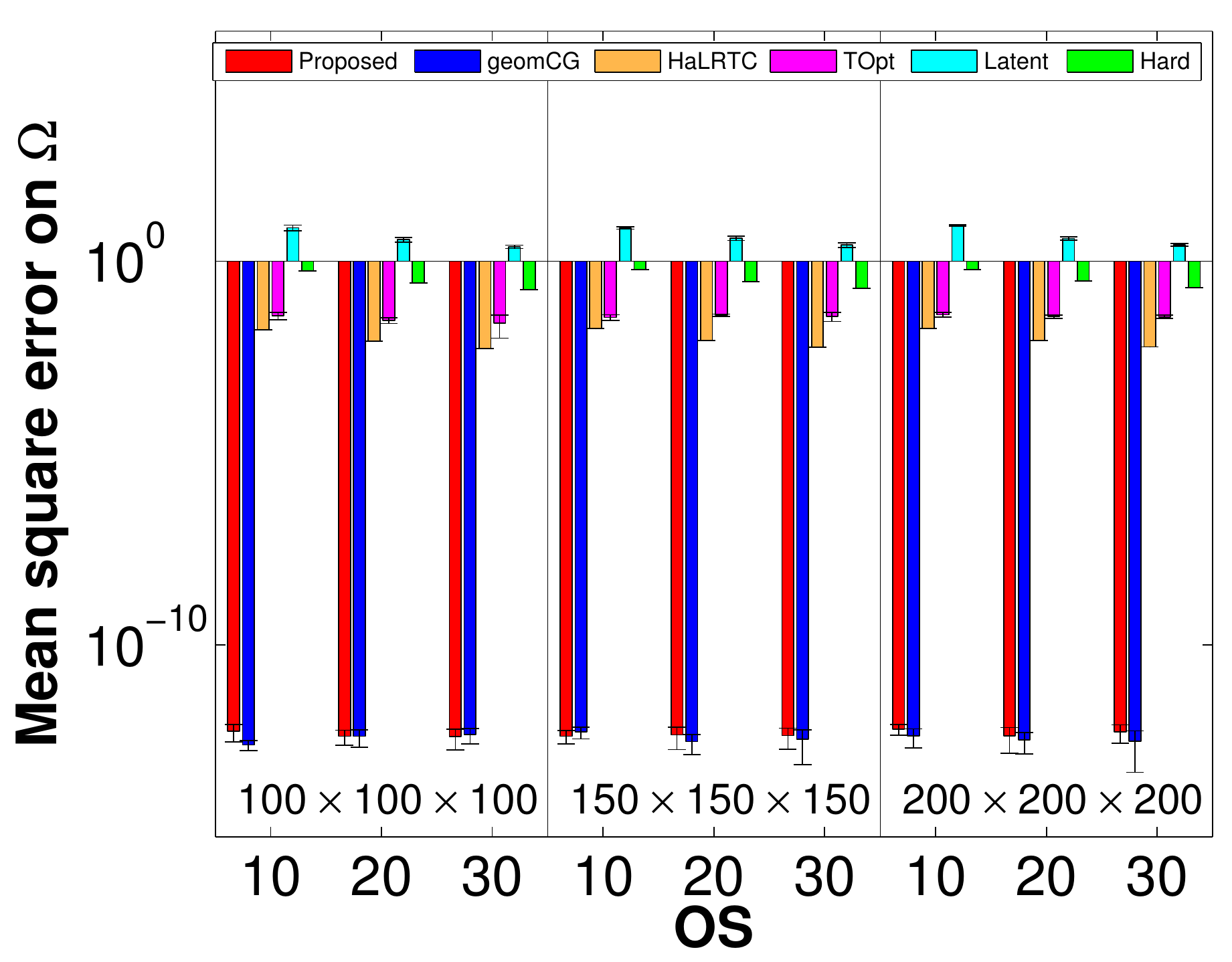}\\
{\scriptsize(d) {\bf r} = ($5,5,5$).}
\end{center}
\end{minipage}
\begin{minipage}{0.32\hsize}
\begin{center}
\includegraphics[width=\hsize]{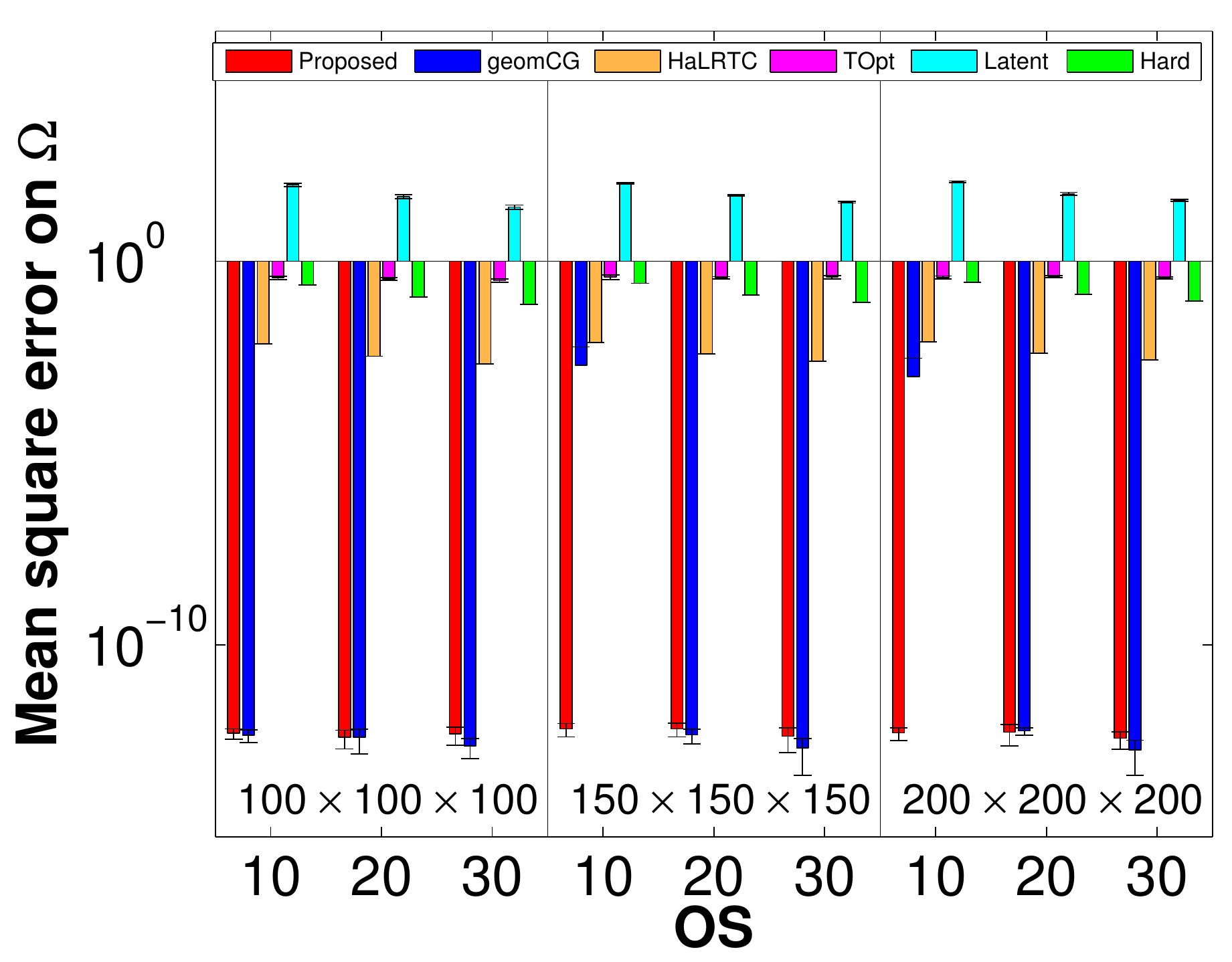}\\
{\scriptsize(e) {\bf r} = ($10,10,10$).}
\end{center}
\end{minipage}
\begin{minipage}{0.32\hsize}
\begin{center}
\includegraphics[width=\hsize]{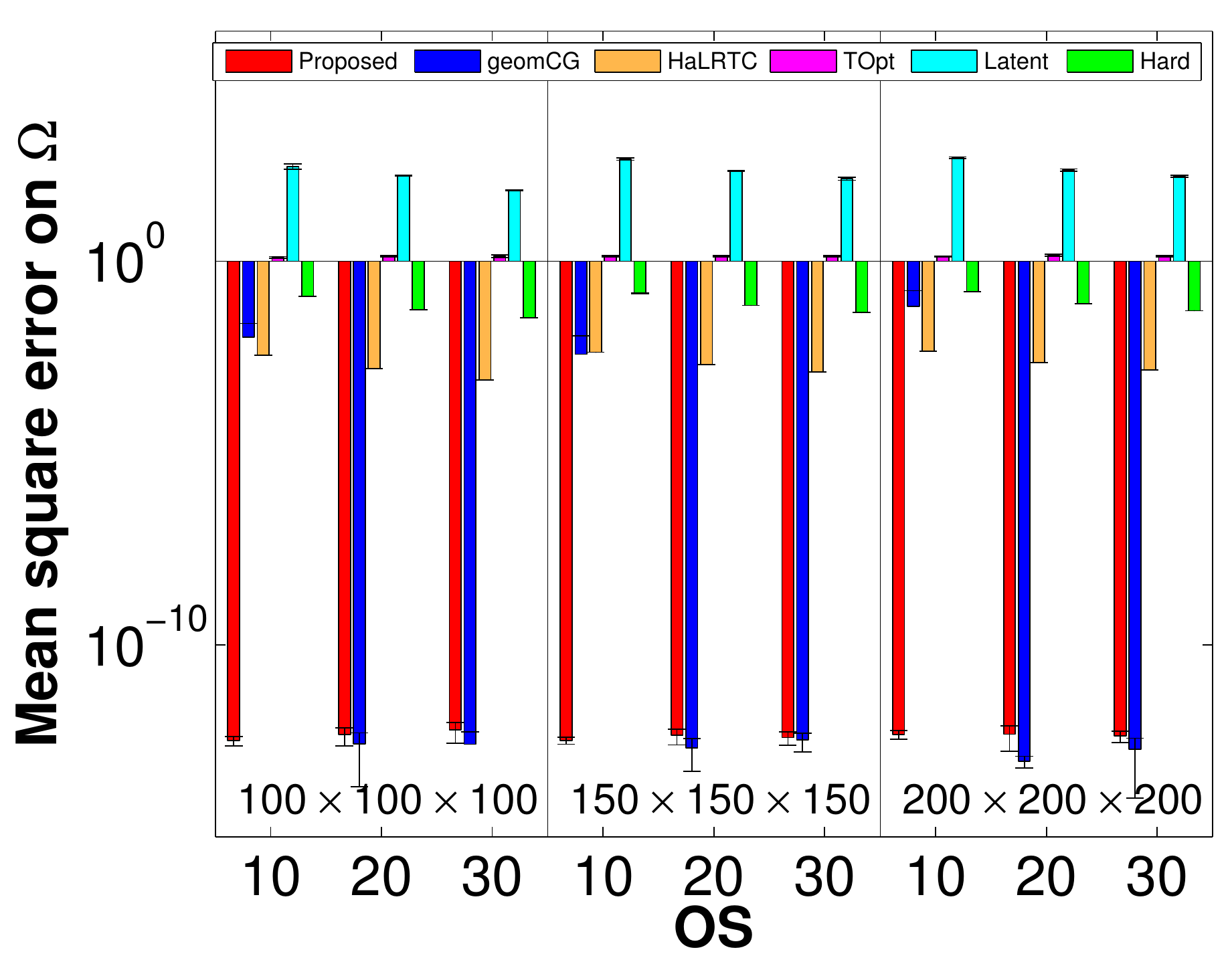}\\
{\scriptsize(f) {\bf r} = ($15,15,15$).}
\end{center}
\end{minipage}\\
\begin{minipage}{0.32\hsize}
\begin{center}
\includegraphics[width=\hsize]{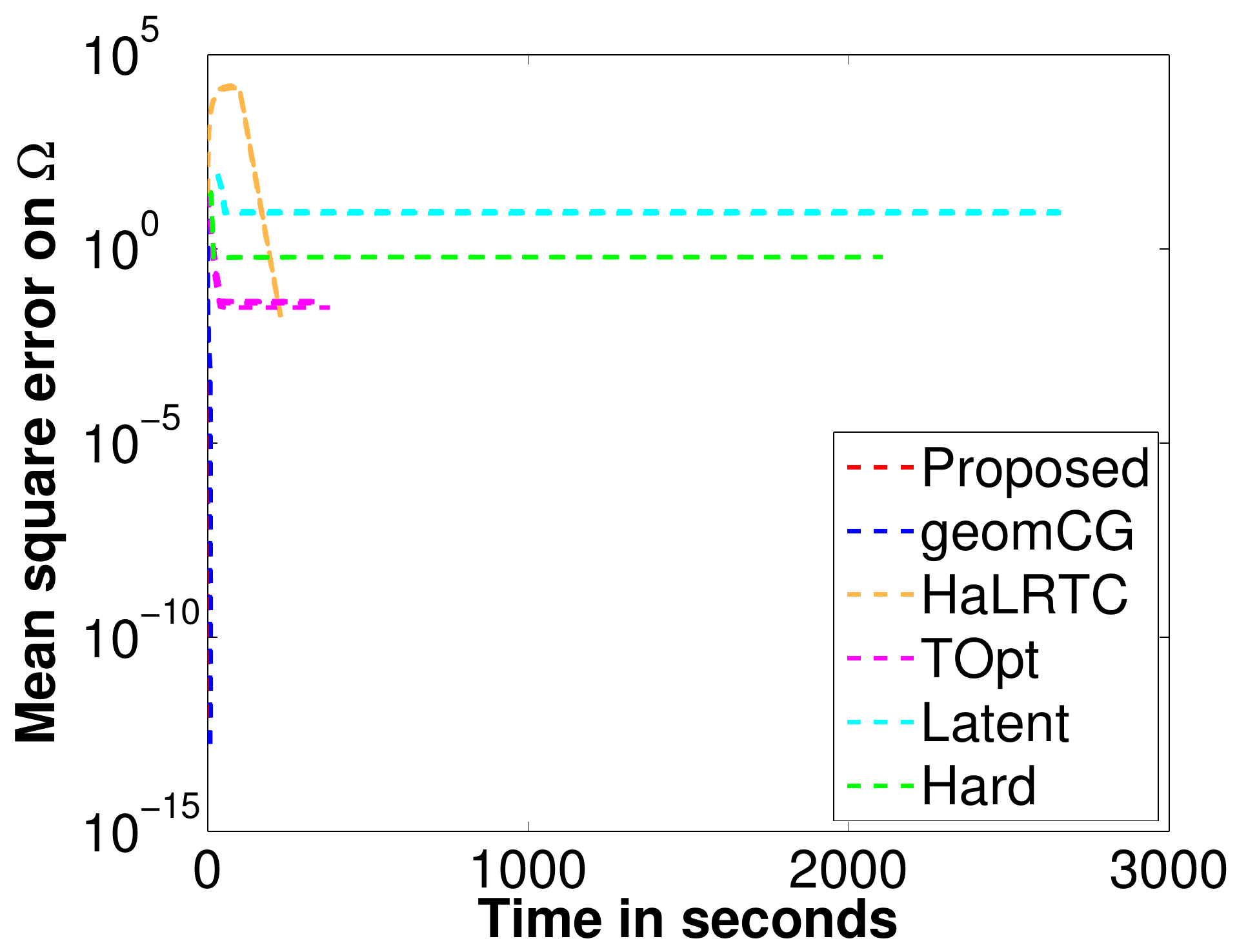}\\
{\scriptsize(g) $200\times 200\times 200$, OS = $10$, \\ {\bf r} = ($5,5,5$).}
\end{center}
\end{minipage}
\begin{minipage}{0.32\hsize}
\begin{center}
\includegraphics[width=\hsize]{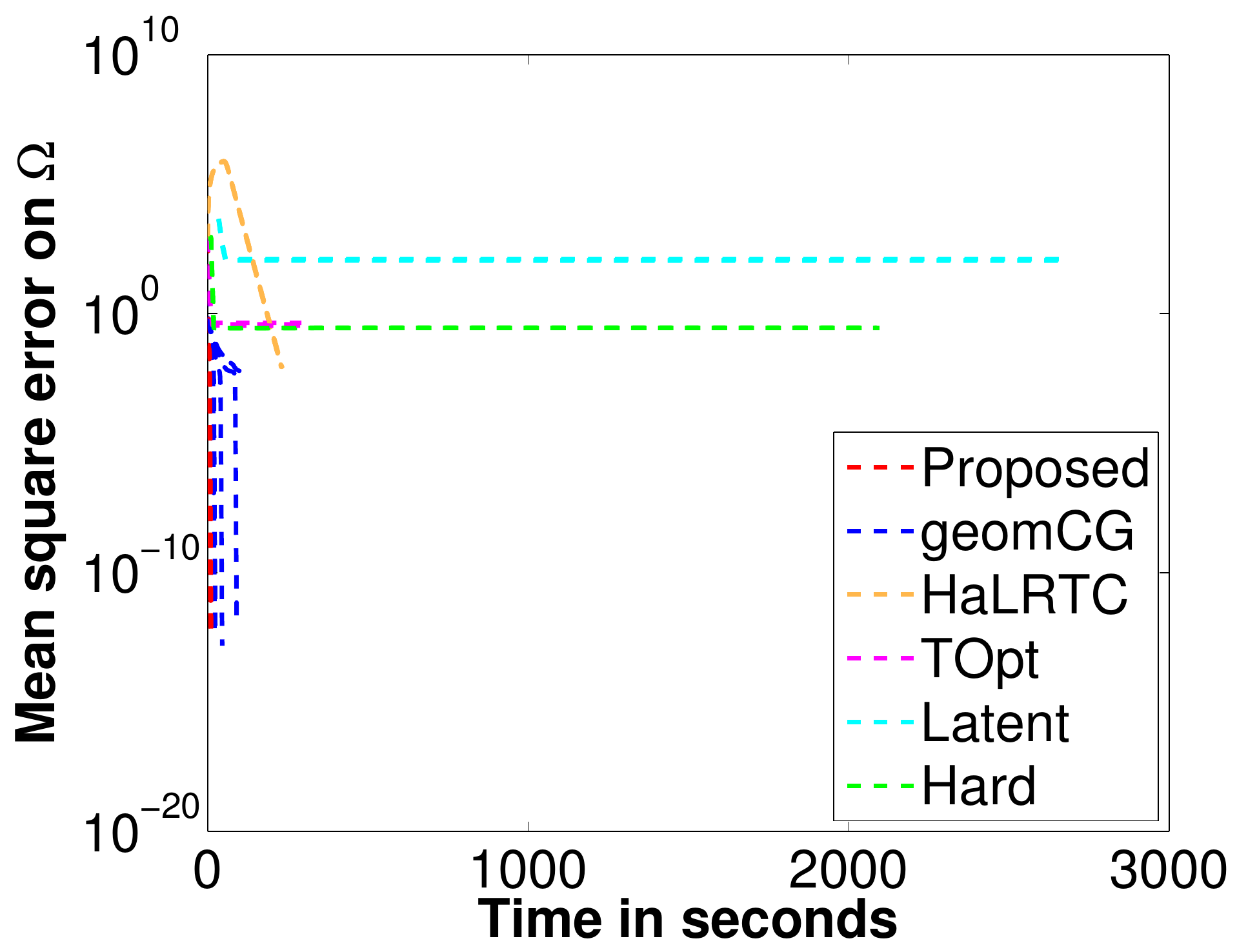}\\
{\scriptsize(h) $200\times 200\times 200$, OS = $10$, \\ {\bf r} = ($10,10,10$).}
\end{center}
\end{minipage}
\begin{minipage}{0.32\hsize}
\begin{center}
\includegraphics[width=\hsize]{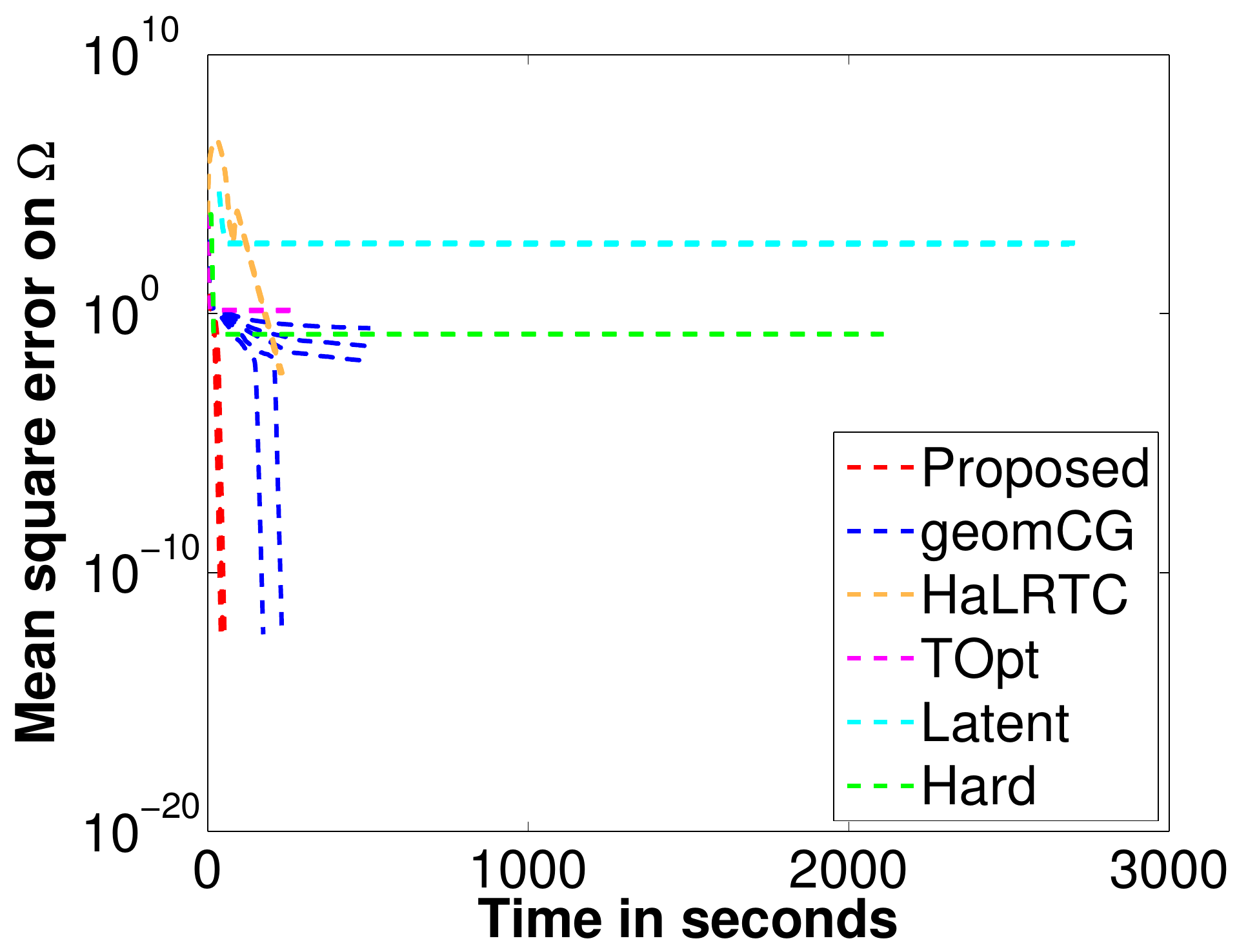}\\
{\scriptsize(i) $200\times 200\times 200$, OS = $10$, \\ {\bf r} = ($15,15,15$).}
\end{center}
\end{minipage}\\
\end{tabular}
\caption{\changeHK{{\bf Case S2:} small-scale comparisons on $\Omega$ (train error). }}
\label{appnfig:small-scale-train}
\end{figure*}

\begin{figure*}[htbp]
\begin{tabular}{cc}
\begin{minipage}{0.32\hsize}
\begin{center}
\includegraphics[width=\hsize]{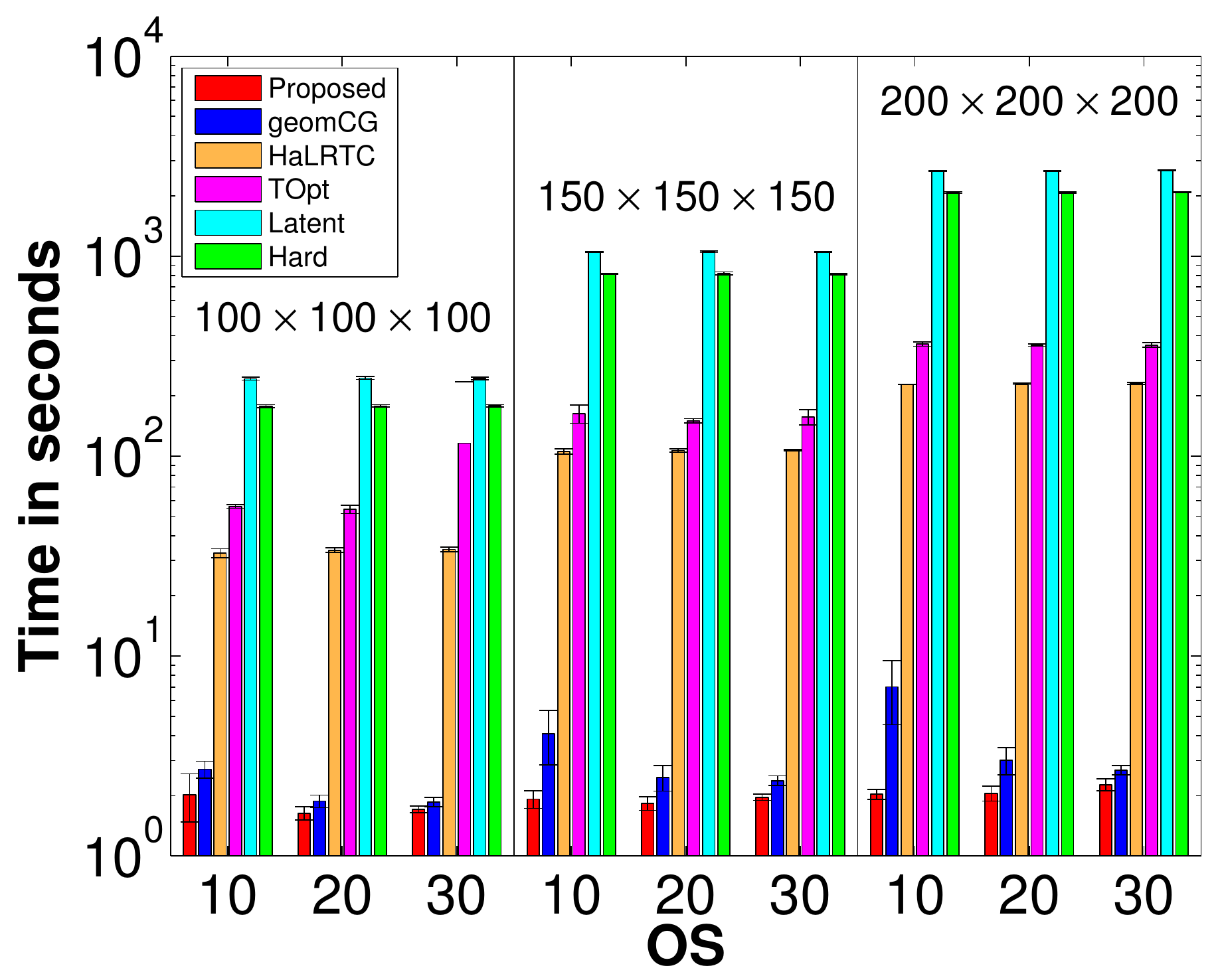}\\
{\scriptsize(a) {\bf r} = ($5,5,5$).}
\end{center}
\end{minipage}
\begin{minipage}{0.32\hsize}
\begin{center}
\includegraphics[width=\hsize]{figures/caseS2_core_10_time_bar-eps-converted-to.pdf}\\
{\scriptsize(b) {\bf r} = ($10,10,10$).}
\end{center}
\end{minipage}
\begin{minipage}{0.32\hsize}
\begin{center}
\includegraphics[width=\hsize]{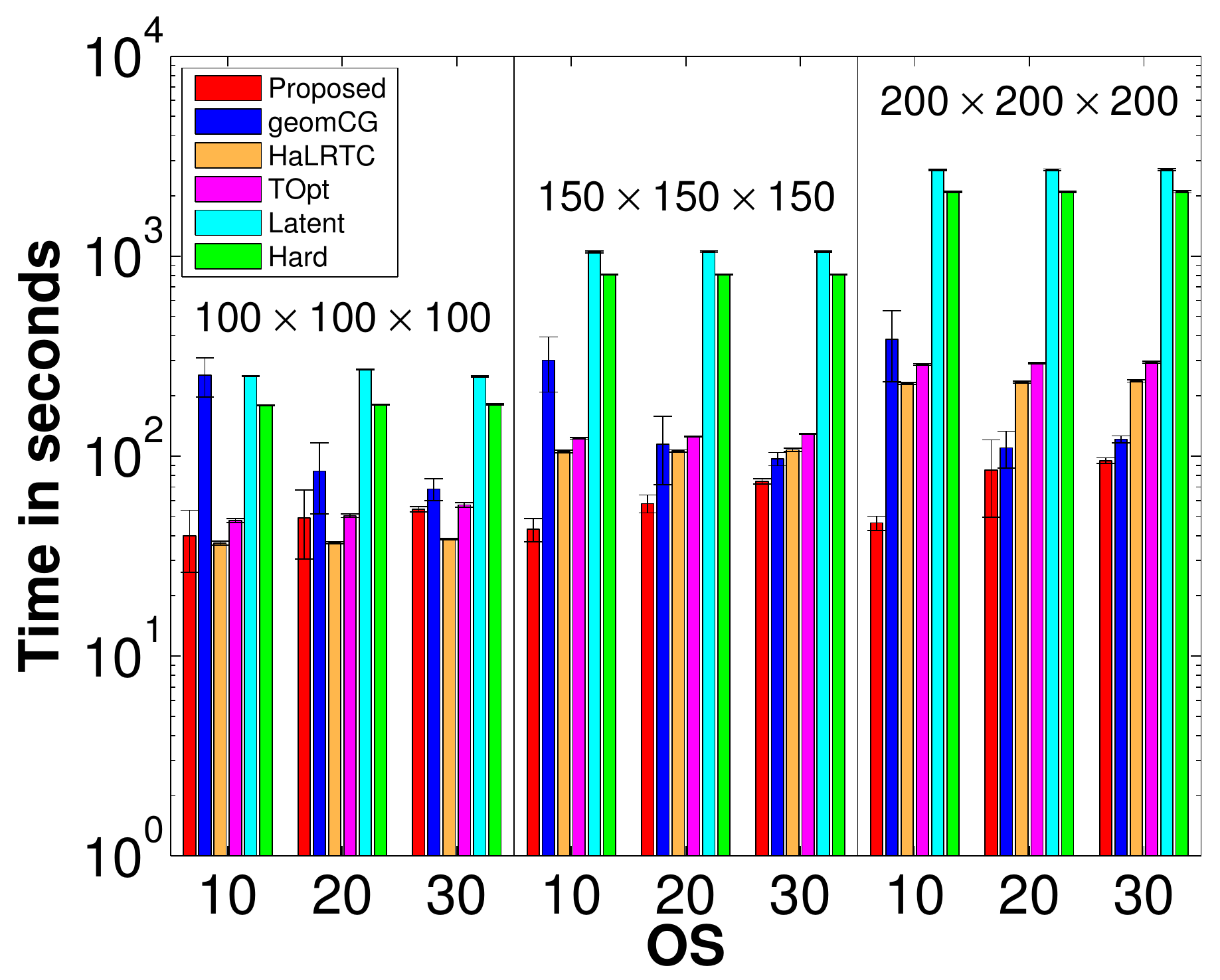}\\
{\scriptsize(c) {\bf r} = ($15,15,15$).}
\end{center}
\end{minipage}\\
\begin{minipage}{0.32\hsize}
\begin{center}
\includegraphics[width=\hsize]{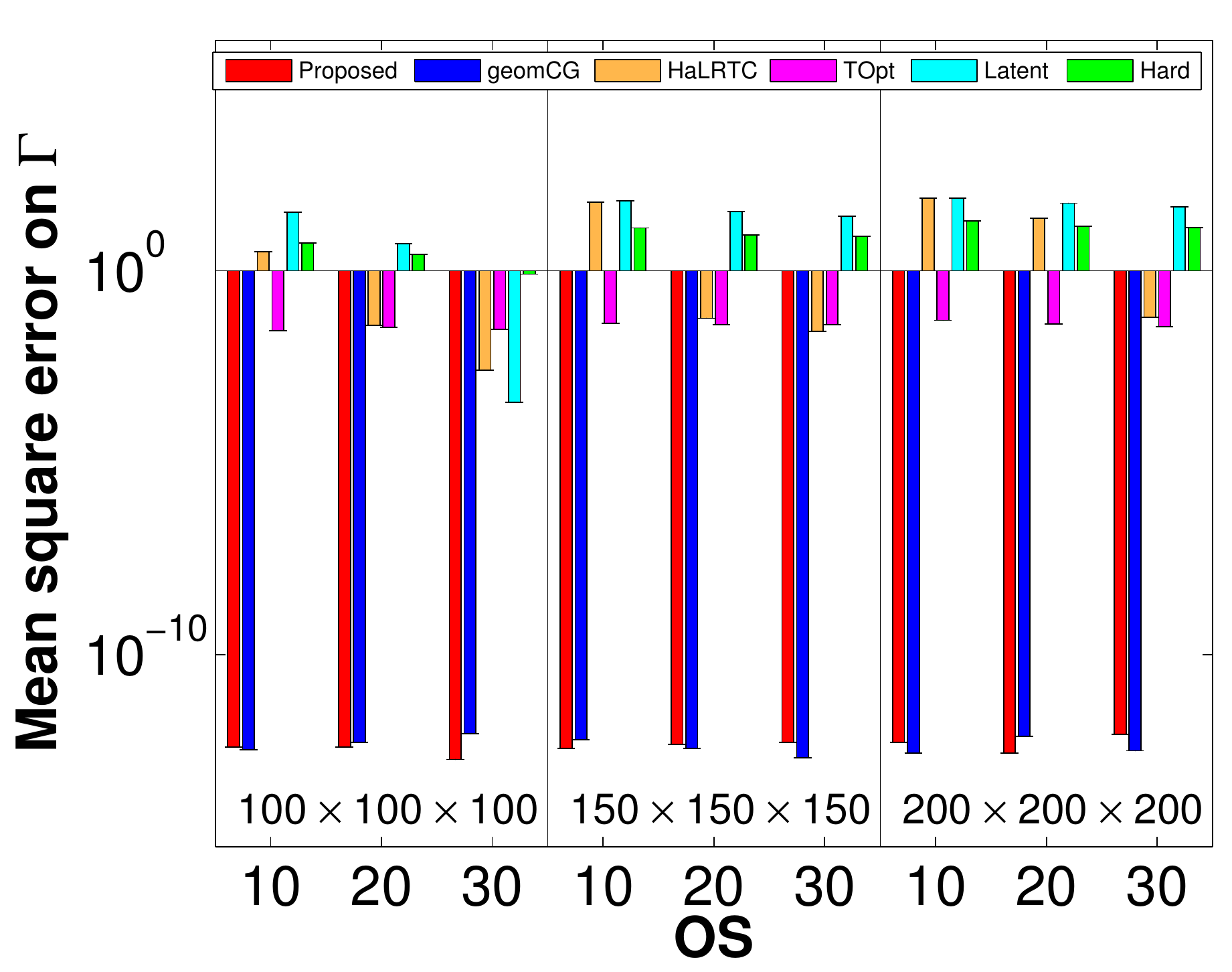}\\
{\scriptsize(d) {\bf r} = ($5,5,5$).}
\end{center}
\end{minipage}
\begin{minipage}{0.32\hsize}
\begin{center}
\includegraphics[width=\hsize]{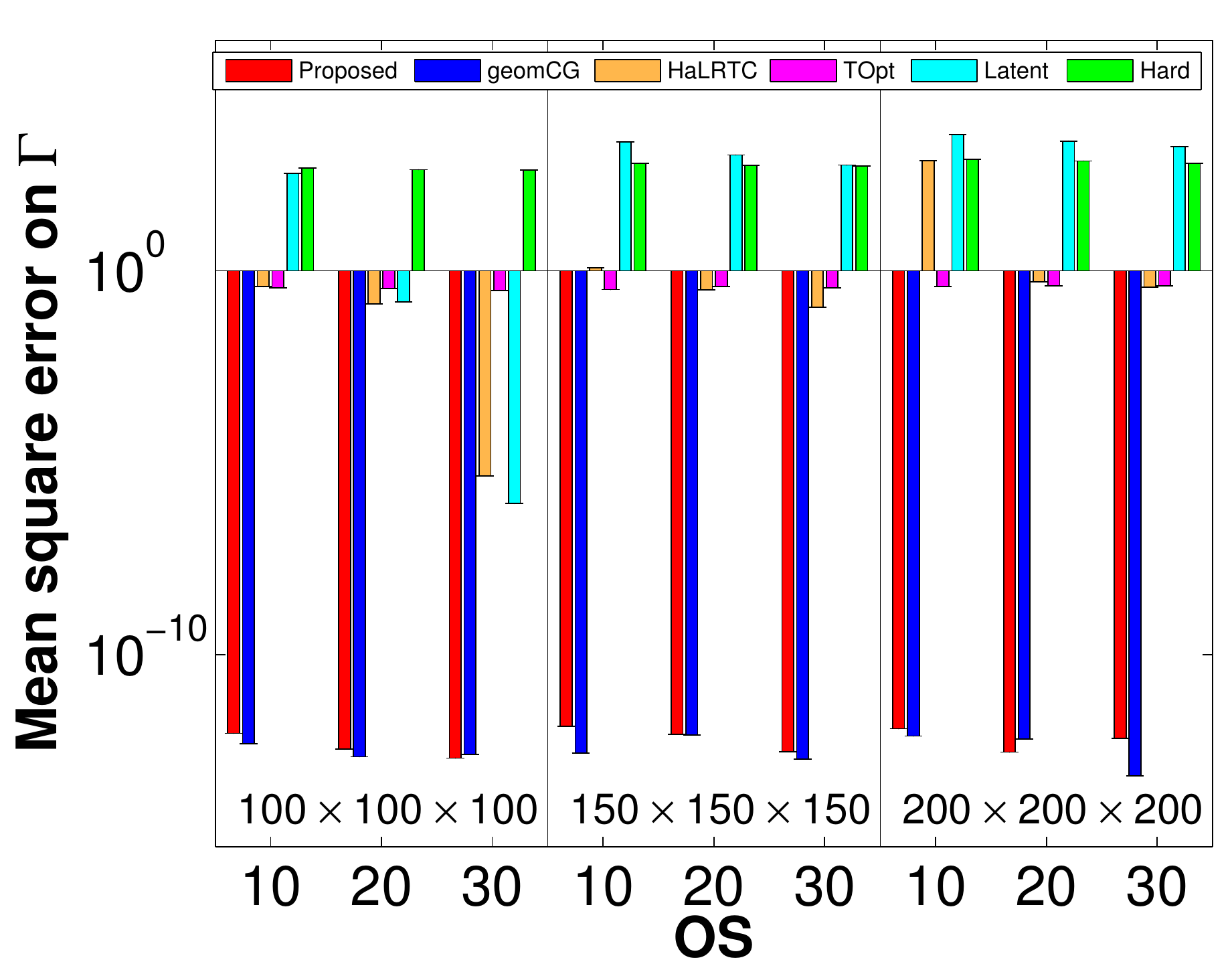}\\
{\scriptsize(e) {\bf r} = ($10,10,10$).}
\end{center}
\end{minipage}
\begin{minipage}{0.32\hsize}
\begin{center}
\includegraphics[width=\hsize]{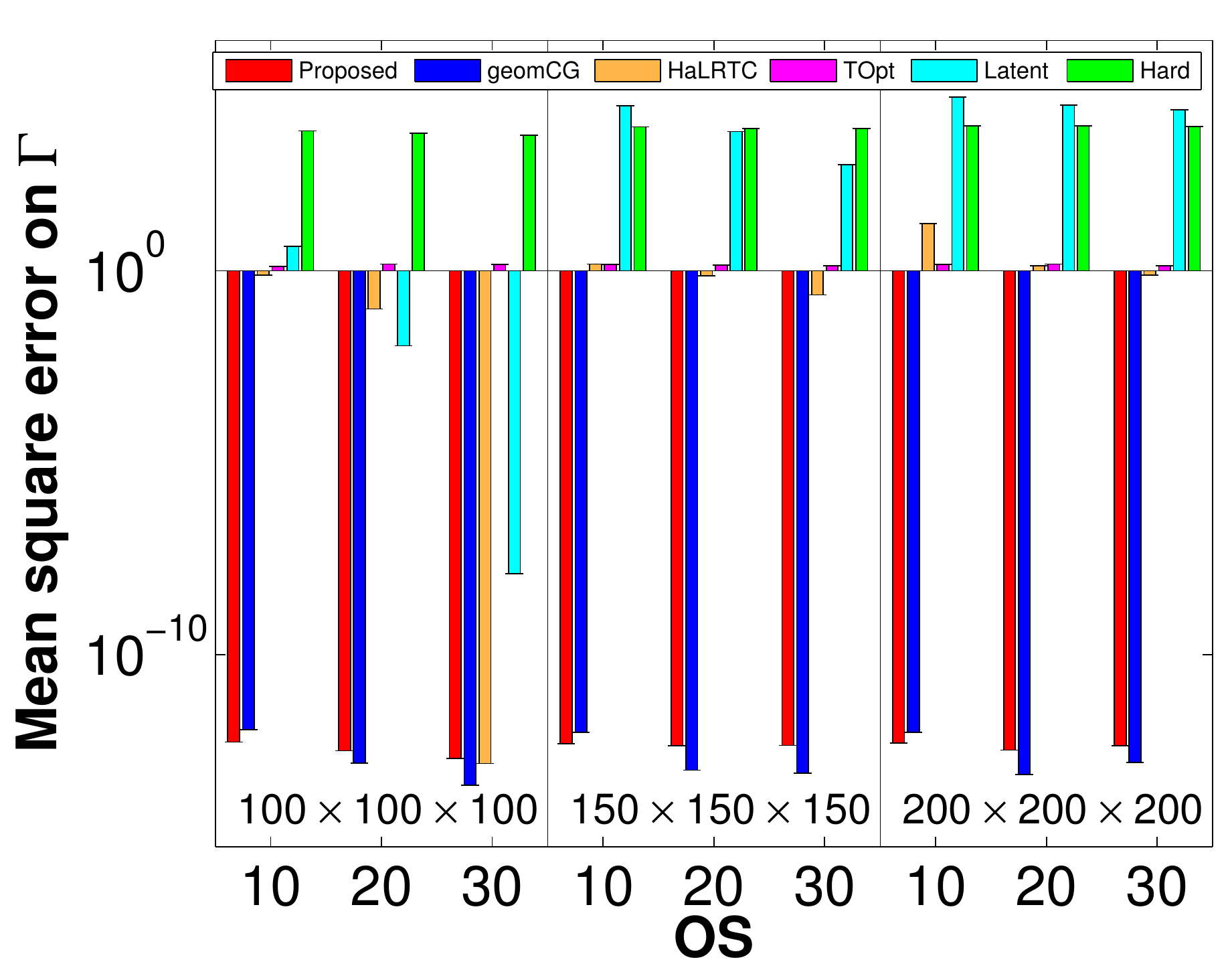}\\
{\scriptsize(f) {\bf r} = ($15,15,15$).}
\end{center}
\end{minipage}\\
\begin{minipage}{0.32\hsize}
\begin{center}
\includegraphics[width=\hsize]{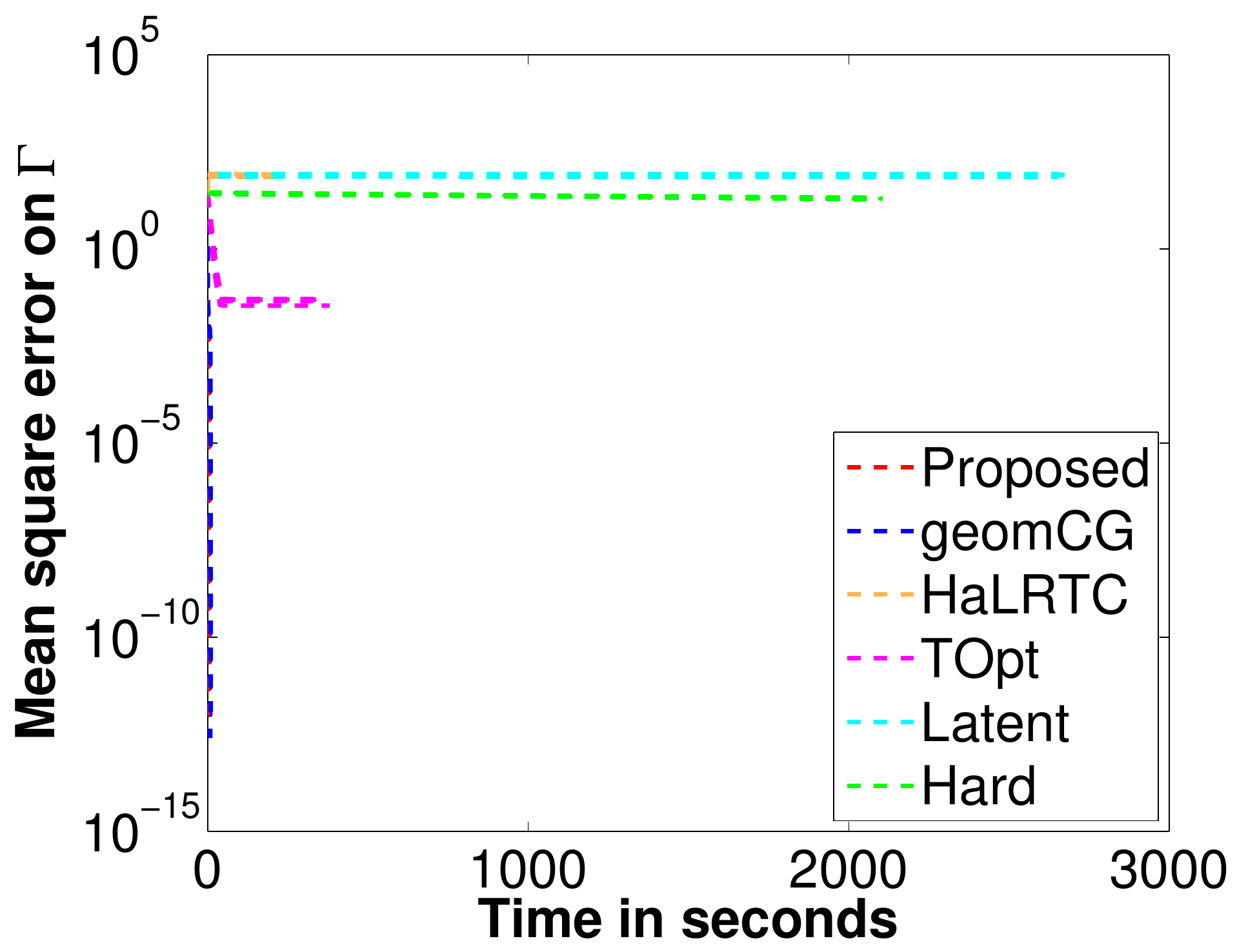}\\
{\scriptsize(g) $200\times 200\times 200$, OS = $10$, \\ {\bf r} = ($5,5,5$).}
\end{center}
\end{minipage}
\begin{minipage}{0.32\hsize}
\begin{center}
\includegraphics[width=\hsize]{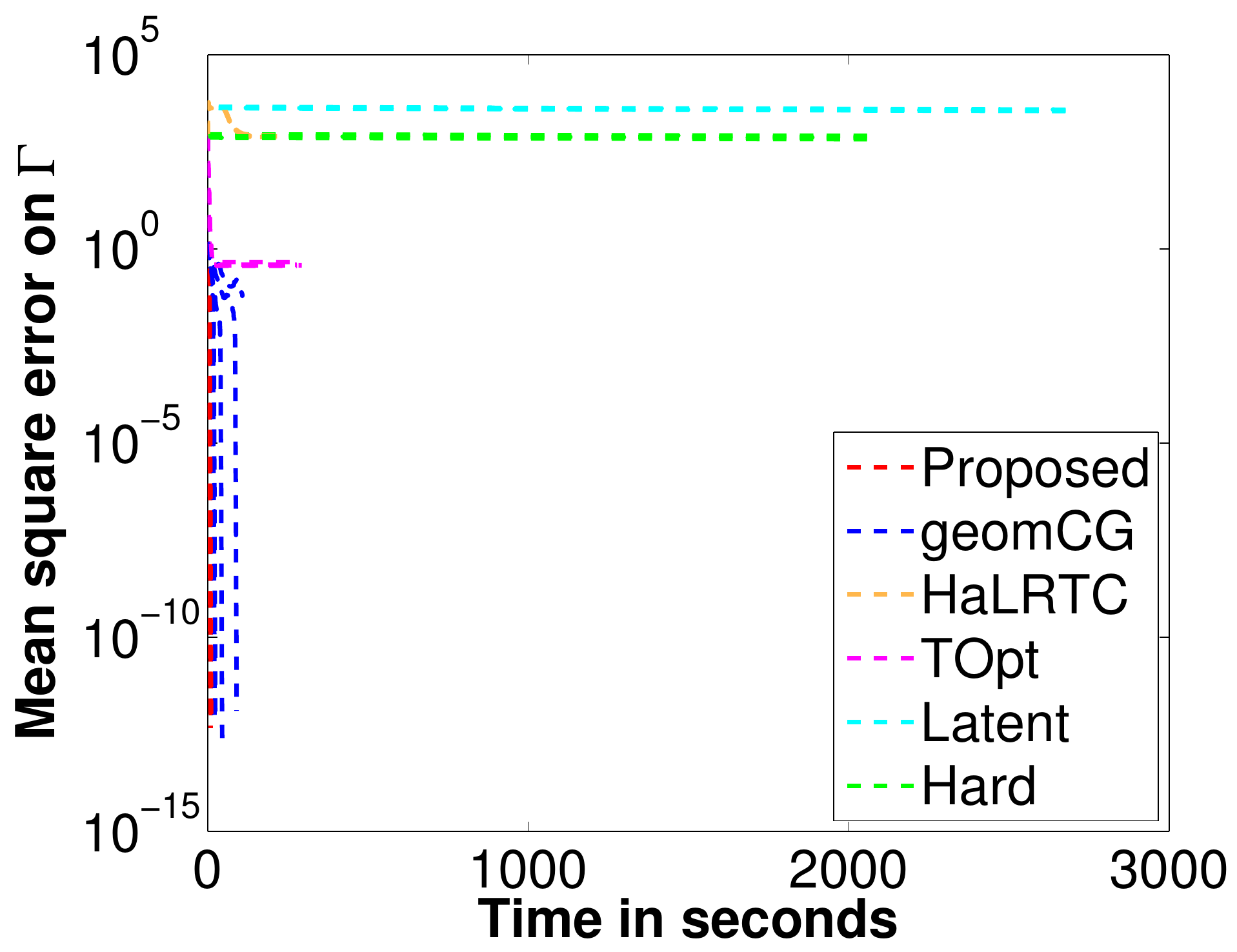}\\
{\scriptsize(h) $200\times 200\times 200$, OS = $10$, \\ {\bf r} = ($10,10,10$).}
\end{center}
\end{minipage}
\begin{minipage}{0.32\hsize}
\begin{center}
\includegraphics[width=\hsize]{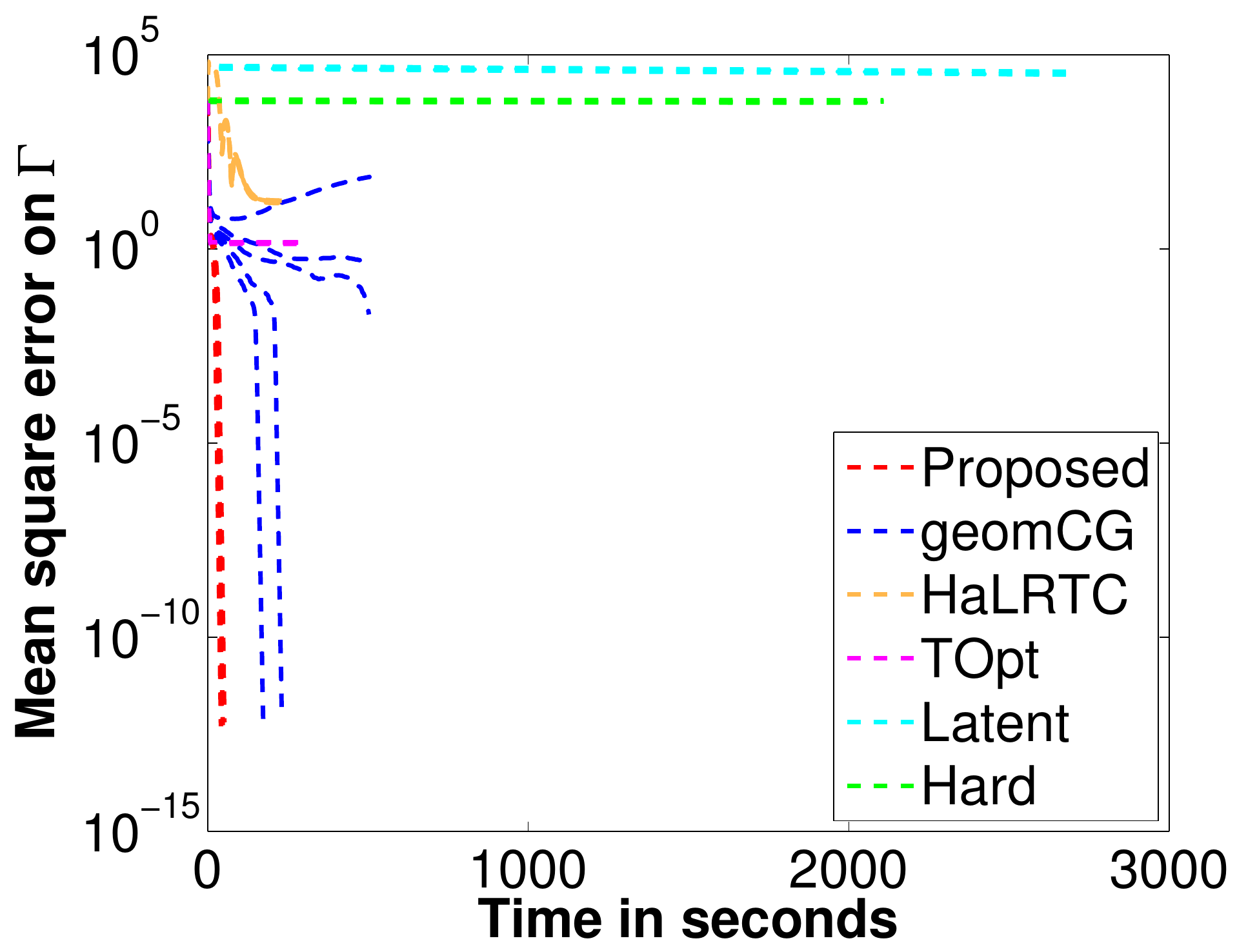}\\
{\scriptsize(i) $200\times 200\times 200$, OS = $10$, \\ {\bf r} = ($15,15,15$).}
\end{center}
\end{minipage}\\
\end{tabular}
\caption{{\bf Case S2:} small-scale comparisons on $\Gamma$ (test error).}
\label{appnfig:small-scale-test}
\end{figure*}

\begin{figure*}[htbp]
\begin{tabular}{ccc}
\begin{minipage}{0.32\hsize}
\begin{center}
\includegraphics[width=\hsize]{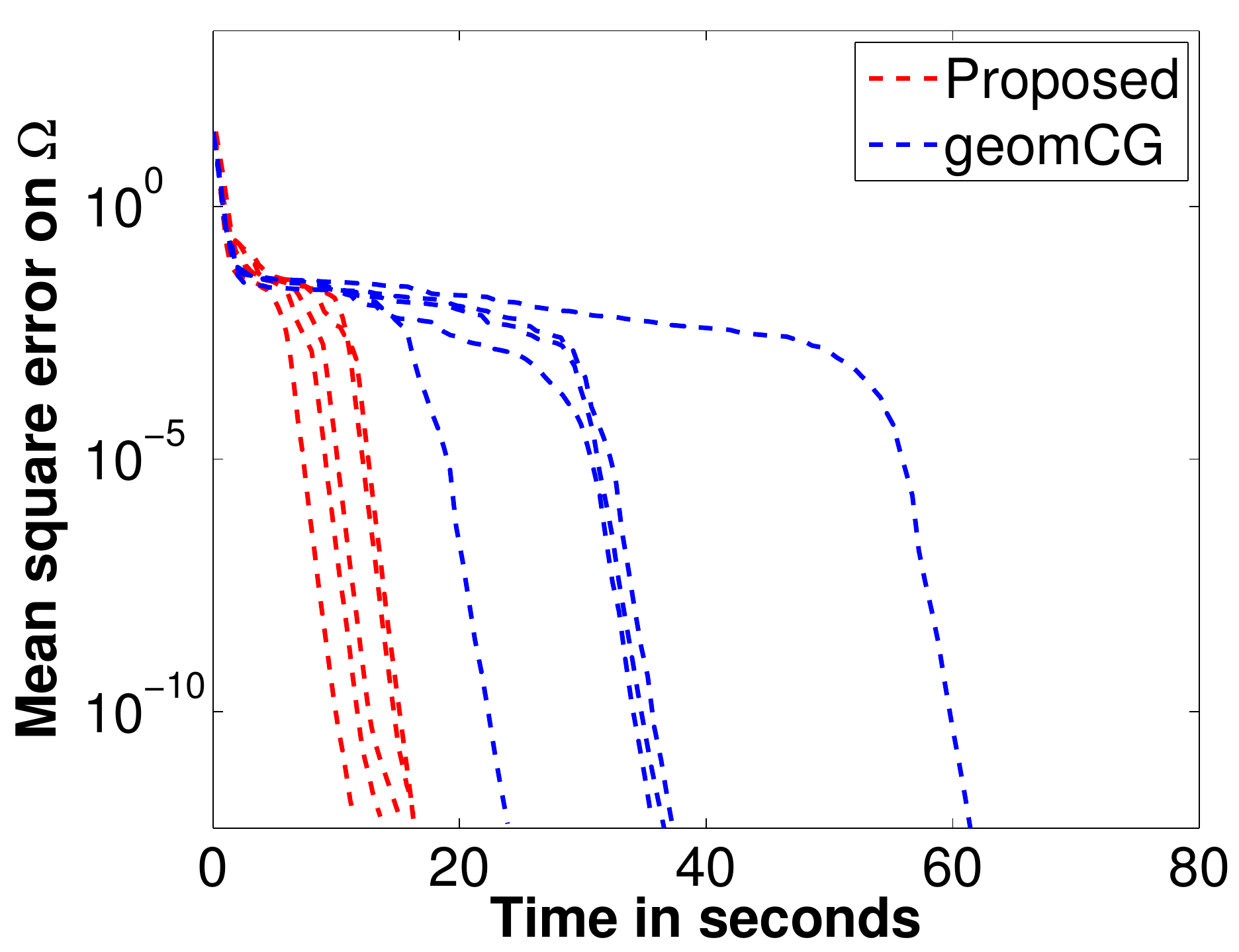}\\
{\scriptsize(a) $3000\times 3000\times 3000$, \\ {\bf r} = ($5\times 5\times 5$).}
\end{center}
\end{minipage}
\begin{minipage}{0.32\hsize}
\begin{center}
\includegraphics[width=\hsize]{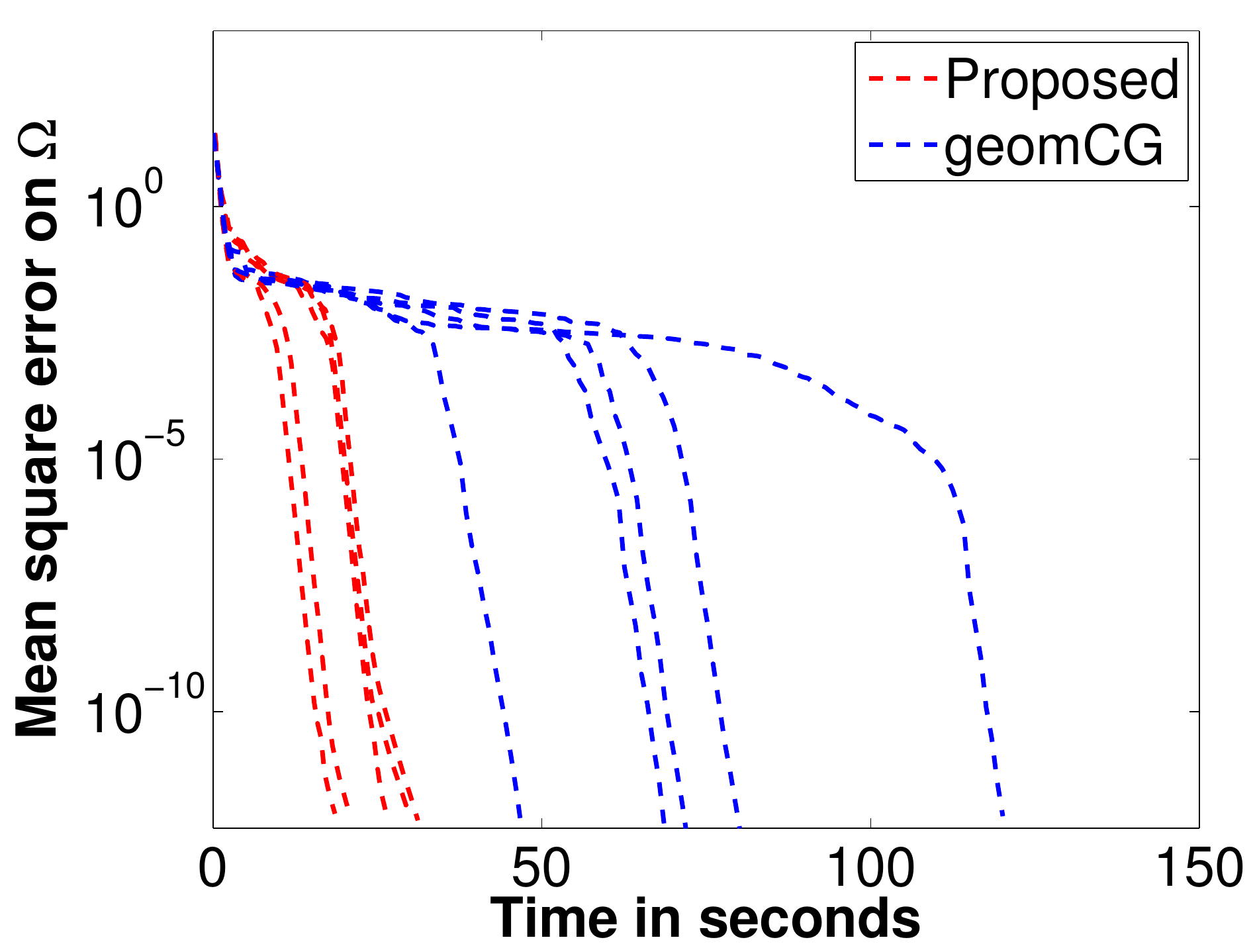}\\
{\scriptsize(b) $5000\times 5000\times 5000$, \\ {\bf r} = ($5\times 5\times 5$).}
\end{center}
\end{minipage}
\begin{minipage}{0.32\hsize}
\begin{center}
\includegraphics[width=\hsize]{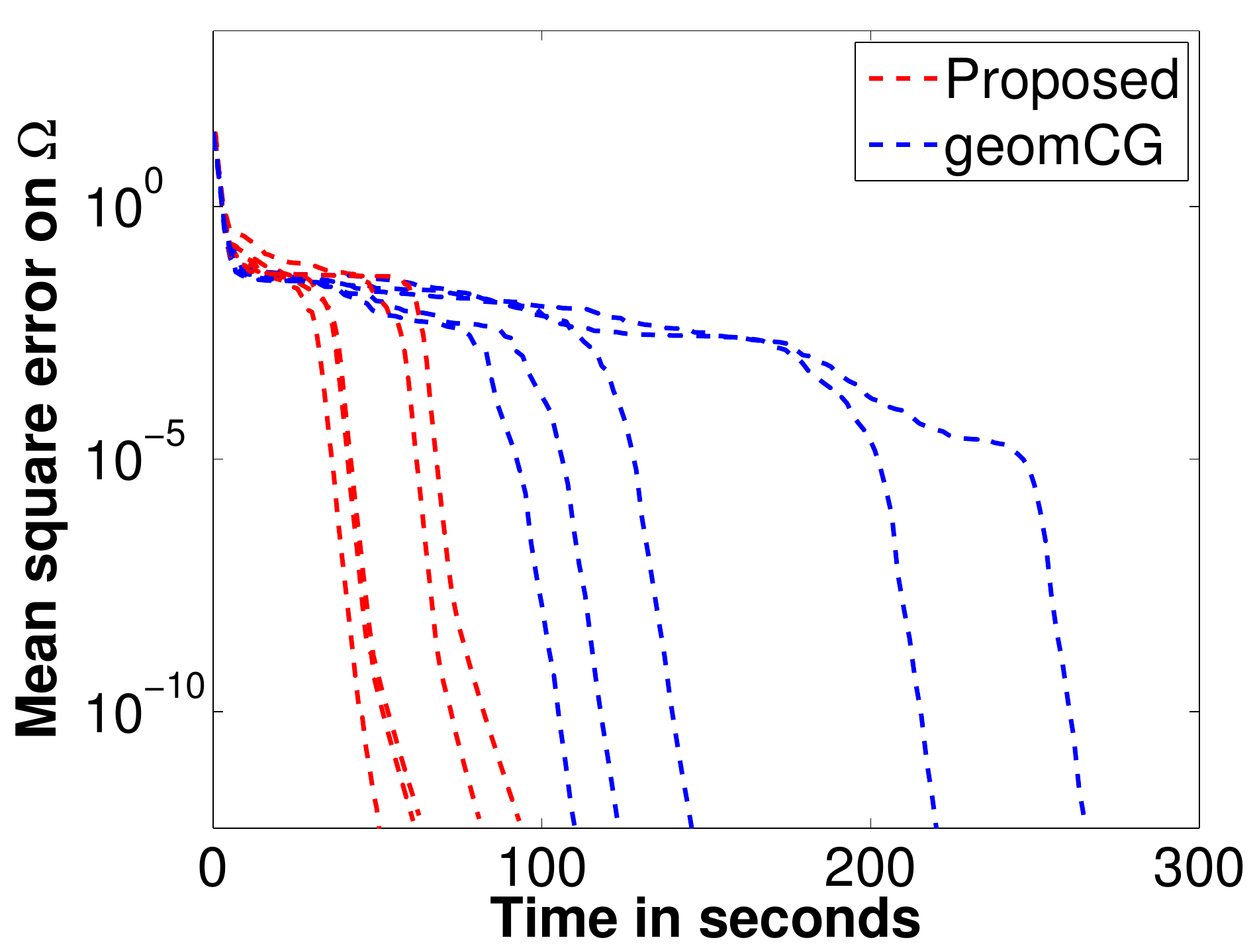}\\
{\scriptsize(c) $10000\times 10000\times 10000$, \\ {\bf r} = ($5\times 5\times 5$).}
\end{center}
\end{minipage}\\
\begin{minipage}{0.32\hsize}
\begin{center}
\includegraphics[width=\hsize]{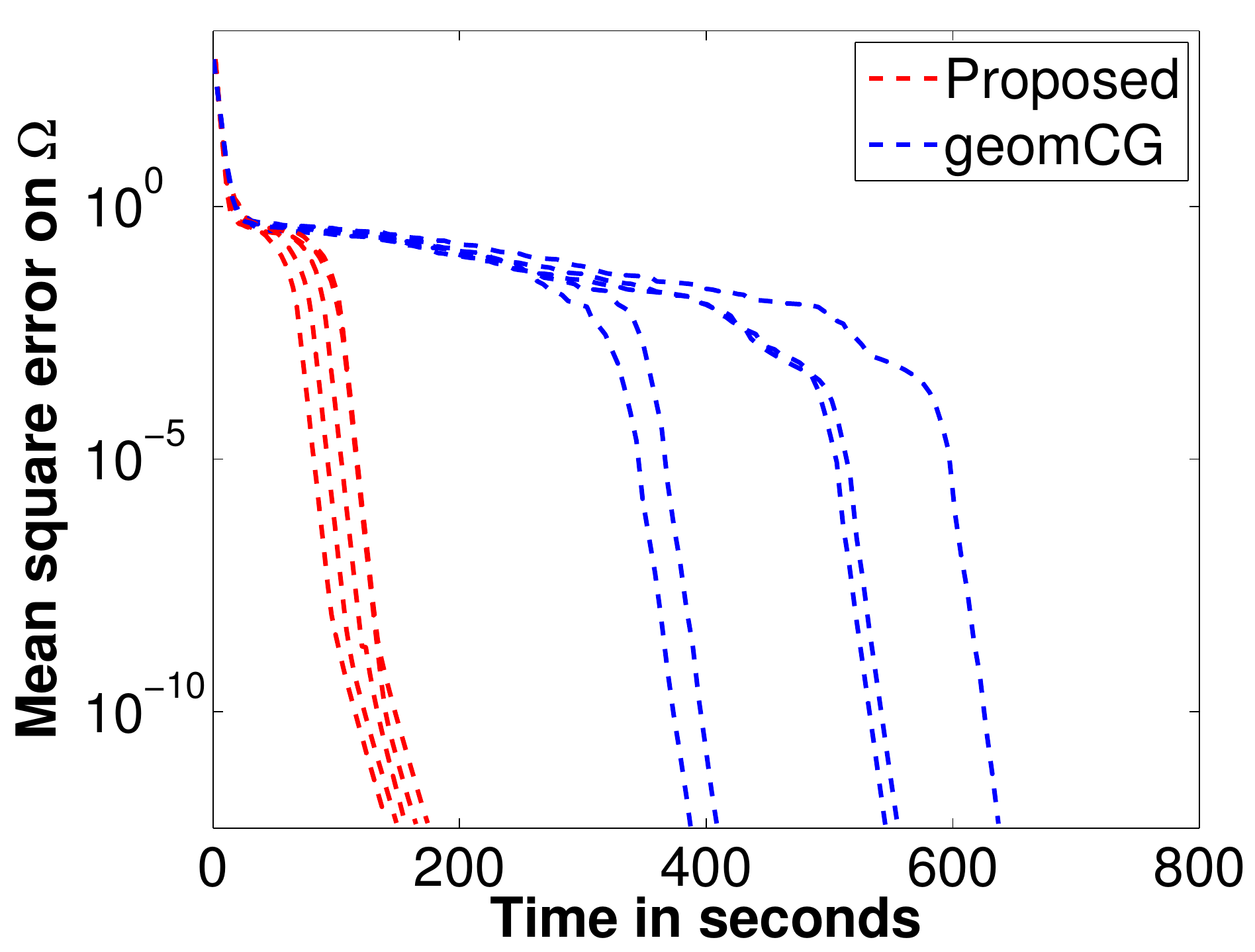}\\
{\scriptsize(d) $3000\times 3000\times 3000$, \\ {\bf r} = ($10\times 10\times 10$).}
\end{center}
\end{minipage}
\begin{minipage}{0.32\hsize}
\begin{center}
\includegraphics[width=\hsize]{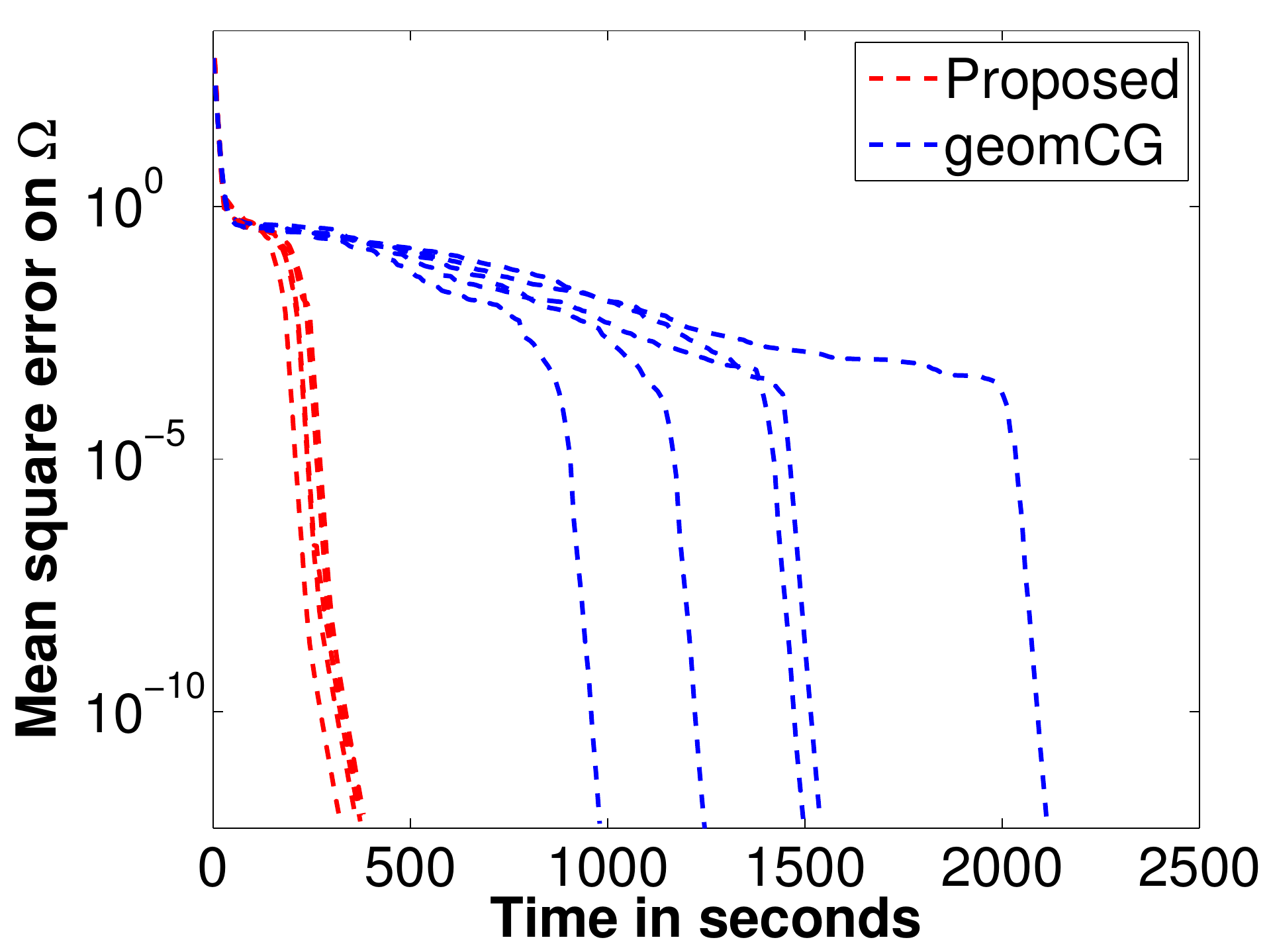}\\
{\scriptsize(e) $5000\times 5000\times 5000$, \\ {\bf r} = ($10\times 10\times 10$).}
\end{center}
\end{minipage}
\begin{minipage}{0.32\hsize}
\begin{center}
\includegraphics[width=\hsize]{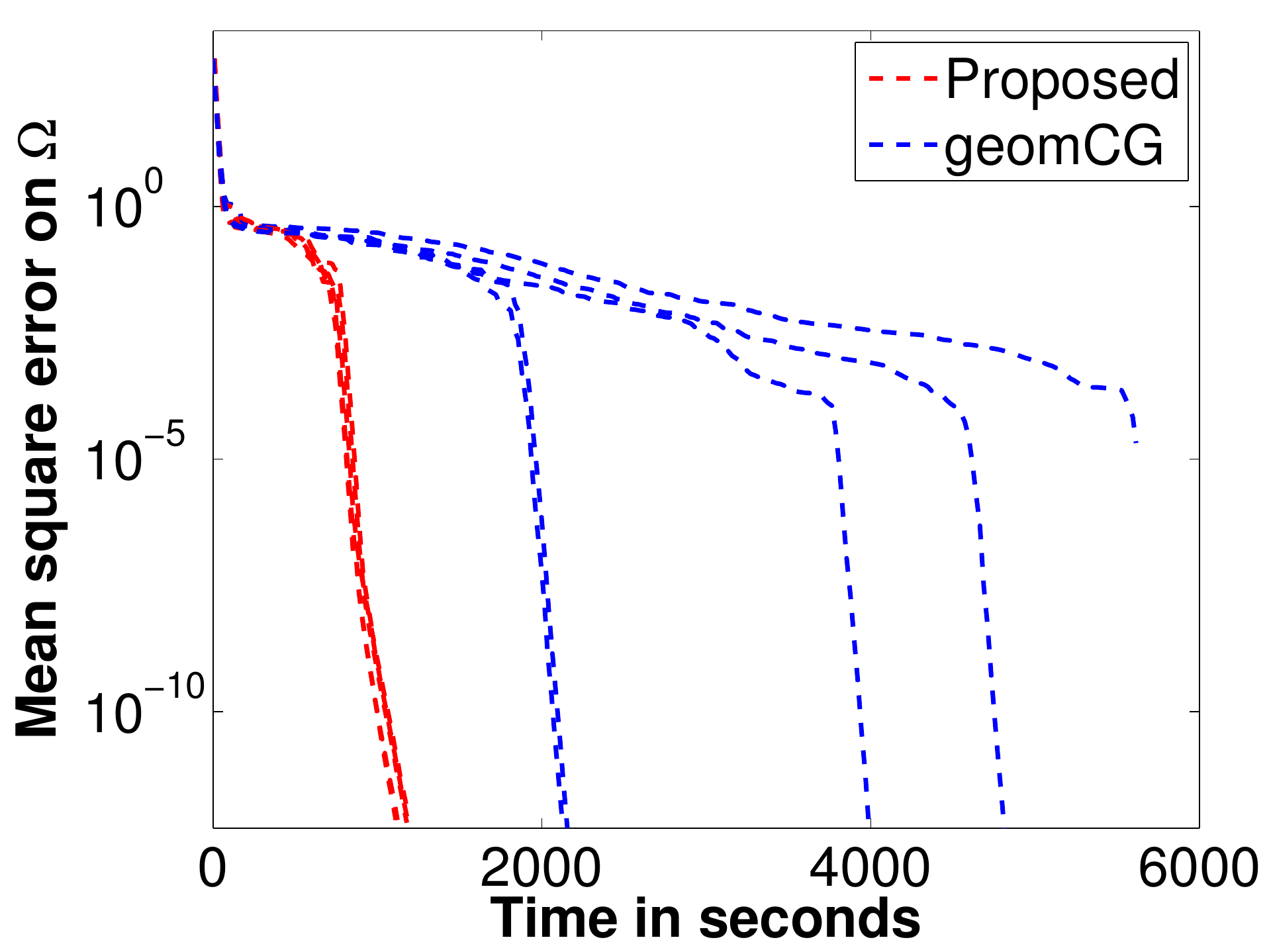}\\
{\scriptsize(f) $10000\times 10000\times 10000$, \\ {\bf r} = ($10\times 10\times 10$).}
\end{center}
\end{minipage}
\end{tabular}
\vspace{-0.1cm}
\caption{\changeHK{{\bf Case S3:} large-scale comparisons on $\Omega$ (train error).}}
\label{appnfig:large-scale-train}
\end{figure*}

\begin{figure*}[htbp]
\begin{tabular}{ccc}
\begin{minipage}{0.32\hsize}
\begin{center}
\includegraphics[width=\hsize]{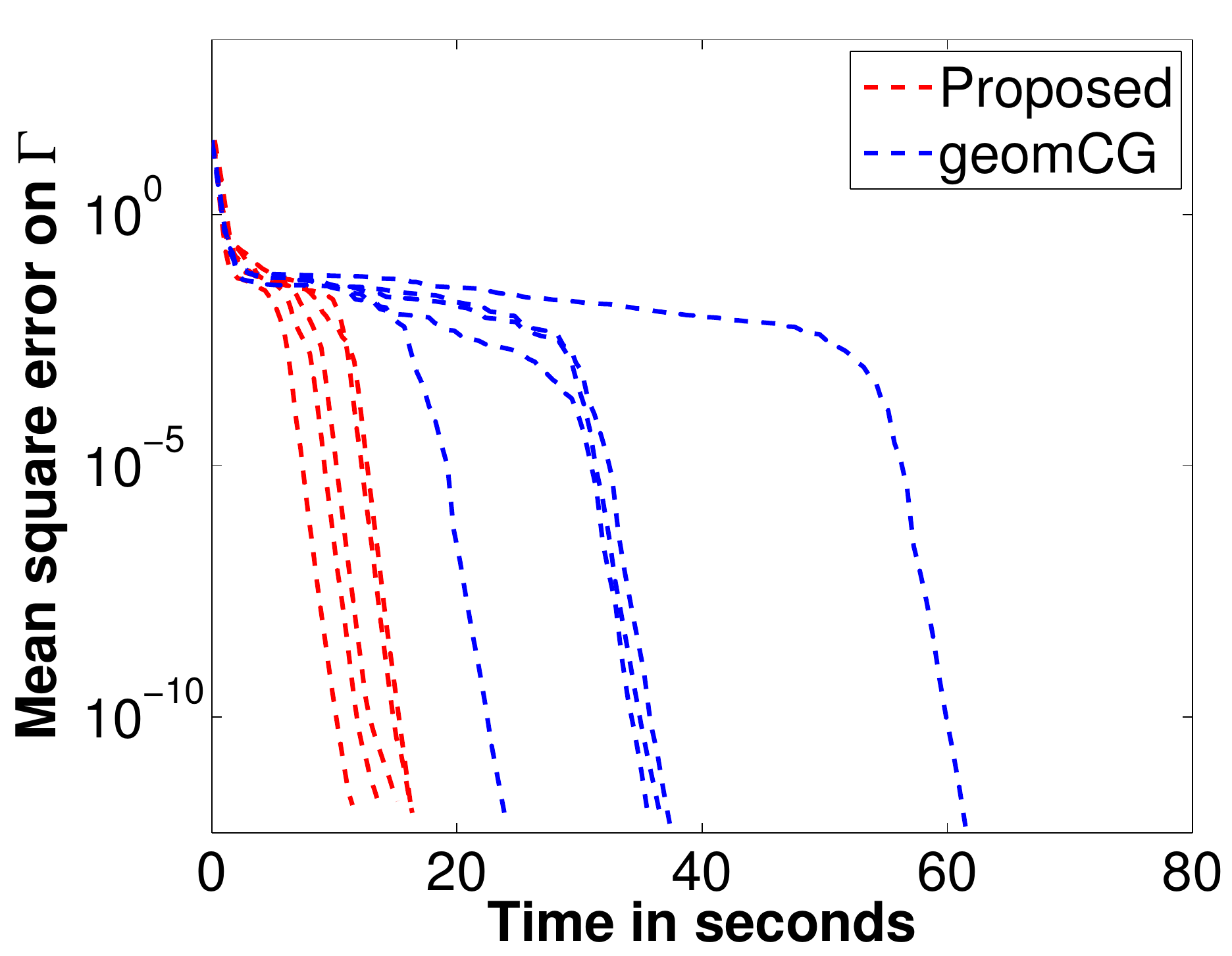}\\
{\scriptsize(a) $3000\times 3000\times 3000$, \\ {\bf r} = ($5\times 5\times 5$).}
\end{center}
\end{minipage}
\begin{minipage}{0.32\hsize}
\begin{center}
\includegraphics[width=\hsize]{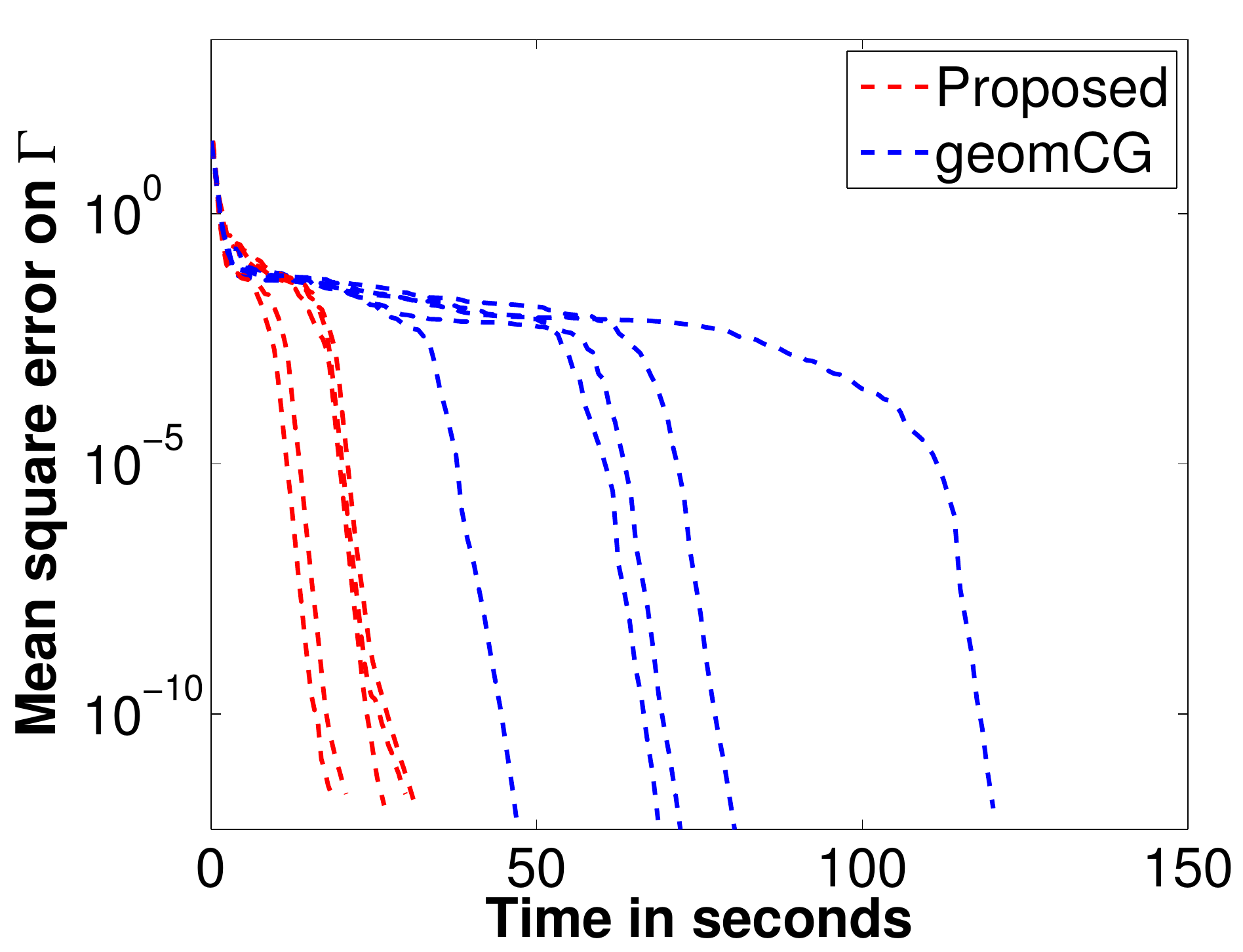}\\
{\scriptsize(b) $5000\times 5000\times 5000$, \\ {\bf r} = ($5\times 5\times 5$).}
\end{center}
\end{minipage}
\begin{minipage}{0.32\hsize}
\begin{center}
\includegraphics[width=\hsize]{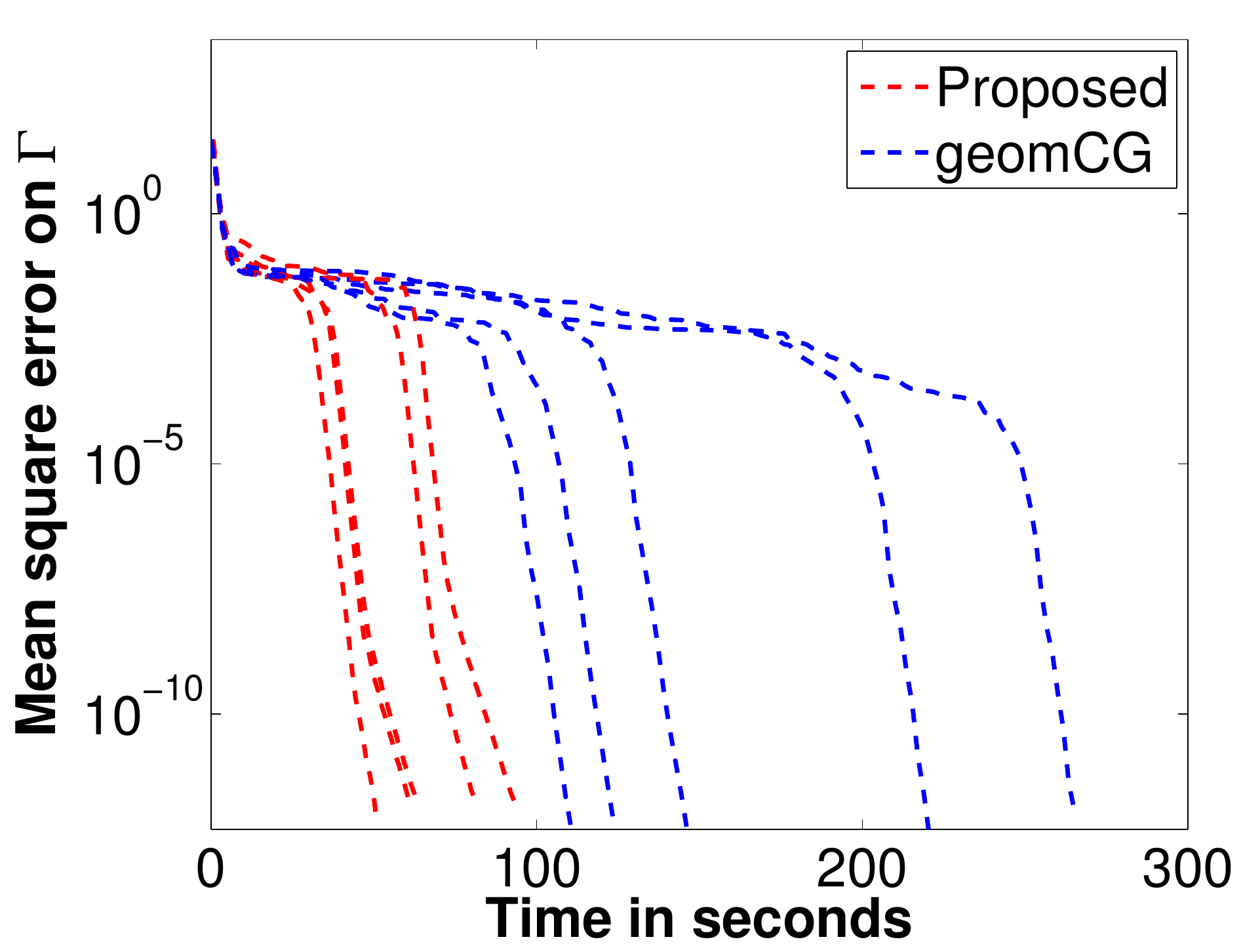}\\
{\scriptsize(c) $10000\times 10000\times 10000$, \\ {\bf r} = ($5\times 5\times 5$).}
\end{center}
\end{minipage}\\
\begin{minipage}{0.32\hsize}
\begin{center}
\includegraphics[width=\hsize]{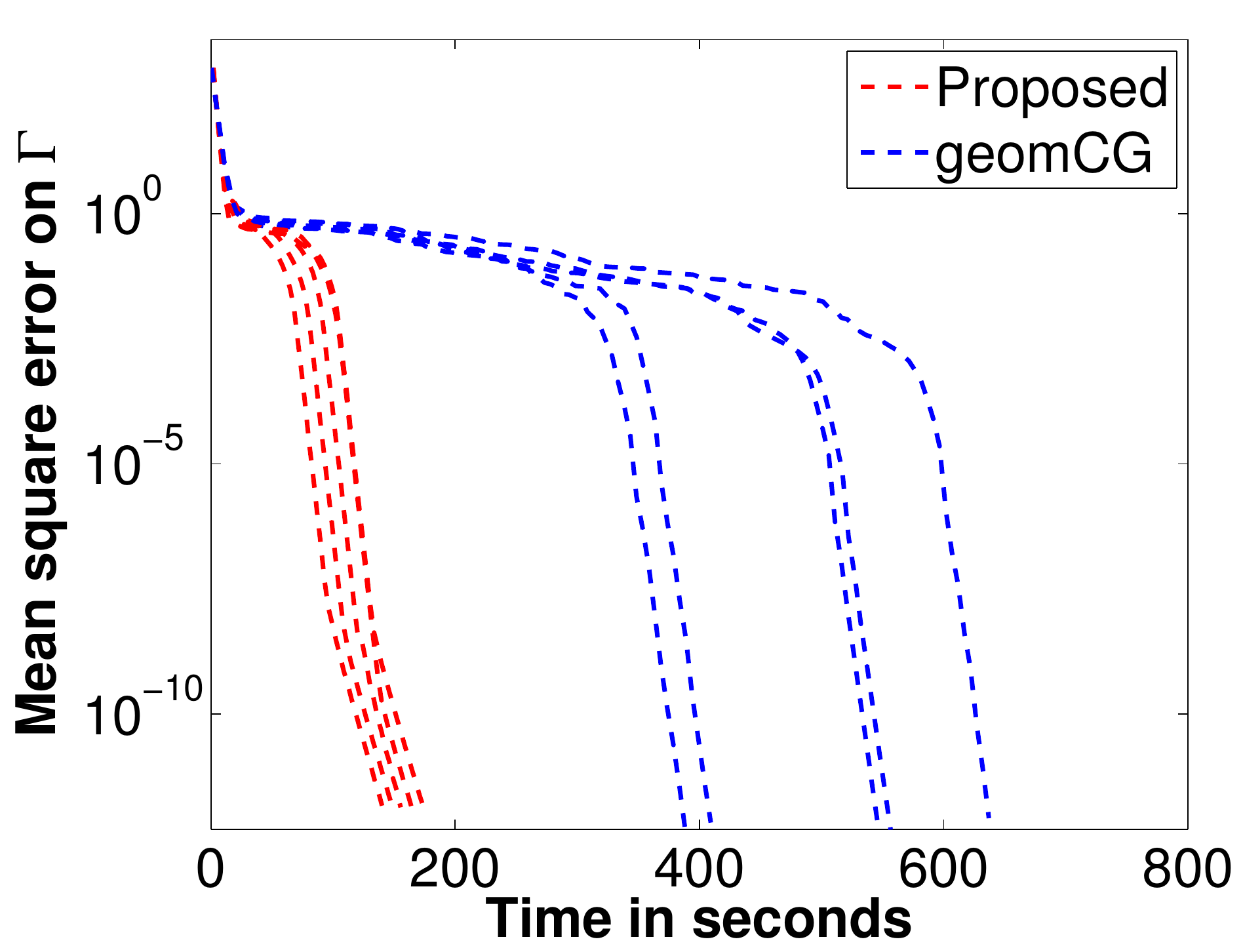}\\
{\scriptsize(d) $3000\times 3000\times 3000$, \\ {\bf r} = ($10\times 10\times 10$).}
\end{center}
\end{minipage}
\begin{minipage}{0.32\hsize}
\begin{center}
\includegraphics[width=\hsize]{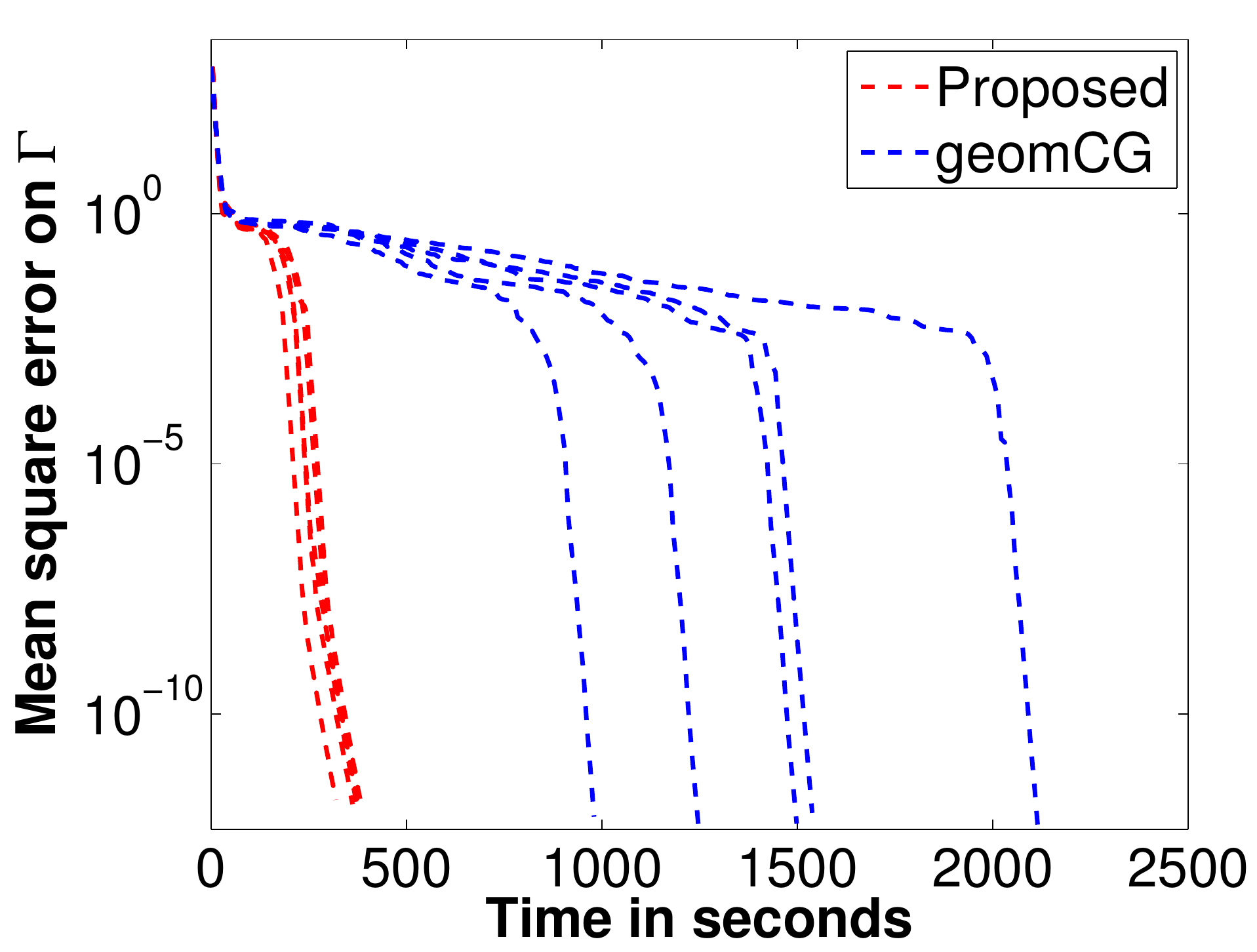}\\
{\scriptsize(e) $5000\times 5000\times 5000$, \\ {\bf r} = ($10\times 10\times 10$).}
\end{center}
\end{minipage}
\begin{minipage}{0.32\hsize}
\begin{center}
\includegraphics[width=\hsize]{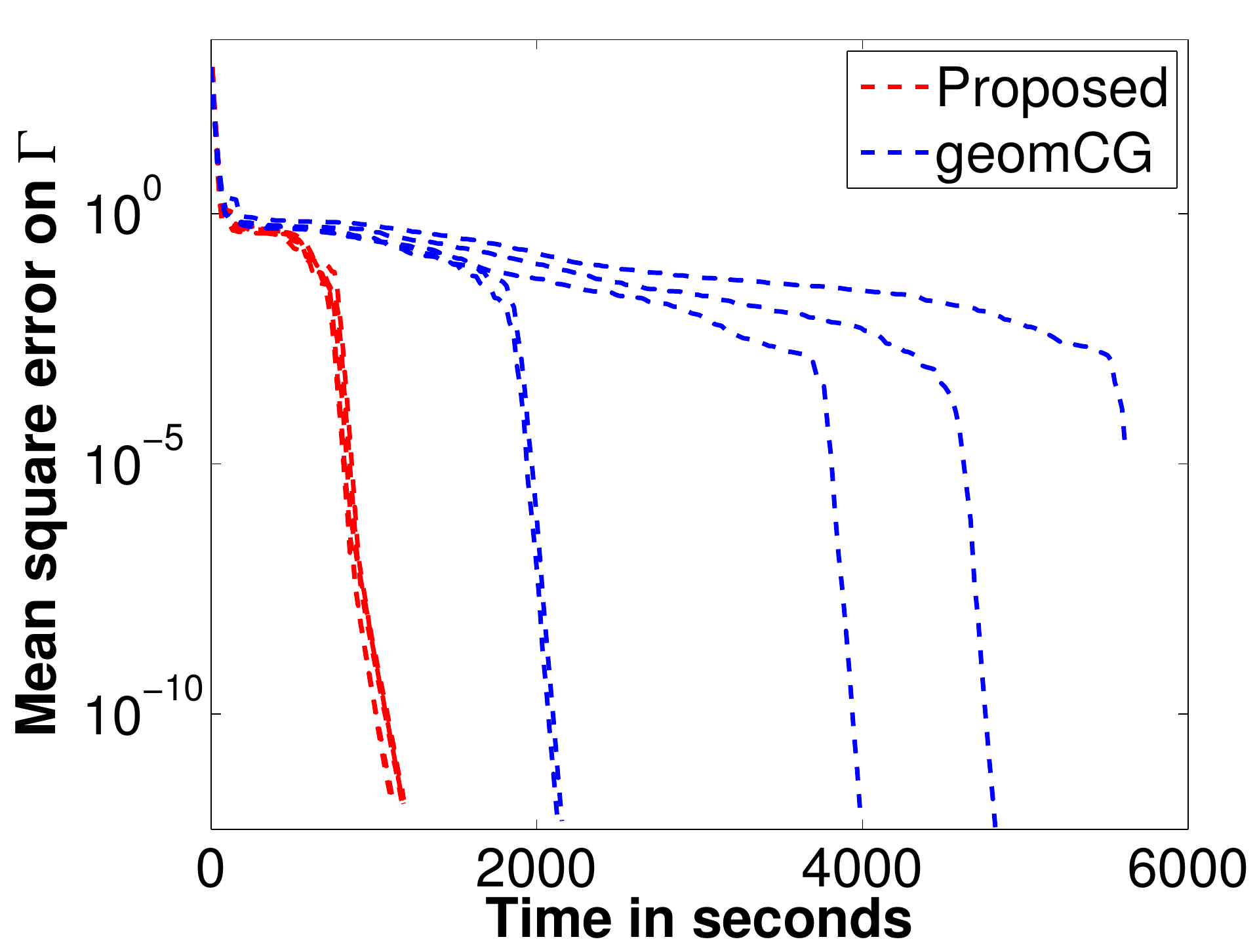}\\
{\scriptsize(f) $10000\times 10000\times 10000$, \\ {\bf r} = ($10\times 10\times 10$).}
\end{center}
\end{minipage}
\end{tabular}
\vspace{-0.1cm}
\caption{{\bf Case S3:} large-scale comparisons on $\Gamma$ (test error).}
\label{appnfig:large-scale-test}
\end{figure*}

\clearpage
\begin{figure*}[htbp]
\vspace{-0.1cm}
\begin{tabular}{ccc}
\begin{minipage}{0.32\hsize}
\begin{center}
\includegraphics[width=\hsize]{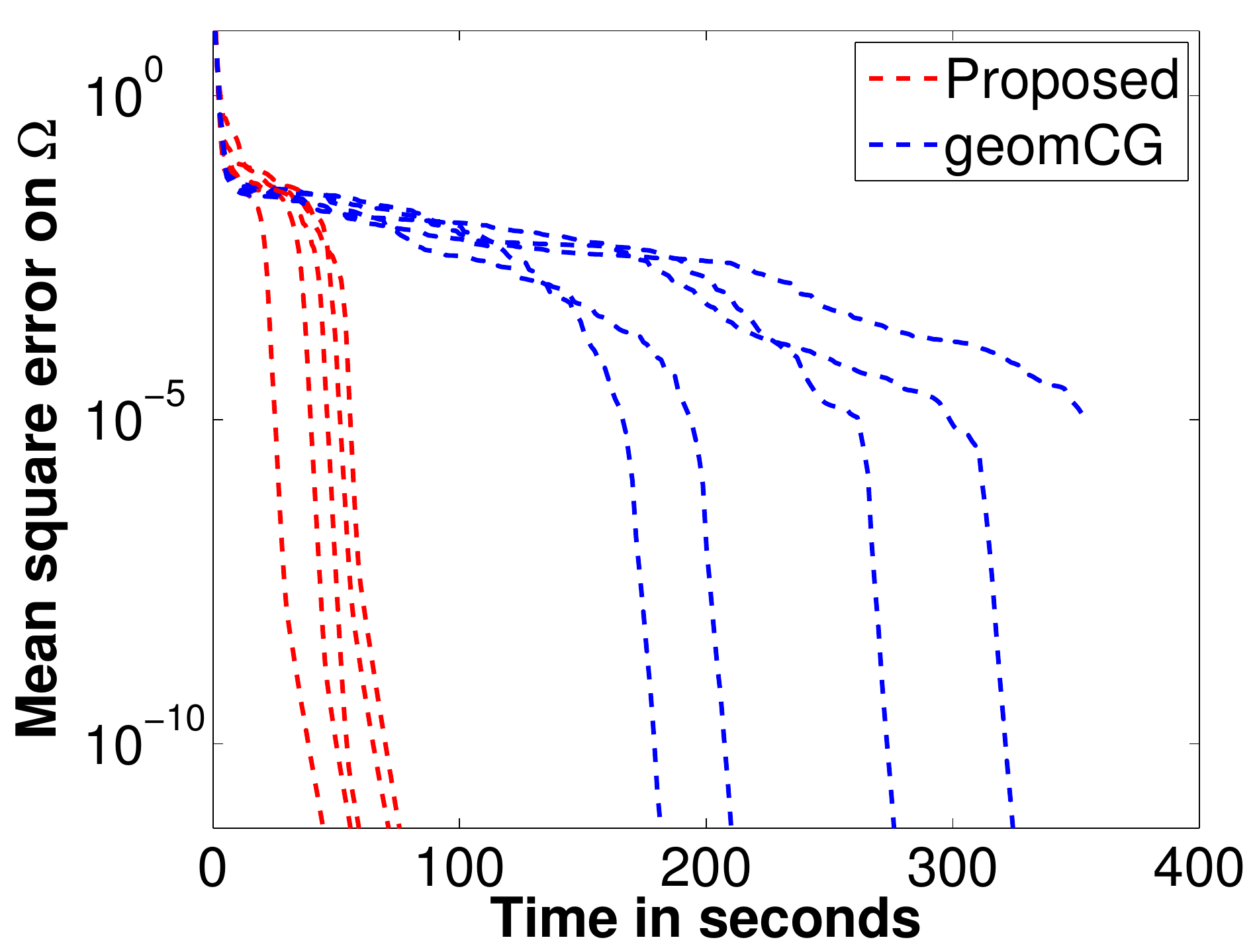}\\
{\scriptsize(a) OS = $8$.}
\end{center}
\end{minipage}
\begin{minipage}{0.32\hsize}
\begin{center}
\includegraphics[width=\hsize]{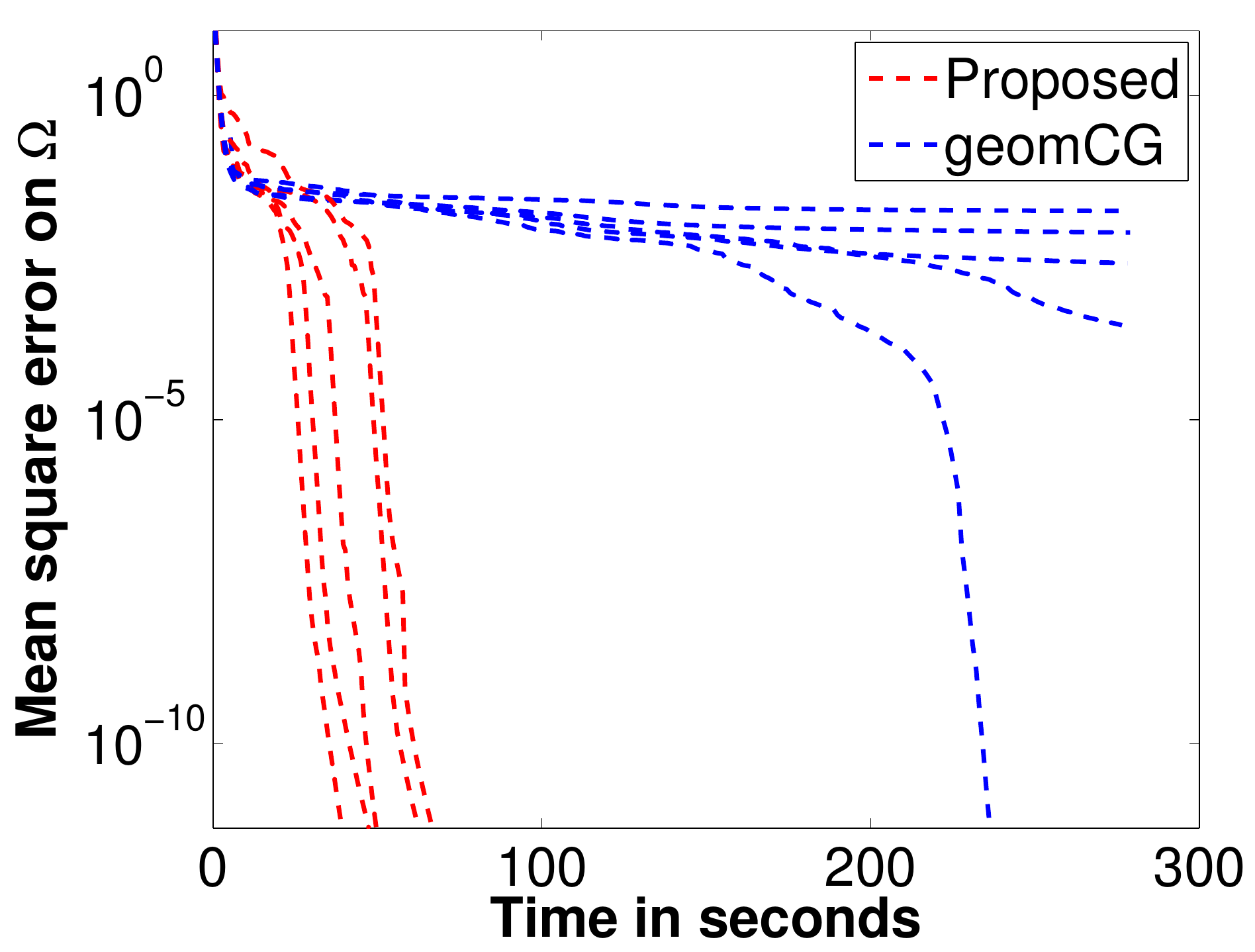}\\
{\scriptsize(b) OS = $6$.}
\end{center}
\end{minipage}
\begin{minipage}{0.32\hsize}
\begin{center}
\includegraphics[width=\hsize]{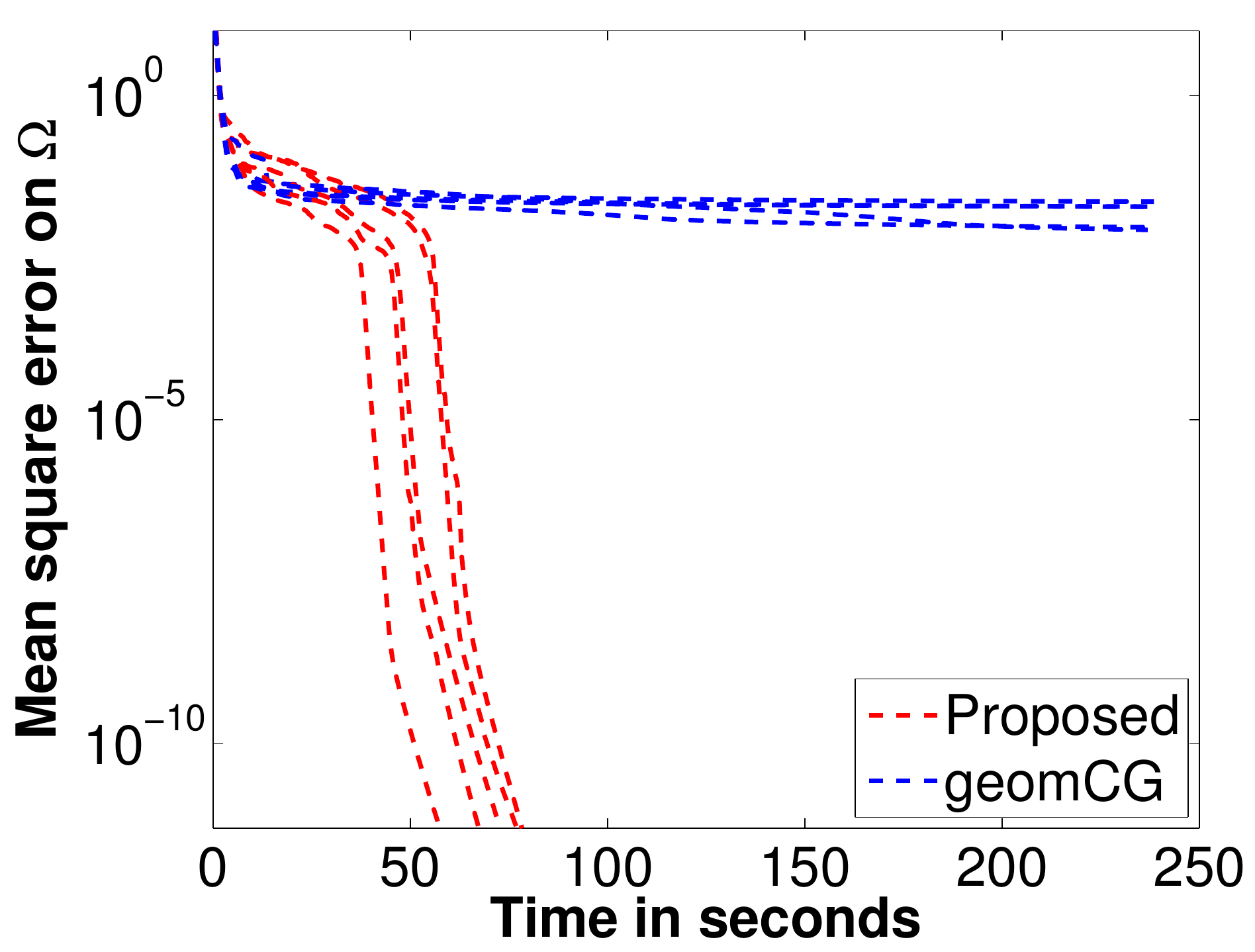}\\
{\scriptsize(c) OS = $5$.}
\end{center}
\end{minipage}
\end{tabular}
\vspace{-0.2cm}
\caption{\changeHK{{\bf Case S4:} low-sampling comparisons on $\Omega$ (train error).}}
\label{appnfig:low-sampling-train}
%
\vspace{1.5cm}
\begin{tabular}{ccc}
\begin{minipage}{0.32\hsize}
\begin{center}
\includegraphics[width=\hsize]{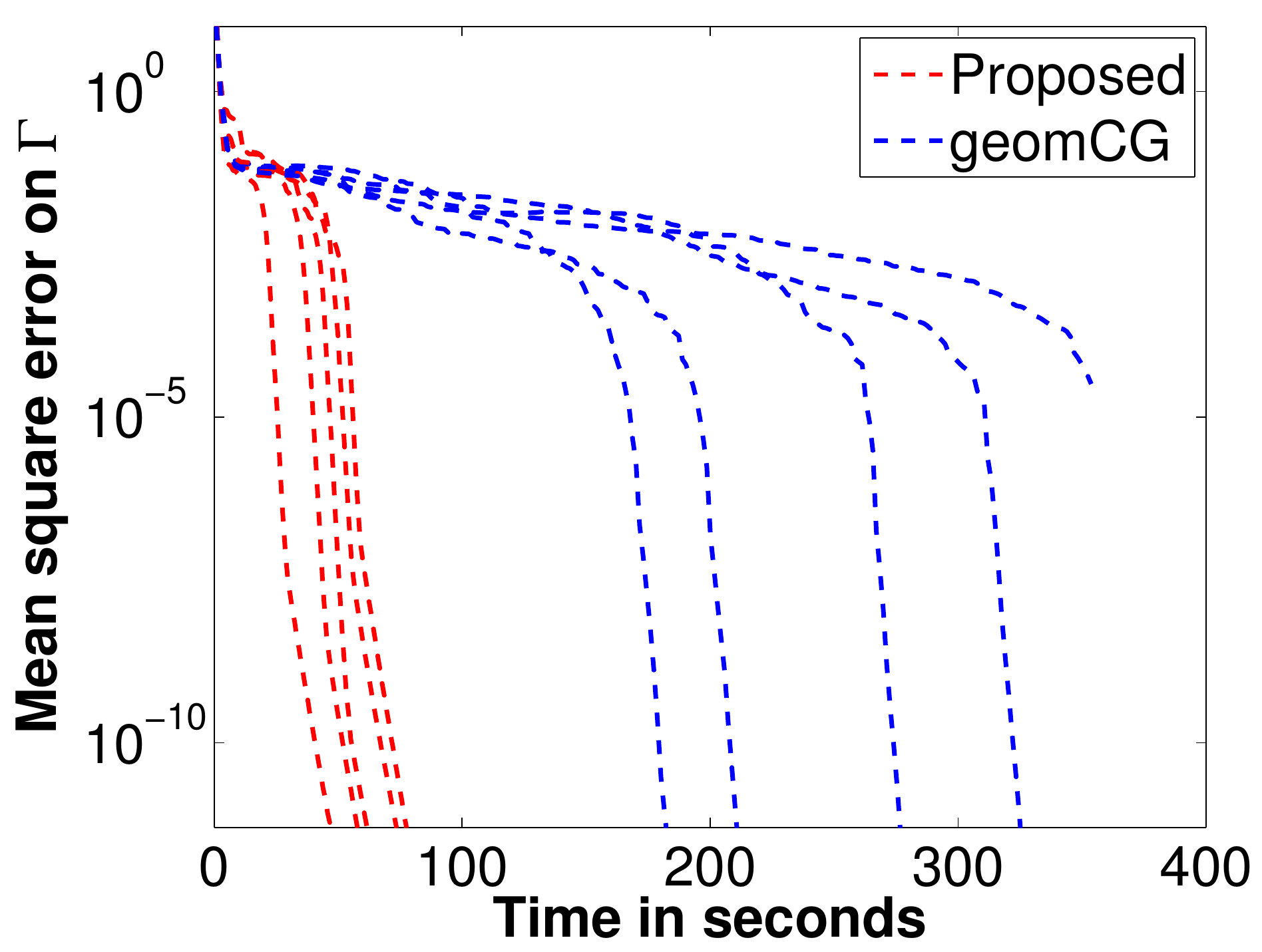}\\
{\scriptsize(a) OS = $8$.}
\end{center}
\end{minipage}
\begin{minipage}{0.32\hsize}
\begin{center}
\includegraphics[width=\hsize]{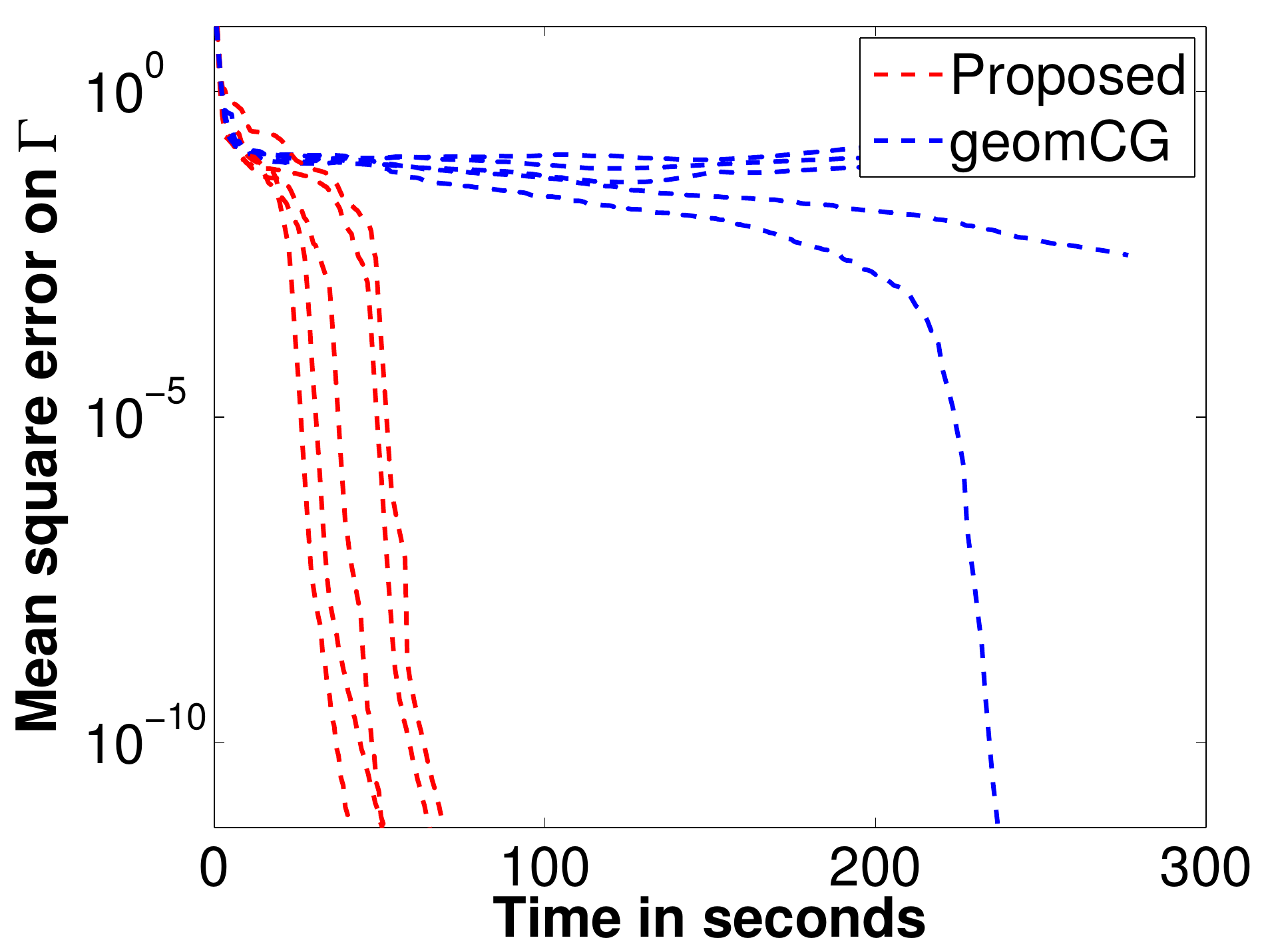}\\
{\scriptsize(b) OS = $6$.}
\end{center}
\end{minipage}
\begin{minipage}{0.32\hsize}
\begin{center}
\includegraphics[width=\hsize]{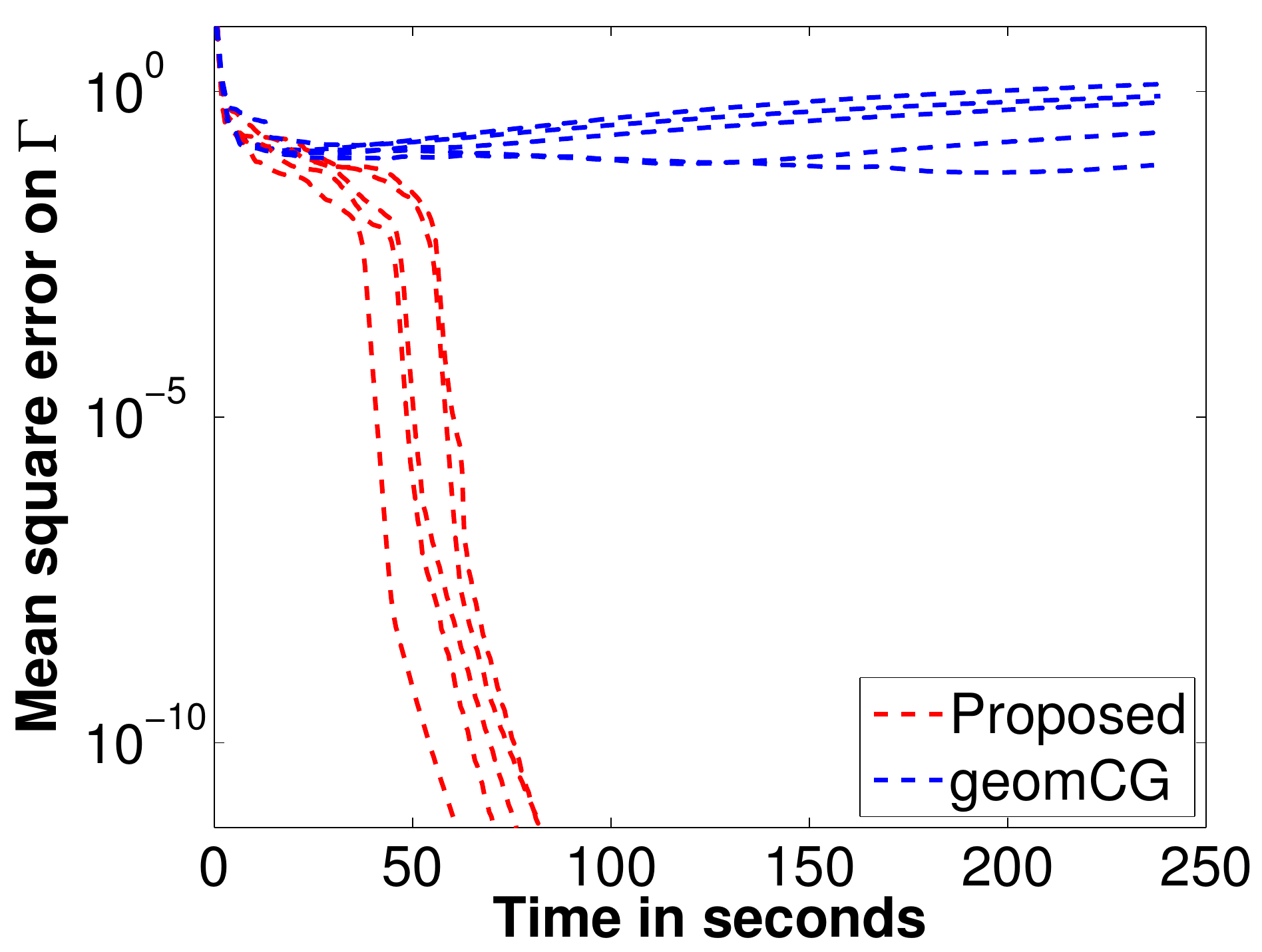}\\
{\scriptsize(c) OS = $5$.}
\end{center}
\end{minipage}
\end{tabular}
\vspace{-0.2cm}
\caption{{\bf Case S4:} low-sampling comparisons on $\Gamma$ (test error).}
\label{appnfig:low-sampling-test}
%
\vspace*{2cm}
\begin{minipage}{0.48\hsize}
\begin{center}
\hspace*{0.2cm}\includegraphics[width=\hsize]{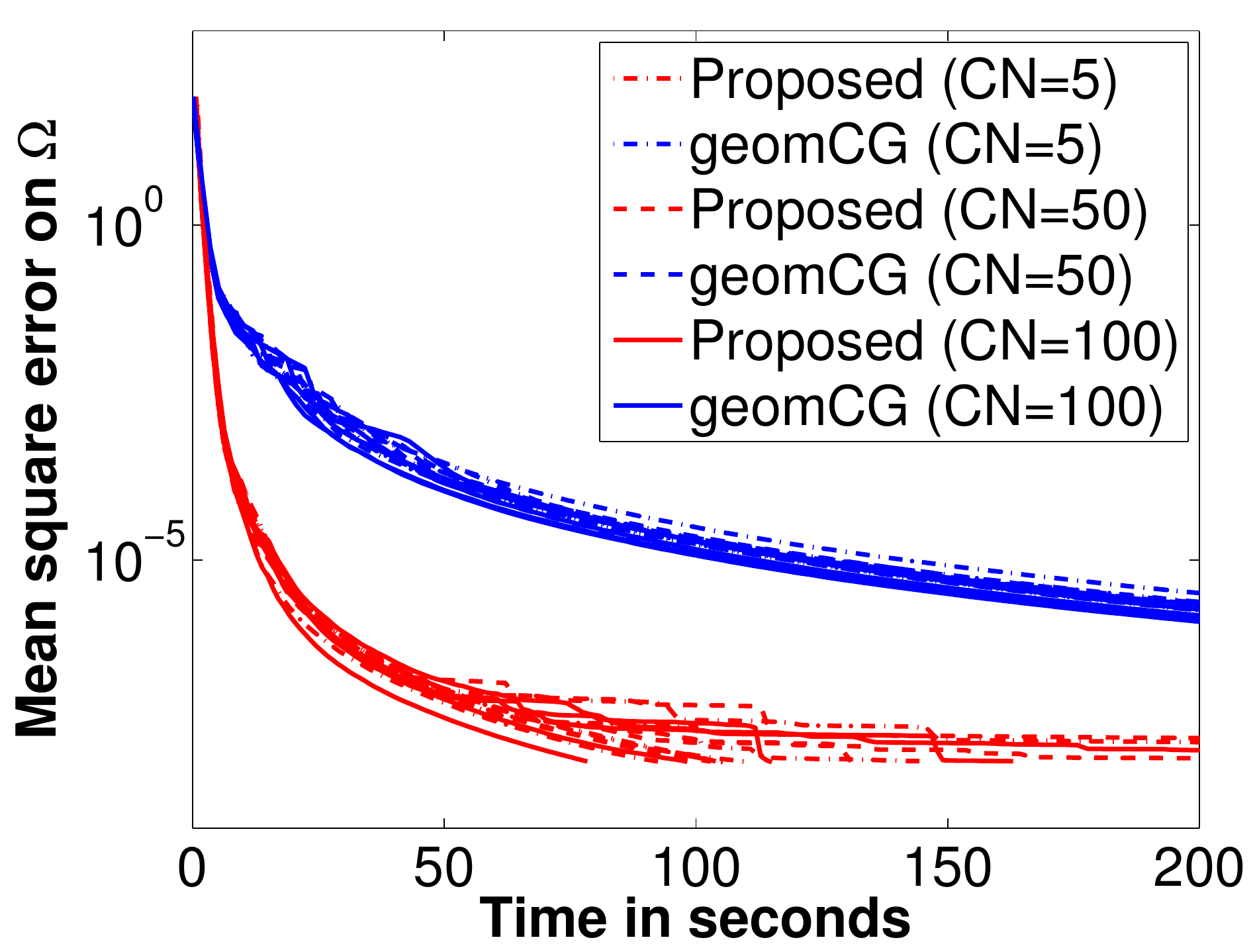}
\caption{\changeHK{{\bf Case S5:} CN = $\{5,50,100\}$ on $\Omega$ (train error).}}
\label{appnfig:S5-train}
\end{center}
\end{minipage}
\hspace*{0.2cm}\begin{minipage}{0.48\hsize}
\begin{center}
\includegraphics[width=\hsize]{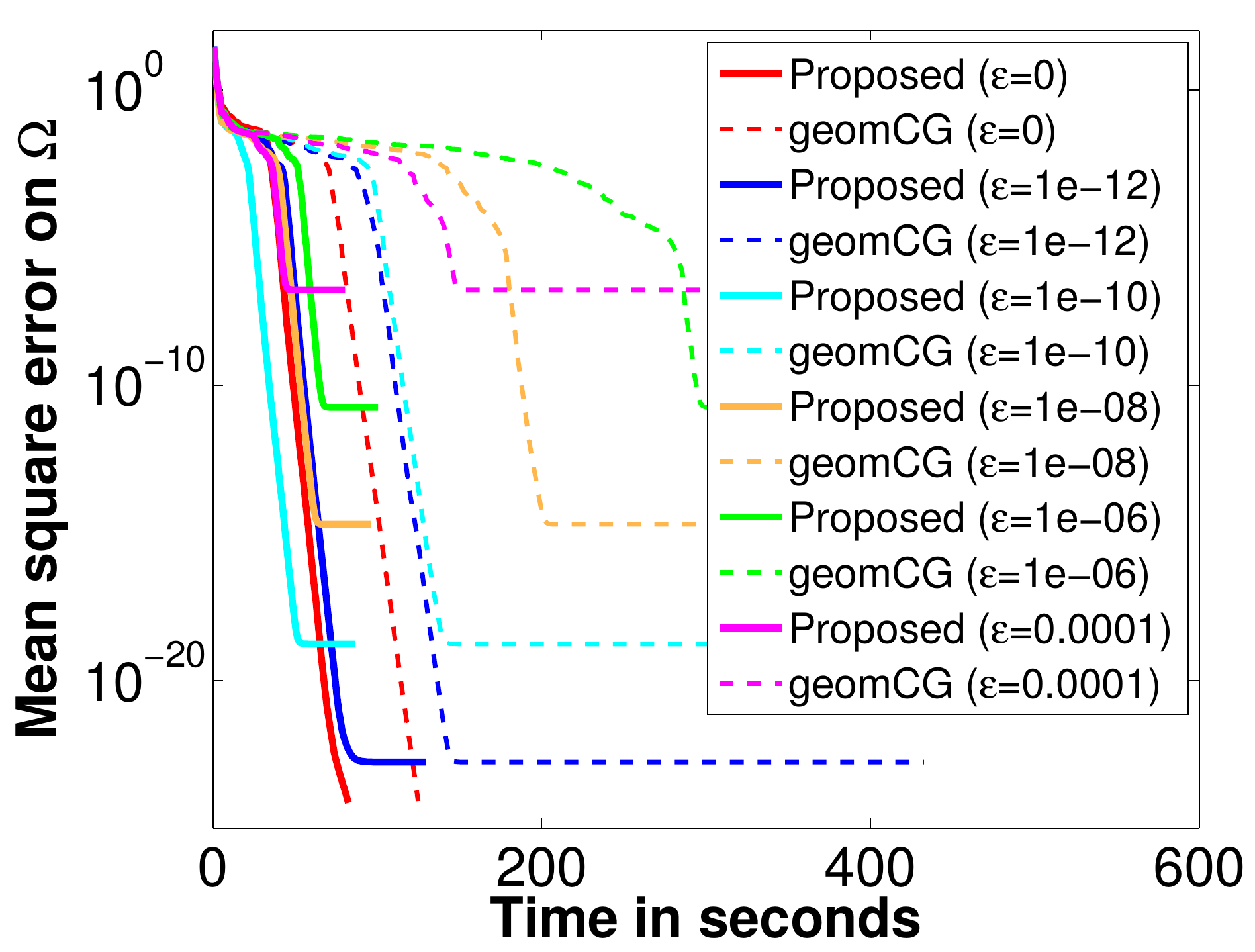}
\caption{\changeHK{{\bf Case S6:} noisy data on $\Omega$ (train error).}}
\label{appnfig:S6-train}
\end{center}
\end{minipage}
\end{figure*}

\clearpage

\begin{figure*}[htbp]
\vspace{-0.01cm}
\begin{tabular}{ccc}
\begin{minipage}{0.32\hsize}
\begin{center}
\includegraphics[width=\hsize]{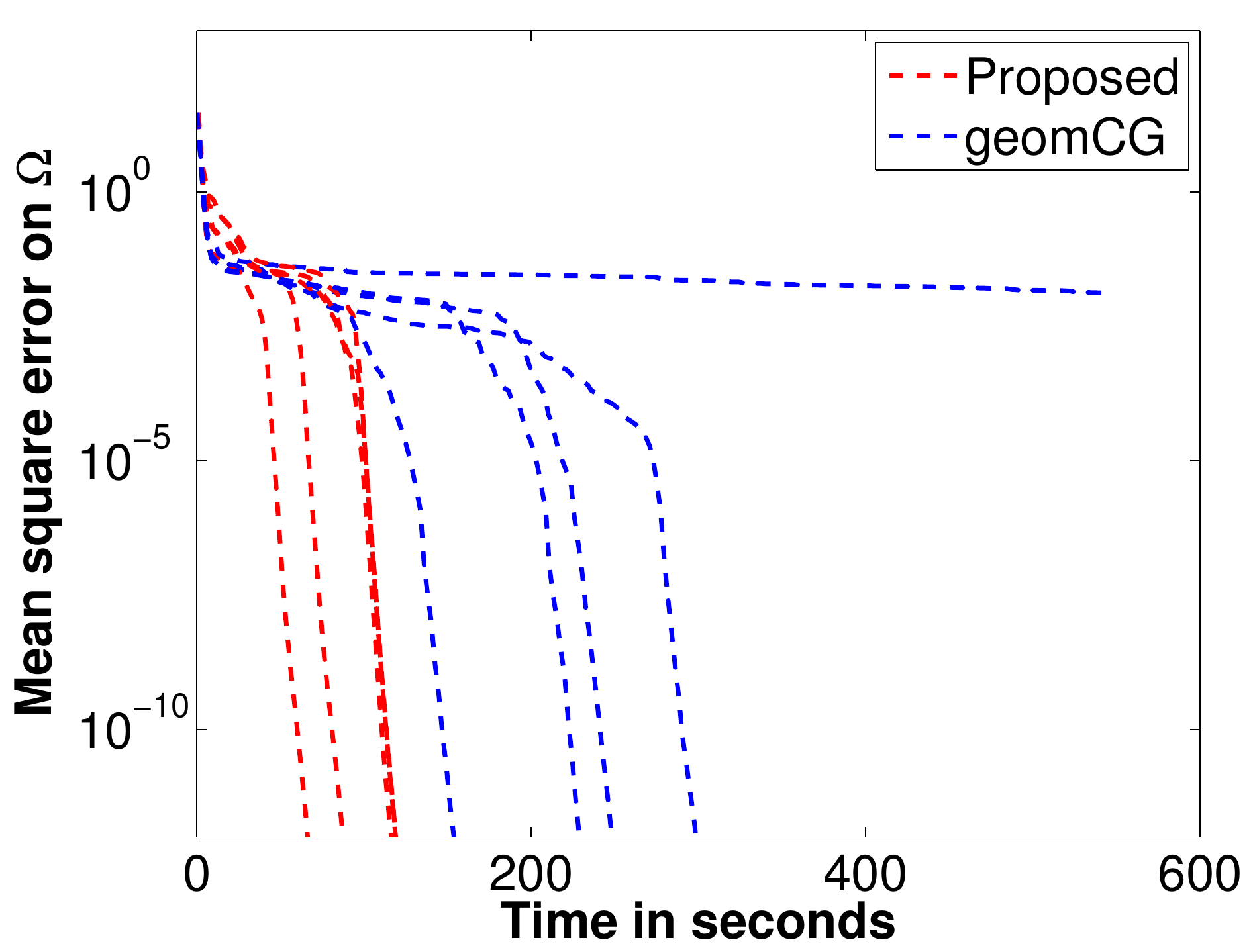}\\
{\scriptsize(a) $20000\times 7000\times 7000$, \\
{\bf r} = ($5\times 5\times 5$).}
\end{center}
\end{minipage}
\begin{minipage}{0.32\hsize}
\begin{center}
\includegraphics[width=\hsize]{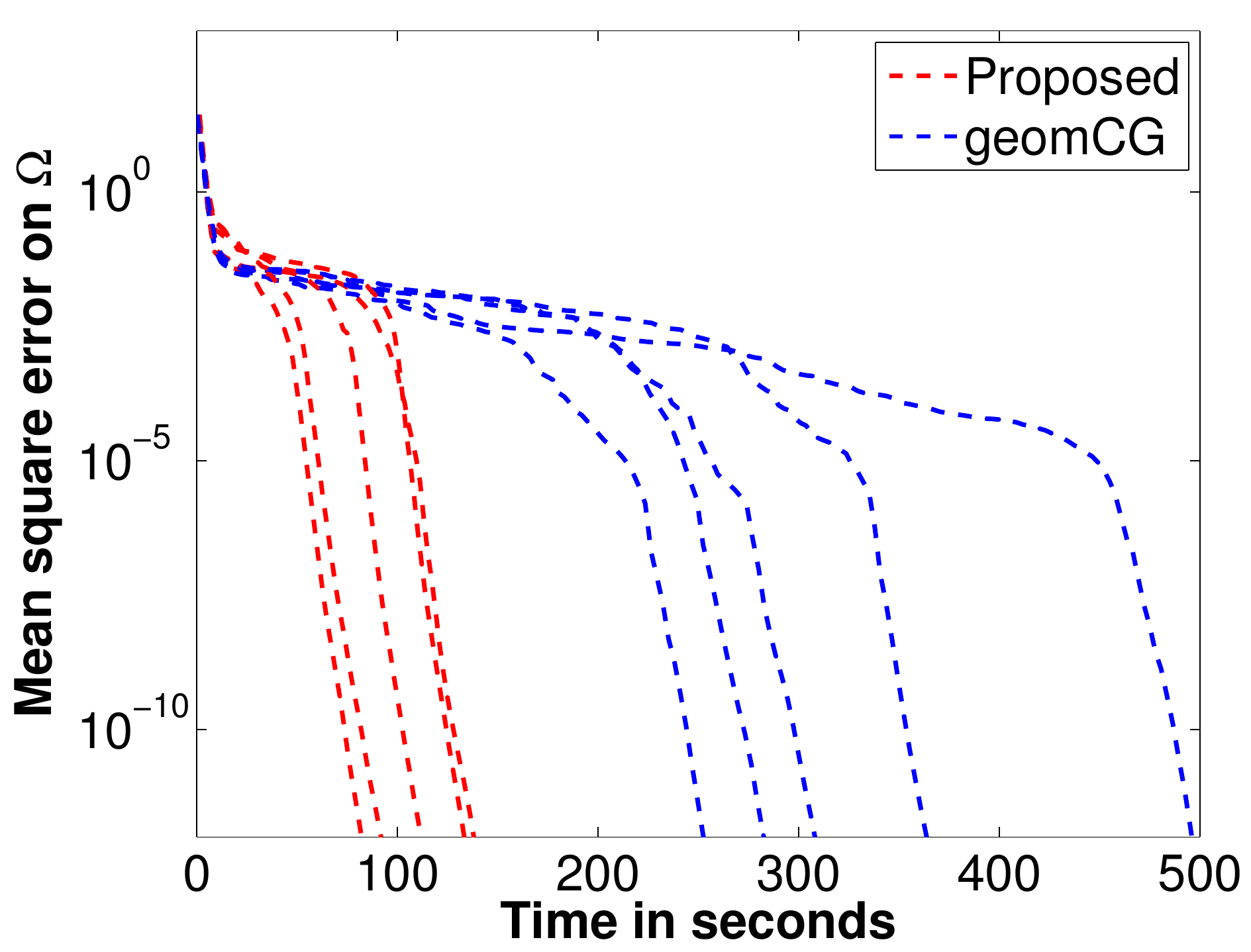}\\
{\scriptsize(b) $30000\times 60000\times 60000$, \\
{\bf r} = ($5\times 5\times 5$).}
\end{center}
\end{minipage}
\begin{minipage}{0.32\hsize}
\begin{center}
\includegraphics[width=\hsize]{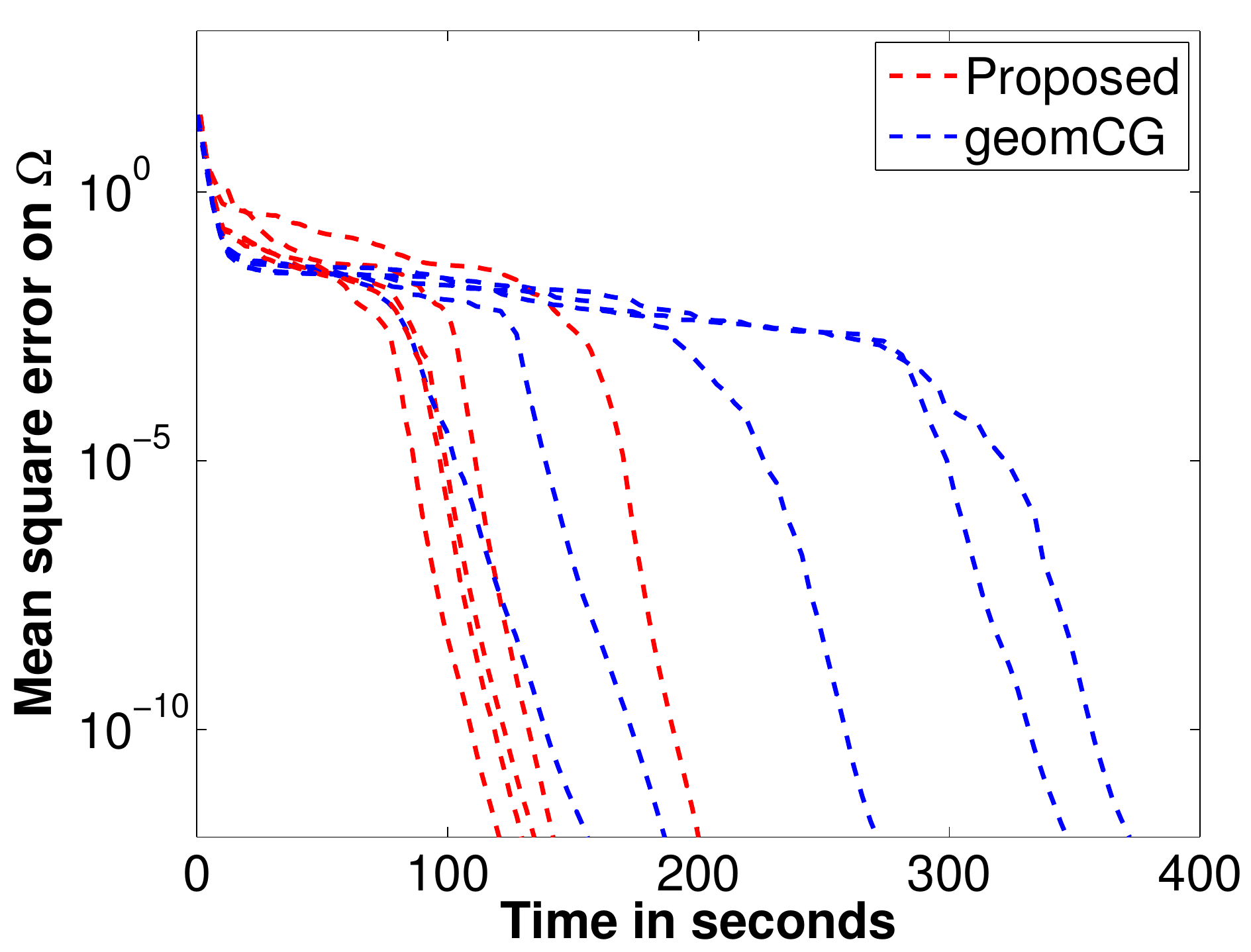}\\
{\scriptsize(c) $40000\times 5000\times 5000$, \\
{\bf r} = ($5\times 5\times 5$).}
\end{center}
\end{minipage}\\

\begin{minipage}{0.32\hsize}
\begin{center}
\includegraphics[width=\hsize]{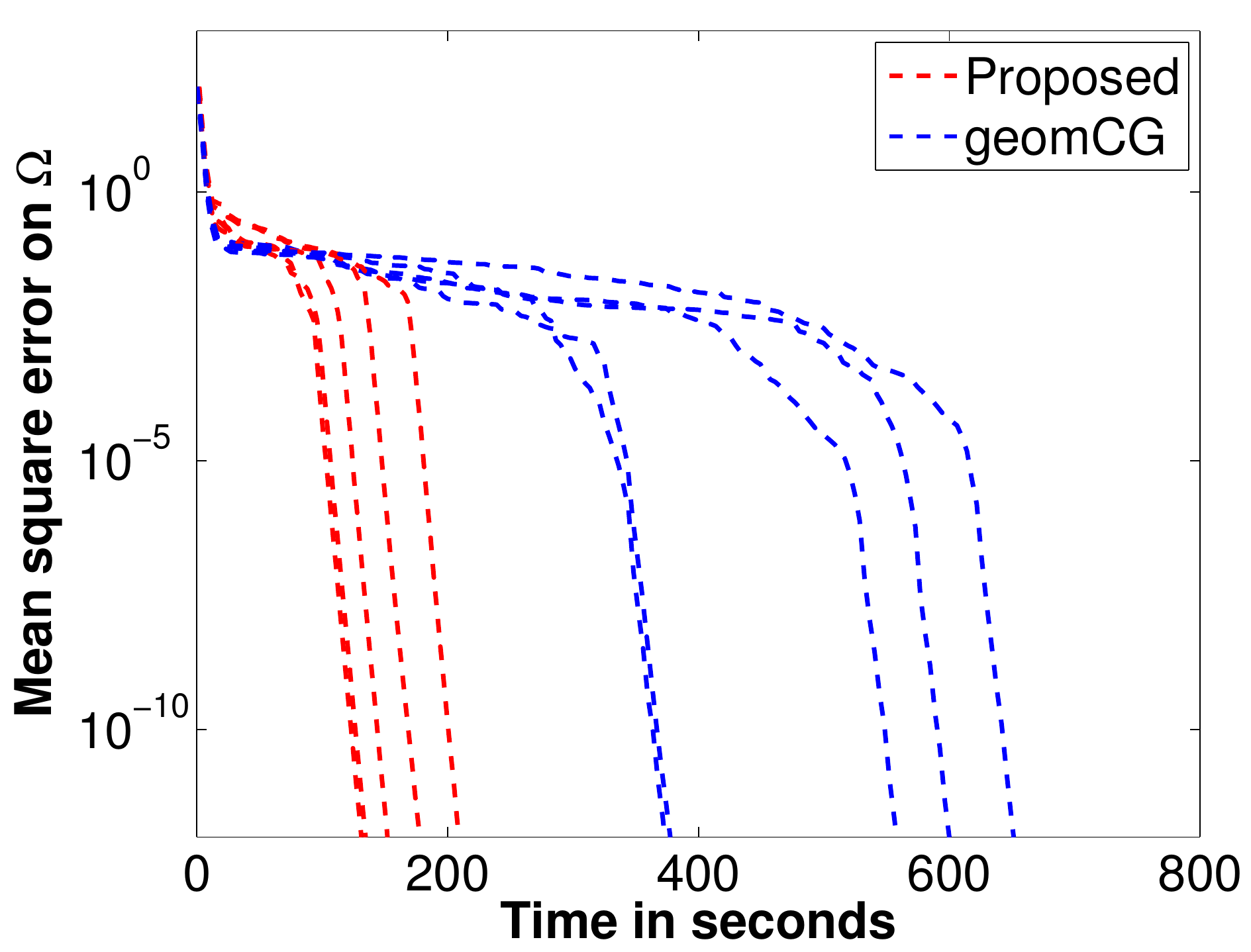}\\
{\scriptsize(d) {\bf r} = ($7\times 6\times 6$),\\
$10000\times 10000\times 10000$.}
\end{center}
\end{minipage}
\begin{minipage}{0.32\hsize}
\begin{center}
\includegraphics[width=\hsize]{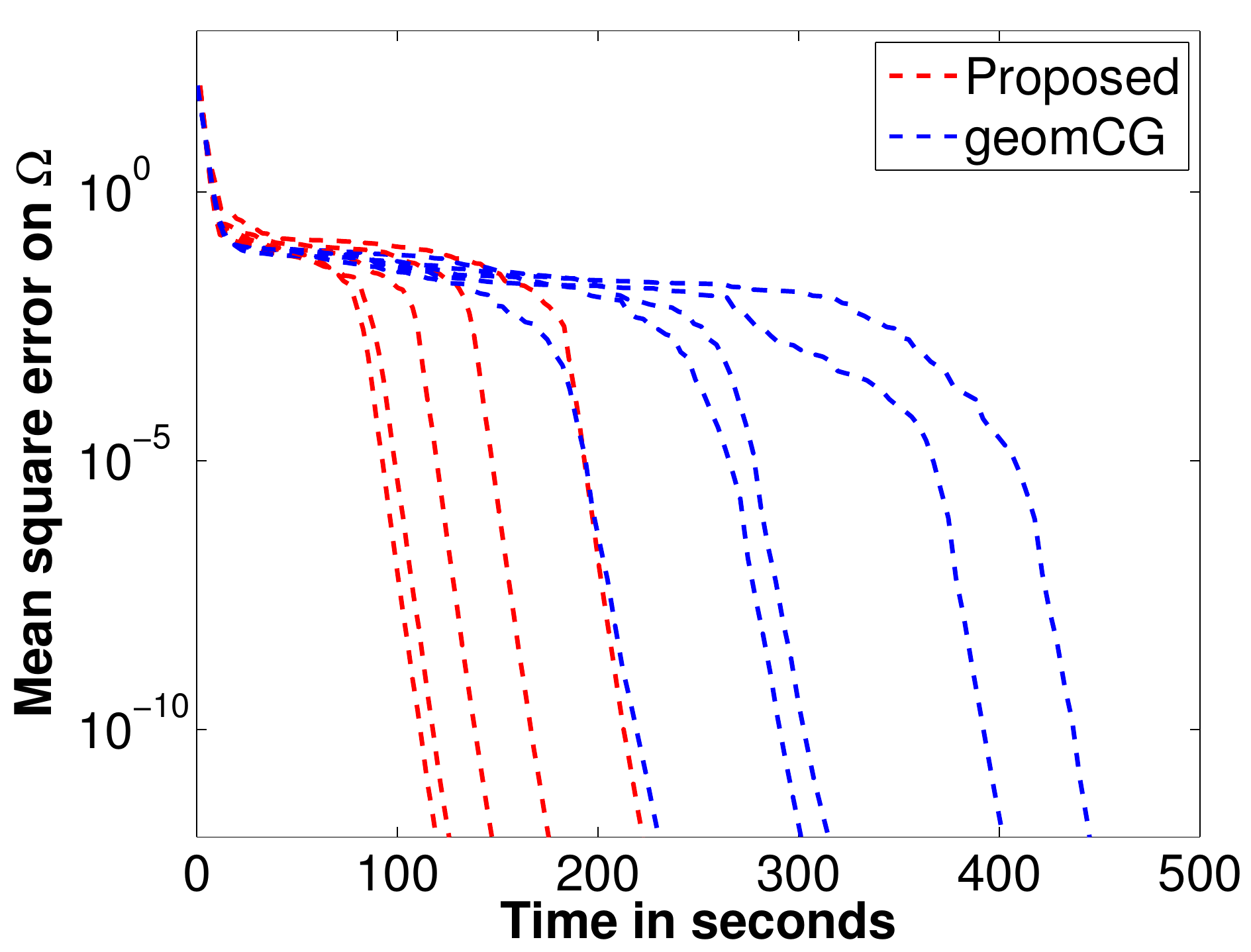}\\
{\scriptsize(e) {\bf r} = ($10\times 5\times 5$),\\
$10000\times 10000\times 10000$.}
\end{center}
\end{minipage}
\begin{minipage}{0.32\hsize}
\begin{center}
\includegraphics[width=\hsize]{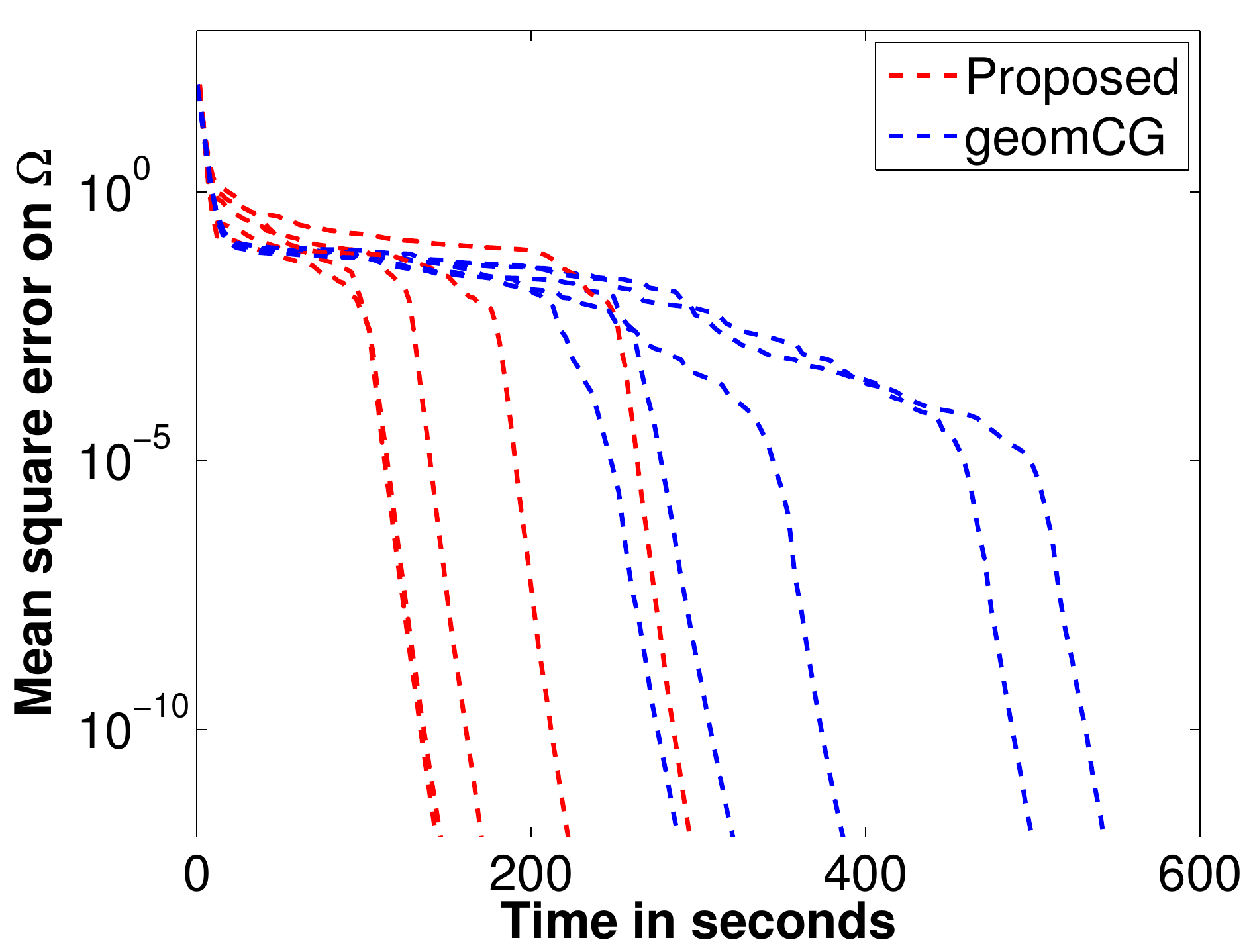}\\
{\scriptsize(f) {\bf r} = ($15\times 4\times 4$),\\
$10000\times 10000\times 10000$.}
\end{center}
\end{minipage}\\
\end{tabular}
\vspace{-0.1cm}
\caption{\changeHK{{\bf Case S7:} rectangular comparisons on $\Omega$ (train error).}}
\label{appnfig:asymmetric-train}
%
\vspace{1.5cm}
\begin{tabular}{ccc}
\begin{minipage}{0.32\hsize}
\begin{center}
\includegraphics[width=\hsize]{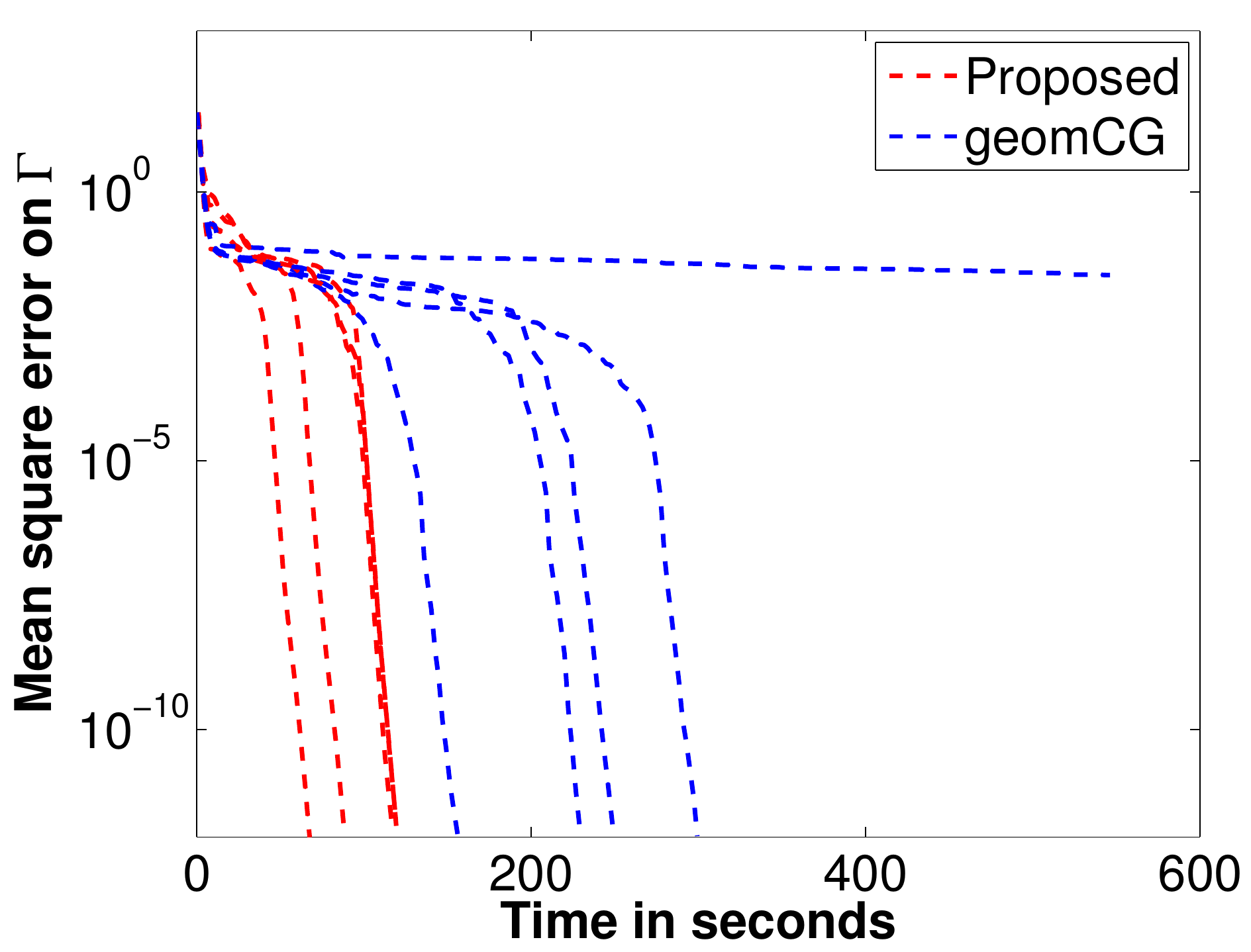}\\
{\scriptsize(a) $20000\times 7000\times 7000$, \\
{\bf r} = ($5\times 5\times 5$).}
\end{center}
\end{minipage}
\begin{minipage}{0.32\hsize}
\begin{center}
\includegraphics[width=\hsize]{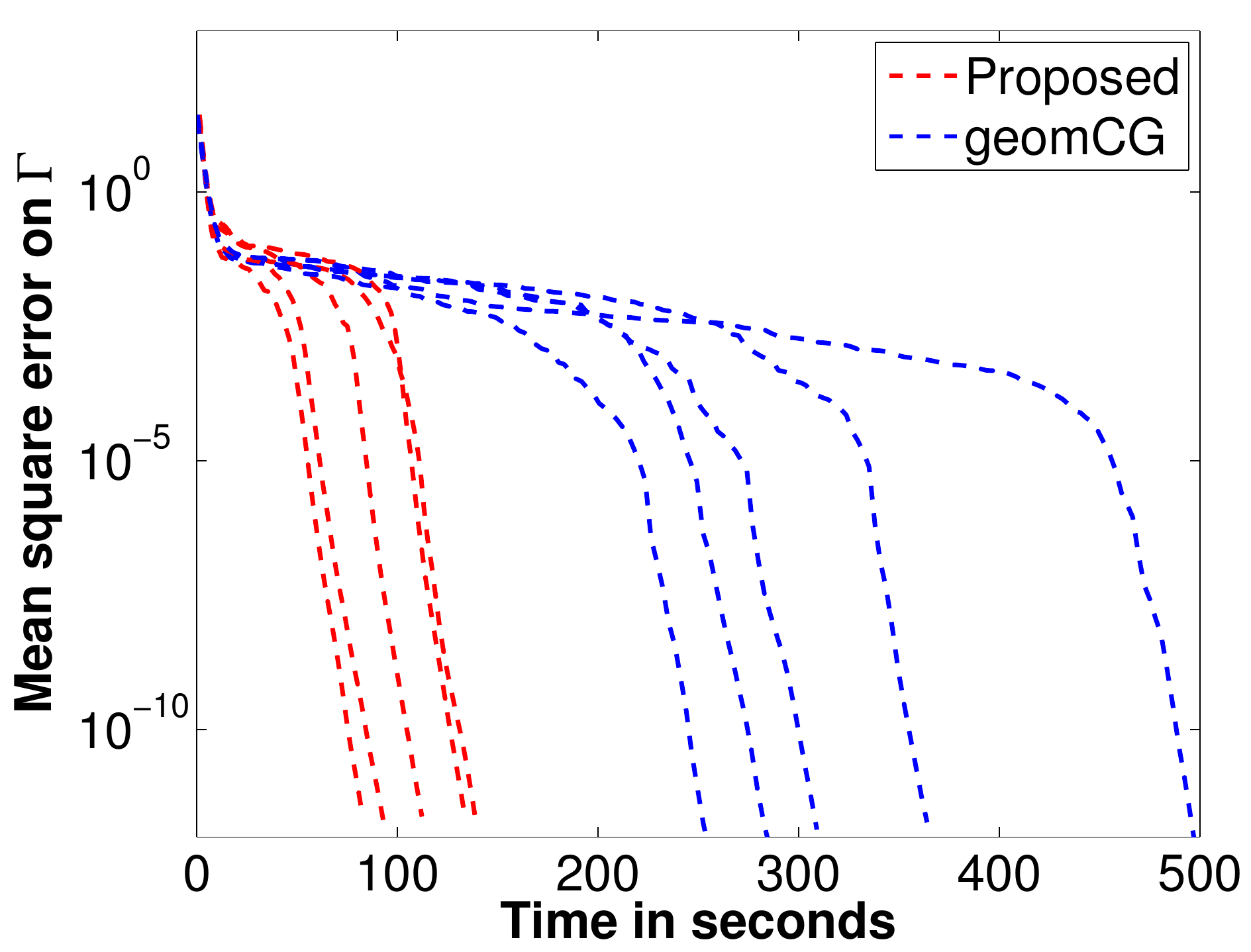}\\
{\scriptsize(b) $30000\times 60000\times 60000$, \\
{\bf r} = ($5\times 5\times 5$).}
\end{center}
\end{minipage}
\begin{minipage}{0.32\hsize}
\begin{center}
\includegraphics[width=\hsize]{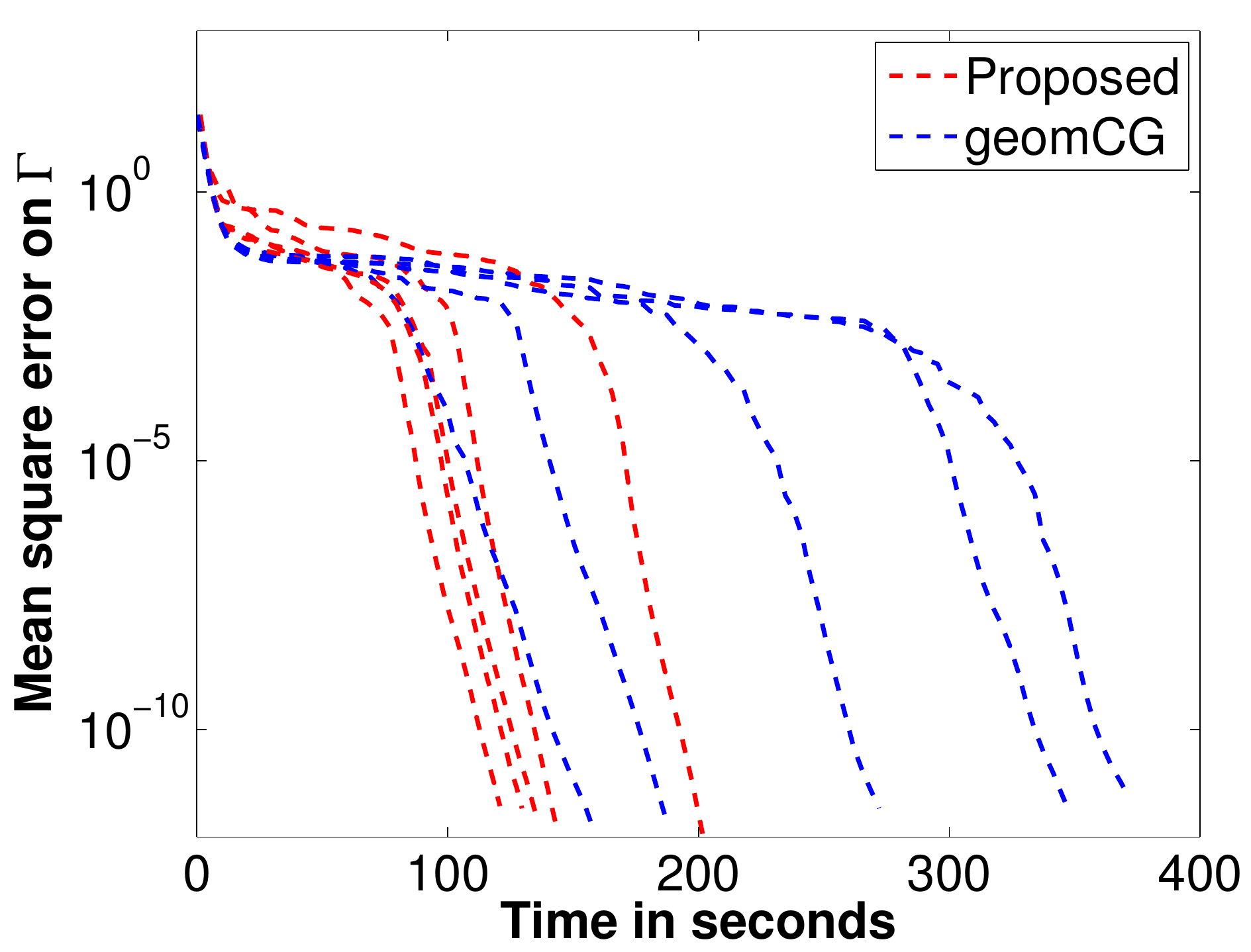}\\
{\scriptsize(c) $40000\times 5000\times 5000$, \\
{\bf r} = ($5\times 5\times 5$).}
\end{center}
\end{minipage}\\

\begin{minipage}{0.32\hsize}
\begin{center}
\includegraphics[width=\hsize]{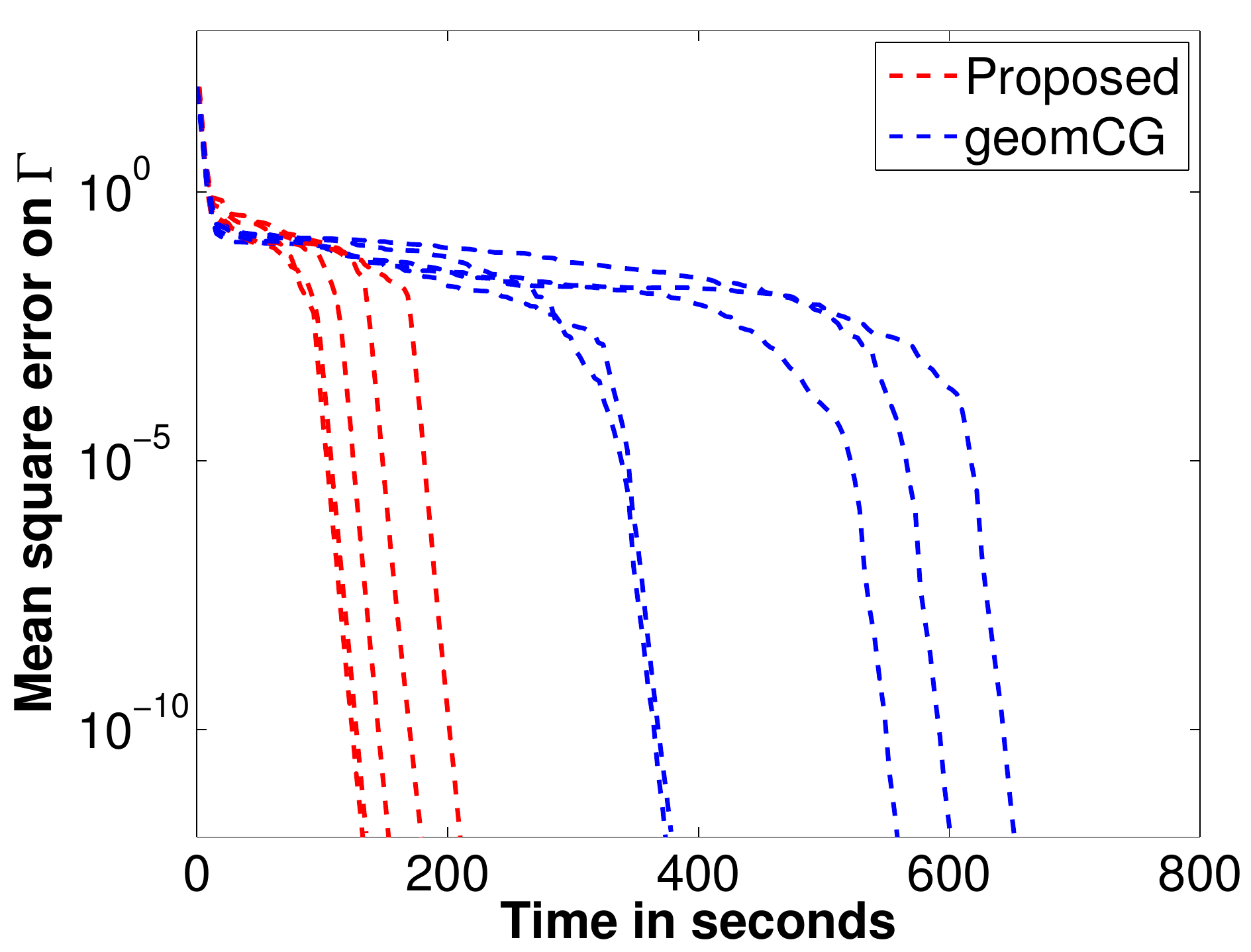}\\
{\scriptsize(d) {\bf r} = ($7\times 6\times 6$),\\
$10000\times 10000\times 10000$.}
\end{center}
\end{minipage}
\begin{minipage}{0.32\hsize}
\begin{center}
\includegraphics[width=\hsize]{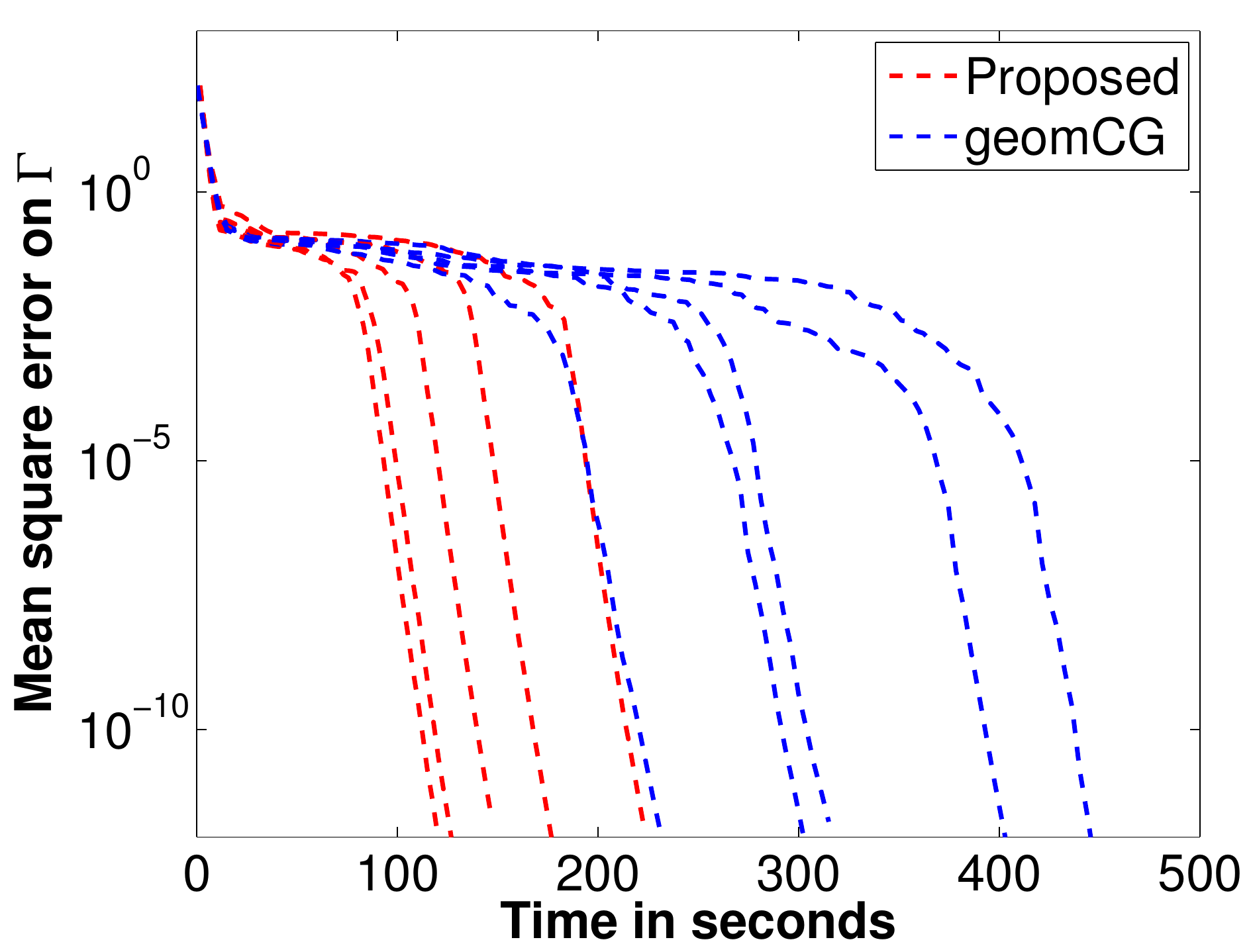}\\
{\scriptsize(e) {\bf r} = ($10\times 5\times 5$),\\
$10000\times 10000\times 10000$.}
\end{center}
\end{minipage}
\begin{minipage}{0.32\hsize}
\begin{center}
\includegraphics[width=\hsize]{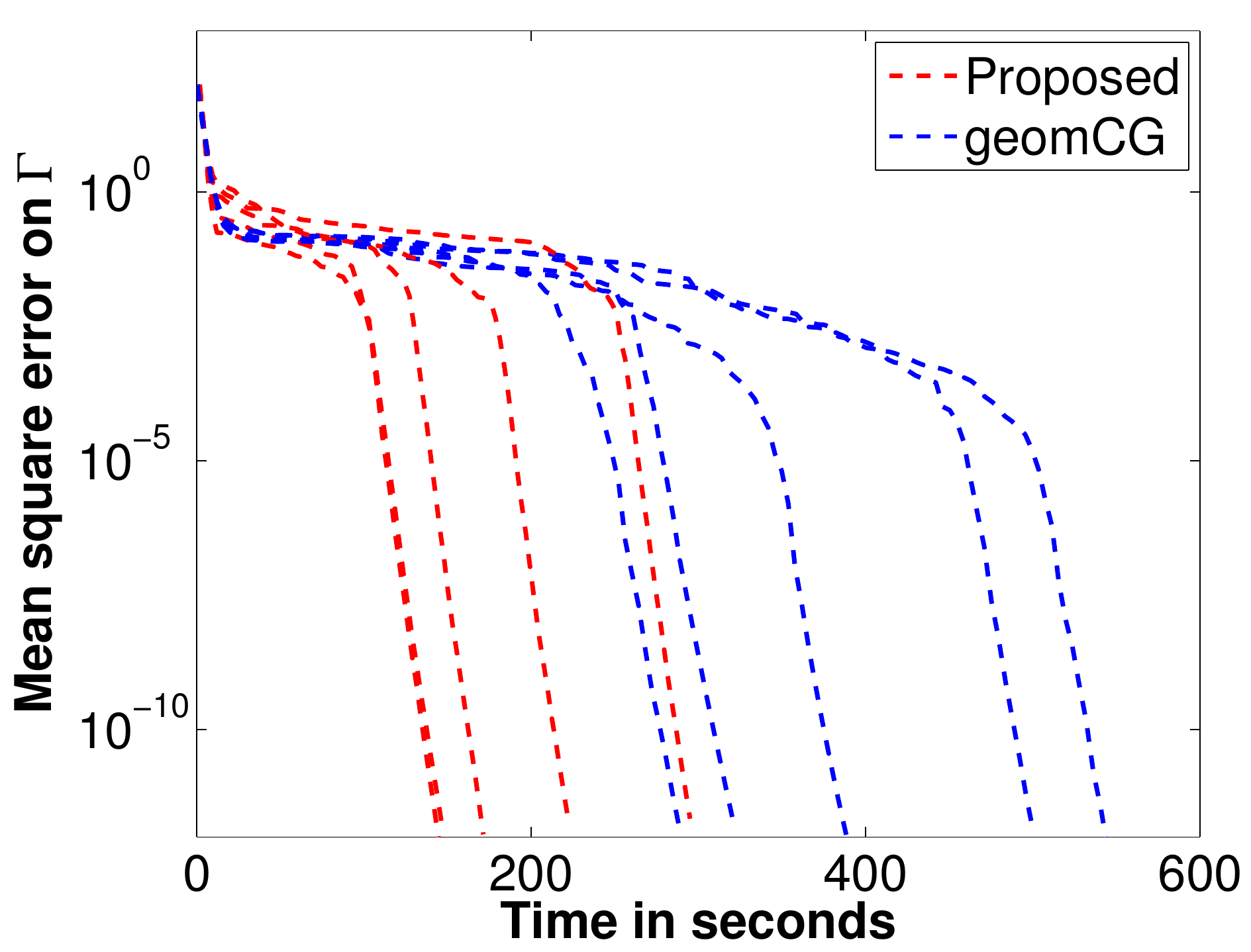}\\
{\scriptsize(f) {\bf r} = ($15\times 4\times 4$),\\
$10000\times 10000\times 10000$.}
\end{center}
\end{minipage}\\
\end{tabular}
\vspace{-0.1cm}
\caption{{\bf Case S7:} rectangular comparisons on $\Gamma$ (test error).}
\label{appnfig:asymmetric-test}
\end{figure*}

\clearpage

\begin{figure*}[htbp]
\vspace{-0.05cm}
\begin{tabular}{cc}
\begin{minipage}{0.32\hsize}
\begin{center}
\includegraphics[width=\hsize]{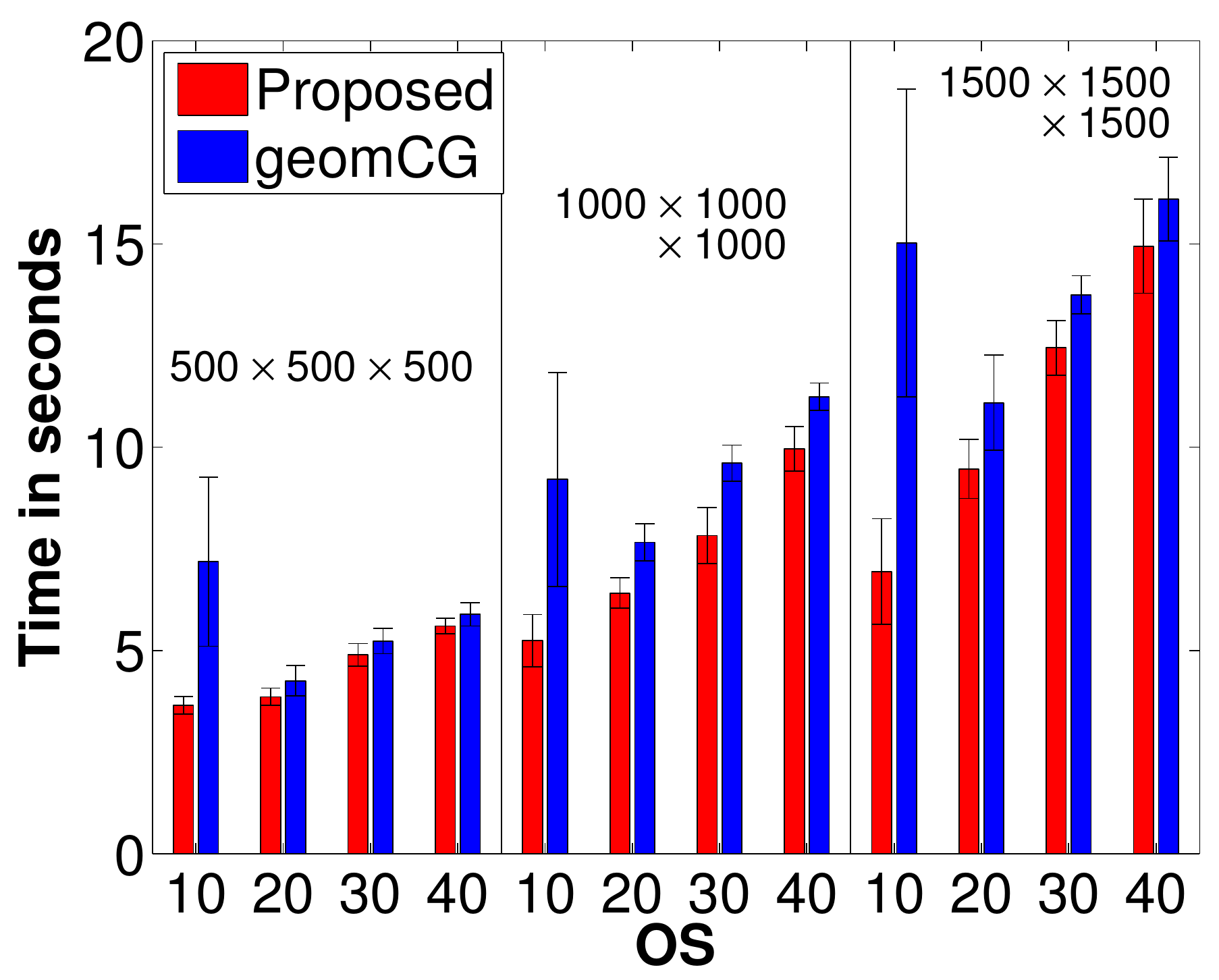}\\
{\scriptsize(a) {\bf r} = ($5\times 5\times 5$).}
\end{center}
\end{minipage}
\begin{minipage}{0.32\hsize}
\begin{center}
\includegraphics[width=\hsize]{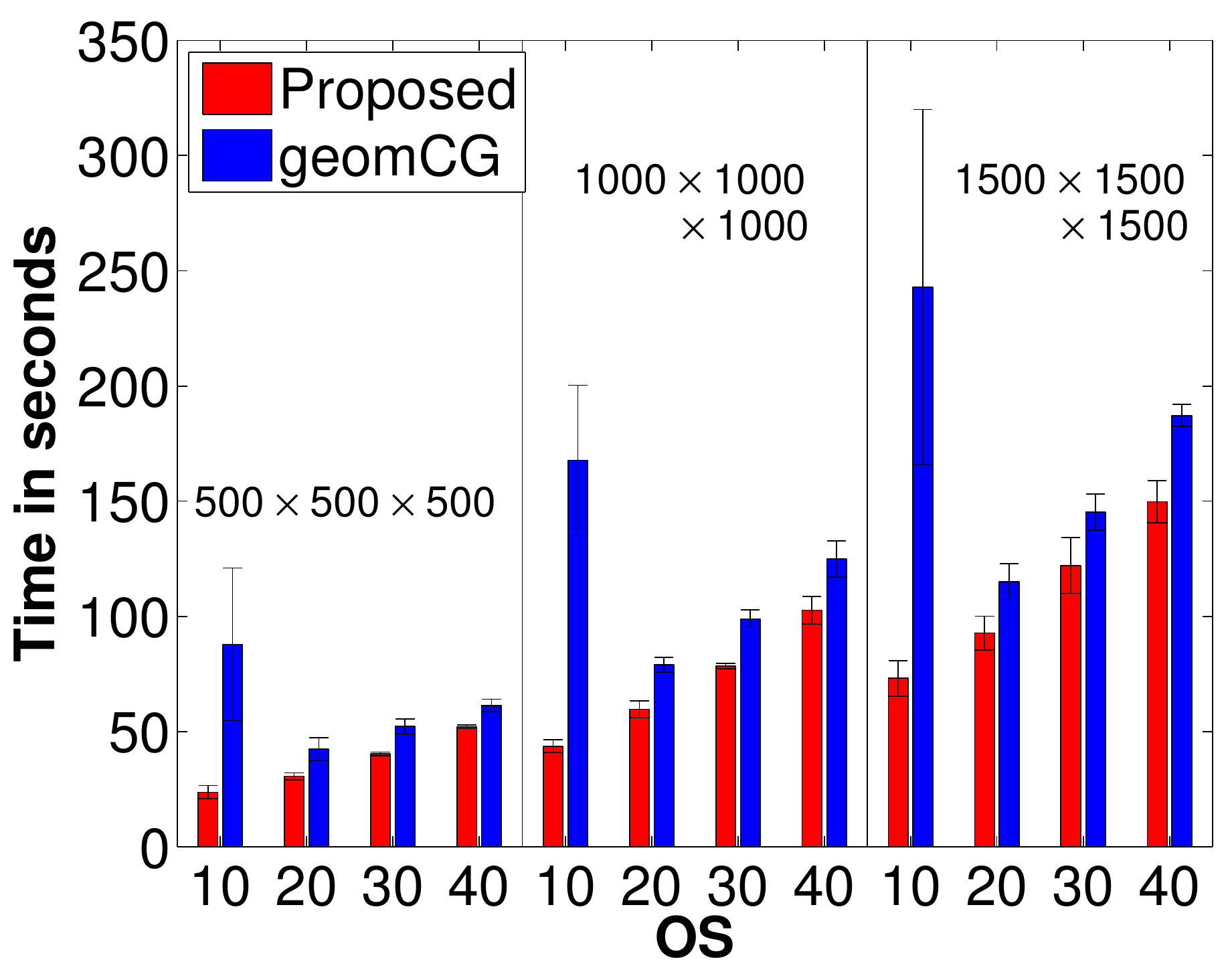}\\
{\scriptsize(b) {\bf r} = ($10\times 10\times 10$).}
\end{center}
\end{minipage}
\begin{minipage}{0.32\hsize}
\begin{center}
\includegraphics[width=\hsize]{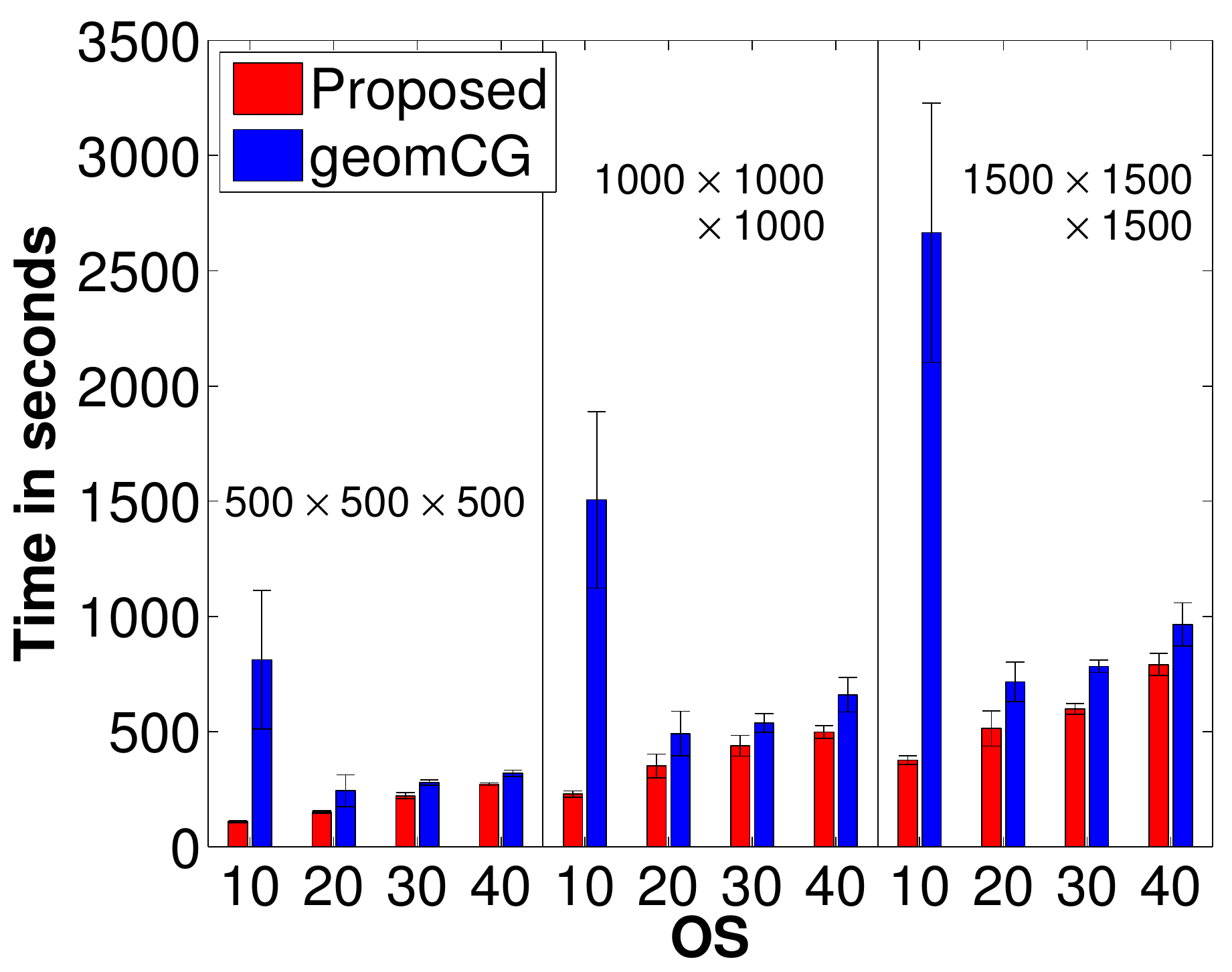}\\
{\scriptsize(c) {\bf r} = ($15\times 15\times 15$).}
\end{center}
\end{minipage}\\
\begin{minipage}{0.32\hsize}
\begin{center}
\includegraphics[width=\hsize]{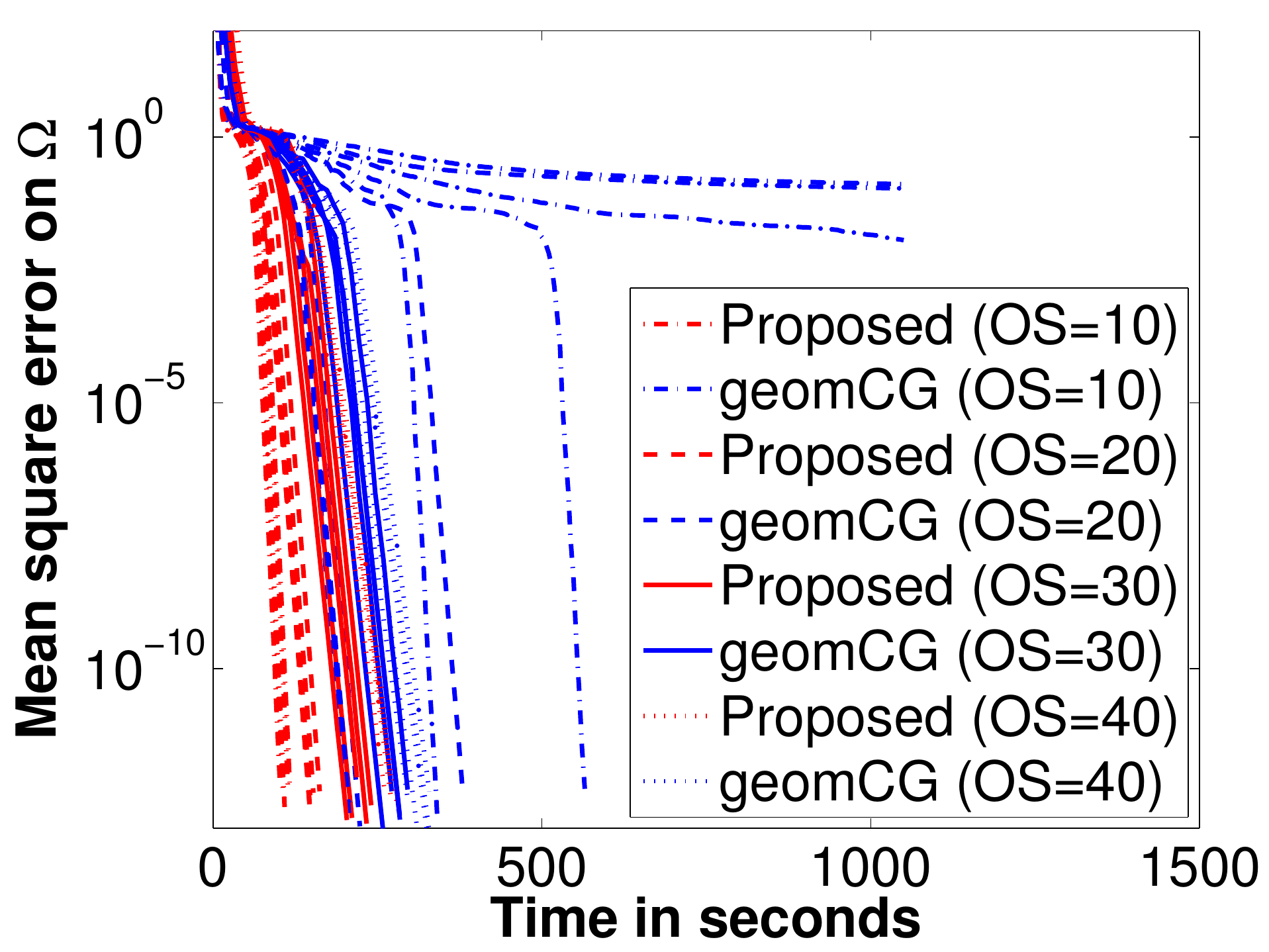}\\
{\scriptsize(d) $500\times 500\times 500$,\\
{\bf r} = ($15\times 15\times 15$).}
\end{center}
\end{minipage}
\begin{minipage}{0.32\hsize}
\begin{center}
\includegraphics[width=\hsize]{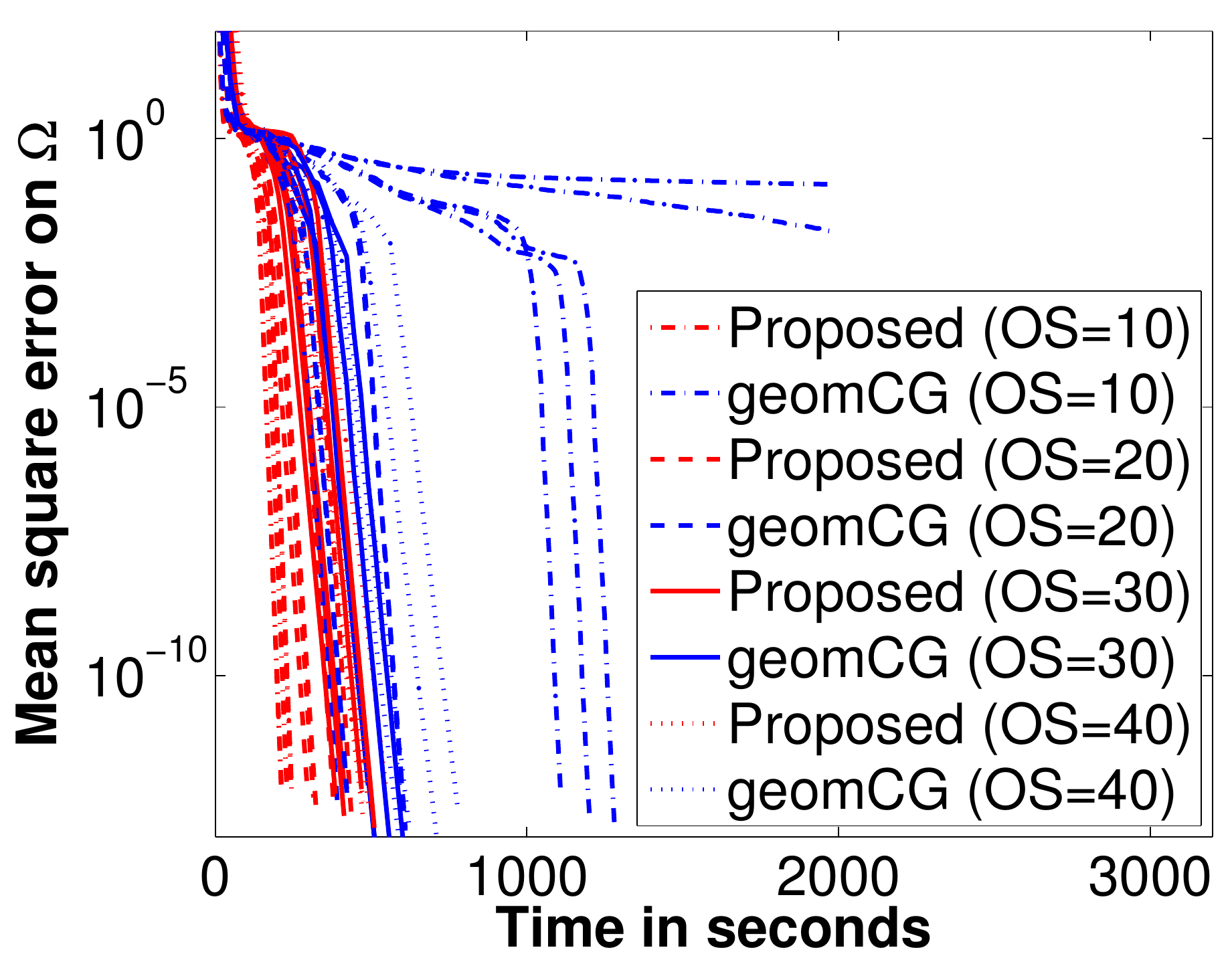}\\
{\scriptsize(e) $1000\times 1000\times 1000$,\\
{\bf r} = ($15\times 15\times 15$).}
\end{center}
\end{minipage}
\begin{minipage}{0.32\hsize}
\begin{center}
\includegraphics[width=\hsize]{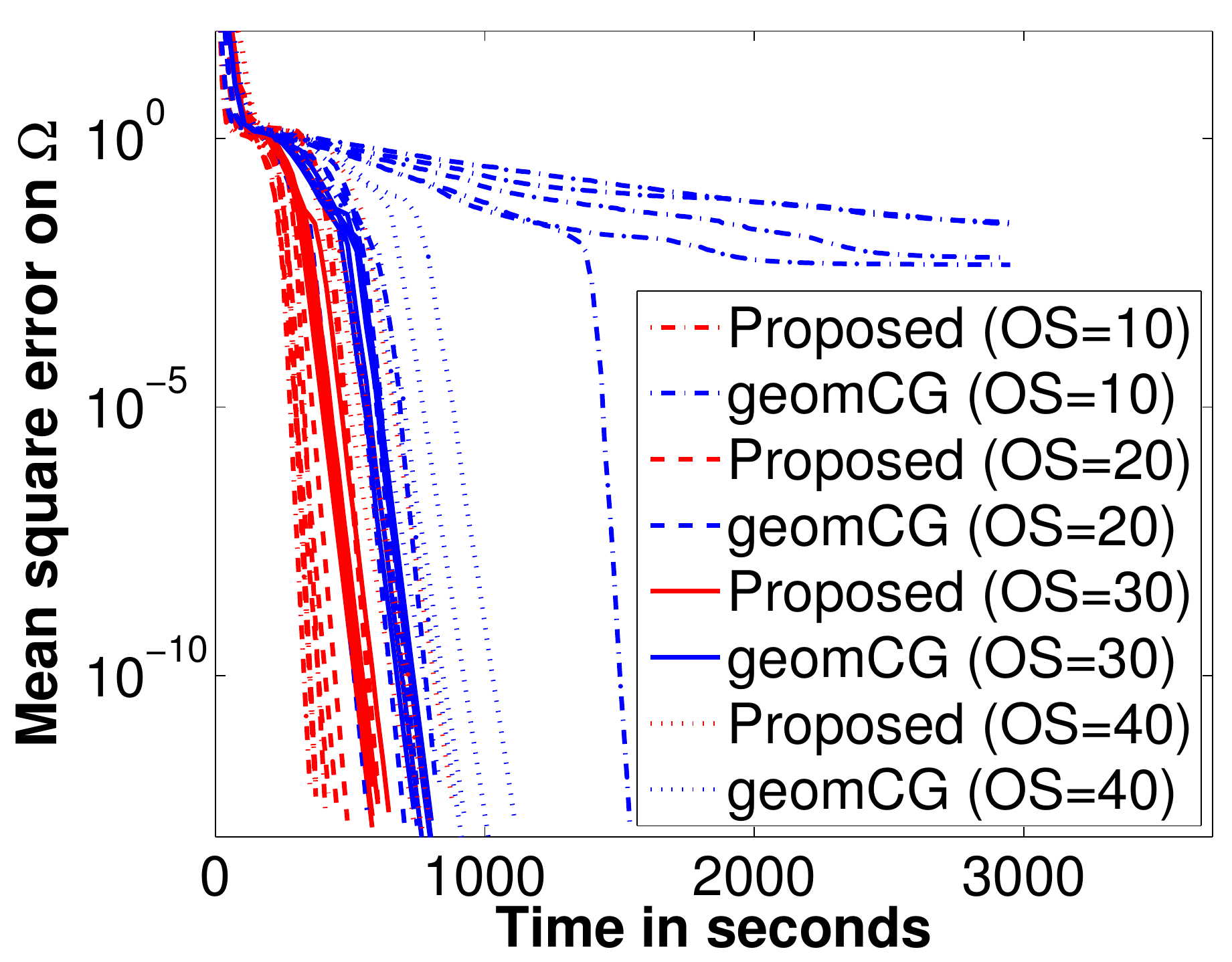}\\
{\scriptsize(f) $1500\times 1500\times 1500$,\\
{\bf r} = ($15\times 15\times 15$).}
\end{center}
\end{minipage}\\
\end{tabular}
\vspace{-0.1cm}
\caption{\changeHK{{\bf Case S8:} medium-scale comparisons on $\Omega$ (train error).}}
\label{appnfig:middle-scale-train}
%
%
\vspace{1.5cm}
\begin{tabular}{cc}
\begin{minipage}{0.32\hsize}
\begin{center}
\includegraphics[width=\hsize]{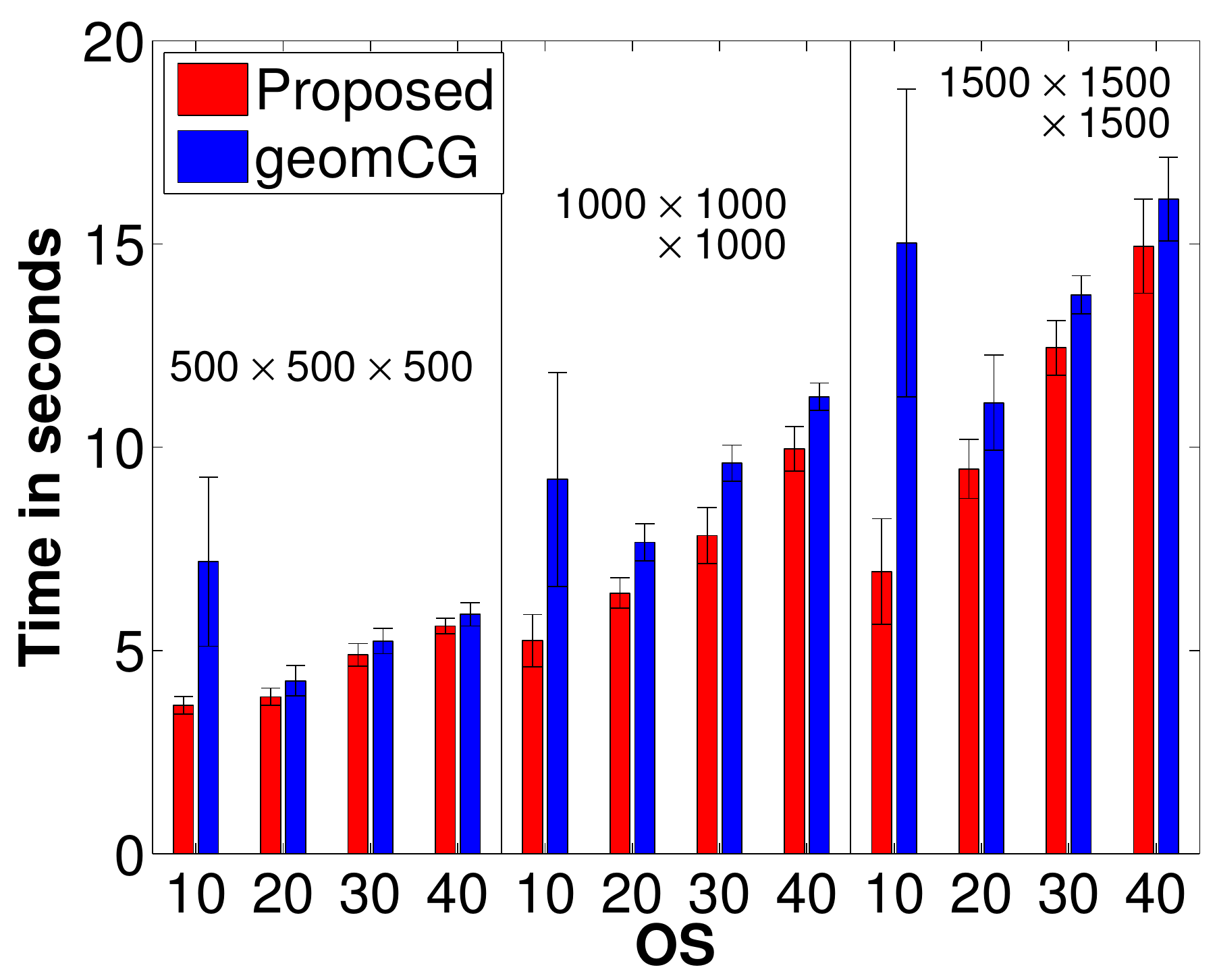}\\
{\scriptsize(a) {\bf r} = ($5\times 5\times 5$).}
\end{center}
\end{minipage}
\begin{minipage}{0.32\hsize}
\begin{center}
\includegraphics[width=\hsize]{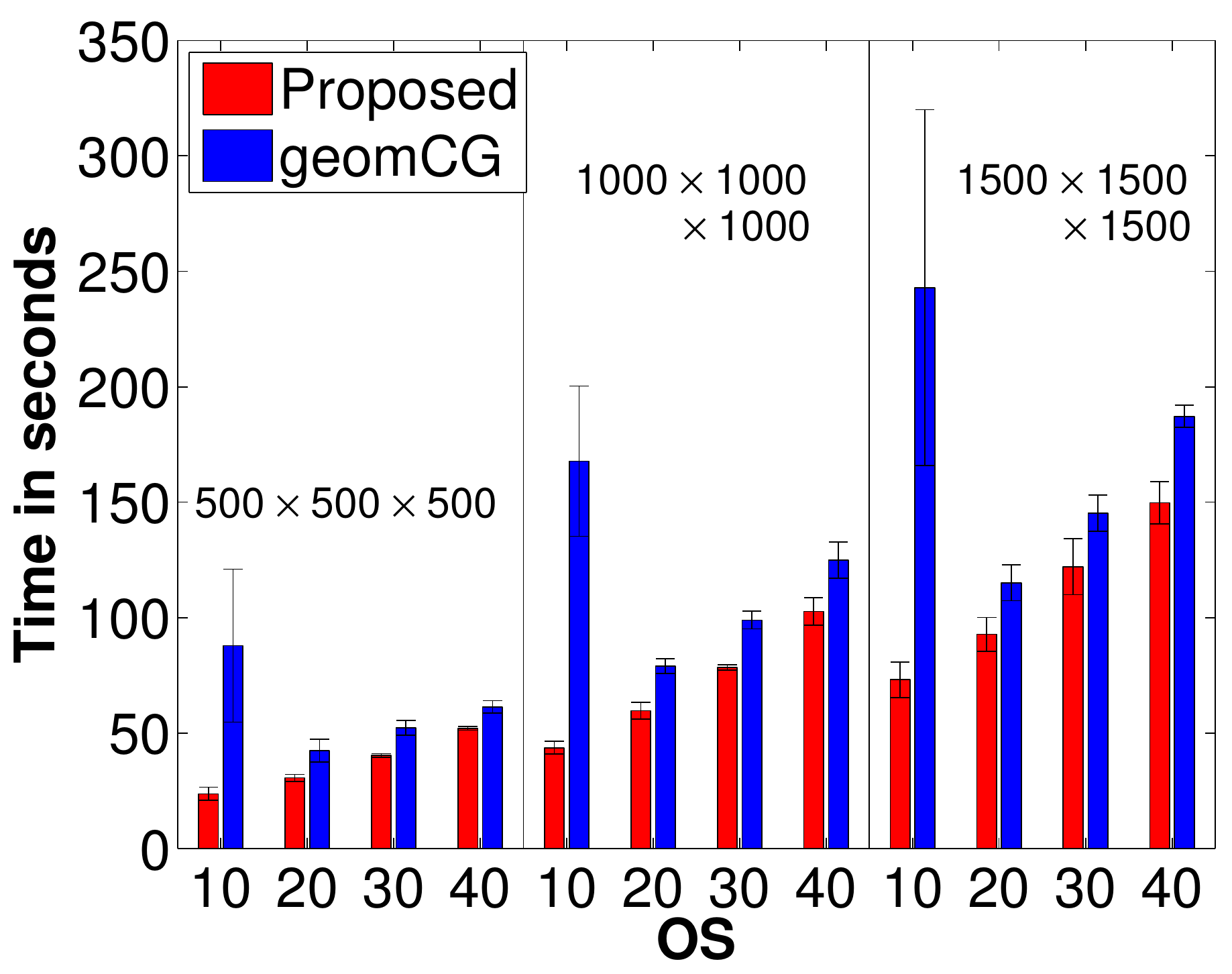}\\
{\scriptsize(b) {\bf r} = ($10\times 10\times 10$).}
\end{center}
\end{minipage}
\begin{minipage}{0.32\hsize}
\begin{center}
\includegraphics[width=\hsize]{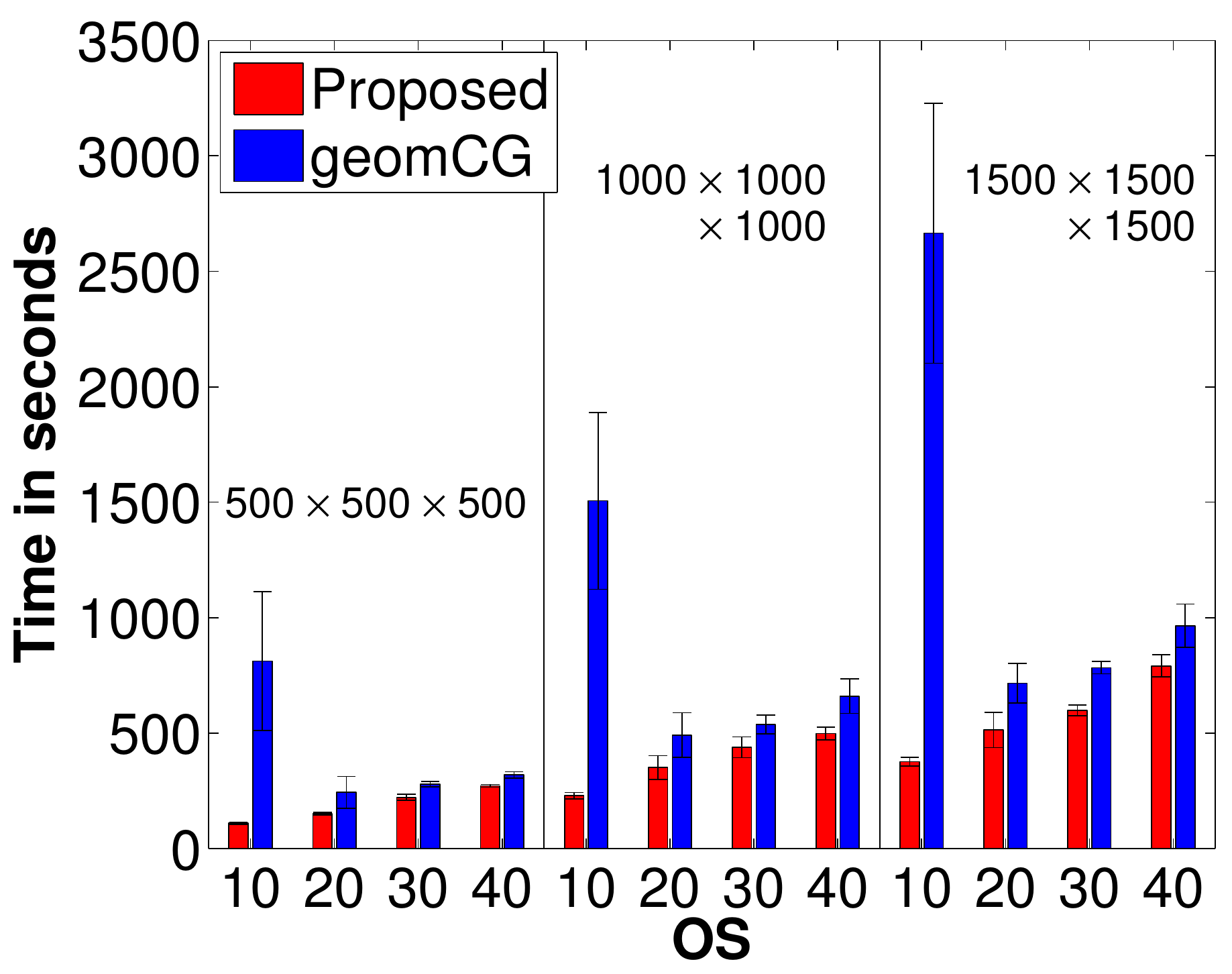}\\
{\scriptsize(c) {\bf r} = ($15\times 15\times 15$).}
\end{center}
\end{minipage}\\
\begin{minipage}{0.32\hsize}
\begin{center}
\includegraphics[width=\hsize]{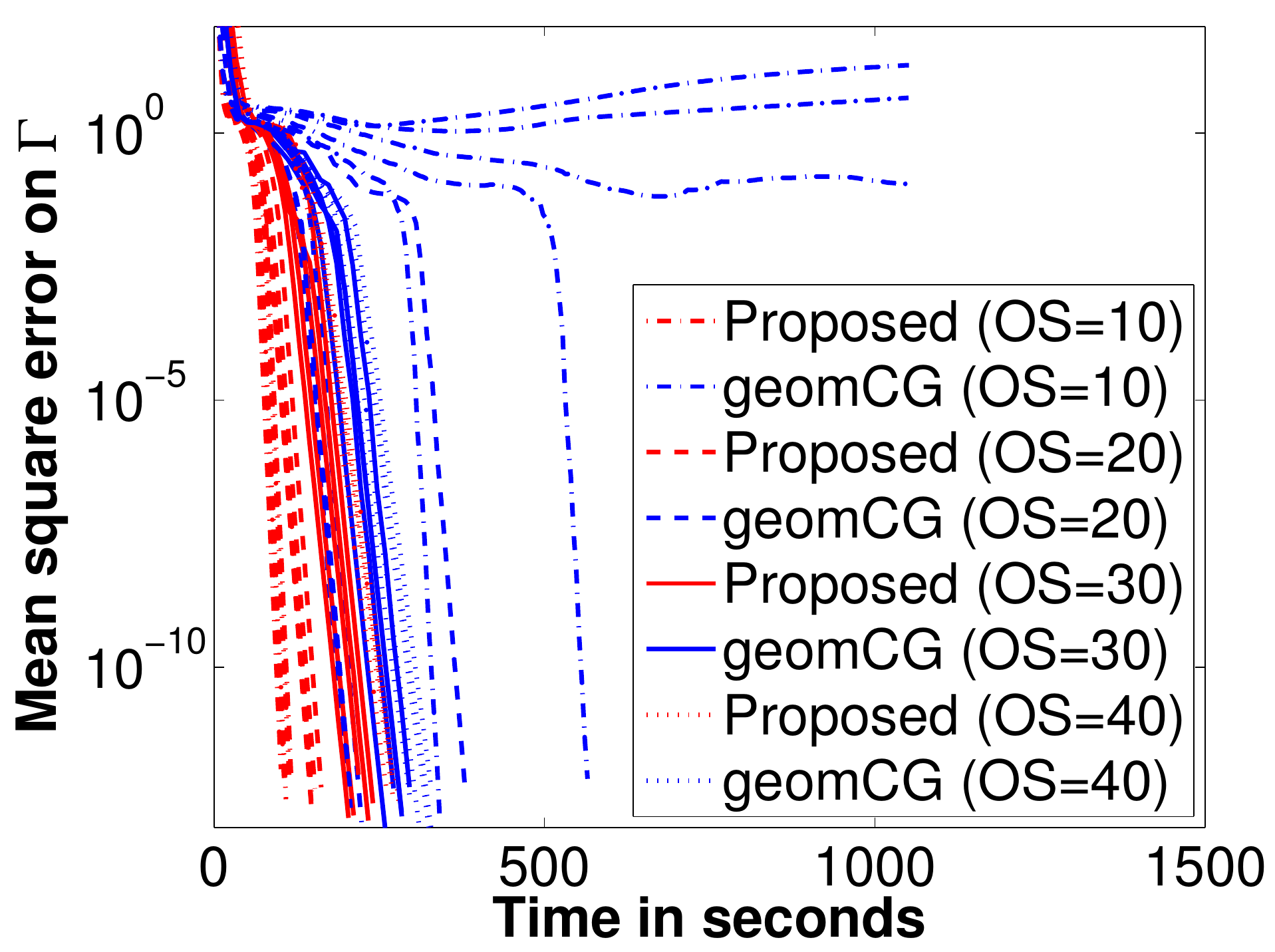}\\
{\scriptsize(d) $500\times 500\times 500$,\\
{\bf r} = ($15\times 15\times 15$).}
\end{center}
\end{minipage}
\begin{minipage}{0.32\hsize}
\begin{center}
\includegraphics[width=\hsize]{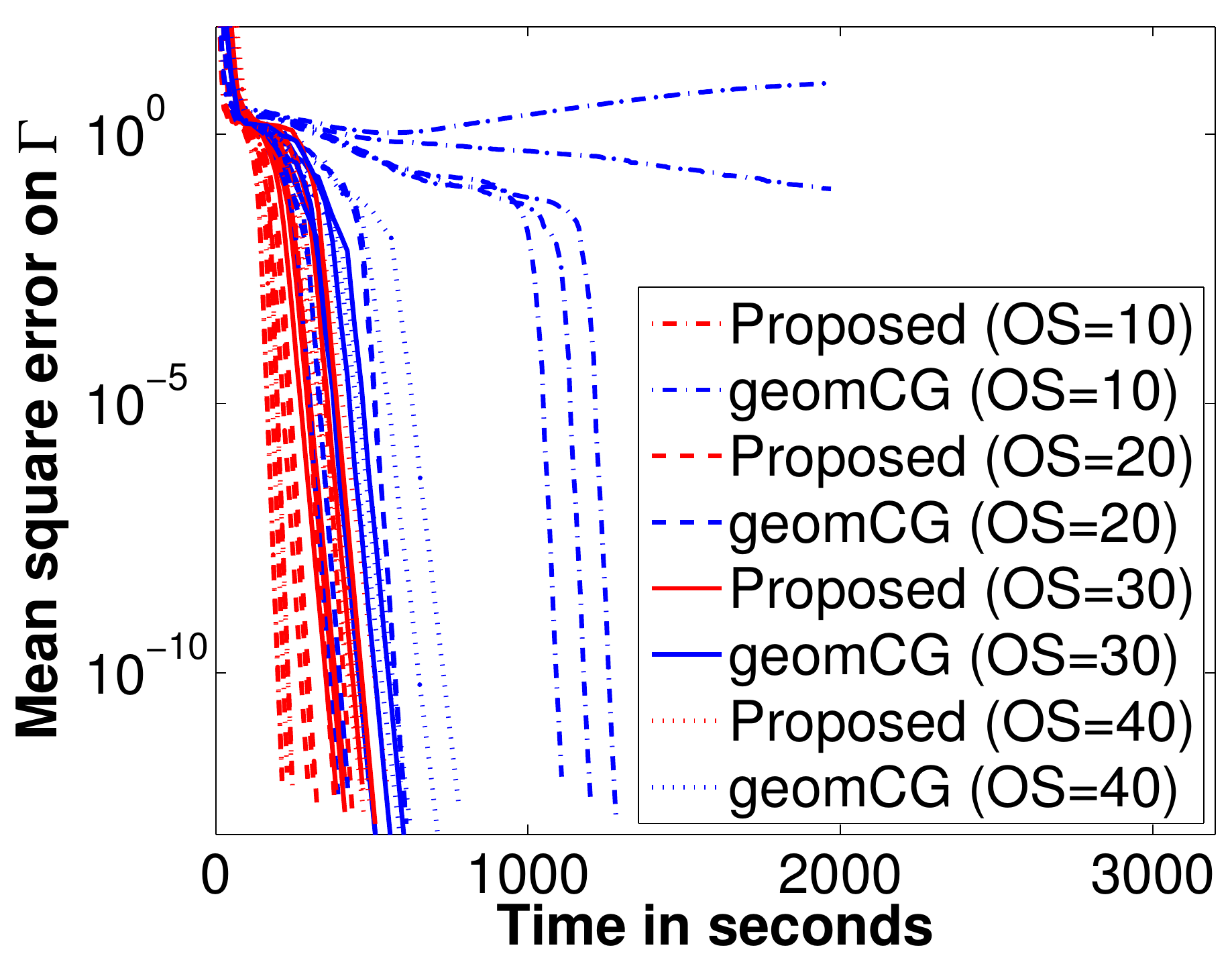}\\
{\scriptsize(e) $1000\times 1000\times 1000$,\\
{\bf r} = ($15\times 15\times 15$).}
\end{center}
\end{minipage}
\begin{minipage}{0.32\hsize}
\begin{center}
\includegraphics[width=\hsize]{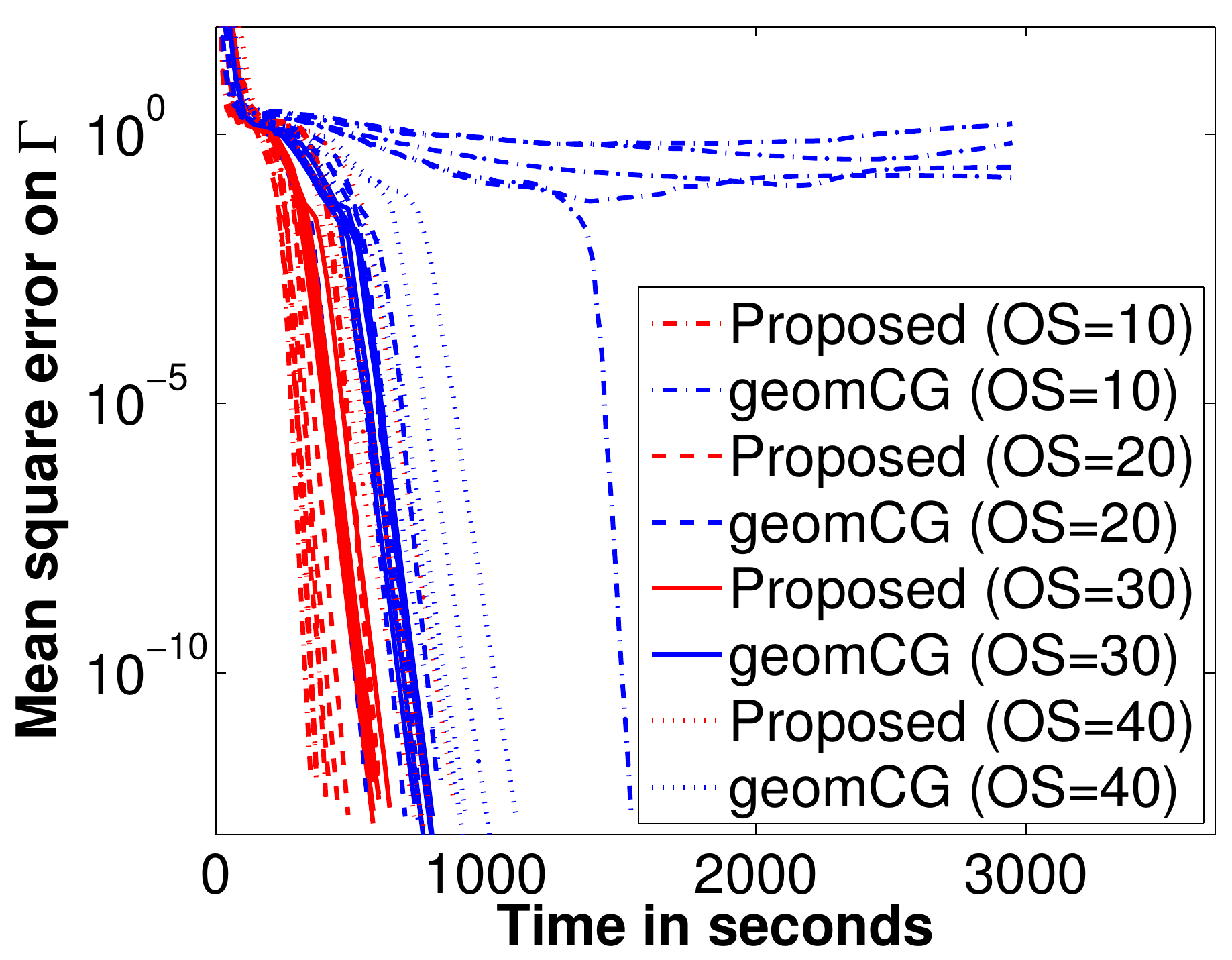}\\
{\scriptsize(f) $1500\times 1500\times 1500$,\\
{\bf r} = ($15\times 15\times 15$).}
\end{center}
\end{minipage}\\
\end{tabular}
\vspace{-0.1cm}
\caption{{\bf Case S8:} medium-scale comparisons on $\Gamma$ (test error).}
\label{appnfig:middle-scale-test}
\end{figure*}

\clearpage
\begin{figure*}[htbp]
\begin{tabular}{cc}
\begin{minipage}{0.48\hsize}
\begin{center}
\includegraphics[width=\hsize]{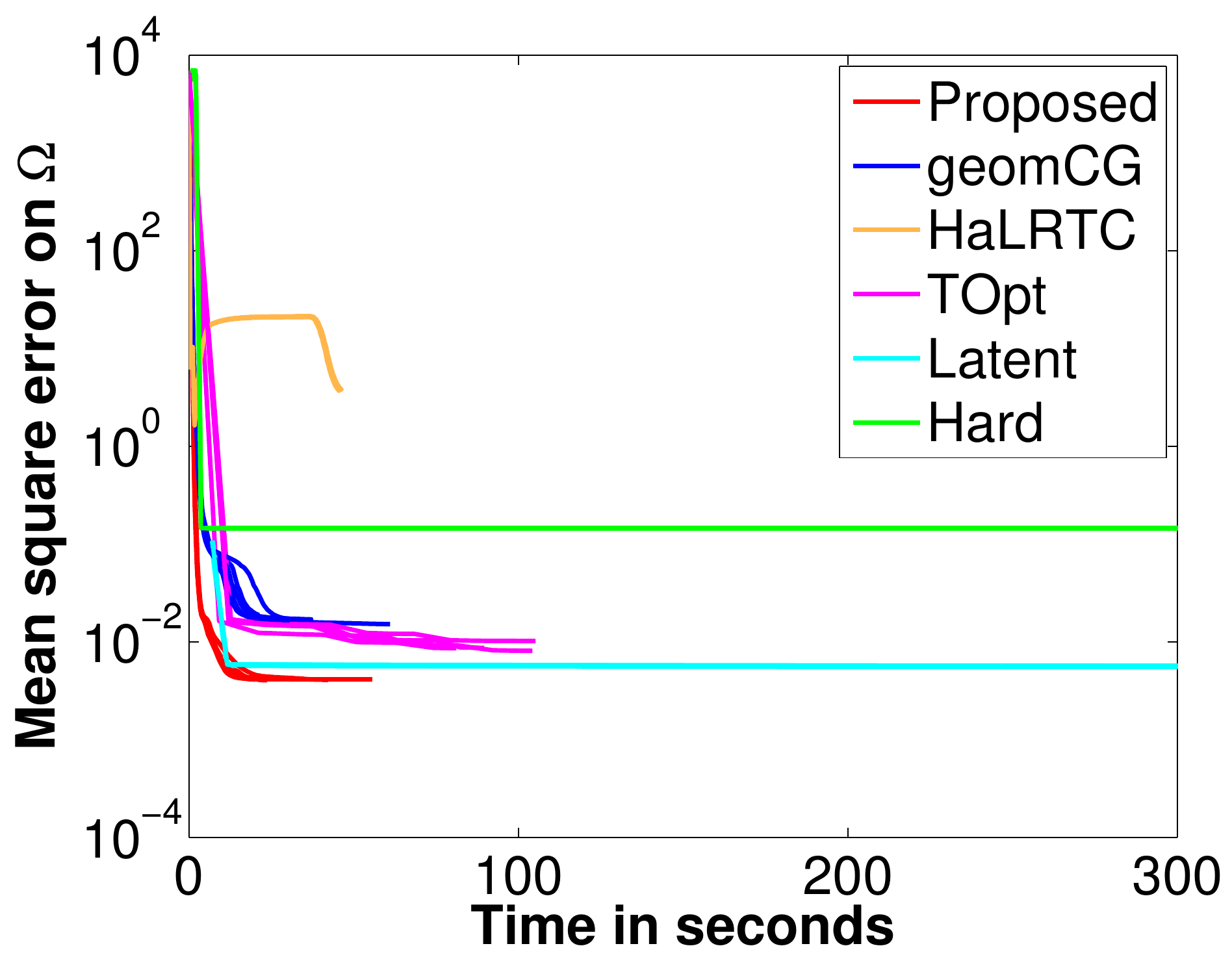}\\
{\scriptsize(a) OS = $11$.} 
\end{center}
\end{minipage}
\begin{minipage}{0.48\hsize}
\begin{center}
\includegraphics[width=\hsize]{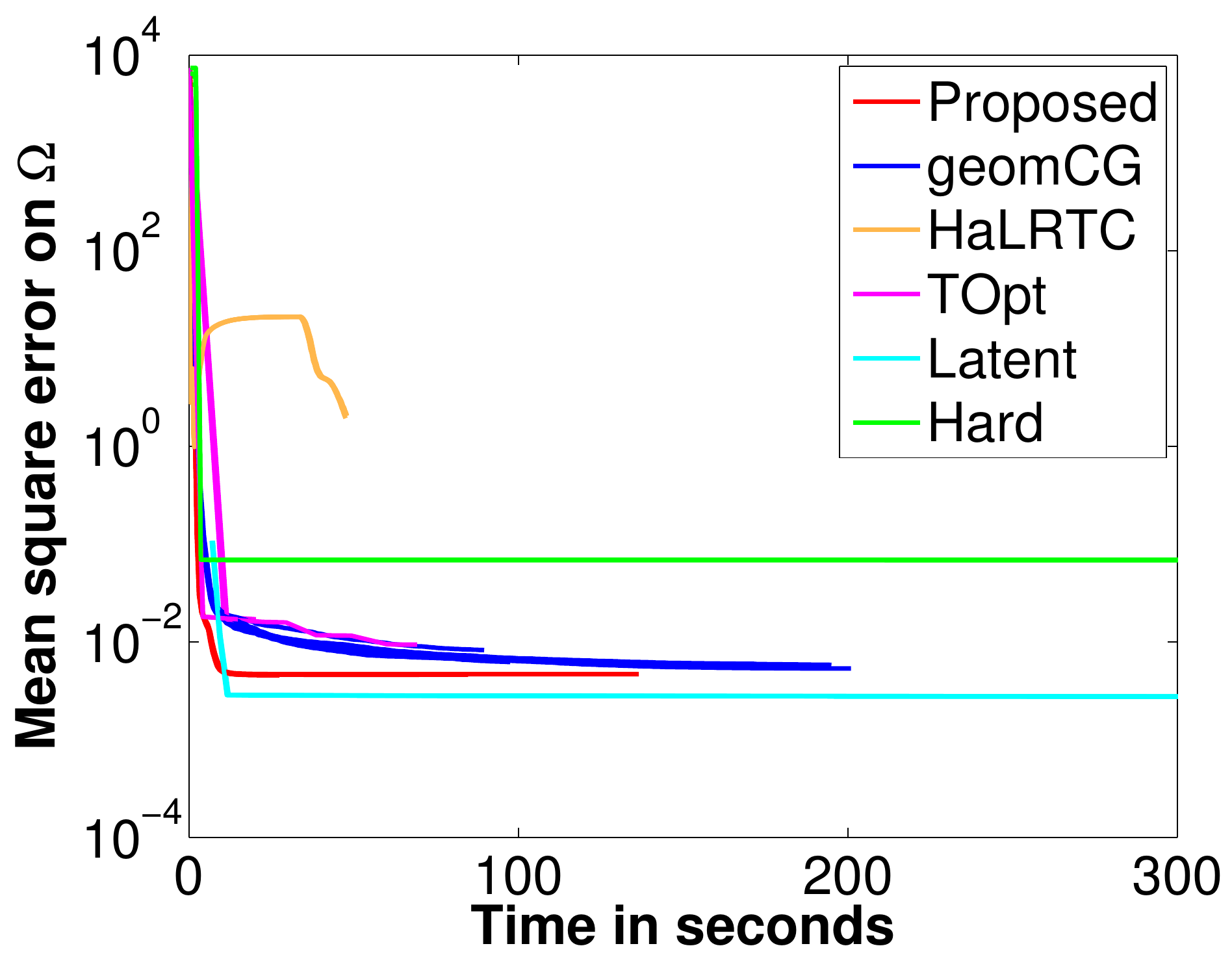}\\
{\scriptsize(b) OS = $22$.} 
\end{center}
\end{minipage}
\end{tabular}
\caption{\changeHK{{\bf Case R1:} mean square error on $\Omega$ (train error).}}
\label{appnfig:R1-train}
\end{figure*}

\begin{figure*}[htbp]
\begin{tabular}{cc}
\begin{minipage}{0.48\hsize}
\begin{center}
\includegraphics[width=\hsize]{figures/caseR1_riebeira_small_OS_11_meansquaretesterror-eps-converted-to.pdf}\\
{\scriptsize(a) OS = $11$.} 
\end{center}
\end{minipage}
\begin{minipage}{0.48\hsize}
\begin{center}
\includegraphics[width=\hsize]{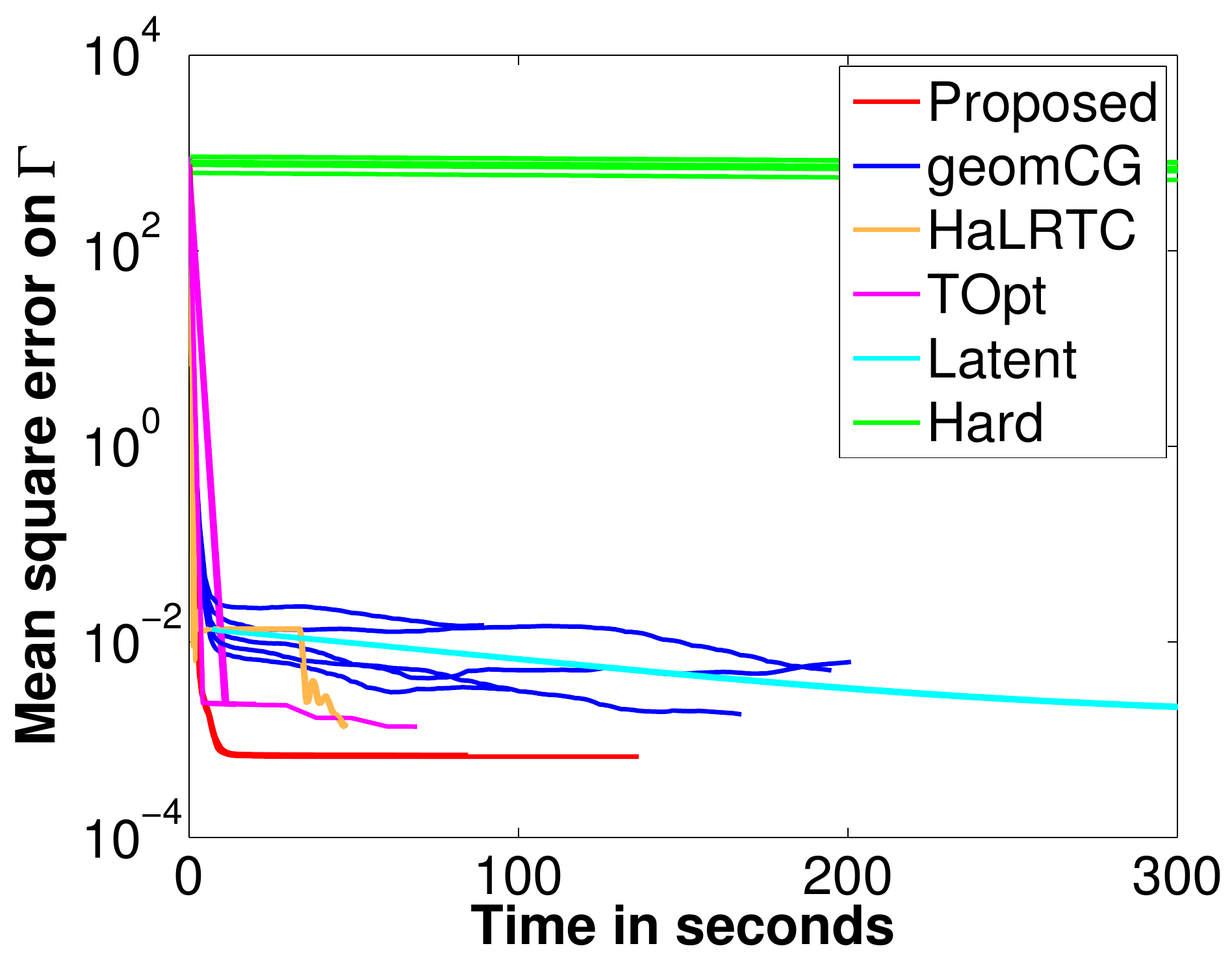}\\
{\scriptsize(b) OS = $22$.} 
\end{center}
\end{minipage}
\end{tabular}
\caption{{\bf Case R1:} mean square error on $\Gamma$ (test error).}
\label{appnfig:R1-test}
\end{figure*}

\begin{figure*}[htbp]
\begin{tabular}{cccc}
\begin{minipage}{0.24\hsize}
\begin{center}
\includegraphics[width=\hsize, bb=0 0 268 203]{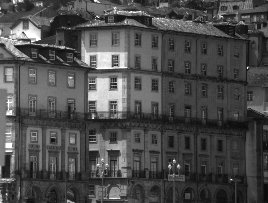}

{\scriptsize (a) Original.}
\label{fig:winter}
\end{center}
\end{minipage}
\begin{minipage}{0.24\hsize}
\begin{center}
\includegraphics[width=\hsize, bb=0 0 268 203]{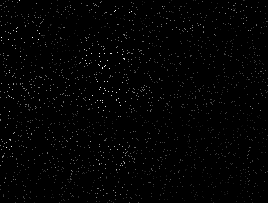}

{\scriptsize (b) Sampled ($4.98$\% observed).}
\label{fig:fall}
\end{center}
\end{minipage}

\begin{minipage}{0.24\hsize}
\begin{center}
\includegraphics[width=\hsize, bb=0 0 268 203]{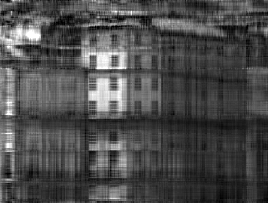}
{\scriptsize (c) Proposed.}
\label{fig:winter}
\end{center}
\end{minipage}
\begin{minipage}{0.24\hsize}
\begin{center}
\includegraphics[width=\hsize, bb=0 0 268 203]{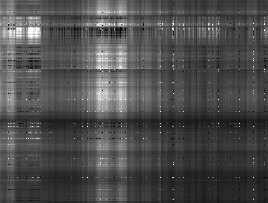}
{\scriptsize (d) geomCG.}
\label{fig:fall}
\end{center}
\end{minipage}\vspace*{0.2cm}\\

\begin{minipage}{0.24\hsize}
\begin{center}
\includegraphics[width=\hsize, bb=0 0 268 203]{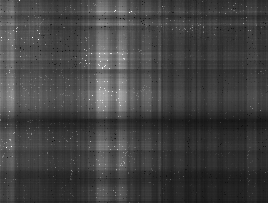}

{\scriptsize (e) HaLRTC.}
\label{fig:winter}
\end{center}
\end{minipage}
\begin{minipage}{0.24\hsize}
\begin{center}
\includegraphics[width=\hsize, bb=0 0 268 203]{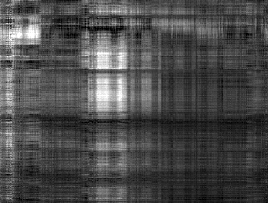}

{\scriptsize (f) TOpt.}
\label{fig:fall}
\end{center}
\end{minipage}

\begin{minipage}{0.24\hsize}
\begin{center}
\includegraphics[width=\hsize, bb=0 0 268 203]{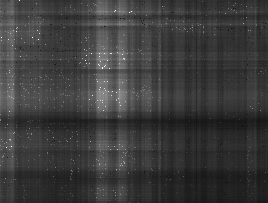}

{\scriptsize (g) Latent.}
\label{fig:winter}
\end{center}
\end{minipage}
\begin{minipage}{0.24\hsize}
\begin{center}
\includegraphics[width=\hsize, bb=0 0 268 203]{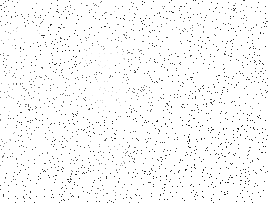}

{\scriptsize (h) Hard.}
\label{fig:fall}
\end{center}
\end{minipage}
\end{tabular}
\caption{{\bf Case R1:} recovery results on the hyperspectral image ``Ribeira" (frame = $16$, OS = $11$).}
\label{appnfig:R1-reconstructedimage_OS_11}
\end{figure*}

\begin{figure*}[htbp]
\begin{tabular}{cccc}
\begin{minipage}{0.24\hsize}
\begin{center}
\includegraphics[width=\hsize, bb=0 0 268 203]{figures/ref_ribeira1bbb_reg1_resize_203x268x33.png}

{\scriptsize (a) Original.}
\label{fig:winter}
\end{center}
\end{minipage}
\begin{minipage}{0.24\hsize}
\begin{center}
\includegraphics[width=\hsize, bb=0 0 268 203]{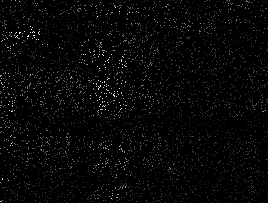}

{\scriptsize (b) Sampled ($9.96\%$ observed).}
\label{fig:fall}
\end{center}
\end{minipage}

\begin{minipage}{0.24\hsize}
\begin{center}
\includegraphics[width=\hsize, bb=0 0 268 203]{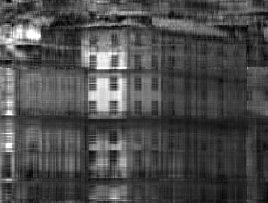}

{\scriptsize (c) Proposed.}
\label{fig:winter}
\end{center}
\end{minipage}
\begin{minipage}{0.24\hsize}
\begin{center}
\includegraphics[width=\hsize, bb=0 0 268 203]{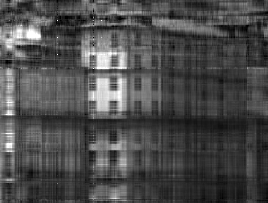}

{\scriptsize (d) geomCG.}
\label{fig:fall}
\end{center}
\end{minipage}\vspace*{0.2cm}\\

\begin{minipage}{0.24\hsize}
\begin{center}
\includegraphics[width=\hsize, bb=0 0 268 203]{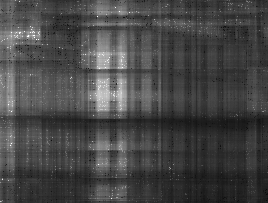}

{\scriptsize (e) HaLRTC.}
\label{fig:winter}
\end{center}
\end{minipage}
\begin{minipage}{0.24\hsize}
\begin{center}
\includegraphics[width=\hsize, bb=0 0 268 203]{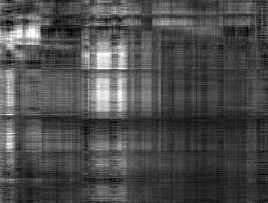}

{\scriptsize (f) TOpt.}
\label{fig:fall}
\end{center}
\end{minipage}

\begin{minipage}{0.24\hsize}
\begin{center}
\includegraphics[width=\hsize, bb=0 0 268 203]{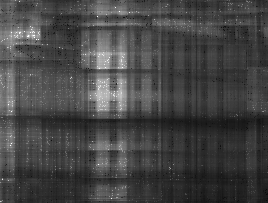}

{\scriptsize (g) Latent.}
\label{fig:winter}
\end{center}
\end{minipage}
\begin{minipage}{0.24\hsize}
\begin{center}
\includegraphics[width=\hsize, bb=0 0 268 203]{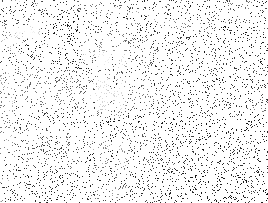}

{\scriptsize (h) Hard.}
\label{fig:fall}
\end{center}
\end{minipage}
\end{tabular}
\caption{{\bf Case R1:} recovery results on the hyperspectral image ``Ribeira" (frame = $16$, OS = $22$).}\label{appnfig:R1-reconstructedimage_OS_22}
\end{figure*}

\begin{figure*}[t]
\begin{tabular}{cc}
\begin{minipage}{0.48\hsize}
\begin{center}
\includegraphics[width=\hsize]{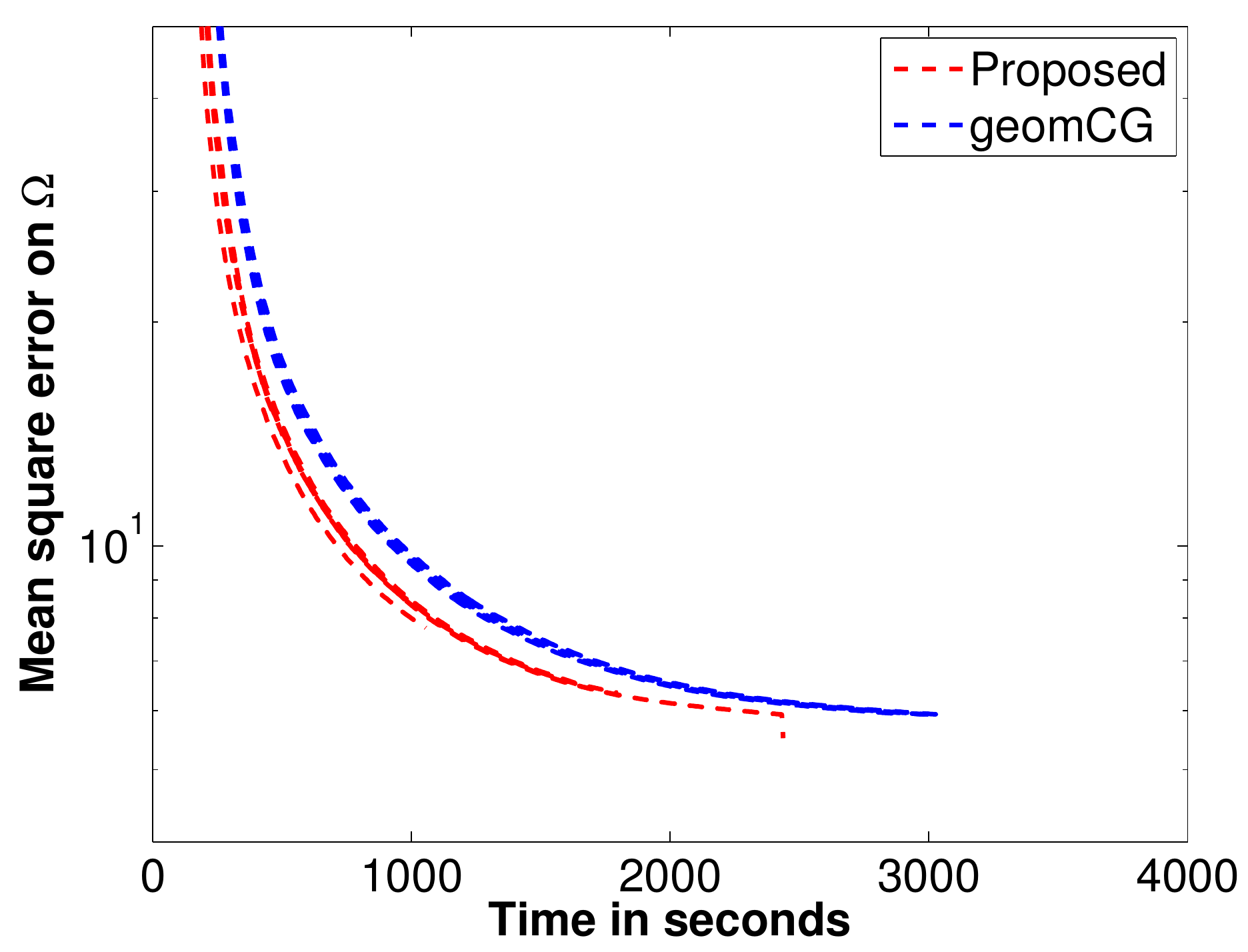}\\
{\scriptsize(a) {\bf r} = ($4\times 4\times 4$).} 
\end{center}
\end{minipage}
\begin{minipage}{0.48\hsize}
\begin{center}
\includegraphics[width=\hsize]{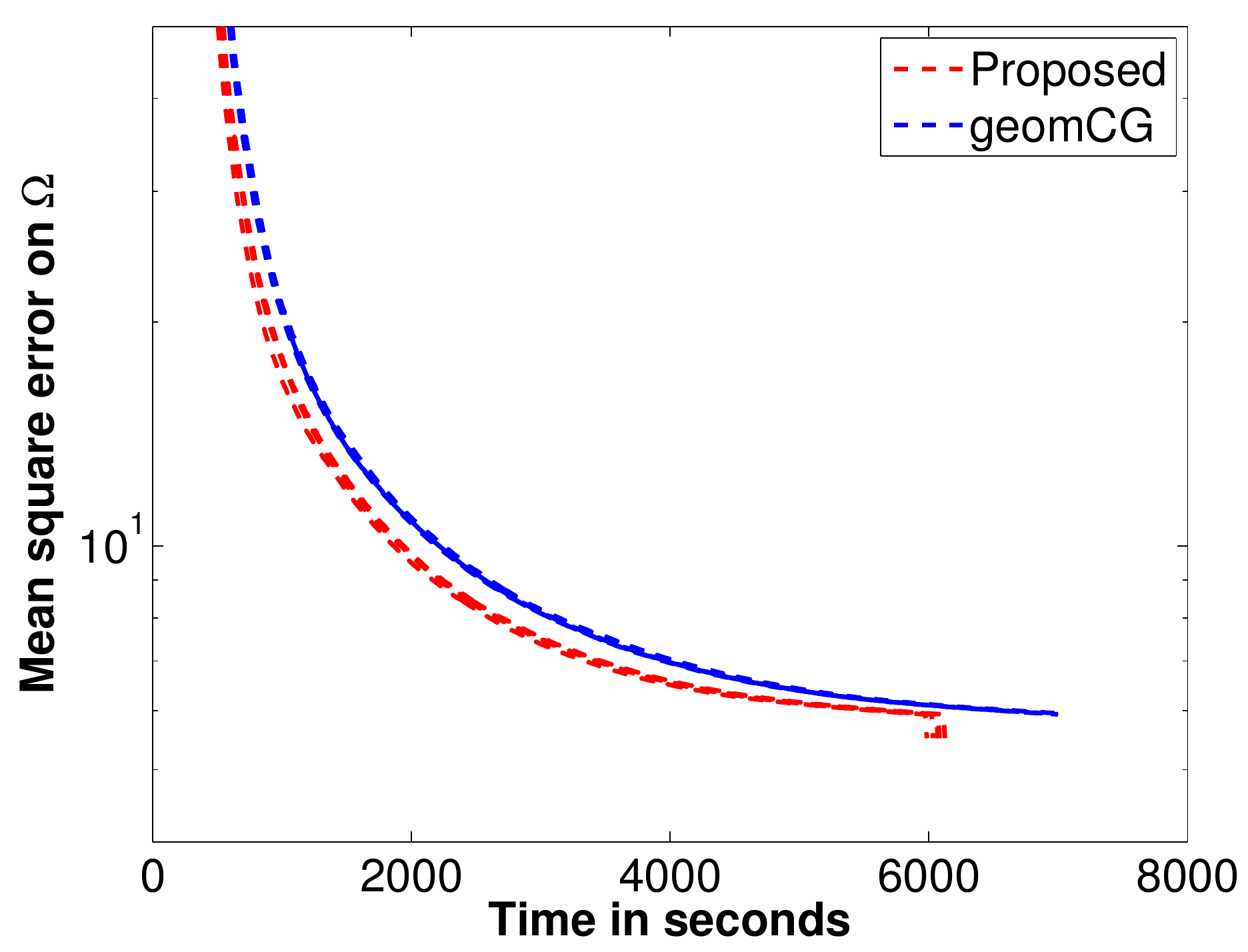}\\
{\scriptsize(b) {\bf r} = ($6\times 6\times 6$).} 
\end{center}
\end{minipage}\\

\begin{minipage}{0.48\hsize}
\begin{center}
\includegraphics[width=\hsize]{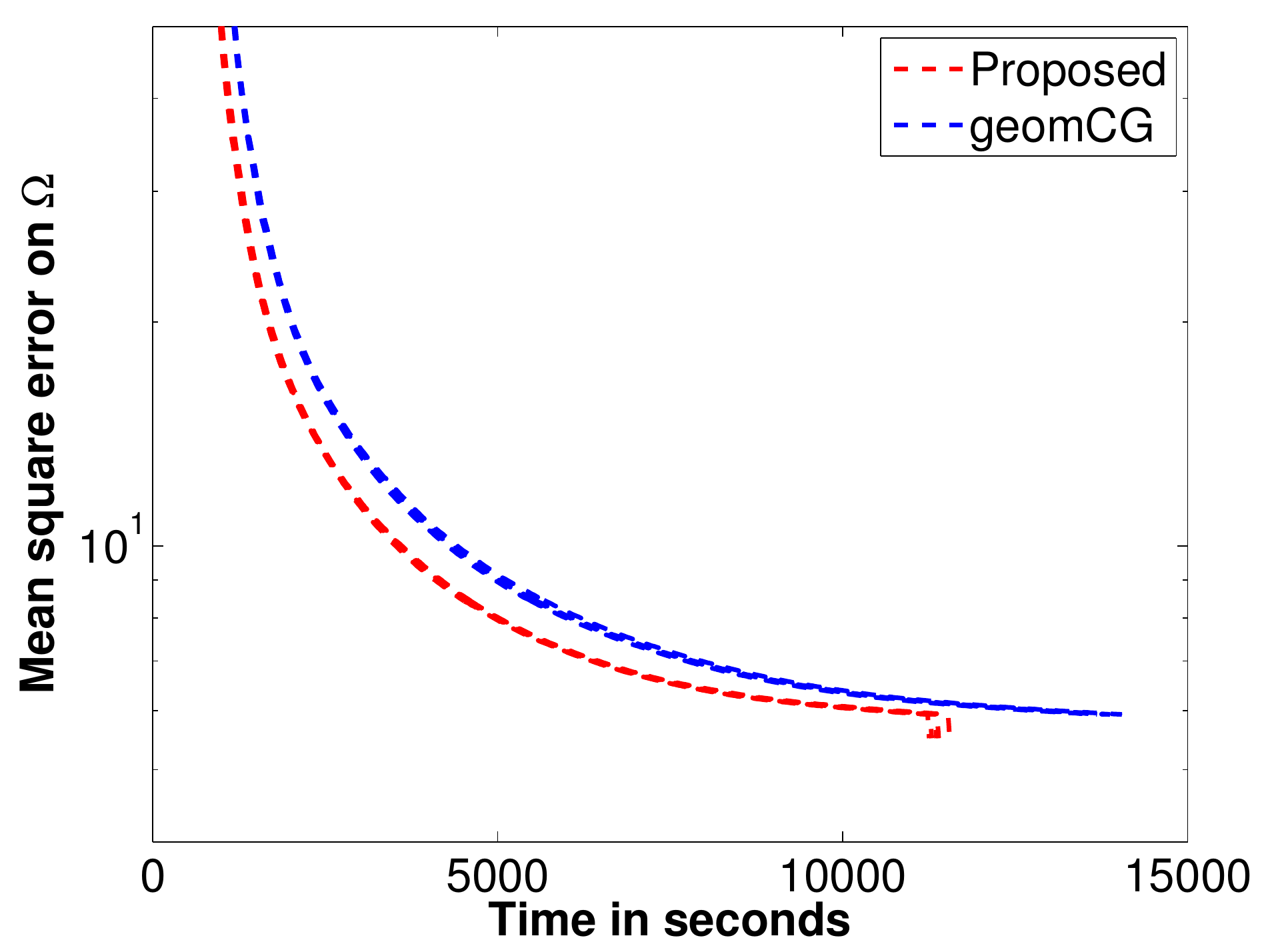}\\
{\scriptsize(c) {\bf r} = ($8\times 8\times 8$).} 
\end{center}
\end{minipage}
\begin{minipage}{0.48\hsize}
\begin{center}
\includegraphics[width=\hsize]{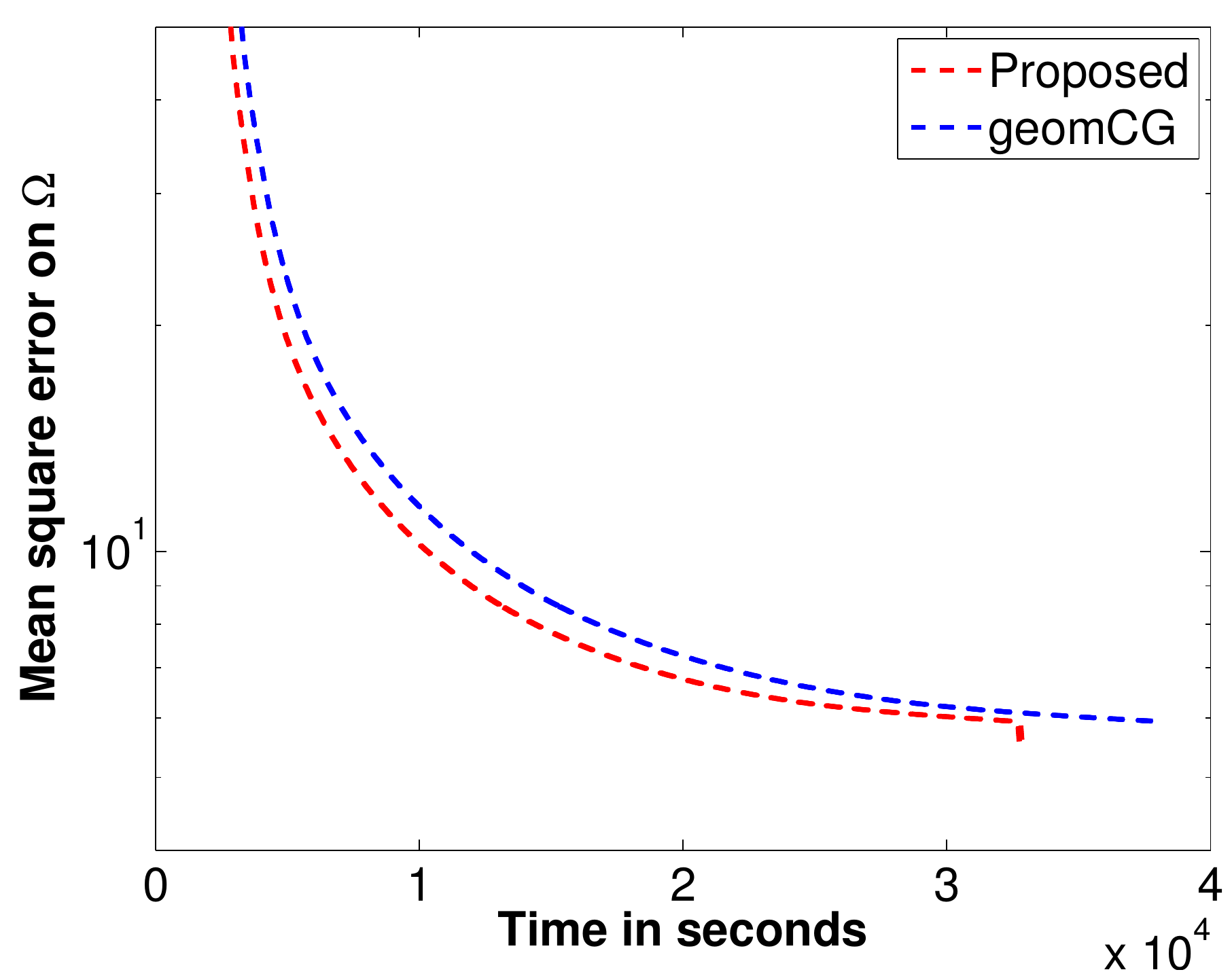}\\
{\scriptsize(d) {\bf r} = ($10\times 10\times 10$).} 
\end{center}
\end{minipage}
\end{tabular}
\caption{\changeHK{{\bf Case R2:} mean square error on $\Omega$ (train error).}}
\label{appnfig:R2-train}
\end{figure*}

\begin{figure*}[t]
\begin{tabular}{cc}
\begin{minipage}{0.48\hsize}
\begin{center}
\includegraphics[width=\hsize]{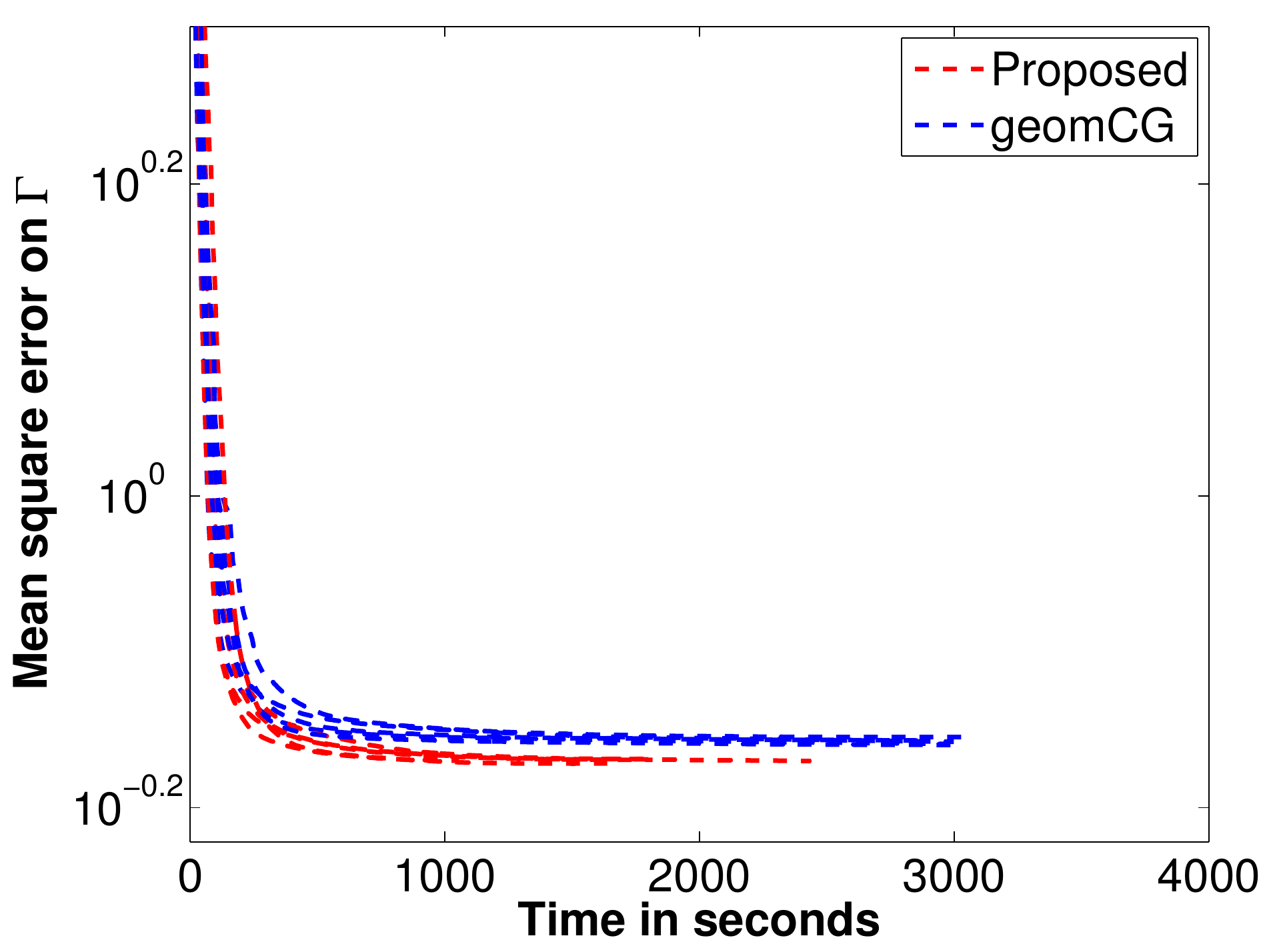}\\
{\scriptsize(a) {\bf r} = ($4\times 4\times 4$).} 
\end{center}
\end{minipage}
\begin{minipage}{0.48\hsize}
\begin{center}
\includegraphics[width=\hsize]{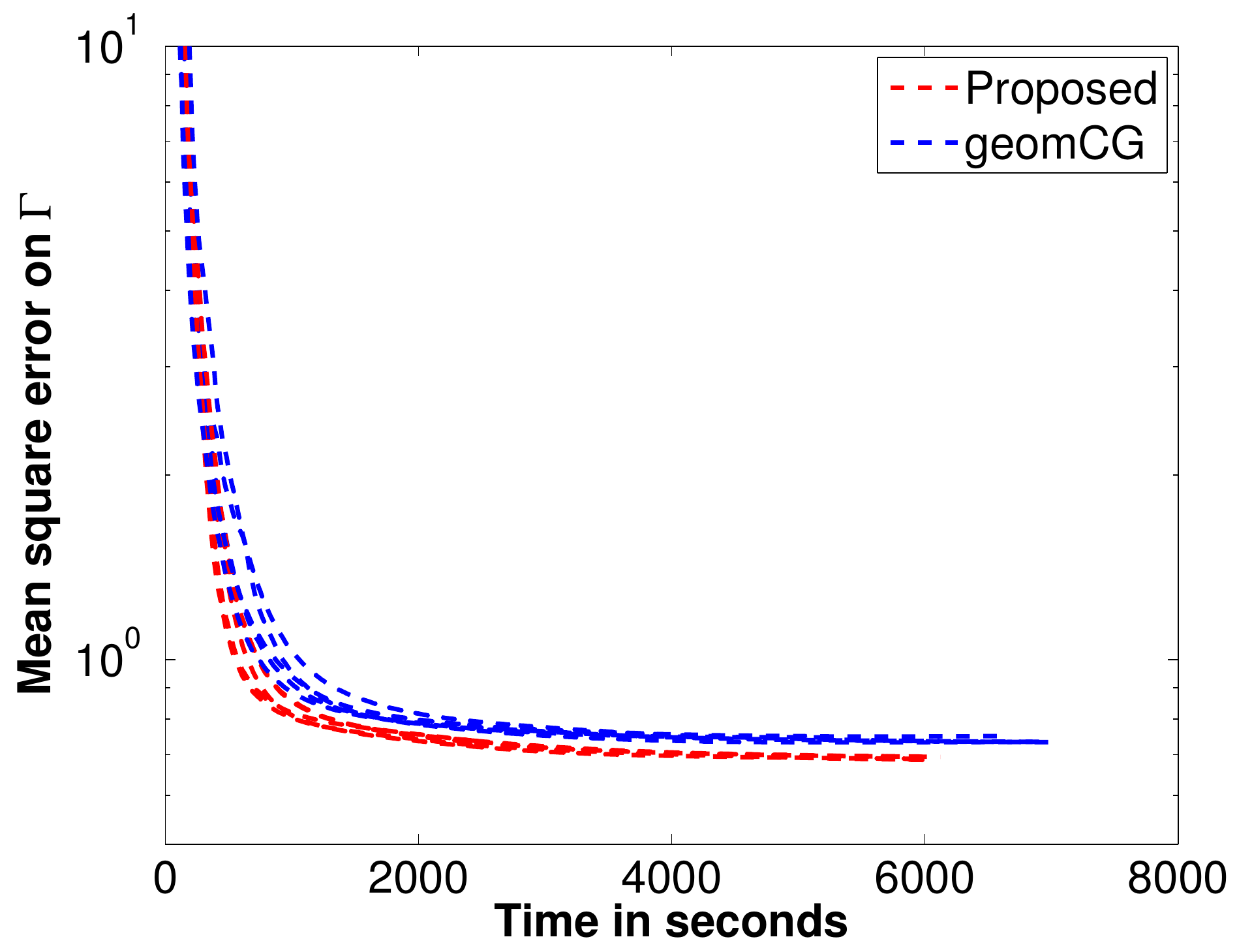}\\
{\scriptsize(b) {\bf r} = ($6\times 6\times 6$).} 
\end{center}
\end{minipage}\\

\begin{minipage}{0.48\hsize}
\begin{center}
\includegraphics[width=\hsize]{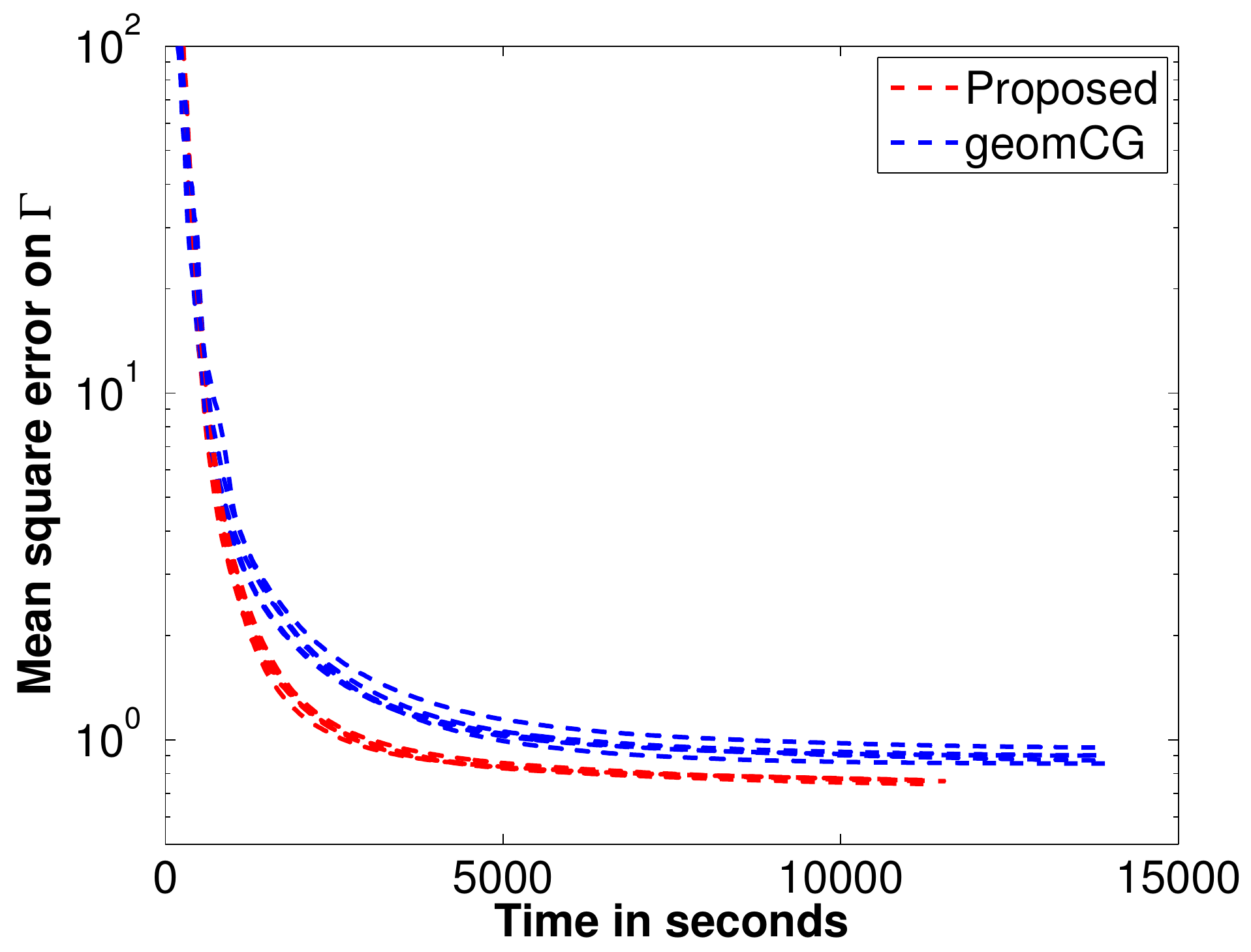}\\
{\scriptsize(c) {\bf r} = ($8\times 8\times 8$).} 
\end{center}
\end{minipage}
\begin{minipage}{0.48\hsize}
\begin{center}
\includegraphics[width=\hsize]{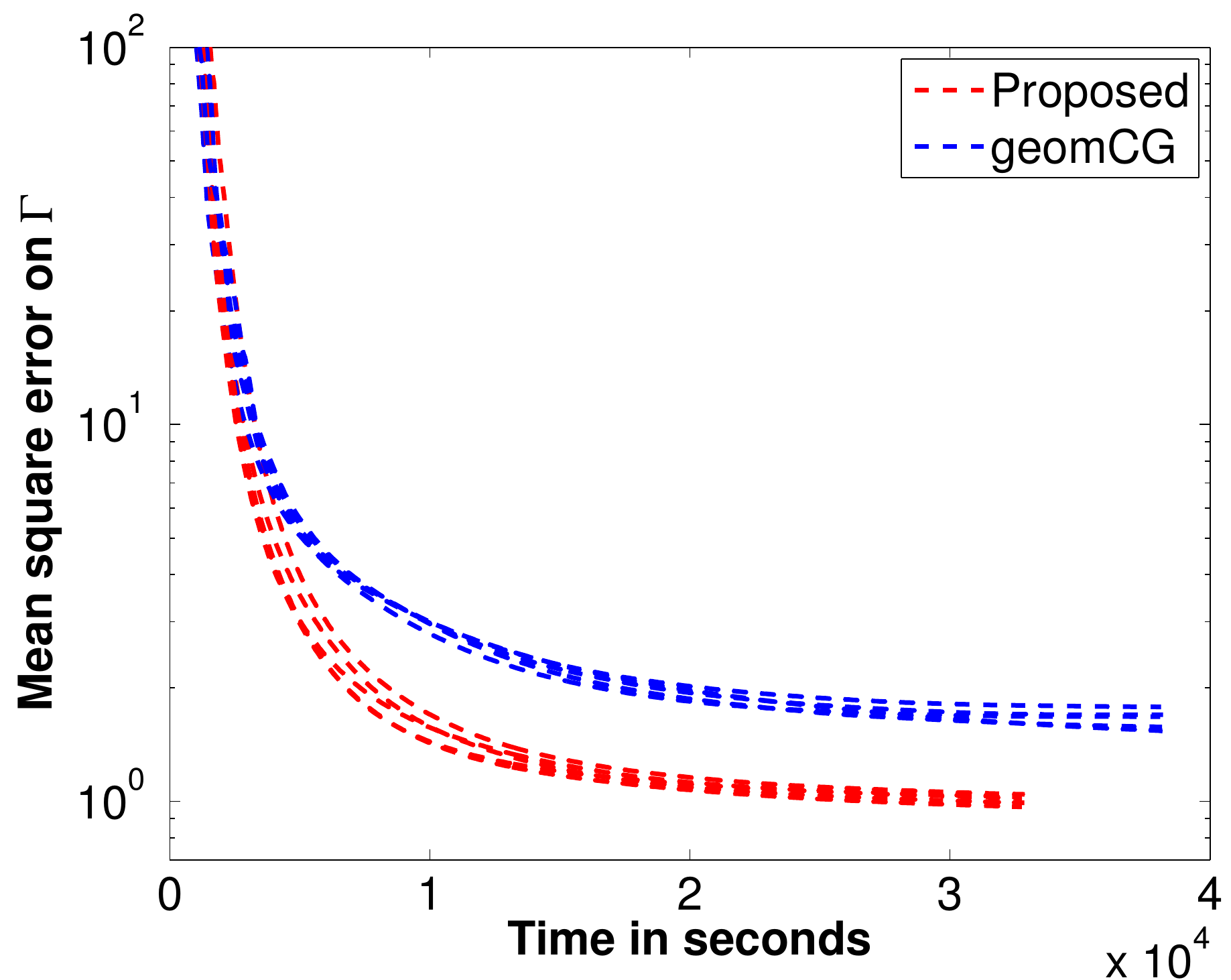}\\
{\scriptsize(d) {\bf r} = ($10\times 10\times 10$).} 
\end{center}
\end{minipage}
\end{tabular}
\caption{{\bf Case R2:} mean square error on $\Gamma$ (test error).}
\label{appnfig:R2-test}
\end{figure*}

\clearpage 
\begin{figure*}[t]
\begin{tabular}{cc}
\begin{minipage}{0.49\hsize}
\begin{center}
\includegraphics[width=\hsize]{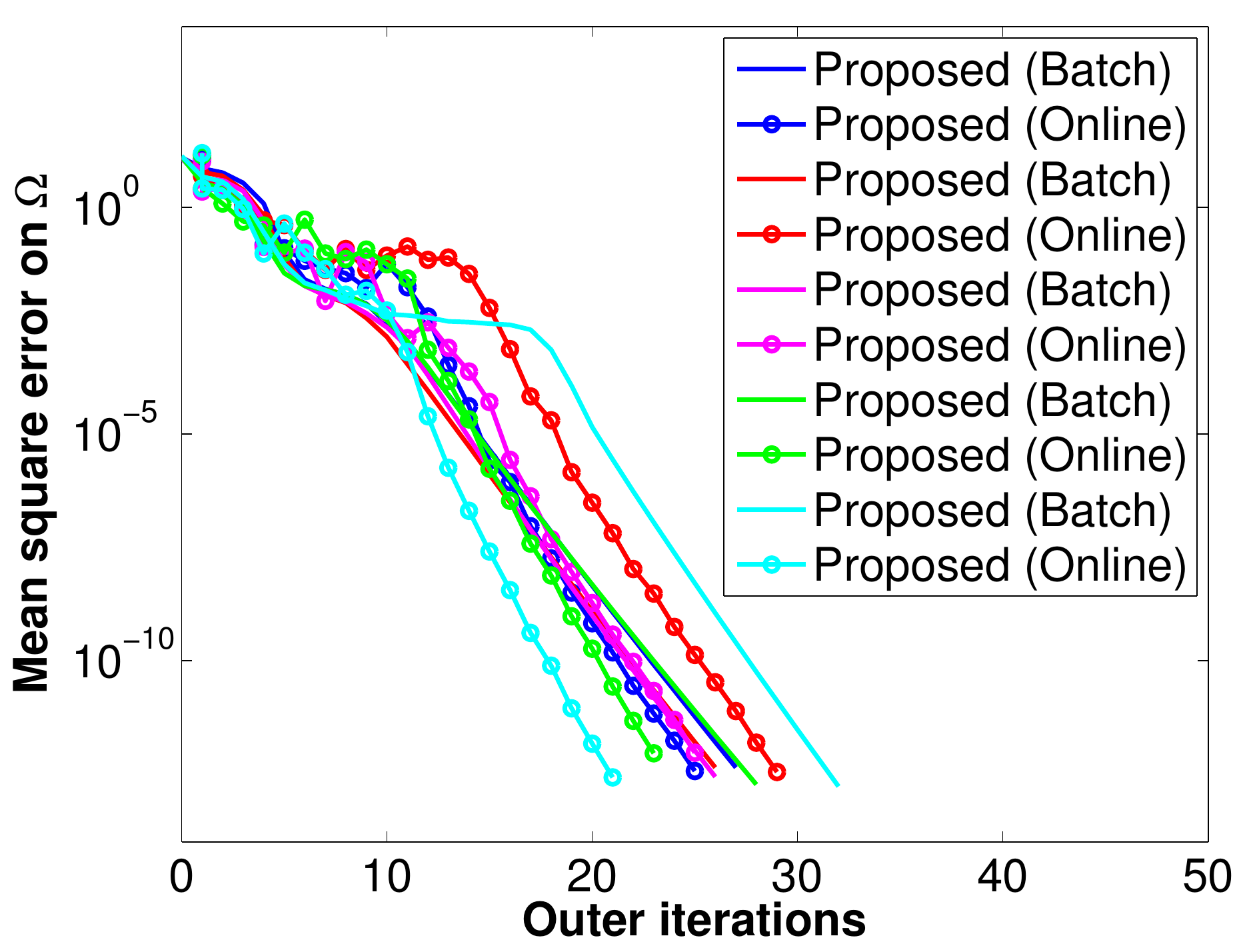}\\
\changeHK{{\scriptsize(a) Mean square error on $\Omega$ (train error).}}
\end{center}
\end{minipage}
\begin{minipage}{0.49\hsize}
\begin{center}
\includegraphics[width=\hsize]{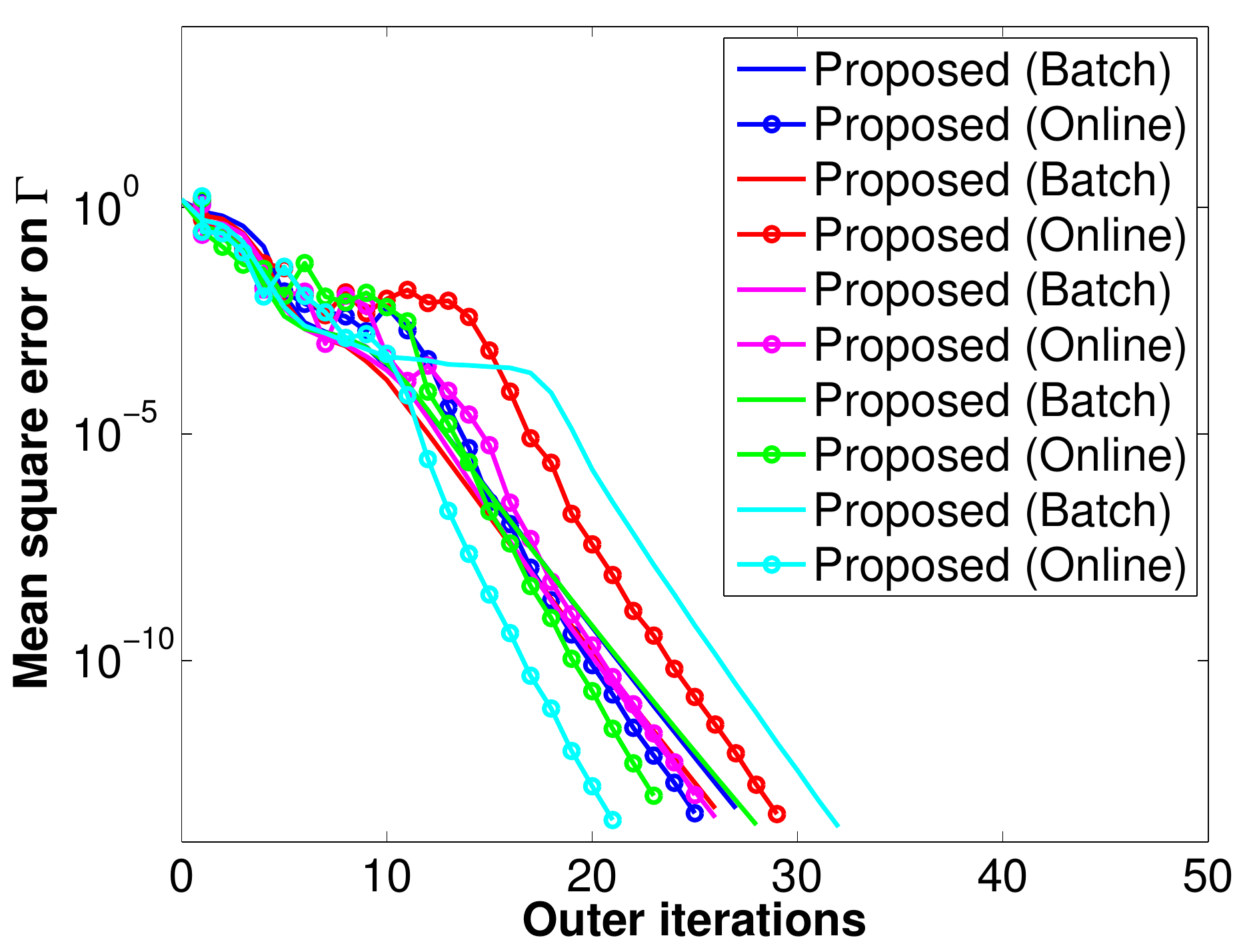}\\
\changeHK{{\scriptsize(b) Mean square error on $\Gamma$ (test error).}}
\end{center}
\end{minipage}
\end{tabular}
\caption{\changeHK{{\bf Case O}: mean square error on synthetic instance of size $100 \times 100 \times 5000$.}}
\label{appnfig:O1-5000}
%
\begin{tabular}{cc}
\begin{minipage}{0.49\hsize}
\begin{center}
\includegraphics[width=\hsize]{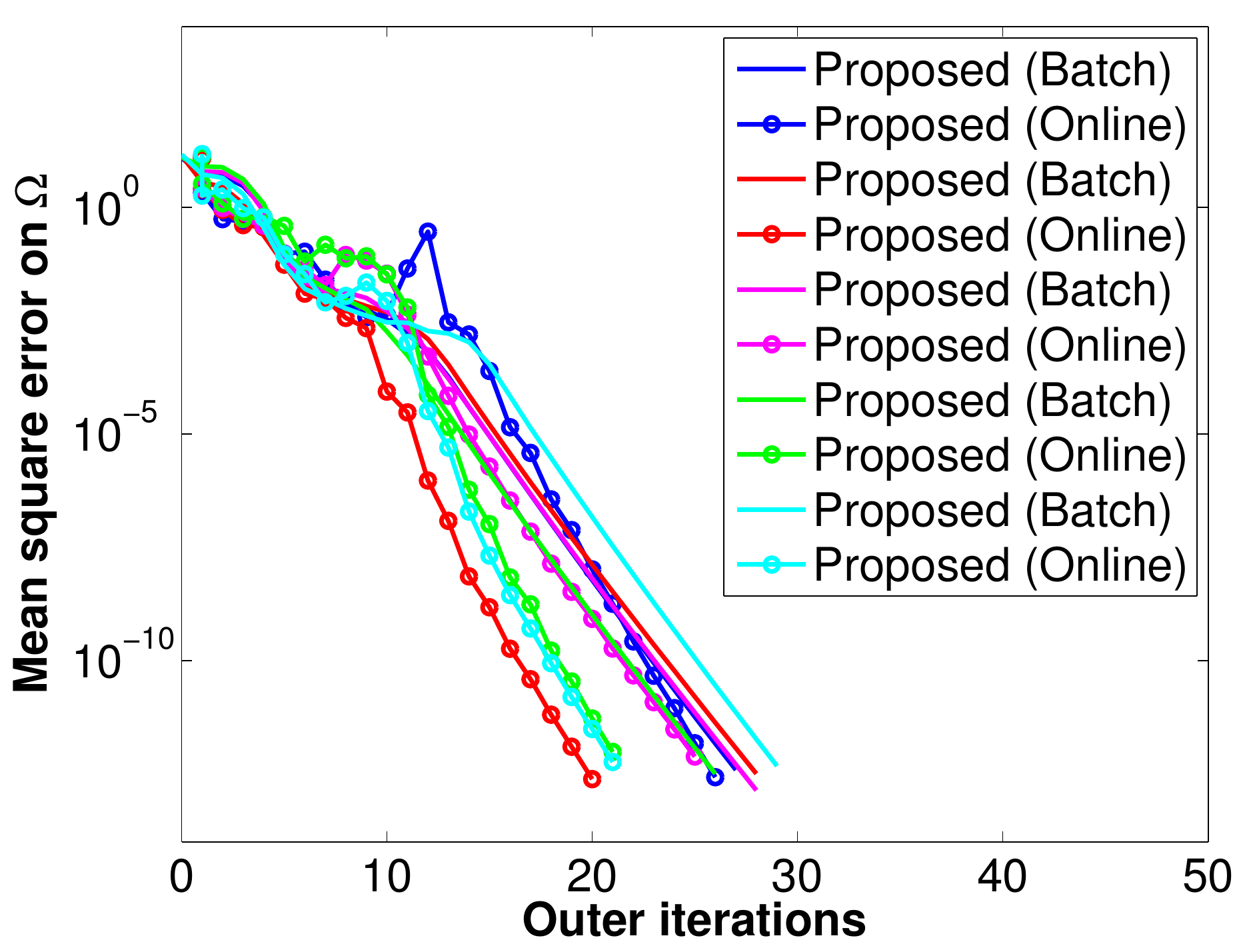}\\
\changeHK{{\scriptsize(a) Mean square error on $\Omega$ (train error).}}
\end{center}
\end{minipage}
\begin{minipage}{0.49\hsize}
\begin{center}
\includegraphics[width=\hsize]{new_figures/Testmse-synt-100x100x10000-5x5x5-10-5-eps-converted-to.pdf}\\
\changeHK{{\scriptsize(b) Mean square error on $\Gamma$ (test error).}}
\end{center}
\end{minipage}
\end{tabular}
\caption{\changeHK{{\bf Case O}: mean square error on synthetic instance of size $100 \times 100 \times 10000$.}}
\label{appnfig:O1-10000}
\end{figure*}
\begin{table*}[b]
\label{appntbl:O2}
\begin{center}
\caption{\changeHK{{\bf Case O}: mean square error (5 runs) on Airport Hall dataset.}}
\begin{tabular}{c|l|r|r|r|r|r}
\hline\hline
Error type & \multicolumn{1}{|c|}{Algorithm}	& \multicolumn{1}{|c|}{run 1}	& \multicolumn{1}{|c|}{run 2}& \multicolumn{1}{|c|}{run 3}	& \multicolumn{1}{|c|}{run 4}	& \multicolumn{1}{|c}{run 5} \\
\hline\hline
Training error &Proposed (Online) 	& 7.210000	& 7.211718	& 7.205027	& 7.255203	& 7.230000 \\
\cline{2-7}
on $\Omega$ &Proposed (Batch)	 & 7.215763	& 7.211496	& 7.208463	& 7.282901	& 7.218042 \\
\cline{2-7}
&TeCPSGD	& 7.335320		& 7.389269	& 7.364065	& 7.393318	& 7.390530 \\
\cline{2-7}
&OLSTEC	& 7.922385	& 7.653096	& 8.150799	& 8.248936	& 7.753596 \\
\hline	
\hline				
Test error&Proposed (Online) 	& 7.462097	& 7.440332	& 7.452799	& 7.443505	& 7.450065 \\
\cline{2-7}
on $\Gamma$ &Proposed (Batch)	& 7.471942	& 7.440508	& 7.446072	& 7.492786	& 7.218042 \\
\cline{2-7}
&TeCPSGD	& 7.592109	& 7.601955	& 7.600740	& 7.579759	& 7.600621 \\
\cline{2-7}
&OLSTEC	& 8.205765	& 7.840107	& 8.599819	& 8.625715	& 7.965405 \\
\hline
\end{tabular}
\end{center}
\end{table*}

\clearpage
\begin{figure*}[htbp]
\begin{tabular}{cc}
\begin{minipage}{0.32\hsize}
\begin{center}
\includegraphics[width=\hsize]{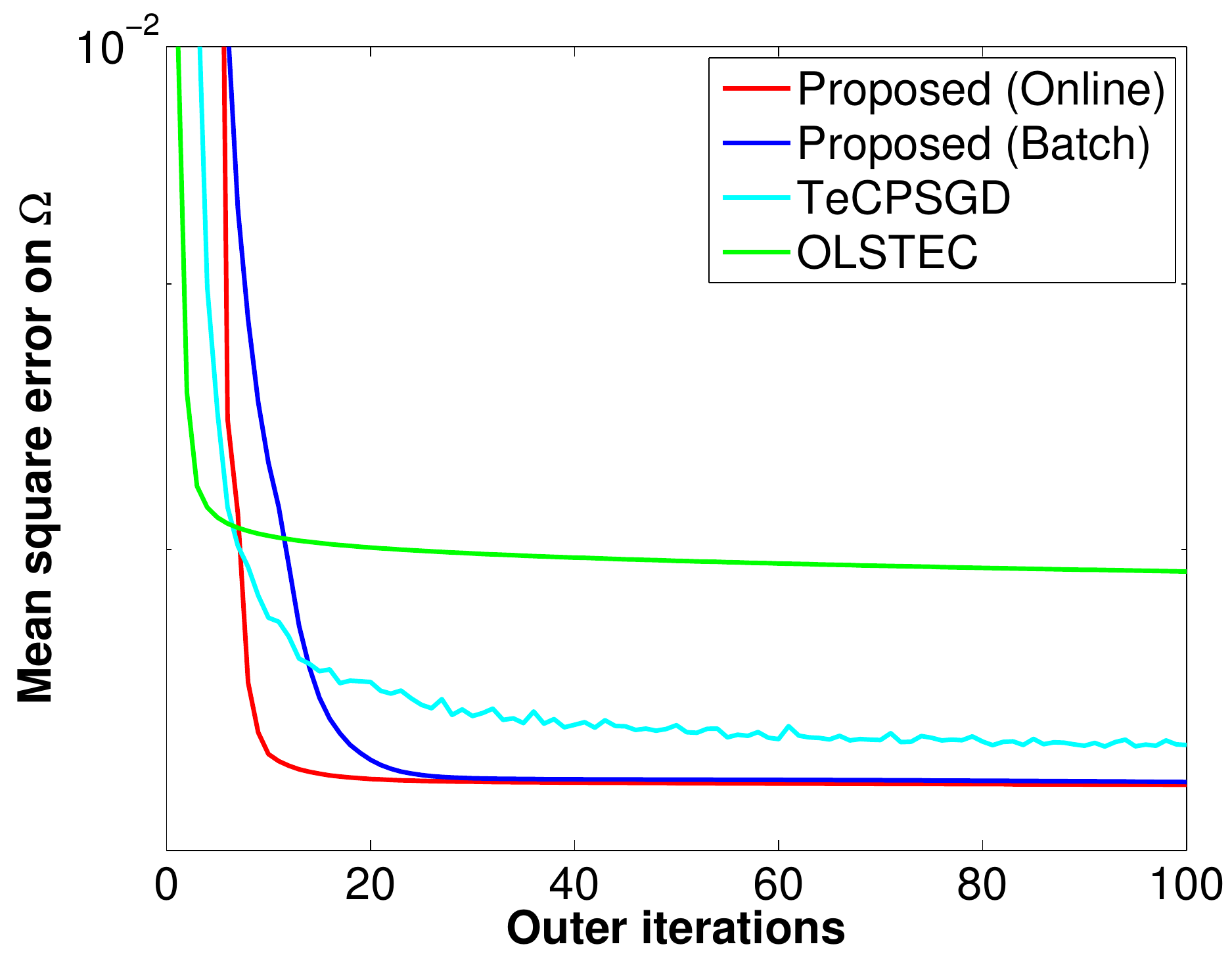}\\
{\scriptsize(a) Run 1.}
\end{center}
\end{minipage}
\begin{minipage}{0.32\hsize}
\begin{center}
\includegraphics[width=\hsize]{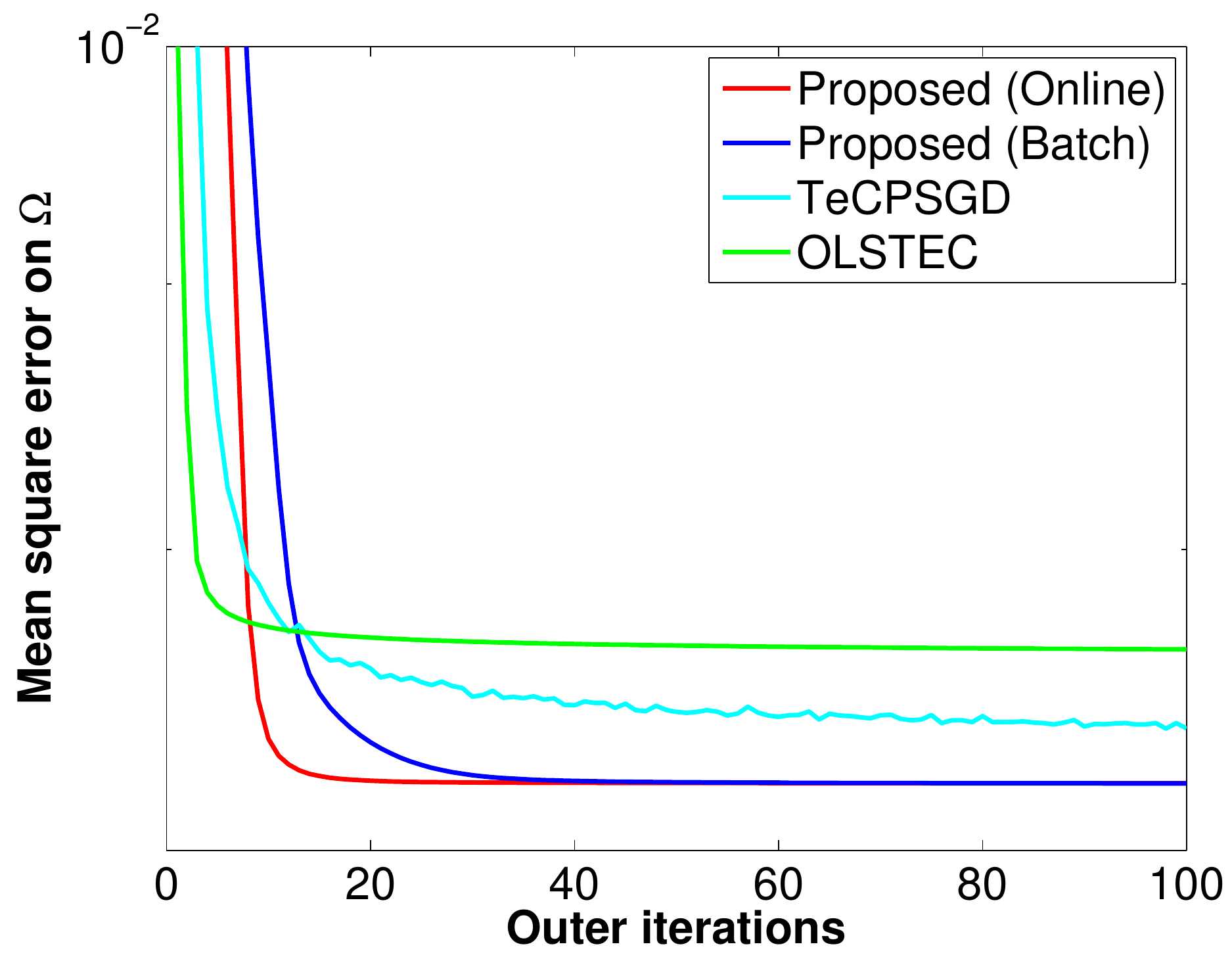}\\
{\scriptsize(b) Run 2.}
\end{center}
\end{minipage}
\begin{minipage}{0.32\hsize}
\begin{center}
\includegraphics[width=\hsize]{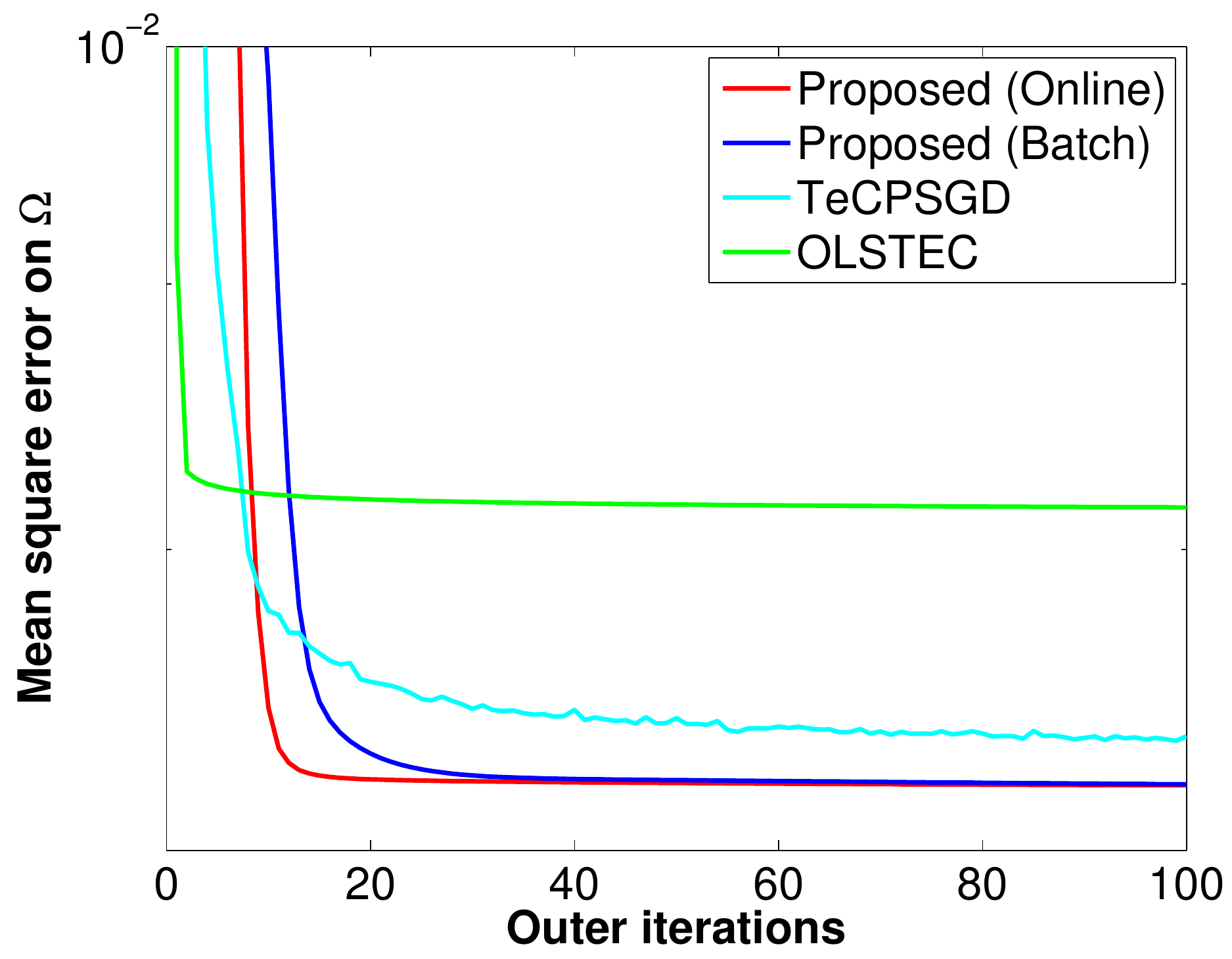}\\
{\scriptsize(c) Run 3.}
\end{center}
\end{minipage}
\vspace*{1cm}\\
\begin{minipage}{0.32\hsize}
\begin{center}
\includegraphics[width=\hsize]{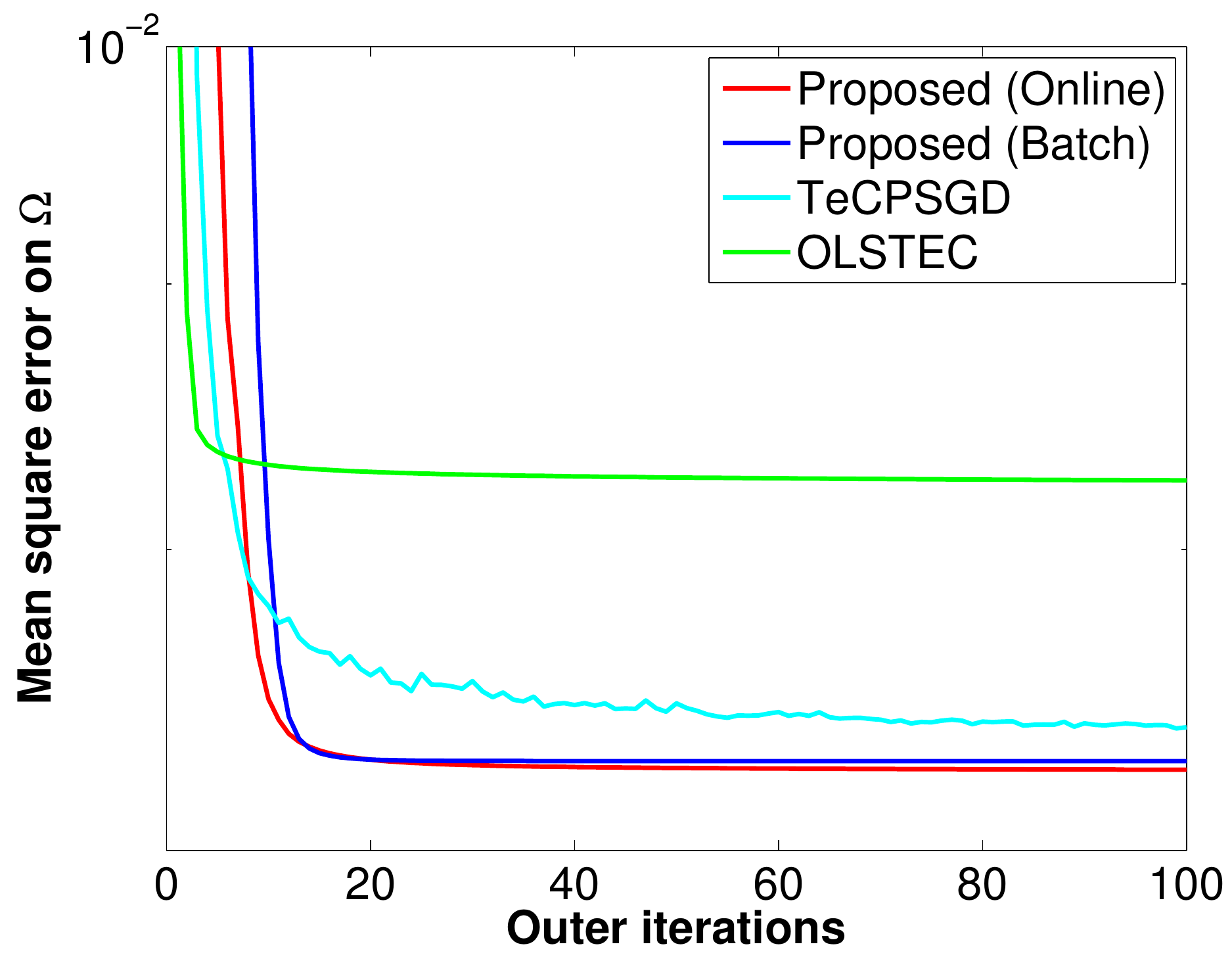}\\
{\scriptsize(d) Run 4.}
\end{center}
\end{minipage}
\begin{minipage}{0.32\hsize}
\begin{center}
\includegraphics[width=\hsize]{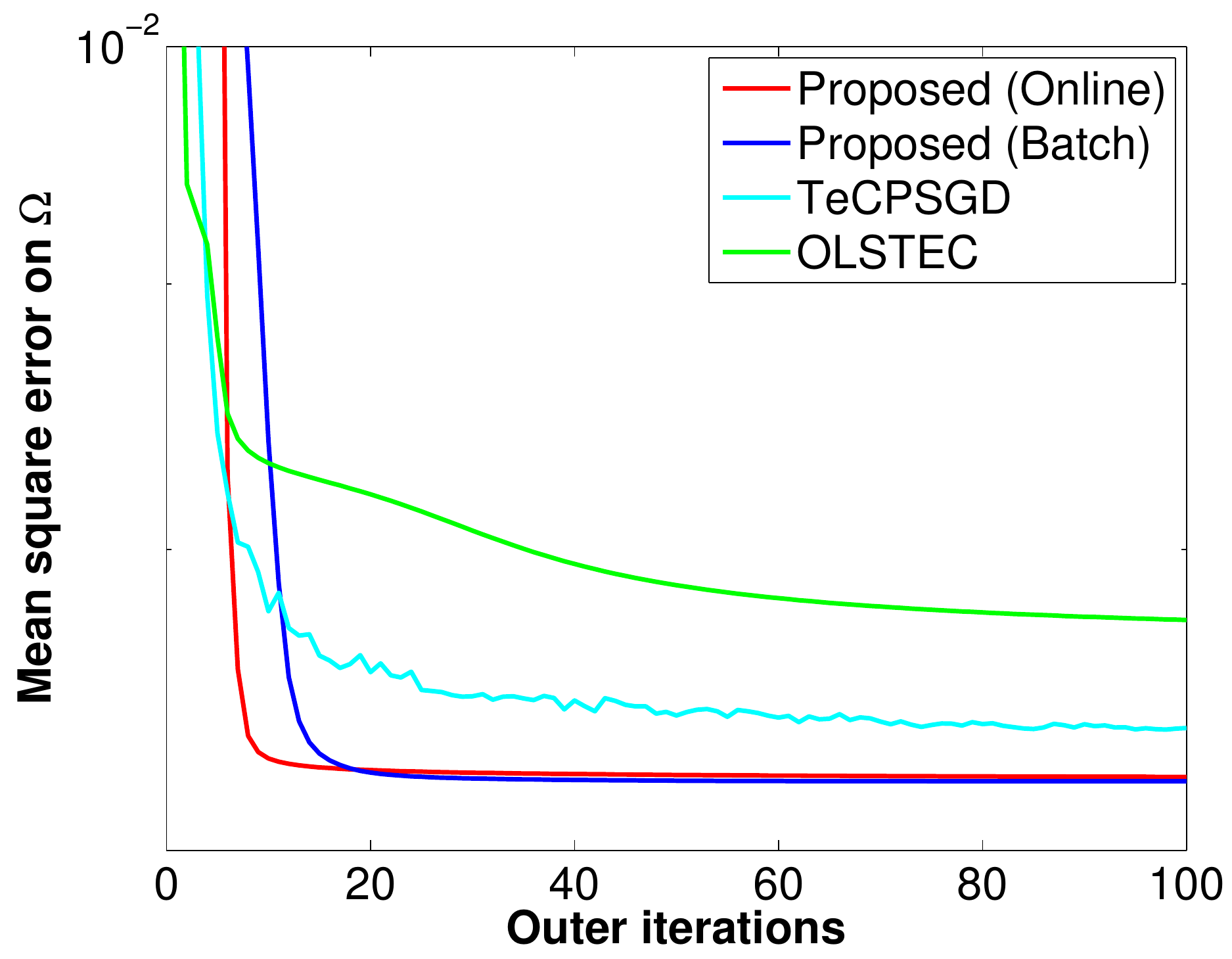}\\
{\scriptsize(e) Run 5.}
\end{center}
\end{minipage}
\end{tabular}
\caption{\changeHK{{\bf Case O}: mean square error on the training set $\Omega$ of the Airport Hall dataset.}}
\label{appnfig:O2-train}
%
\vspace*{1.5cm}
\begin{tabular}{cc}
\begin{minipage}{0.32\hsize}
\begin{center}
\includegraphics[width=\hsize]{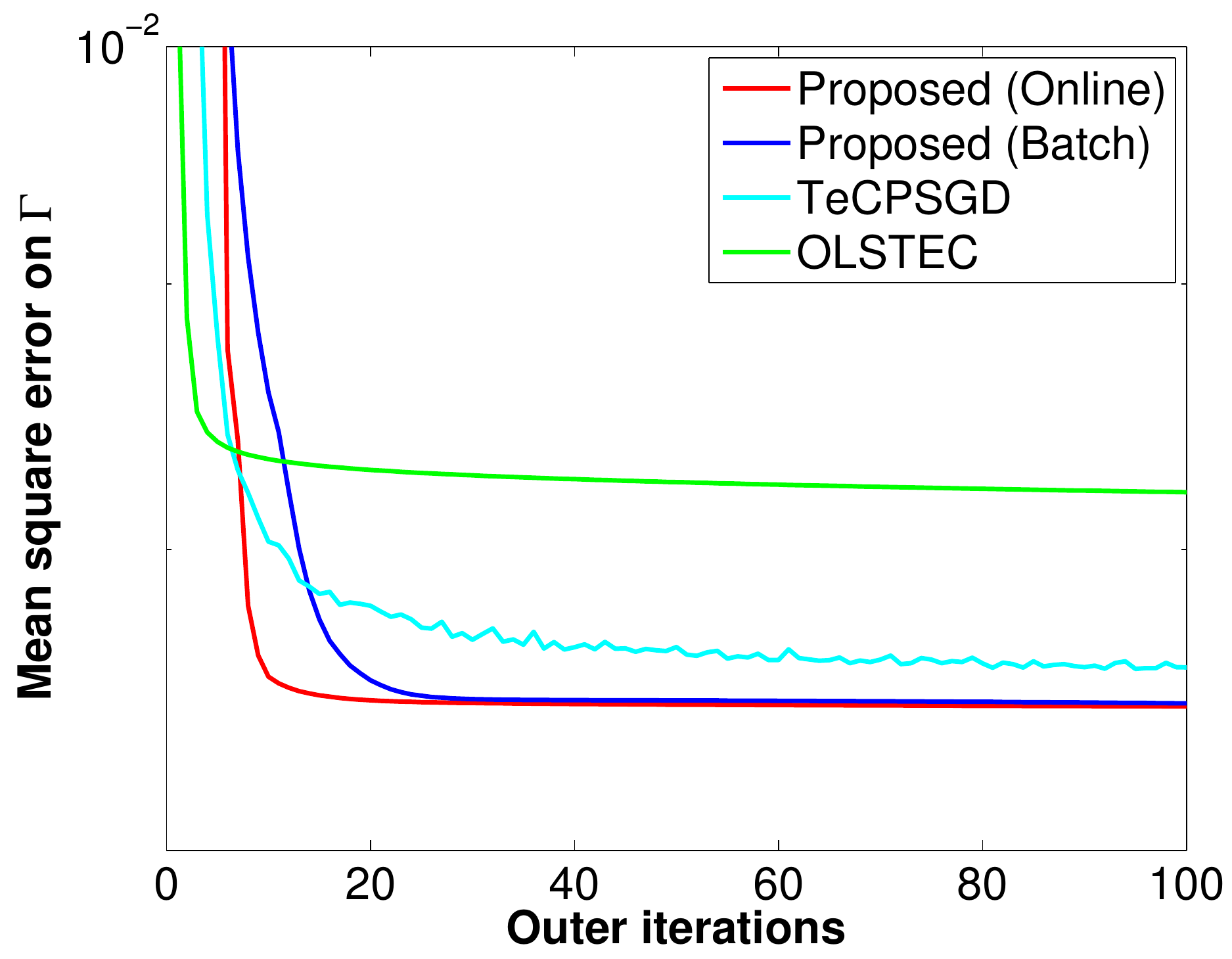}\\
{\scriptsize(a) Run 1.}
\end{center}
\end{minipage}
\begin{minipage}{0.32\hsize}
\begin{center}
\includegraphics[width=\hsize]{new_figures/Testmse-real-144x176x1000-5x5x5-2-2-eps-converted-to.pdf}\\
{\scriptsize(b) Run 2.}
\end{center}
\end{minipage}
\begin{minipage}{0.32\hsize}
\begin{center}
\includegraphics[width=\hsize]{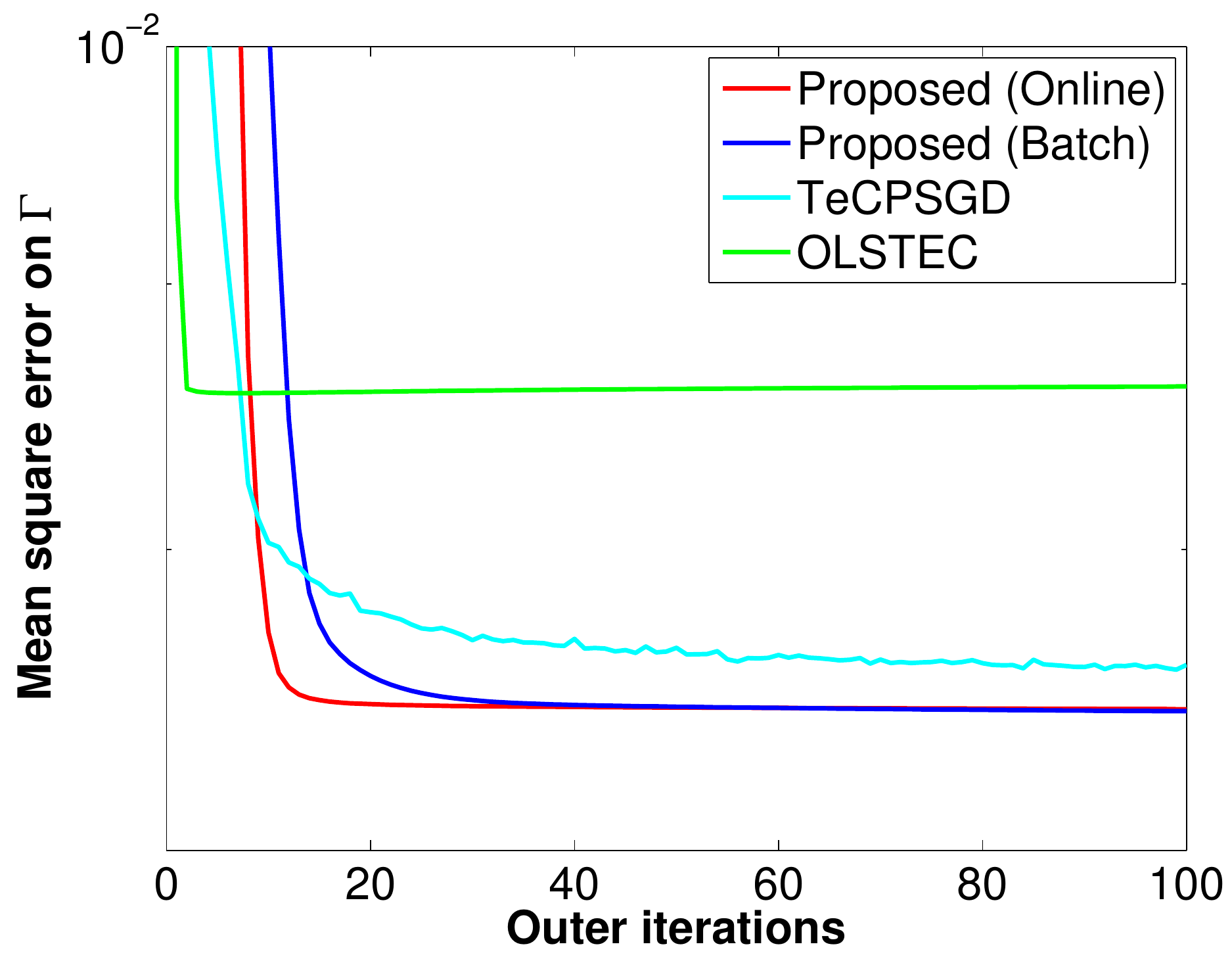}\\
{\scriptsize(c) Run 3.}
\end{center}
\end{minipage}
\vspace*{1cm}\\
\begin{minipage}{0.32\hsize}
\begin{center}
\includegraphics[width=\hsize]{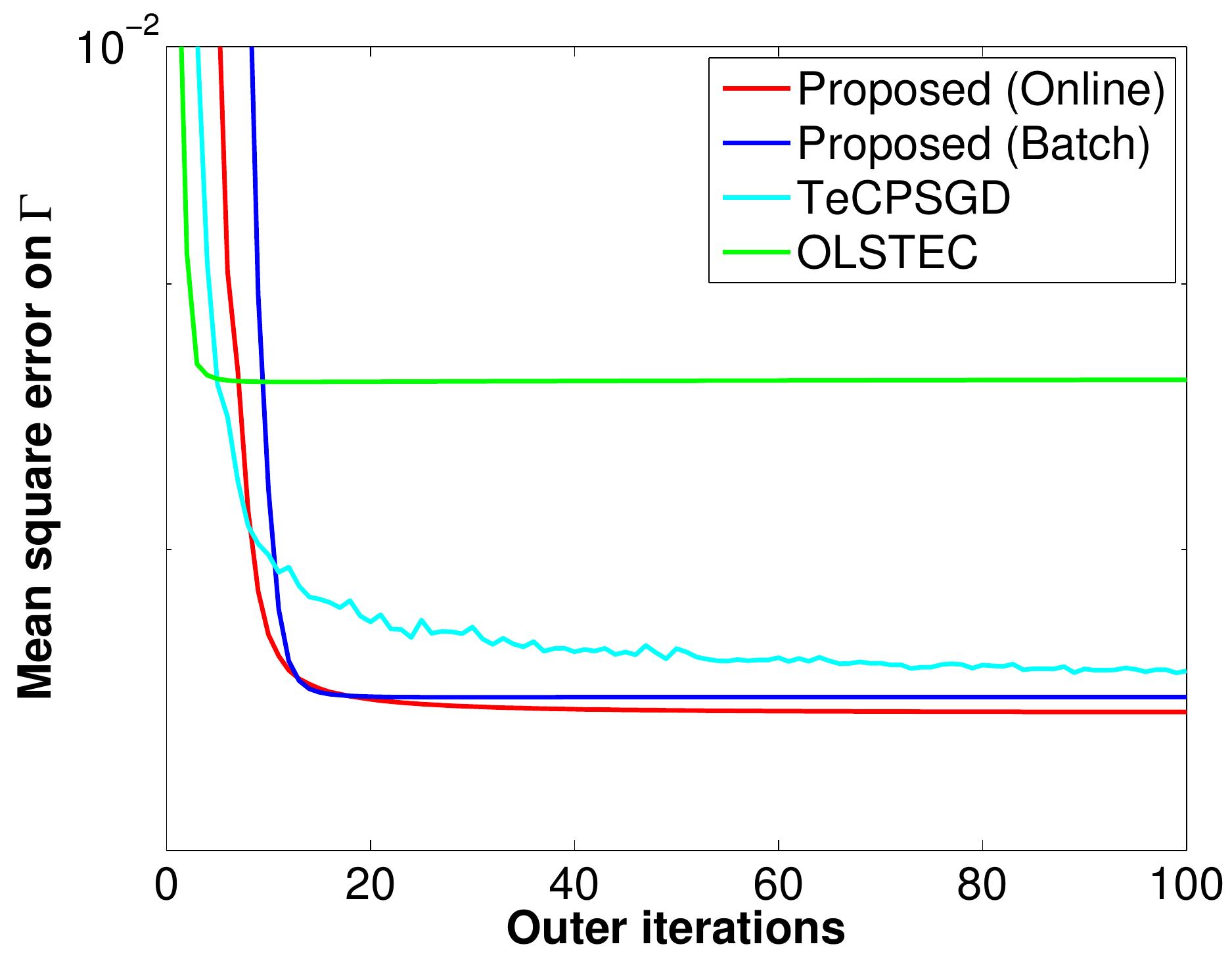}\\
{\scriptsize(d) Run 4.}
\end{center}
\end{minipage}
\begin{minipage}{0.32\hsize}
\begin{center}
\includegraphics[width=\hsize]{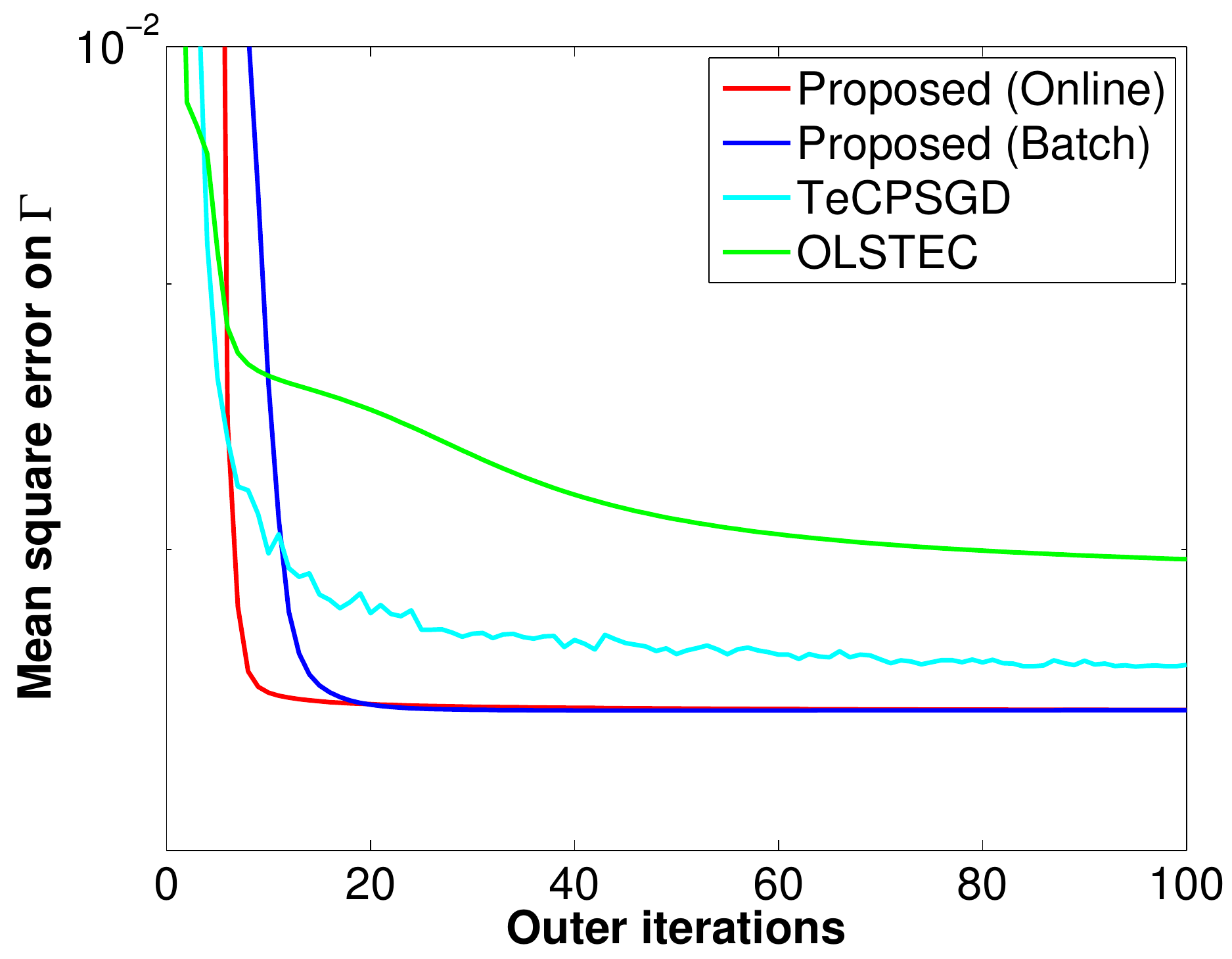}\\
{\scriptsize(e) Run 5.}
\end{center}
\end{minipage}
\end{tabular}
\caption{\changeHK{{\bf Case O}: mean square error on $\Gamma$ (test error) for the Aiport Hall dataset. }}
\label{appnfig:O2-test}
\end{figure*}


\end{document}